%% file: main.tex
\documentclass[11pt,a4paper]{article}
\input{settings.tex}
\makeatletter
\renewcommand\paragraph{\@startsection{paragraph}{4}{\z@}%
  {1.6ex plus .2ex minus .1ex}%
  {-1em}%
  {\normalfont\normalsize\bfseries}}
\makeatother

\begin{document}
\flushbottom
\ifMainPart
\MakeMainTitle

\begin{abstract}
The statistical accuracy of neural networks depends on both their
approximation power and the complexity of the class fitted from data.
While increasing network size is a natural way to improve approximation,
parameter magnitude provides another resource whose role must be
quantified in both respects.
We establish a sharp width--magnitude tradeoff at fixed depth using one
elementary bounded $1$-Lipschitz Dyadic--Triangular Activation.
For the unit $\beta$-H\"older ball on $[0,1]^d$ with $0<\beta\leq1$, the optimal $L^p$ approximation error for $0<p<\infty$ is of order
$[N^2\log(eNT)]^{-\beta/d}$
when the network width satisfies $N\geq2d+3$ and the parameter magnitudes are bounded by $T\geq1$.
Matching lower bounds hold for every fixed globally H\"older activation;
its H\"older exponent affects the constants but not the rate.
Under bounded design densities and independent centered sub-Gaussian
noise, approximate least squares over the full clipped class at depth
$23$ attains the classical H\"older minimax risk
$\mathcal{O}(M^{-\frac{2\beta}{2\beta+d}})$ without logarithmic loss
whenever $N^2\log(eNT)\asymp M^{\frac{d}{2\beta+d}}$,
where $M$ is the sample size.
This yields a continuum of statistically optimal choices, ranging from
unit parameter radius to fixed network size.
At fixed size, four hidden layers with at most $8d+7$ nonzero parameters
give a near-optimal radius, while six layers with at most $8d+27$
attain the optimal order $\log T=\mathcal{O}(\eta^{-d/\beta})$
at approximation error $\eta$.
The same decoding method also yields fixed-size Transformer approximation.
\end{abstract}

\noindent{\bf Keywords}: \PaperKeywords

\pdfbookmark[1]{Contents}{contents}
\tableofcontents

\section{Introduction}\label{sec:introduction}
\begingroup
\clubpenalty=10000
\widowpenalty=10000
\displaywidowpenalty=10000

% Must a neural network grow to learn an unknown function more accurately?
% We show that larger parameters can replace additional neurons at fixed
% depth. A sharp approximation law determines how width and parameter radius
% can replace one another. Applied to regression, it gives a curve of
% minimax-optimal models, from unit-radius networks to a fixed architecture.
% The starting point is the approximation--estimation balance.

Neural networks provide flexible function classes for nonparametric
regression, where an unknown regression function is estimated from noisy
observations. Their statistical accuracy depends on a balance between
approximation and estimation: the network class must be rich enough to
approximate the target, yet sufficiently controlled to permit accurate
estimation from finite samples. A natural way to improve approximation
is to increase network width or depth as the sample size grows.
However, network size is only one resource governing expressivity.
Even within a fixed architecture, the allowed magnitudes of weights
and biases affect the richness of the function class, with the resulting
approximation power depending on the activation function. This raises
a basic question: can larger parameters compensate for fewer neurons
while preserving optimal statistical accuracy?

Answering this question requires tracking parameter magnitude in both
the approximation error and the statistical complexity of the network
class. We establish a sharp tradeoff between network width and parameter
radius at fixed depth, where the radius bounds every weight and bias
in absolute value. For one explicit elementary, bounded, globally Lipschitz activation, we
characterize the optimal approximation error over H\"older classes
as a joint function of these two resources. Applied to nonparametric
regression, this approximation law yields a continuum of minimax-optimal
models, ranging from networks with unit parameter radius to a fixed
architecture with increasing parameter radius. Thus, optimal statistical
accuracy can be attained through different allocations of network width
and parameter magnitude. The starting point is the
approximation--estimation balance.

Let $\mathcal F$ be any class of measurable functions from $[0,1]^d$ to
$[-1,1]$. We observe $Y_i=f(\bm X_i)+\varepsilon_i$, $i=1,\ldots,M$,
where $f\in\mathcal F$, the covariates are i.i.d.\ with law $\mu$, and
the errors are i.i.d.\ centered $\sigma$-sub-Gaussian variables independent
of the covariates. Let $\mathcal G$ be a nonempty, supremum-norm separable
class of measurable functions with the same domain and range, and let
$\widehat f_M$ be a measurable $M^{-1}$-approximate minimizer of empirical
squared loss over $\mathcal G$. Classical least-squares theory separates its worst-case prediction risk
into approximation and estimation costs
\citep{GyorfiEtAl2002}; see also \citet[Lemma~4]{SchmidtHieber2020}.
The empirical-cover version used here is (Proposition~\ref{prop:oracle})
\begin{equation}\label{eq:intro-oracle}
\sup_{f\in\mathcal F}\mathbb E\norm{\widehat f_M-f}_{L^2(\mu)}^2
\lesssim
\IntroBrace{\sup_{f\in\mathcal F}\inf_{g\in\mathcal G}\norm{g-f}_{L^2(\mu)}^2}{approximation error}
+\IntroBrace{\frac{\log[e\mathcal N((48M)^{-1},\mathcal G,2M)]}{M}}{estimation error}.
\end{equation}
Here $\mathcal N(\delta,\mathcal G,n)$ is the proper covering number in
the maximum norm on $n$ evaluations, maximized over their input locations:
it counts representative predictions at resolution $\delta$.
The implicit constant depends only on the noise level, the factor $e$
absorbs the optimization tolerance, and the expectation is under the target $f$.
One can estimate this covering number directly or use the
pseudo-dimension, the VC dimension of the subgraph class.
Discretization and the Sauer bound give \citep{AnthonyBartlett1999}
\begin{equation}\label{eq:intro-pdim}
\frac{\log[e\mathcal N((48M)^{-1},\mathcal G,2M)]}{M}
\lesssim\frac{1+\operatorname{Pdim}(\mathcal G)\log(eM)}{M}.
\end{equation}
A direct cover retains information about parameter magnitudes.
Pseudo-dimension can sometimes control a class even when its parameters
are unrestricted. These are two ways to bound the same estimation cost.

Take $\mathcal F=\mathcal H^\beta([0,1]^d)$, the unit H\"older ball,
$0<\beta\le1$, and suppose that $\mu$ has density at most $\kappa$.
This transfers Lebesgue $L^2$ approximation bounds to prediction error.
The squared-risk benchmark is $M^{-2\beta/(2\beta+d)}$, with a matching
minimax lower bound under uniform design and nondegenerate Gaussian
noise \citep{Stone1982}. For an activation $\varphi$, let
$\mathcal F_\varphi(N,L,T)$ contain all fully connected networks with
at most $L$ hidden layers, width at most $N$, and every weight and bias
bounded in absolute value by $T$, followed by clipping
$\pi(t)=\max\{-1,\min\{1,t\}\}$. Clipping keeps predictions in the target
range without increasing their pointwise error. We fit this entire class.
Write $\mathcal E_\varphi(N,L,T)=\sup_{f\in\mathcal H^\beta}
\inf_{g\in\mathcal F_\varphi(N,L,T)}\norm{g-f}_{L^2([0,1]^d)}$ for its
worst-case approximation error.
Fix $L=L_*$ independently of sample size. For a globally
$\alpha$-H\"older activation, $0<\alpha\le1$, parameter discretization
gives $\log\mathcal N(\delta,\mathcal F_\varphi(N,L_*,T),n)
\lesssim N^2\log(eNT/\delta)$ for $T\ge1$ and $0<\delta\le1$
(Proposition~\ref{prop:holder-class-cover}). Hence
\begin{equation}\label{eq:intro-radius-risk}
\sup_{f\in\mathcal H^\beta}\mathbb E\norm{\widehat f_M-f}_{L^2(\mu)}^2
\lesssim\mathcal E_\varphi(N,L_*,T)^2+\frac{N^2\log(eMNT)}{M}.
\end{equation}

The bound in \eqref{eq:intro-radius-risk} suggests trading width for
radius. Doubling the width roughly quadruples the leading estimation
cost, whereas replacing $T$ by $T^2$ only doubles its $\log T$
contribution. Can a smaller network recover the lost accuracy by using
larger parameters? The answer depends on the approximation term.
For a fixed architecture, we require $\mathcal E_\varphi(N,L,T)\to0$
as $T\to\infty$, and the decay must be fast enough to offset the larger
estimation term. This is why the same radius must be tracked in both
parts of the risk bound. A joint approximation law would tell us how
to divide the required approximation power between width and magnitude.

ReLU, logistic sigmoid, and tanh are all known to exhibit a fixed-size approximation obstruction, as they belong to the class of piecewise Pfaffian activations considered in \citet[Theorem~5]{Yarotsky2021}. At fixed architecture their oscillations
along a line are bounded uniformly over the parameters, which yields
a positive worst-case error for our finite-$L^p$ targets ($\mathcal E_\varphi(N,L,\infty)>0$). A periodic
branch behaves differently: rescaling its input produces more oscillations
without adding neurons. To obtain an approximation theorem, these oscillations must be used
to recover the target values through a fixed number of layers, with a
quantitative bound on the parameter magnitudes. Following
fixed-size universal-activation constructions \citep{ZhangShenYang2022},
we combine a triangular wave with a shifted exponential branch to obtain
the Dyadic--Triangular Activation, $\DTA$: one explicit bounded
$1$-Lipschitz function, fixed for all targets, dimensions, and accuracies.
For DTA, $\operatorname{Pdim}(\mathcal F_\DTA(N,L,\infty))=\infty$
for every $N,L\ge1$; infinite VC dimension at fixed architecture
also occurs for superexpressive activations \citep[Section~3]{Yarotsky2021}.
We therefore retain $T$ in the covering-number bound
\eqref{eq:intro-radius-risk} to control the estimation error.

\subsection{Main contributions}

Our main approximation theorem identifies an optimal exchange between
width and parameter magnitude. For $0<p<\infty$, put
$L_p=17+\lceil3p\beta\rceil$. Theorems~\ref{thm:joint-approximation}
and~\ref{thm:near-optimal-parameter} give, for every $N\ge2d+3$ and $T\ge1$,
\begin{equation}\label{eq:intro-joint-law}
\sup_{f\in\mathcal H^\beta}\inf_{g\in\mathcal F_\DTA(N,L_p,T)}
\norm{f-g}_{L^p([0,1]^d)}
\asymp[N^2\log(eNT)]^{-\beta/d}.
\end{equation}
For each prescribed pair $(N,T)$, this law gives an attainable error
and a matching lower bound. At accuracy
$\eta$, the relation $N^2\log(eNT)\asymp\eta^{-d/\beta}$ therefore
determines how much additional radius compensates for a reduction in
width. The lower bound holds for \emph{every} fixed globally H\"older activation at \emph{every} fixed depth. Its H\"older exponent affects the
constants, but not the power $\beta/d$. A single simple bounded Lipschitz
activation thus attains the best joint order throughout the range,
including both unit radius and fixed width.

\begin{figure}[htbp]
\centering
\resizebox{0.65\textwidth}{!}{\input{figures/phase.tex}}
\caption{Schematic width--radius tradeoff at fixed $M$. The green strip satisfies
\eqref{eq:intro-regression-curve}; its thickness decreases like $N^{-2}$
as $N$ grows and changes little as $T$ increases.}
\label{fig:phase-strip}
\end{figure}
\FloatBarrier

This joint law also supplies the approximation bound needed for
regression. For $p=2$, $17+\lceil6\beta\rceil\le23$; fitting the entire class
$\mathcal F_\DTA(N,23,T)$ and combining \eqref{eq:intro-joint-law}
with \eqref{eq:intro-radius-risk} gives
\begin{equation}\label{eq:intro-dta-risk}
\sup_{f\in\mathcal H^\beta}\mathbb E\norm{\widehat f_M-f}_{L^2(\mu)}^2
\lesssim[N^2\log(eNT)]^{-2\beta/d}+\frac{N^2\log(eMNT)}{M}.
\end{equation}
For deterministic $N_M\ge2d+3$, $T_M\ge1$, balancing these terms yields
(Theorem~\ref{thm:generalization})
\begin{equation}\label{eq:intro-regression-curve}
N_M^2\log(eN_MT_M)\asymp M^{\frac{d}{2\beta+d}}
\quad\Longrightarrow\quad
\sup_{f\in\mathcal H^\beta}\mathbb E\norm{\widehat f_M-f}_{L^2(\mu)}^2
\lesssim M^{-\frac{2\beta}{2\beta+d}}.
\end{equation}
The relation on the left implies
$N_M^2\log M\lesssim M^{d/(2\beta+d)}$, so both terms in
\eqref{eq:intro-dta-risk} are bounded at the minimax scale, even at
$T_M=1$. The gain in approximation exactly offsets the logarithm in the
estimation bound. Thus the same least-squares inequality gives the exact
minimax rate along a whole curve.
At one end, $T_M=1$ and
$N_M\asymp M^{d/[2(2\beta+d)]}/\sqrt{\log M}$; at the other,
$N_M=2d+3$ and $\log T_M\asymp M^{d/(2\beta+d)}$.
Between these ends, one may choose a smaller width and increase the
radius to retain the same minimax rate. Figure~\ref{fig:phase-strip}
shows this freedom in the width--radius plane. Each pair specifies an entire clipped class to be fitted. At fixed width,
more data require a larger parameter range without adding neurons. For simplicity, \eqref{eq:intro-joint-law} and
\eqref{eq:intro-regression-curve} are stated in asymptotic form; the
corresponding results in Theorems~\ref{thm:joint-approximation} and
\ref{thm:generalization} are nonasymptotic, with
Theorem~\ref{thm:generalization} valid for every $M\ge1$ satisfying its
stated width--radius bounds.

The fixed-width endpoint raises a more precise question about small
networks. With $N$ fixed, \eqref{eq:intro-joint-law} gives
$\log T\asymp\eta^{-d/\beta}$ at error $\eta$. How few layers and
parameters suffice to attain this optimal order?
Theorem~\ref{thm:approximation} gives a four-hidden-layer construction
with at most $8d+7$ nonzero parameters and
$\log T\lesssim\eta^{-d/\beta}\log(\eta^{-1})$.
Theorem~\ref{thm:optimal-approximation} removes the logarithm with six
hidden layers and at most $8d+27$ nonzero parameters, matching the lower
bound for every fixed globally H\"older activation. Our proof develops several novel parameter-controlled encoding and decoding methods
in the spirit of efficient bit extraction. In particular, a coupled decoder reduces the radius needed to recover the
stored values, avoiding the extra logarithmic cost of the simpler construction.
Distributing this information across more parameters gives the width dependence, and a
separate construction covers the unit-radius end.
The method also yields fixed-size Transformer approximation for
matrix-valued H\"older maps of fixed sequence length $n$: one attention
head achieves $L^p$ error $\eta$ with $\log T\lesssim\eta^{-dn/\beta}$
(Theorem~\ref{thm:transformer}), under the stated position-dependent-bias
convention. Efficient
training and numerical stability remain separate questions.

\subsection{Related work and comparison}\label{sec:related-work}

We review related work on neural approximation, minimax regression, and fixed-size universal approximation, and compare some of the closest results in Table~\ref{tab:intro-comparison}.
\paragraph{Approximation theory with classical activations.}
Classical theory measures accuracy by width $N$ and hidden depth $L$, with
unrestricted parameters ($T=\infty$) unless specified. Rates concern unit
balls on $[0,1]^d$ and sufficiently large widths or depths.
Growing width gives density \citep{Cybenko1989,Hornik1991,LeshnoEtAl1993};
for functions with bounded first Fourier moment, \citet{Barron1993}
obtains $L^2$ error $\mathcal O(N^{-1/2})$ with one
sigmoidal hidden layer. For ReLU, work on smooth
\citep{Yarotsky2017} and piecewise smooth functions
\citep{PetersenVoigtlaender2018} revealed how depth improves accuracy. On H\"older balls with $0<\beta\le1$, width $2d+10$
suffices for uniform error $\mathcal O(L^{-2\beta/d})$
\citep{Yarotsky2018,shijun:Characterized:by:Numer:Neurons}; varying width as well yields
$\mathcal O([N^2L^2\log(eN)]^{-\beta/d})$
\citep{ShenYangZhang2022ReLU}. Depth and target-to-parameter continuity organize these rates into a phase
diagram \citep{YarotskyZhevnerchuk2020}.
Greater smoothness improves the exponent: \citet{LuShenYangZhang2021}
use $s$ continuous derivatives to obtain uniform error
$\mathcal O([NL/(\log(eN)\log(eL))]^{-2s/d})$ on $C^s$ balls.
Sobolev $W^{s,q}$ and Besov $B^s_{q,r}$ spaces allow less uniform
regularity. On their unit balls, the $L^p$ rate
$\mathcal O(L^{-2s/d})$ at width $25d+31$ \citep{Siegel2023}
extends to $\mathcal O((NL)^{-2s/d})$ when both resources vary
\citep{YunfeiYang2025Sobolev}, provided $s>0$,
$1\le p,q,r\le\infty$, and $s/d>1/q-1/p$.
Derivative approximation follows a similar pattern: mixed
ReLU--$\mathrm{ReLU}^2$ networks attain $W^{2,p}$ error
$\mathcal O([NL/(\log(eN)\log(eL))]^{-2(s-2)/d})$ on $W^{s,p}$ balls
for $s\in\{3,4,\ldots\}$ and $1\le p\le\infty$
\citep{YangWuYangXiang2023}.

Other activations inherit the function-value rates:
``Beyond ReLU'' \citep{ZhangLuZhao2024} transfers ReLU upper
bounds to sigmoid and tanh with width--depth pair $(3N,2L)$, and to
Softplus, GELU, and SiLU with $(N,L)$, without controlling parameter size.
For ReLU itself, the fixed-depth $L^2$ H\"older benchmark is sharp
\citep{ShenYangZhang2022ReLU,Siegel2023}:
\begin{equation}\label{relu benchmark}
\mathcal E_{\mathrm{ReLU}}(N,29,\infty)
\asymp\left[N^2\log(eN)\right]^{-\beta/d},\qquad N\ge 48d.
\end{equation}
Thus arbitrarily large parameters cannot replace the need for growing width.

\paragraph{Near-minimax and minimax regression.}
A richer class improves approximation but costs more to estimate.
Sparse ReLU estimators achieve risk
$\mathcal O(M^{-2\beta/(2\beta+d)}\log^3M)$ with
$N_M\asymp M^{d/(2\beta+d)}$, $L_M\asymp\log M$, and $T_M=1$
\citep{SchmidtHieber2020,SchmidtHieberVu2024}.
Related results cover smooth hierarchical sigmoidal models
\citep{BauerKohler2019}, sparse ReLU on Besov classes \citep{Suzuki2019}, and smooth compositional ReLU models without sparsity constraints
\citep{KohlerLanger2021}; all retain logarithmic losses.
At fixed depth $L_*$, the bound
$\operatorname{Pdim}(\mathcal F_{\mathrm{ReLU}}(N,L_*,\infty))
\lesssim N^2\log(eN)$ \citep{BartlettEtAl2019}, combined with
\eqref{eq:intro-oracle}--\eqref{eq:intro-radius-risk}, yields
\begin{equation}\label{eq:intro-relu-risk}
\sup_{f\in\mathcal H^\beta}\mathbb E\norm{\widehat f_M-f}_{L^2(\mu)}^2
\lesssim\mathcal E_{\mathrm{ReLU}}(N,L_*,T)^2
+\frac{N^2}{M}\min\{\log(eMNT),\log(eN)\log(eM)\}.
\end{equation}
Substituting the depth-$29$ benchmark (\ref{relu benchmark}) and balancing the two
terms gives the near-minimax order $\mathcal O((\log M/M)^{2\beta/(2\beta+d)})$.
Alternatively, \citet{FanGu2024} and
\citet[Theorem~4.1]{LiuWangWuZhang2026} give fixed-depth error
$\mathcal O(N^{-2\beta/d})$ with radii polynomial in $N$, respectively in
$L^\infty$ for ReLU and $L^2$ for sigmoid, SiLU, and GELU.
Equation~\eqref{eq:intro-radius-risk} then gives the same near-minimax order
for the full clipped classes. These sufficient radii leave open the best
approximation at a prescribed $(N,T)$
\citep[Remark~4.3]{LiuWangWuZhang2026}.

\begin{table}[htbp]
\centering
\caption{Width--radius approximation and regression for $0<\beta\le1$.
Errors are unsquared; risks are squared. Resource choices are sufficient,
with suitable constants and integer rounding; $\operatorname{poly}(N)$ denotes a
fixed-degree polynomial, and $C_0,L_0$ are absolute. The first four statistical rows
combine the cited approximation results with the oracle bounds; the fixed-depth
Ou--B\"olcskei choice uses their Lemma~3.4.
}
\label{tab:intro-comparison}
\begingroup
\fontsize{8}{10}\selectfont
\resizebox{\linewidth}{!}{%
\begin{tblr}{
 width=572pt,
 colspec={Q[wd=64.3pt]Q[wd=48.6pt]Q[wd=50.3pt]Q[wd=65pt]Q[wd=80.8pt]Q[wd=175.5pt]X},
 cells={halign=c,valign=m},
 colsep=1.6pt, rowsep=2.5pt,
 rows={ht=31pt}, row{1}={ht=25pt,font=\bfseries,bg=black!6},
 row{6}={bg=black!5}, row{7,8}={bg=black!2},
 hlines={.35pt}, vlines={.35pt}, hline{1,9}={.6pt}, hline{6}={.7pt}
}
\textbf{Reference}&\textbf{Activation}&{{\textbf{Hidden} \\\textbf{layers}}}&{{\textbf{Resources}\\$\left(N,T\right)$}}&{{\textbf{Approximation} \textbf{error}}}&{{\textbf{Statistical choice}\\$\left(N_M,T_M\right)$}}&{{\textbf{Risk}\\$\mathcal O(\cdot)$}}\\
{{\citet{FanGu2024}}}&ReLU&$12+2d$
&{{$\left(N,\operatorname{poly}\left(N\right)\right)$\\[3pt]$N\ge 34d\:3^d$}}
&$\mathcal O\left(N^{-2\beta/d}\right)$
&$\left(\,\Theta\left(\left(\tfrac{M}{\log M}\right)^{\frac{d}{2(2\beta+d)}}\right),\;\Theta\left(M^{\frac{2d}{2\beta+d}}\right)\,\right)$
&$\left(\tfrac{\log M}{M}\right)^{\frac{2\beta}{2\beta+d}}$\\
{{\citet{LiuWangWuZhang2026}}}&{{sigmoid\\SiLU\\GELU}}&$5$
&{{$\left(N,\operatorname{poly}\left(N\right)\right)$\\[3pt]$N\ge C$}}
&$\mathcal O\left(N^{-2\beta/d}\right)$
&$\left(\,\Theta\left(\left(\tfrac{M}{\log M}\right)^{\frac{d}{2(2\beta+d)}}\right),\;\Theta\left(M^{\frac{\max\{d,5\}}{2}}\right)\,\right)$
&$\left(\tfrac{\log M}{M}\right)^{\frac{2\beta}{2\beta+d}}$\\
{{\citet{ShenYangZhang2022ReLU}}}&ReLU&$29$
&{{$\left(N,\infty\right)$\\[3pt]$N\ge 48d$}}
&$\Theta\left([N^2\log(eN)]^{-\frac{\beta}{d}}\right)$
&$\left(\,\Theta\left(\frac{M^{\frac{d}{2(2\beta+d)}}}{\left(\log M\right)^{\frac{\beta+d}{2\beta+d}}}\right),\;\infty\,\right)$
&$\left(\tfrac{\log M}{M}\right)^{\frac{2\beta}{2\beta+d}}$\\
{{\citet{OuBolcskei2024}\\$d=\beta=1$}}&ReLU&$L_0$
&{{$\left(N,1\right)$\\[3pt]$N\ge C_0$}}
&$\Theta\left([N^2\log(eN)]^{-1}\right)$
&$\left(\,\Theta\left(\frac{M^{1/6}}{\sqrt{\log M}}\right),\;1\,\right)$
&$M^{-2/3}$\\
{{\textbf{This paper}\\joint law}}&$\DTA$&$23$
&{{$\left(N,T\right)$\\[3pt]$N\ge2d+3$\\$T\ge1$}}
&$\Theta\left([N^2\log(eNT)]^{-\frac{\beta}{d}}\right)$
&$N_M^2\log(eN_MT_M)\asymp M^{\frac{d}{2\beta+d}}$
&$M^{-\frac{2\beta}{2\beta+d}}$\\
{{This paper\\unit radius}}&$\DTA$&$23$
&{{$\left(N,1\right)$\\[3pt]$N\ge2d+3$}}
&$\Theta\left([N^2\log(eN)]^{-\frac{\beta}{d}}\right)$
&$\left(\,\Theta\left(\frac{M^{\frac{d}{2(2\beta+d)}}}{\sqrt{\log M}}\right),\;1\,\right)$
&$M^{-\frac{2\beta}{2\beta+d}}$\\
{{This paper\\fixed width}}&$\DTA$&$6$
&{{$\left(2d+3,T\right)$\\[3pt]$T\ge1$}}
&$\Theta\left([\log(eT)]^{-\frac{\beta}{d}}\right)$
&$\left(\,2d+3,\;\exp\left(\Theta\left(M^{\frac{d}{2\beta+d}}\right)\right)\,\right)$
&$M^{-\frac{2\beta}{2\beta+d}}$\\
\end{tblr}%
}
% \par\vspace{5pt}
% \begin{minipage}{\linewidth}
% \fontsize{8}{10}\selectfont
% \textit{Notes.} Errors are unsquared; risks are squared. Resource choices are sufficient,
% with suitable constants and integer rounding; $\operatorname{poly}(N)$ denotes a
% fixed-degree polynomial, and $C_0,L_0$ are absolute. The first four statistical rows
% combine the cited approximation results with the oracle bounds; the fixed-depth
% Ou--B\"olcskei choice uses their Lemma~3.4.
% \end{minipage}
\endgroup
\end{table}
\FloatBarrier

Removing the logarithmic loss requires a more precise balance. \citet{LiuBoukaiShang2022} obtain
$\mathcal O_P(M^{-2\beta/(2\beta+d)})$ for the conditional risk of a
spline-based ReLU estimator, with $N_M\asymp M^{d/(2\beta+d)}$ and
$L_M\asymp\log M$.
For $d=\beta=1$, \citet{OuBolcskei2024} obtain exact expected risk
$\mathcal O(M^{-2/3})$ under Gaussian noise with $T_M=1$; their bounds also yield the fixed-depth choice in
Table~\ref{tab:intro-comparison}.
Our result gives an entire optimal curve,
$N_M^2\log(eN_MT_M)\asymp M^{d/(2\beta+d)}$.
Every admissible pair yields expected risk $\mathcal O(M^{-2\beta/(2\beta+d)})$
for least squares over the full clipped class. At fixed depth, one may grow
width with $T_M=1$, or keep the architecture fixed and grow the radius.
Matching bounds quantify this exchange along the entire curve, including
both endpoints in Table~\ref{tab:intro-comparison}.

\paragraph{Fixed-size approximation theory.}
The classical size obstruction motivates a different route: choose an
activation that makes a fixed architecture universal. The Kolmogorov--Arnold theorem
\citep{Arnold1957,Kolmogorov1957} provides a starting point: continuous
multivariate functions admit representations by a fixed number of
univariate functions, with fixed inner functions and target-dependent
outer functions. \citet{MaiorovPinkus1999} obtained fixed-size universality
with one specially constructed sigmoid, followed by algorithmic
constructions \citep{GuliyevIsmailov2018} and elementary superexpressive
activations \citep{Yarotsky2021}. A change of activation also accelerates convergence: floor--exponential--step networks
achieve uniform H\"older error $\mathcal O(2^{-\beta N})$ at depth three
and width $N\ge d$ \citep{ShenYangZhang2021FLES,doi:10.1137/21M144431X}.
The EUAF construction \citep{ZhangShenYang2022} achieves uniform density
at fixed size with one bounded $1$-Lipschitz activation.
Later developments address efficiency and derivatives:
subnetwork reuse reduces the number of unique parameters to
$\mathcal O(d)$ \citep{MaitiMichelleYang2024}, while smooth
$\mathrm{DUAF}_\infty$ gives density in $W^{s-1,\infty}$ for
$W^{s,\infty}$ targets with $s\in\mathbb N$
\citep{LiYangZhang2026Sobolev}. Connecting the theory to training, PEUAF introduces a trainable
triangular-wave frequency and performs competitively on industrial
fault-diagnosis benchmarks \citep{WangEtAl2025PEUAF}.
Regression brings the remaining resource question into focus:
how large must the parameters become? For uniform H\"older error $\eta$, \citet{Beknazaryan2022} obtains
$\log T=\mathcal O(\eta^{-d/\beta}\log(\eta^{-1}))$ using floor and an
abstract discontinuous selector; \citet{FanLiWangWang2026} obtain
$\log T=\mathcal O(\eta^{-2d/\beta}\log(\eta^{-1}))$ using floor,
ReLU, $\mathrm{ReLU}^2$, and a modified reciprocal.
For finite $L^p$, four DTA layers match the former radius order with a
single bounded $1$-Lipschitz activation; six remove the logarithm and attain
$\log T=\mathcal O(\eta^{-d/\beta})$.
Our matching lower bound applies to every fixed globally H\"older
activation and fixed architecture, establishing the optimal radius order
for $0<p<\infty$. This identifies the unavoidable cost of keeping an
architecture fixed: as the error decreases, the required parameters grow
exponentially. Additional width can absorb part of this cost.
The joint theorem then extends this characterization
across all admissible $(N,T)$, including $T=1$: fixed-size approximation
and the classical growing-width regime are the two endpoints of one
sharp width--radius law.

\FloatBarrier
\par\endgroup

\section{Setting and main results}\label{sec:setting}

We first state how width and radius determine approximation accuracy, then
identify the choices that attain the optimal regression rate.  The
activation is fixed throughout; neither its definition nor the allowed
depth changes with the target accuracy or sample size.

Throughout, $\mathbb N=\{1,2,\ldots\}$, $\mathbb N_0=\{0,1,2,\ldots\}$,
and $\log$ is natural. Sequence, grid, vector and table indices are integers:
$0\le r<n$ means $r\in\{0,\ldots,n-1\}$, unless a real interval is stated.
We write $\one_m,\bzero_m\in\mathbb R^m$ and
$\one_{m\times n},\bzero_{m\times n}$ for the all-ones and zero vectors and
matrices, $\bm I_m$ for the identity, and $\one_{\{E\}}$ for an indicator.
The matrix norm is $\norm{\bm A}_{\max}=\max_{i,j}|A_{ij}|$.
For coordinates indexed by $0,\ldots,m-1$, the standard basis is
$\bm e_0,\ldots,\bm e_{m-1}$.

\subsection{The Dyadic--Triangular Activation}
Motivated by the EUAF construction of \citet{ZhangShenYang2022}, we combine
two elementary functions in one activation.  Related EUAF-type activations
have shown empirical utility in signal-based applications and as complements
to standard activations \citep{WangEtAl2025PEUAF}, providing practical
motivation for studying such nonstandard activations beyond their theoretical properties.

\begin{figure}[htbp]
\centering
\includegraphics[width=\FigureActivationWidth]{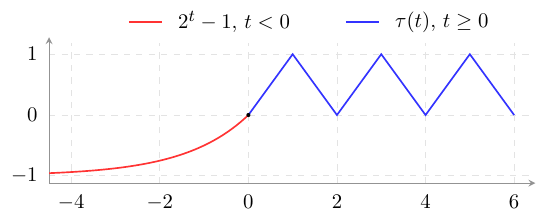}
\caption{The Dyadic--Triangular Activation: a shifted exponential on the
negative half-line and the triangular wave on the nonnegative half-line.}
\label{fig:dta-function}
\end{figure}

Define the triangular wave $\tau$ and the Dyadic--Triangular Activation
$\DTA$ by
\begin{equation}\label{eq:dta-definition}
\tau(t)=\left|t-2\left\lfloor\frac{t+1}{2}\right\rfloor\right|,
\quad t\in\R,
\qquad
\DTA(t)=
\begin{cases}
2^t-1,&t<0,\\
\tau(t),&t\ge0.
\end{cases}
\end{equation}
The function $\tau$ is $2$-periodic, takes values in $[0,1]$, and is
$1$-Lipschitz.  Equivalently,
$\tau(t)=\frac{2}{\pi}\left|\arcsin(\sin(\pi t/2))\right|$, so the floor
formula is only a compact representation of a continuous elementary function.
The two branches of $\DTA$ meet continuously at the origin, where
$\DTA(0)=0$.  The negative
branch has derivative $(\log 2)2^t<1$, and the positive branch is
$1$-Lipschitz.  Across the origin, if $u<0\le v$, then $\tau(v)\le v$ and
$1-2^u\le-u$, so 
\[
0\le \DTA(v)-\DTA(u)=\tau(v)+1-2^u\le v-u.
\]
 Thus $\DTA$ is globally $1$-Lipschitz and $-1<\DTA\le1$.  We repeatedly use
\begin{equation}\label{eq:dta-identities}
\DTA(-u)=2^{-u}-1,
\qquad
\DTA(u)=\tau(u),
\qquad u\ge0,
\end{equation}
and $\DTA(z)=z$ for $0\le z\le1$, which passes such coordinates through
an activated layer unchanged; see Figure~\ref{fig:dta-function}.

\subsection{Network convention}

Let $\sigma:\R\to\R$ act coordinatewise.  For input dimension $d$,
output dimension $n$, width $N$, hidden depth $L$, and parameter radius
$T\in(0,\infty]$, write $\NN_\sigma(d,n;N,L,T)$ for the corresponding class of fully connected networks.  Here width is the
largest hidden-layer width, depth is the number of hidden layers, and the final
affine map is not counted as a hidden layer.

A map $\phi:\R^d\to\R^n$ belongs to this class if there are
$1\le\ell\le L$, layer dimensions $n_0=d$, $n_{\ell+1}=n$,
positive integers $n_1,\ldots,n_\ell\le N$, and affine maps
$\mathcal L_j(\bm z)=\bm W_j\bm z+\bm b_j$ such that
\begin{equation}\label{eq:network-realization}
\phi=\mathcal L_{\ell+1}\circ\sigma\circ\mathcal L_\ell
\circ\cdots\circ\sigma\circ\mathcal L_1,
\end{equation}
where $\bm W_j\in\R^{n_j\times n_{j-1}}$ and
$\bm b_j\in\R^{n_j}$ for $1\le j\le\ell+1$.  The parameter radius
of this realization is
\[
\max_{1\le j\le\ell+1}\parnorm{\mathcal L_j},
\qquad
\parnorm{\mathcal L_j}:=\max\{\norm{\bm W_j}_{\max},\norm{\bm b_j}_\infty\}.
\]
Thus $\phi\in\NN_\sigma(d,n;N,L,T)$ precisely when the displayed radius is at
most $T$; $T=\infty$ means that the parameters are unrestricted.
When an exact architecture matters, we list the hidden widths
$[n_1,\ldots,n_\ell]$.  The number of nonzero parameters is the number of
nonzero scalar entries in all displayed matrices and biases.  We use the same
letter for a network and its realization.

\subsection{Target, hypothesis classes, and approximation theorems}

For $0<\beta\le1$, let
\[
\mathcal H^\beta([0,1]^d)
:=\left\{f:[0,1]^d\to\R:
\norm{f}_\infty\le1,\
|f(\bm x)-f(\bm y)|\le\norm{\bm x-\bm y}_\infty^\beta
\right\}
\]
be the unit $\beta$-H\"older ball; the inequality is required for all
$\bm x,\bm y\in[0,1]^d$. We write $\mathcal H^\beta$ when the
domain is clear.  For $0<p<\infty$,
$\norm{g}_{L^p}=(\int_{[0,1]^d}|g(\bm x)|^p\,\dd\bm x)^{1/p}$; for $p<1$
this is the usual quasi-norm.

Let $\pi(t)=\max\{-1,\min\{t,1\}\}$, which clips every real input to the interval $[-1,1]$.  For $N,L\in\mathbb N$ and $T\ge1$,
define
\begin{equation}\label{eq:general-hypothesis-class}
\mathcal F_\sigma(N,L,T)
:=\left\{\left.(\pi\circ\phi)\right|_{[0,1]^d}:
\phi\in\NN_\sigma(d,1;N,L,T)\right\}.
\end{equation}
We suppress the fixed input dimension $d$ and write
$\mathcal F_\DTA(N,L,T)$ when $\sigma=\DTA$.
We keep clipping outside the base network so that the width, depth, and
radius in \eqref{eq:general-hypothesis-class} retain their stated values.
For $t\in\R$ and $y\in[-1,1]$, the inequality
$|\pi(t)-y|\le |t-y|$ shows that clipping cannot increase approximation
error.  The map $\pi$ is also exactly realizable by a fixed $\DTA$ network.  Using
$\DTA(1+z)=1-|z|$ for $|z|\le1$, we obtain
\begin{equation}\label{eq:exact-dta-clipping}
\begin{aligned}
&3\DTA\left(
 3^{-1}\left(\DTA(\tfrac{t+1}{2})+\DTA(\tfrac{t-1}{2})+2\right)
 \right)
-\DTA\left(1+\DTA(\tfrac{t+1}{2})\right)
-\DTA\left(1+\DTA(\tfrac{t-1}{2})\right)-1\\
&\qquad=
\DTA(\tfrac{t+1}{2})+\DTA(\tfrac{t-1}{2})+1
-\DTA\left(1+\DTA(\tfrac{t+1}{2})\right)
-\DTA\left(1+\DTA(\tfrac{t-1}{2})\right)\\
&\qquad=
\DTA(\tfrac{t+1}{2})+\DTA(\tfrac{t-1}{2})
+\left|\DTA(\tfrac{t+1}{2})\right|
+\left|\DTA(\tfrac{t-1}{2})\right|-1\\
&\qquad=
2\max\left\{\DTA(\tfrac{t+1}{2}),0\right\}
+2\max\left\{\DTA(\tfrac{t-1}{2}),0\right\}-1\\
&\qquad=\pi(t).
\end{aligned}
\end{equation}
Indeed,
$\DTA((t+1)/2)+\DTA((t-1)/2)+2\in(0,3]$, which justifies the first
equality, while the final identity follows directly from the three cases
$t\le-1$, $-1\le t\le1$, and $t\ge1$.  Hence clipping can be implemented
with hidden widths $[2,3]$ and parameter radius at most three; we keep it
external only to preserve the stated architecture.

We first ask how much parameter magnitude is needed when the architecture
cannot grow.  The first two theorems give explicit answers for small,
fully specified networks.  We then allow width to vary and determine the
joint rate by matching upper and lower bounds.  All norms below are taken
over $[0,1]^d$ unless indicated otherwise.

\begin{theorem}[Four-hidden-layer approximation]\label{thm:approximation}
Let $d\in\mathbb N$, $0<p<\infty$, and $0<\beta\le1$.  For every
$f\in\mathcal H^\beta([0,1]^d)$ and $\eta\in(0,1/2]$, there exists a
$\DTA$ network $\phi$ with hidden-layer widths $[2d+1,d+1,1,1]$ and at
most $8d+7$ nonzero parameters such that
\[
\norm{f-\phi}_{L^p}\le\eta,
\qquad
\norm{\phi}_{L^\infty}\le1,
\qquad
\log T_\eta\le C_0\eta^{-d/\beta}\log(\eta^{-1}),
\]
where $T_\eta$ denotes the parameter radius of $\phi$, and $C_0$ depends only
on $d$, $\beta$, and $p$.
\end{theorem}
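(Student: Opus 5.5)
We follow the standard encoding--decoding scheme for fixed-size superexpressive networks and track the parameter radius at each stage.

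\textbf{Step 1 (localization).} Fix $K\asymp\eta^{-1/\beta}$ and partition $[0,1]^d$ into $K^d$ cubes $Q_{\bm\theta}$, each of side $1/K$. Outside a ``trifling region'' of small measure (a thin band near the cube faces), the first hidden layer maps $\bm x$ to the index vector $\bm\theta\in\{0,\ldots,K-1\}^d$. Since the error is measured in $L^p$ with $p<\infty$, this band only has to have measure at most about $\eta^p$. We realize the quantization with the triangular branch. The width-$(2d+1)$ layer computes $\tau(Kx_i)$-type quantities together with pass-throughs $\DTA(x_i)=x_i$. The second layer, of width $d+1$, forms a piecewise-linear step approximation of $\lfloor Kx_i\rfloor$. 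This works because on each cube, away from its boundary, $Kx_i-\tau(\cdot)$-style combinations are locally constant up to a steep ramp of slope $\lesssim K/\eta^{p}$. The radius at this stage is polynomial in $\eta^{-1}$, which is harmless.

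\textbf{Step 2 (encoding).} Combine the $d$ indices into one scalar index $m=\sum_i\theta_iK^{i-1}\in\{0,\ldots,K^d-1\}$ with one affine map. This map feeds the third layer, of width $1$.

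\textbf{Step 3 (decoding via dyadic frequencies).} Quantize the values $f(\bm x_{\bm\theta})$ at the cube centers to $J\asymp\log(\eta^{-1})$ bits, $f(\bm x_{\bm\theta})\approx\sum_{j=1}^J b_{m,j}2^{-j}$, and pack all $K^dJ$ bits into a single binary real $a\in[0,1]$. To extract the $J$ bits belonging to index $m$ we need $\tau(2^{mJ}a)$-type dyadic shifts. The exponential branch supplies them: $\DTA(-u)+1=2^{-u}$. Concretely, the third neuron computes $2^{-(c-mJ)}$ from the affine input $-(c-mJ)$ with $c=K^dJ$, so the exponential scaling is created by the activation rather than by the weights. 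The fourth layer then multiplies by $2^{c}a$ inside $\tau$, using a weight of size $2^{K^dJ}$. After the shift, the relevant bits of $a$ sit just to the right of the binary point, and $\tau$ of a suitably scaled and offset argument returns the target value $\sum_j b_{m,j}2^{-j}$ up to $O(2^{-J})$. The triangle wave folds rather than truncates, so we store the value in folded (Gray-code-like) form: we choose $a$ by solving $\tau(2^{k}a)=v_k$ successively. This is possible because $\tau$ restricted to each dyadic interval is a bijection onto $[0,1]$, and the corresponding nested-interval construction has a consistent solution. The total radius is then $\log T\lesssim K^dJ\asymp\eta^{-d/\beta}\log(\eta^{-1})$.

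\textbf{Step 4 (error and counting).} On each good cube the error is at most $K^{-\beta}$ from the Hölder continuity of $f$, plus the $2^{-J}$ quantization error, which gives $\lesssim\eta$. On the trifling region the network output stays bounded by $1$, since $\DTA\le 1$ and we compose with an affine map $2(\cdot)-1$. The trifling region therefore contributes at most $2\,|\text{band}|^{1/p}\le\eta$. Counting nonzero entries in the $[2d+1,d+1,1,1]$ architecture gives at most $8d+7$.

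\textbf{Main obstacle.} The hardest part is making the decoding of Step 3 exact with a single neuron in each of the last two layers. Two things must hold at once. First, the exponential neuron must receive an exact integer argument on the good set, so Step 1 must output exact integers rather than approximations. Second, the folding of $\tau$ must be compatible with the chosen encoding of $a$. We would first try to prove a lemma: for any $v_0,\ldots,v_{n-1}\in[0,1]$ there is an $a$ with $|\tau(2^{k}a)-v_k|\le 2^{-J}$ for all $k$, under a block spacing of $J$.
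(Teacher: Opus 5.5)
Your proposal is correct in substance and shares the paper's layout: an exact staircase encoder in the first two layers, one exponential unit, one triangular-wave unit, and output $2(\cdot)-1$. Your decoder, however, is a genuinely different and more elementary route.

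The paper normalizes the address to $-I(\bm\ell)/(K^d+1)\in(-1,0)$, so the exponential unit produces the algebraic frequencies $2^{-i/(K^d+1)}$. Its last unit $\tau(2mv+4|m|+2)$ then fits the whole table through one integer $m$, with $|m|\le(40N^{3/2}\varepsilon^{-1})^N$. That integer comes from an effective Kronecker theorem: a norm argument in $\mathbb Q(2^{1/(N+1)})$ excludes short dual-lattice vectors, and Gaussian Fourier analysis converts this into lattice covering.

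You instead feed the exponential unit the integer $-J(K^d-m)$, so it returns the exact shift $2^{-J(K^d-m)}-1$. The map $w\mapsto 2^ca(w+1)$ then yields exactly $2^{mJ}a$, and $\tau$ reads block $m$ of a dyadic rational $a$ with $K^dJ$ bits. A zero guard bit at each position $kJ$ makes $\tau(2^{mJ}a)$ equal the stored $(J-1)$-bit value up to $2^{-J}$; your successive folding choice works for the same reason.

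This route:
\begin{itemize}
\item avoids all Diophantine input;
\item keeps the parameter count at $4d+(3d+1)+(d+2)+2+2=8d+7$;
\item gives $\log T\le K^dJ\log2+O(\log\eta^{-1})$, which even drops the paper's $\tfrac{3d}{2}K^d\log K$ contribution (a better constant, same order).
\end{itemize}
In exchange, the paper's route keeps the third preactivation negative on the whole cube. It also isolates a reusable one-unit lemma that fits arbitrary values in $[-1,1]$ at fixed inputs in $(-1,0)$.

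Three details need tightening.

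\textbf{Widths in Step~1.} As written, Step~1 does not fit the widths when $d\ge2$. An exact continuous staircase needs both waves $\tau(Kx_i)$ and $\tau(Kx_i+K\delta)$ for every coordinate. With one wave, the second-layer preactivation $q_i$ is affine on pieces of length $1/K$, and each piece meets a good interval. So $|\tau(q_i)'|$ on a strip equals its value on the adjacent good part, and the staircase cannot make its unit jump across a strip of width $\delta$. Two waves per coordinate use $2d$ of the $2d+1$ first-layer units. The second layer has room only for the $d$ corrections $\tau(q_i)$ plus one unit. Hence the $x_i$ cannot be passed through separately. Carry instead the single form $\sum_jK^jx_j/(1+\sum_jK^j)\in[0,1)$, which $\DTA$ leaves unchanged in both layers. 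It is all that $\sum_jK^{j-1}\lfloor Kx_j\rfloor=\sum_jK^{j-1}(Kx_j+\tau(q_j)-1)$ requires; this is the paper's shared carrier.

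\textbf{The $L^\infty$ bound.} $\norm{\phi}_{L^\infty}\le1$ needs the last hidden unit to lie in $[0,1]$, not just $\DTA\le1$. This holds because the third unit is $>-1$ everywhere, so $2^ca(w+1)\ge0$ even on the strips. This matters because on the strips the interpolated index can exceed $K^d$ and the third preactivation can become positive.

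\textbf{The case $0<p<1$.} Here, add the good and trifling contributions as $p$th powers of integrals rather than via a triangle inequality in $L^p$.
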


The same four-hidden-layer architecture works for every target and every
accuracy.  Its number of nonzero parameters is only linear in the input
dimension; all dependence on accuracy is in their magnitudes.  Thus the
result quantifies fixed-size approximation rather than merely asserting
universality.  The proof is in Section~\ref{sec:proof-first}.
The next theorem removes the logarithmic overhead with another completely
specified small architecture.

\begin{theorem}[Optimal parameter-radius approximation]
\label{thm:optimal-approximation}
Let $d\in\mathbb N$, $0<p<\infty$, and $0<\beta\le1$.  For every
$f\in\mathcal H^\beta([0,1]^d)$ and $\eta\in(0,1/2]$, there exists a
$\DTA$ network $\phi^\star$ of depth $6$, with hidden-layer widths
$[2d+3,d+2,2,2,2,2]$ and at most $8d+27$ nonzero parameters such that
\[
\norm{f-\phi^\star}_{L^p}\le\eta,
\qquad
\norm{\phi^\star}_{L^\infty}\le 5/4,
\qquad
\log T_\eta^\star\le C_1\eta^{-d/\beta},
\]
where $T_\eta^\star$ denotes the parameter radius of $\phi^\star$, and $C_1$ depends only
on $d$, $\beta$, and $p$.
\end{theorem}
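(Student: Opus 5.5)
The plan is to follow the four-hidden-layer construction of Theorem~\ref{thm:approximation}: localize, encode cell values in one parameter, decode. Only the decoder changes, so that each cell costs $\mathcal O(1)$ bits of parameter magnitude instead of $\mathcal O(\log\eta^{-1})$.

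\textbf{Discretization.} Fix $K\asymp\eta^{-1/\beta}$ and partition $[0,1]^d$ into $K^d$ cubes $Q_{\bm k}$. Since $0<p<\infty$, we may discard a thin ``trifling'' layer of width $\delta/K$ around the cell boundaries. Its measure is $\mathcal O(d\delta)$, and on it the error is at most $\norm{f}_\infty+\norm{\phi^\star}_\infty\le 9/4$. Choosing $\delta\asymp\eta^p$ makes its contribution at most $\eta/2$ in $L^p$. On the good set, $f$ differs from its value at the cell corner by at most $K^{-\beta}\lesssim\eta$.

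\textbf{Localization layers $[2d+3,d+2]$.}
\begin{itemize}
\item For each coordinate, two neurons evaluate $\tau$ at scaled shifts of $Kx_i$. On the good set this produces a piecewise-linear staircase and hence the integer $k_i$ exactly, with radius only polynomial in $K/\delta$.
\item The second layer forms the scalar index $m=\sum_i k_iK^{i-1}$ with respect to a boustrophedon (snake) ordering, so that consecutive indices are adjacent cells.
\item The extra neurons carry constants and pass-through values, using $\DTA(z)=z$ on $[0,1]$.
\item Because $f\in\mathcal H^\beta$, consecutive cells along the snake have values within $K^{-\beta}$. After rounding the values to a grid of mesh $\eta$, the level sequence $y_0,\dots,y_{K^d-1}$ is a walk with increments in $\{-c,\dots,c\}$, with $c$ depending only on $d,\beta$. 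It therefore carries only $\mathcal O(1)$ bits per step.
\end{itemize}

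\textbf{Decoding layers $[2,2,2,2]$.}
\begin{itemize}
\item Convert the index into a dyadic scale with the exponential branch, $\DTA(-u)=2^{-u}-1$ evaluated at $u=c'(K^d-m)$. An outgoing weight $A=2^{c'K^d}$ then gives $A\,2^{-c'(K^d-m)}=2^{c'm}$ without any multiplication unit. This is where $\log T\asymp K^d\asymp\eta^{-d/\beta}$ enters, and it is the only place.
\item Feed $2^{c'm}\theta$, for one stored real $\theta$, into $\tau$. Iterating the tent map reads the window of $\theta$'s binary digits starting at position $c'm$.
\item The simple decoder of Theorem~\ref{thm:approximation} stores each value with $\log\eta^{-1}$ private bits; that is the source of the extra logarithm.
\item The \emph{coupled} decoder instead runs two channels. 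One channel reads the local increment digit at position $m$. The other reads a running normalized position built so that the decoded value is a fixed affine combination of the two channel outputs. The width-$2$ layers keep both channels alive through the tent-map iterations.
\item The overshoot from this affine combination is what allows $\norm{\phi^\star}_\infty\le5/4$ instead of $1$.
\end{itemize}

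\textbf{Main obstacle.} The hard step is designing $\theta$ (or a pair $\theta_1,\theta_2$) so that a \emph{fixed} number of $\tau$-compositions recovers the cumulative level $y_m$ to accuracy $\eta$ from $\mathcal O(1)$ digits per cell. Binary windows alone fail: shifting a window by one digit does not move the decoded value by one level.

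My first attempt will treat the level walk as a shadowing problem for the tent map. I would choose $\theta$ so that the tent-orbit values $\tau(2^{c'm}\theta)$ are themselves an $\eta$-accurate encoding of $y_m$, that is, satisfy $|\tau(2^{c'm}\theta)-y_m|\lesssim\eta$. Each step of the walk is allowed $c'$ tent iterations, and since the increments are $\mathcal O(1)$ levels, bounded expansion per step should let the orbit be steered inductively. I would verify this by induction on $m$, building $\theta$'s digits block by block.

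If that fails, the fallback is to split the bits of $\theta$ into a coarse channel and a fine channel. The coarse channel carries the high-order digits of $y_m$, which change only every $\mathcal O(\eta^{-1})$ steps and so cost little amortized. The fine channel carries the low-order digits locally. The two are recombined in the last affine map.

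Finally, count the parameters, $8d+27$, and check that every weight other than $A$ and the $\tau$-scales is $\mathcal O(1)$ or polynomial in $K/\delta$. This gives $\log T^\star_\eta\le C_1\eta^{-d/\beta}$.
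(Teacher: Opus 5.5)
There is a genuine gap, and it sits exactly at the step you flag as the main obstacle. Your proposal never gives a way to recover the cumulative level $y_m$ from $\mathcal O(1)$ stored bits per cell, and both mechanisms you sketch fail.

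\emph{The shadowing attempt.} The folding identity $\tau(at)=\tau(a\tau(t))$ for $a\in\mathbb N$ gives $v_{m+1}=\tau(2^{c'}v_m)$, where $v_m=\tau(2^{c'm}\theta)$. So consecutive read-outs are linked by an expanding map.
\begin{itemize}
\item Suppose $|v_m-z_m|\le\eta$ for a walk with $|z_{m+1}-z_m|\lesssim\eta$. Then $|\tau(2^{c'}v_m)-v_m|=|v_{m+1}-v_m|\lesssim\eta$ for every $m$.
\item The map $v\mapsto\tau(2^{c'}v)-v$ consists of $2^{c'}$ linear pieces with slopes of absolute value at least $2^{c'}-1$. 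For small $\eta$, the condition above therefore confines all $v_m$ to one $\mathcal O(\eta)$-neighbourhood of a single fixed point of $\tau(2^{c'}\cdot)$.
\item Hence the decoded walk spans only $\mathcal O(1)$ levels, and already $f(\bm x)=x_1$ defeats it.
\end{itemize}
The expansion you expected to help is the obstruction itself. It disappears only when $2^{c'}\gtrsim\eta^{-1}$, that is, when you pay the $\log\eta^{-1}$ bits per cell you set out to avoid.

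\emph{The fallback.} It does not close the gap either.
\begin{itemize}
\item The high-order digits of the walk can change at every step, for instance for a target oscillating across a coarse boundary.
\item Their change times depend on the data, so the encoder cannot compute where the current coarse digit is stored.
\item A fine channel with $\mathcal O(1)$ bits per cell resolves only $\mathcal O(1)$ levels.
\end{itemize}
Separately, $\sum_ik_iK^{i-1}$ is the lexicographic index. A snake ordering is not an affine function of the $k_i$, and it is not needed.

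\emph{What the paper does.} It uses a block-plus-lookup scheme with an exactly computed second address.
\begin{itemize}
\item Cells are ordered lexicographically. Blocks have length $B\asymp\log\eta^{-1}$ with $B\mid H$, so they never cross a row, and increments inside a row lie in $\{-1,0,1\}$.
\item The encoder of Proposition~\ref{prop:block-grid-encoder} outputs the block index $b$ and the position $\nu=(r+1/2)/B$.
\item One integer weight stores base-$G$ digits, with $G\asymp3^{B-1}U$ and $U\asymp\eta^{-1}$. It holds one phase per block, combining the block's starting value with its ternary increment word $w(b)$. It also holds a shared table of phases encoding the prefix sums $s(w,r)$, over all $3^{B-1}$ words and $B$ positions.
\item The first two width-two layers read the block phase twice: once scaled by $8\cdot3^{B-1}$, and once with $\nu$ added. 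A fixed affine combination of the two outputs equals $w(b)+\nu$ exactly, because the shared coding error cancels.
\item This yields the exact integer table address $K^d/B+1+Bw(b)+r$. The last two layers read that entry and add the prefix sum to the carried starting value.
\end{itemize}
The prefix sum thus comes from table lookup, not from iterating a map. The radius is $\log T\asymp(K^d/B+B3^{B-1})\log G\asymp K^d\asymp\eta^{-d/\beta}$, because $\log G\asymp B$. Without such a scheme your proposal does not yet define a network, so neither the radius bound nor the count $8d+27$ can be checked.
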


The logarithmic radius now has order $\eta^{-d/\beta}$, which
Theorem~\ref{thm:near-optimal-parameter} shows is optimal in the worst
case at fixed size.  Compared with Theorem~\ref{thm:approximation}, two additional hidden
layers remove the logarithmic overhead, while the bound on the number of
nonzero parameters increases by only twenty.
The proof is in Section~\ref{sec:proof-optimal}.

Approximating the whole H\"older ball at fixed size therefore requires
exponentially large radii as the error tends to zero.  Increasing width
can reduce these magnitudes.  The next theorem quantifies the trade-off
uniformly in both width and radius.

\begin{samepage}
\begin{theorem}[Joint width--radius approximation]\label{thm:joint-approximation}
Let $d\in\mathbb N$, $0<p<\infty$, and $0<\beta\le1$.  There is a constant
$C_2=C_2(d,\beta,p)>0$ such that, for every integer $N\ge2d+3$ and every $T\ge1$,
\[
\sup_{f\in\mathcal H^\beta}
\inf_{g\in\mathcal F_\DTA(N,17+\lceil3p\beta\rceil,T)}
\norm{f-g}_{L^p}
\le C_2\bigl[N^2\log(eNT)\bigr]^{-\beta/d}.
\]
\end{theorem}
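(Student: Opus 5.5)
Set $K\asymp[N^2\log(eNT)]^{1/d}$. The plan is to partition $[0,1]^d$ into $K^d$ cubes of side $1/K$, approximate $f$ on each cube by a quantized value, and encode all these values in the network with $\mathcal O(N^2)$ parameters. Each parameter carries about $\log(eNT)$ bits, which is the count that produces the rate. The Hölder condition gives error $K^{-\beta}$ on each cube. Fix a resolution $\delta\asymp K^{-\beta}$.

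I would use the bit-extraction idea already present in Theorems~\ref{thm:approximation} and~\ref{thm:optimal-approximation}. Let $Q$ be the cube index and $q_Q$ the quantized value. Writing $q_Q$ as a telescoping sum of increments $q_{Q+1}-q_Q$, each bounded by $K^{-\beta}$, means each increment needs only $\mathcal O(1)$ bits rather than $\log(1/\delta)$. This is the step that removes the logarithm, as in the coupled decoder of Theorem~\ref{thm:optimal-approximation}. The total information is then $\asymp K^d\asymp N^2\log(eNT)$ bits.

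The construction has five steps.

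\textbf{(i) Index map.} Map $\bm x$ to a one-dimensional cube index using the triangular wave and the exponential branch. This mirrors the first hidden layers of Theorem~\ref{thm:optimal-approximation}, run at width $\asymp N$ and radius $\mathcal O(K)$. A small "bad" region near cube boundaries has measure $\lesssim\epsilon$ and is handled in $L^p$. Since the output is bounded, this bad set costs $\epsilon^{1/p}$. The $3p\beta$ extra layers in the depth $17+\lceil3p\beta\rceil$ suggest the thin bad strips are shrunk iteratively until their contribution $\epsilon^{1/p}\le K^{-\beta}$.

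\textbf{(ii) Split the index.} Write the index as $(i,j)$, with $i$ ranging over $\asymp N^2$ blocks and $j$ over $\asymp\log(eNT)$ positions inside a block.

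\textbf{(iii) Select a block.} Store $\asymp N^2$ parameters, each an integer with $\asymp\log(eNT)$ bits, so each is at most $T$ after rescaling. Use a width-$N$, two-layer selector, realized with DTA's piecewise-linear behaviour near the origin, to pick the parameter for block $i$. There are $N$ neurons, each with $N$ weights, giving $N^2$ stored numbers.

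\textbf{(iv) Decode bits.} Extract the relevant bits from the selected number with the triangular wave. Evaluating $\tau(2^{-j}\cdot)$ needs radius $2^{\log T}=T$. Sum the increments for the positions up to $j$, via an encoded prefix or partial-sum trick.

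\textbf{(v) Unit radius.} When $T=1$, the factor $\log(eN)$ must come from width. The DTA exponential branch lets an input scaled by $\mathcal O(1)$ weights produce dyadic scales $2^{-u}$ with $u\le N$ through composition. So the $\log(eN)$ bits per parameter are obtained by iterating across neurons, not from magnitude.

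The \textbf{main obstacle} is step (iii)–(iv): storing $N^2$ numbers at precision $\log(eNT)$ with only $N$ width and fixed depth, while keeping every weight at most $T$. I would first try the standard ReLU point-fitting / bit-extraction scheme of \citet{ShenYangZhang2022ReLU} and \citet{BartlettEtAl2019}, replacing ReLU gadgets with DTA on $[0,1]$, where DTA is the identity. I would then check whether the triangular wave lets bit extraction reach $\log T$ bits per layer instead of $\mathcal O(1)$.

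Finally, add the increments, clip with $\pi$, and combine the errors:
\[
K^{-\beta}+\epsilon^{1/p}\lesssim[N^2\log(eNT)]^{-\beta/d}.
\]
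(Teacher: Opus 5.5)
Your outline has the right global shape: a grid, rounded values whose increments lie in $\{-1,0,1\}$, about $N^2$ stored codes, and a separate unit-radius construction. But steps (iii)--(iv), which you leave open, are the substance of the proof, and the route you suggest for them fails. ReLU-style bit extraction at fixed depth cannot reach $\log T$ bits per parameter. Its capacity is capped by the $\mathcal O(N^2\log N)$ pseudo-dimension of fixed-depth piecewise-linear networks \citep{BartlettEtAl2019}, which is why the ReLU rate stops at $[N^2\log(eN)]^{-\beta/d}$. The missing idea is to read \emph{one} phase at an input-dependent position rather than extract bits. The code is a single large integer \emph{weight} $m$. The exponential branch turns the address $i$ into $G^{-i}=\DTA(-gi)+1$. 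One triangular unit then reads $\theta_i$ from $mG^{-i}=k_i+\theta_i+e_i$, with guards keeping $e_i$ on one linear branch.

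As written, (iii)$\to$(iv) also multiplies a selected activation by a position-dependent factor $2^{-j}$, which is a product of two network outputs. The paper's short-code array avoids this by packing column, position and row into one exponent. It selects the column through adjacent differences of geometric tails and the row through a cyclic selector with paired reads, which gives $\log T\lesssim Q/N^2+\log Q+\log U$.

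Your blocks of $\asymp\log(eNT)$ increments also have no workable prefix sum. At width $2d+3$, which the theorem includes, their digits cannot be extracted in parallel. Storing prefix sums directly costs $\log\log$ bits per entry. The paper instead uses blocks of length $B\asymp\log U$ and stores each block's initial value and ternary word $w(b)$ in one phase. It reads that phase at two scales so the shared coding error cancels exactly in an affine combination. The resulting exact second address $Q/B+Bw(b)+r+1$ then reads the partial sum $s(w,r)$ from a universal table of only $B3^{B-1}$ entries.

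Your count ``$\asymp\log(eNT)$ bits per parameter, each at most $T$'' also does not come from a single mechanism. A $\log(eNT)$-bit integer exceeds $T$, and the radius alone supplies $\log T$ bits, hence none at $T=1$. The paper splits at $T=N^\gamma$ with $\gamma=3+3(p+2)\beta/d$. If $T\ge N^\gamma$, it runs the width-$N$ coupled construction at $\eta\asymp[N^2\log(eNT)]^{-\beta/d}$. All polynomial overheads ($H^d$ from the base-$H$ address, $B/(H\delta)$, $\log Q$, $\log U$) then fit inside a fixed fraction of $\log T$. If $T<N^\gamma$, then $\log(eNT)\le(1+\gamma)\log(eN)$, and Theorem~\ref{thm:unit-radius} suffices because $\mathcal F_\DTA(N,L,1)\subseteq\mathcal F_\DTA(N,L,T)$.

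Your step (v), which must carry this whole second case, is not yet a construction, and its mechanism is the wrong one. With weights at most one, small scales $2^{-u}$ from the exponential branch cannot drive digit extraction, because that needs large multipliers. For the same reason, your radius $\mathcal O(K)$ in step (i) is unavailable. The unit-radius proof works differently:
\begin{enumerate}
\item It amplifies by summing repeated copies and folding $\tau(a\tau(t))=\tau(at)$ across layers. This also produces the $\lceil3p\beta\rceil$ strip-sharpening layers you guessed.
\item It obtains exact one-hot selection among $\asymp N^2$ stored reals by inverting the circulant matrix of wide tents, using a matrix whose entries are at most one, together with min-gates.
\item It decodes base-five increment words of $\asymp\log N$ digits from $2\cdot5^{j-1}$ summed copies; this is where the factor $\log(eN)$ comes from.
\end{enumerate}
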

\end{samepage}

At fixed depth, the quantity $N^2\log(eNT)$ determines the approximation
scale.  The endpoint $T=1$ gives the width rate
$[N^2\log(eN)]^{-\beta/d}$, while fixed $N$ gives the radius rate
$[\log(eT)]^{-\beta/d}$.  The theorem also covers choices in which both
vary, with one constant independent of $N$ and $T$.
Consequently, an accuracy requirement does not prescribe a unique network
size: it leaves a quantitative choice between width and radius.
The proof is in Section~\ref{sec:proof-joint}.  The next theorem shows
that this joint order cannot be improved within the stated regularity
class of activations.

\begin{theorem}[Joint width--radius lower bound]\label{thm:near-optimal-parameter}
Let $d,L\in\mathbb N$, $0<\alpha,\beta\le1$, and $0<p<\infty$.  If
$\varphi:\R\to\R$ is globally $\alpha$-H\"older, then there is
$c=c(d,L,\alpha,\beta,p,\varphi)>0$ such that, for all $N\in\mathbb N$ and
$T\ge1$,
\[
\sup_{f\in\mathcal H^\beta}
\inf_{g\in\mathcal F_\varphi(N,L,T)}
\norm{f-g}_{L^p([0,1]^d)}
\ge c\bigl[N^2\log(eNT)\bigr]^{-\beta/d}.
\]
\end{theorem}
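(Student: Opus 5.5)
Our plan is a counting argument: a packing of the H\"older ball is played against a covering of the network class.

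\textbf{Step 1: metric entropy of the network class.}
Fix $\eta\in(0,1]$. By Proposition~\ref{prop:holder-class-cover}, or rather its parameter-discretization argument, the class $\mathcal F_\varphi(N,L,T)$ has a finite cover in the uniform norm on $[0,1]^d$ (not only on sample points) whose log-size is at most $C N^2\log(eNT/\eta)$. The construction is as follows. There are $P\le C_L(N^2+Nd+N)$ parameters, each lying in $[-T,T]$. Place them on a grid of mesh $\delta$. A perturbation of size $\delta$ in every parameter then changes the output uniformly by at most $(C_{\varphi,L} N T)^{c_L}\delta^{\alpha^{L}}$. This follows by induction over layers, using that $\varphi$ is globally $\alpha$-H\"older, which gives $|\varphi(t)|\le |\varphi(0)|+[\varphi]_\alpha(1+|t|)$ and hence a polynomial bound on hidden activations on $[0,1]^d$. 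Clipping by $\pi$ is $1$-Lipschitz. Choosing $\delta=(\eta/(C NT)^{c_L})^{\alpha^{-L}}$ gives
\[
\log\mathcal N_\infty(\eta)\le P\log(2T/\delta+1)\le C' N^2\log(eNT/\eta),
\]
with $C'$ depending on $d,L,\alpha,\varphi$. The $\alpha^{-L}$ exponent only enters the constant.

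\textbf{Step 2: packing of the H\"older ball.}
Partition $[0,1]^d$ into $K^d$ cubes of side $1/K$. Place on each cube a scaled bump $K^{-\beta}\psi(K\bm x-\bm j)$ with signs $\pm1$, where $\psi$ has small Lipschitz constant so that all sign patterns lie in $\mathcal H^\beta$. By Varshamov--Gilbert there are $2^{cK^d}$ patterns whose pairwise Hamming distance is at least $K^d/8$. Any two of them differ in $L^p$ (quasi-norm) by at least $c_0K^{-\beta}$. Indeed, the difference equals $2K^{-\beta}|\psi|$ on at least $K^d/8$ cubes, so $\|f-f'\|_p^p\ge c K^{-\beta p}$.

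\textbf{Step 3: combine.}
Suppose every $f\in\mathcal H^\beta$ had some $g_f$ with $\|f-g_f\|_p\le\varepsilon:=c_1K^{-\beta}$. Replace $g_f$ by its nearest element of the uniform $\varepsilon$-cover; since $[0,1]^d$ has unit measure, this costs at most $\varepsilon$ in $L^p$. By the quasi-triangle inequality with constant $2^{1/p}$ for $p<1$, two packed functions mapped to the same cover element would be within $C_p\varepsilon<c_0K^{-\beta}$ of each other, which is a contradiction once $c_1$ is small. Therefore
\[
cK^d\le C' N^2\log(eNT K^{\beta}/c_1).
\]
Take $K$ the smallest integer with $cK^d> 2C'N^2\log(eNT)$, so that $K^d\asymp N^2\log(eNT)$. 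Then $\log K\lesssim\log(eNT)$, and the inequality above fails after adjusting constants. Hence the worst-case error exceeds $c_1K^{-\beta}\gtrsim [N^2\log(eNT)]^{-\beta/d}$.

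\textbf{Main obstacle.}
The delicate step is Step 1's uniform perturbation bound. Without the Lipschitz property, errors compound through $\alpha$-H\"older compositions. It must still be checked that the resulting $\log(1/\delta)$ stays $\lesssim_{L,\alpha}\log(eNT/\eta)$, so that $\log K$ does not spoil the rate. The argument also needs care with the bias/affine bookkeeping and with the $p<1$ quasi-norm constants.
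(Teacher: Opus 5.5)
Your proposal is correct and follows the paper's argument: a H\"older packing (Lemma~\ref{lem:holder-packing}) is played against the parameter-discretization cover of Proposition~\ref{prop:holder-class-cover}. One correction is needed: the factor $2$ in your choice of $K$ must be replaced by a sufficiently large constant $A$ (large enough to dominate $C'\bigl(C_0+\tfrac{\beta}{d}\log A\bigr)$, where $C_0$ collects $\log(1/c_1)$ and similar terms). With factor $2$ and $N=T=1$, the term $\log(1/c_1)$ alone can exceed $\log(eNT)$. This fixed-point choice of constant mirrors the paper's choice of $c_0$ in Proposition~\ref{prop:necessary-parameter-growth}.

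The only other difference is minor. You use a sup-norm cover over $[0,1]^d$ directly, which the perturbation bound in that proof already gives after a harmless union over the at most $L$ depths. The paper instead phrases everything through uniform empirical covers and picks witness points $\bm x_{ij}$ to convert $L^p$ separation into empirical separation, so that the same covering estimate also serves the regression theorem.
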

The lower bound holds for the full clipped class and every globally
H\"older activation at fixed depth, not just for the networks constructed
here.  Taking $\varphi=\DTA$ and $L=17+\lceil3p\beta\rceil$ gives
joint optimality of Theorem~\ref{thm:joint-approximation}.
Taking $N=2d+3$ and $L=6$ shows that a radius sufficient for the whole
H\"older ball must satisfy $\log T\gtrsim\eta^{-d/\beta}$ as
$\eta\downarrow0$.  This establishes the optimal order in
Theorem~\ref{thm:optimal-approximation}; it is a worst-case requirement,
not a claim that every individual target needs large parameters.
The proof is in Section~\ref{sec:lower-proof}.

\subsection{Fixed-size Transformer approximation}\label{sec:transformer}

Transformers are now a standard architecture for sequence modeling.  We show
that the fixed-size phenomenon above also extends to matrix-valued maps on
sequences of fixed length.  Our construction consists of one self-attention
layer between two feedforward blocks; their widths and attention dimensions
are independent of the target accuracy, while the required parameter radius
is controlled explicitly.

We use columnwise attention as in \citet{JiaoLaiWangYan2026}.  A sequence of
$n$ tokens, each with $q$ coordinates, is represented by
$\bm X\in\R^{q\times n}$, with one token per column.  A feedforward affine
layer has the form
\[
\bm X\longmapsto\bm W\bm X+\bm B,
\qquad
\bm W\in\R^{q_{\mathrm{out}}\times q},
\quad
\bm B\in\R^{q_{\mathrm{out}}\times n}.
\]
Thus $\bm W$ is shared across token positions, whereas the bias may depend on
position; every entry of $\bm B$ is counted as a scalar parameter.  For $\bm A=(A_{rs})\in\R^{n\times n}$, softmax acts columnwise as
$[\bm\sigma_{\mathrm S}(\bm A)]_{rs}
=e^{A_{rs}}/\sum_{t=1}^n e^{A_{ts}}$, and the self-attention layer is
\[
\bm{\mathcal F}_{\mathrm{SA}}(\bm X)
=\bm X+\sum_{i=1}^h
\bm W_O^{(i)}\bm W_V^{(i)}\bm X\,
\bm\sigma_{\mathrm S}\!\bigl(
(\bm W_K^{(i)}\bm X)^\top(\bm W_Q^{(i)}\bm X)
\bigr).
\]
Here the query and key maps determine how tokens are mixed, the value map
supplies the quantities being mixed, and the output map returns the result to
the token space; the leading $\bm X$ is the residual connection.  Feedforward
activations act coordinatewise.  The parameter radius is the largest absolute
value among all entries of the affine and attention maps.

For $\bm X=(x_{rs})$, write
$\norm{\bm X}_{\max}=\max_{r,s}|x_{rs}|$.  Let
$\mathcal H_{d,n}^\beta$ be the class of maps
$\bm f=(f_{rs}):[0,1]^{d\times n}\to\R^{d\times n}$ such that
$\max_{r,s}\norm{f_{rs}}_\infty\le1$ and
\(
|f_{rs}(\bm X)-f_{rs}(\bm Y)|
\le
\norm{\bm X-\bm Y}_{\max}^\beta
\)
for all $\bm X,\bm Y$ and all $r,s$.  Thus the approximation problem has
$n$ input tokens in $\R^d$ and $n$ output tokens of the same dimension.  For
$0<p<\infty$, set
\[
d_p(\bm f,\bm g)
=
\left(
\int_{[0,1]^{d\times n}}
\sum_{r=1}^d\sum_{s=1}^n
|f_{rs}(\bm X)-g_{rs}(\bm X)|^p
\,\dd\bm X
\right)^{1/p};
\]
for $p<1$, this is the usual $L^p$ quasi-distance.

\begin{theorem}[Fixed-size Transformer approximation]
\label{thm:transformer}
Let $d,n\in\mathbb N$, $0<p<\infty$, $0<\beta\le1$, and
$\bm f\in\mathcal H_{d,n}^\beta$.  For every $\eta\in(0,1/2]$, there is a
Transformer
\[
\bm{\mathcal T}_\eta
=
\bm{\mathcal F}_{\mathrm{FF}}^{\mathrm{out}}
\circ
\bm{\mathcal F}_{\mathrm{SA}}
\circ
\bm{\mathcal F}_{\mathrm{FF}}^{\mathrm{in}}.
\]
The input feedforward block has hidden widths $[2d+3,d+2,6]$ and output
dimension six, the self-attention block has one head of size two, and the
output feedforward block has hidden widths $[2d,2d,2d,2d]$ and output
dimension $d$.  Its parameter radius $T_{\mathrm{Tr},\eta}$ and realization
satisfy
\[
d_p(\bm f,\bm{\mathcal T}_\eta)\le\eta,
\qquad
\sup_{\bm X\in[0,1]^{d\times n}}
\norm{\bm{\mathcal T}_\eta(\bm X)}_{\max}\le 5/4,
\qquad
\log T_{\mathrm{Tr},\eta}
\le C_{\mathrm{Tr}}\eta^{-dn/\beta},
\]
where $C_{\mathrm{Tr}}>0$ depends only on $d,n,p$, and $\beta$. Both feedforward blocks are activated by \DTA.
\end{theorem}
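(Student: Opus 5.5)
The plan is to reduce to the scalar construction of Theorem~\ref{thm:optimal-approximation}, applied to the $dn$ coordinate functions viewed as functions of the flattened input $\bm X\in[0,1]^{dn}$. Each $f_{rs}$ is a $\beta$-H\"older function on $[0,1]^{dn}$ in the max norm, so that theorem at input dimension $dn$ already gives parameter radius $\log T\lesssim\eta^{-dn/\beta}$. The difficulty is architectural. The feedforward blocks act tokenwise with a shared $\bm W$ and may only use position-dependent biases, so no single token sees the whole input. The one attention head must therefore be used to aggregate information across tokens.

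\textbf{Step 1: encode each token's cell with the input block.} I would partition $[0,1]^{dn}$ into cubes of side $K^{-1}$ with $K\asymp\eta^{-1/\beta}$, discarding a thin boundary region of small measure; this is legitimate because we work in $L^p$ with $p<\infty$. Token $s$ lies in a cell of $[0,1]^d$ with index $\iota_s\in\{0,\dots,K^d-1\}$. The input block, using the first two layers (widths $2d+3,d+2$) exactly as in the proof of Theorem~\ref{thm:optimal-approximation}, computes $\iota_s$ or a monotone surrogate of it on the good set. The position-dependent bias then forms the combined code $\iota_s K^{d(s-1)}$ in the last layer. The six output coordinates carry this partial code, a constant used to make the attention weights uniform, and pass-through slots.

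\textbf{Step 2: sum the codes with attention.} I would take zero query/key maps, so that the softmax matrix is $n^{-1}\one_{n\times n}$. The head then returns, in every column, $n^{-1}$ times the sum of the value vectors; the value map, with entries bounded by the radius, selects the partial-code coordinate. After the residual, every token therefore holds $J=\sum_s\iota_sK^{d(s-1)}$, which is the index of the cell of $[0,1]^{dn}$ containing $\bm X$, rescaled by $n^{-1}$. The residual term is cancelled or ignored by the next affine map, since it only adds a known token-local quantity. The code $J$ ranges over $K^{dn}\asymp\eta^{-dn/\beta}$ values. Rescaling these values as in the scalar construction costs only $\log T\lesssim\eta^{-dn/\beta}$.

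\textbf{Step 3: decode with the output block.} Token $s$ must produce $(f_{rs})_{r=1}^d$ at the centre of cell $J$. I would apply $d$ parallel copies of the coupled decoder from Theorem~\ref{thm:optimal-approximation}, giving width $2d$ and four layers. The position-dependent biases store, for each $(r,s)$, the target values $f_{rs}(\text{centre of cell }J)$ quantized to $\eta$. Consecutive quantized values differ by at most about $K^{-\beta}$, so they form a bounded-increment sequence. The decoder reads them off through triangular-wave bit extraction driven by $J$ at frequency $\exp(O(K^{dn}))$. The quantization error and the H\"older oscillation within a cell are both at most $\eta/3$, and the discarded region contributes at most $\eta/3$ in $d_p$. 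The sup bound $5/4$ is inherited from the scalar decoder.

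\textbf{Main obstacle.} The hard step is making the coupled decoder of Theorem~\ref{thm:optimal-approximation} work with inputs produced by the softmax average. That means checking that division by $n$ and the residual term do not destroy the exact integrality the decoder relies on, with all scalings kept inside radius $\exp(C\eta^{-dn/\beta})$. My first attempt would fold the factor $n$ into the value and output maps, so that the decoder receives exactly the same integer-scaled input as in the scalar proof.
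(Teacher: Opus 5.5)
There is a genuine gap in Steps~1--2, and it is not the obstacle you flag. In the input block the weight matrices are shared by all tokens, and the position-dependent biases are additive. Your claim that the bias ``forms the combined code $\iota_sK^{d(s-1)}$'' therefore has no mechanism behind it:
a bias after the last activation shifts $\iota_s$ rather than scaling it;
before a $\DTA$ layer, a shift on the triangular branch cannot change the slope magnitude fixed by the shared weight;
on the exponential branch a shift rescales $2^{-\alpha\iota_s}$ rather than $\iota_s$.

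With zero query and key maps the head returns $n^{-1}\sum_s\bm W_V\bm z_s$, and $\bm W_V,\bm W_O$ are again shared. The aggregated quantity is thus a sum of per-token features that depend on position only through such shifts, so it cannot carry the place values $K^{d(s-1)}$ of a mixed-radix index. You also cannot route the $n$ partial codes into $n$ separate coordinates and weight them afterwards, because the output dimension six and the head size two do not grow with $n$. By comparison, the factor $n^{-1}$ and the residual term are harmless. The attention output can be written into coordinates that are zero before the residual, and the factor can be absorbed into $\bm W_O$.

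The missing idea is to let the softmax do the positional multiplication.
\begin{itemize}
\item Use the position-dependent bias only to create the additive row $(s-1)/\max\{n-1,1\}$, and map it by a shared weight to the key $d(s-1)\log H$.
\item Take the query to be a constant row of ones. Every column of $\bm\sigma_{\mathrm S}$ is then $\bigl(H^{d(t-1)}\bigr)_{t=1}^n/\sum_tH^{d(t-1)}$.
\item An output matrix with entries $\sum_tH^{d(t-1)}$ cancels the normalizer. Every token then receives exactly $\sum_sH^{d(s-1)}C_s$, where $C_s$ is the partial block code computed from token $s$ by the block-grid staircases.
\end{itemize}
All these parameters are polynomial in $H$, so they cost nothing in the radius bound.

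Two further ingredients are missing from your plan.

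First, the coupled decoder of Proposition~\ref{prop:block-decoder} is driven by the pair $(b,\nu)$, not by the global index alone. Decoding from $J$ with the dyadic table decoder would add a factor $\log(\eta^{-1})$ to $\log T$. Since $\nu$ depends only on token~$1$, it must be broadcast: a position-dependent bias makes the second value coordinate equal to $\nu$ in column~$1$ and $0$ elsewhere, and the same weighting and rescaling return $\nu$ exactly.

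Second, the output-block biases cannot store the $K^{dn}$ table values; those live in the single decoder integer. The position-dependent first-layer bias must instead shift the address by $(q(r,s)-1)M/B$, with $q(r,s)=(s-1)d+r$ and $M=H^{dn}$, into the $(r,s)$-th segment of one concatenated table of length $Q=dnH^{dn}$. This requires $B\mid M$. It also requires that every segment boundary $qM-1$ be congruent to $H-1$ modulo $H$, so that the bounded-increment hypothesis survives the concatenation. Together these give $\log T\lesssim Q\lesssim\eta^{-dn/\beta}$.
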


For fixed $d$ and $n$, the entire architecture is therefore independent of
accuracy: one attention head and the displayed feedforward blocks suffice for
every $\eta$, and only the parameter magnitudes grow.  The exponent
$dn/\beta$ reflects the $dn$ scalar input coordinates.  The construction and
proof are in \hyperref[app:transformer]{Appendix~\ref*{app:transformer}}.

\subsection{Sampling model and the regression rate}
We now state the regression consequence of the joint approximation law.
The estimator minimizes empirical squared loss over the full clipped
width--depth--radius class.  The design and noise assumptions, loss, and
optimization tolerance are specified below.

Fix $d\in\mathbb N$, $0<\beta\le1$, and
$f\in\mathcal H^\beta([0,1]^d)$.  Let $m_d$ denote Lebesgue measure on
$[0,1]^d$, and suppose that $\mu$ has density
$w=\mathrm d\mu/\mathrm dm_d\le\kappa$ $m_d$-almost everywhere for some
$\kappa\ge1$.  For each $M\in\mathbb N$, let
\[
\bm X_1,\ldots,\bm X_M\stackrel{\mathrm{i.i.d.}}{\sim}\mu,
\qquad
\varepsilon_1,\ldots,\varepsilon_M\stackrel{\mathrm{i.i.d.}}{\sim}\nu,
\qquad
Y_i=f(\bm X_i)+\varepsilon_i,
\]
where the design and noise variables are independent, and $\nu$ is centered
and $\sigma$-sub-Gaussian for some $\sigma\ge0$:
$\mathbb E_\nu e^{t\varepsilon}\le e^{\sigma^2t^2/2}$ for every $t\in\mathbb R$.
An unqualified expectation is taken over the observations generated by $f$.
For measurable $g:[0,1]^d\to[-1,1]$, define 
\[
\widehat{\mathcal R}_M(g) =\frac1M\sum_{i=1}^M\bigl(g(\bm X_i)-Y_i\bigr)^2.
\]
 If $(\bm X,\varepsilon,Y)$ is an independent copy of one observation,
then
\begin{equation}\label{eq:population-risk-identity}
\begin{aligned}
\mathbb E\{(g(\bm X)-Y)^2-(f(\bm X)-Y)^2\}
&=\mathbb E(g(\bm X)-f(\bm X))^2
 -2\mathbb E\{(g(\bm X)-f(\bm X))\varepsilon\}\\
&=\norm{g-f}_{L^2(\mu)}^2.
\end{aligned}
\end{equation}
The density assumption enters only through
\begin{equation}\label{eq:density-transfer}
\norm{g-f}_{L^2(\mu)}^2
=\int_{[0,1]^d}|g-f|^2w\,\mathrm dm_d
\le\kappa\norm{g-f}_{L^2([0,1]^d)}^2.
\end{equation}
More generally, if $w\in L^q(m_d)$ for some $q>1$, H\"older's inequality
gives 
\[
\norm{g-f}_{L^2(\mu)}^2 \le\norm{w}_{L^q(m_d)} \norm{g-f}_{L^{2q/(q-1)}([0,1]^d)}^2,
\]
 and Theorem~\ref{thm:joint-approximation} applies at that finite
Lebesgue exponent.

We call $\widehat f_M$ an $M^{-1}$-approximate empirical-risk minimizer
over a class $\mathcal G$ if it is measurable, belongs to $\mathcal G$, and 
\[
\widehat{\mathcal R}_M(\widehat f_M) \le\inf_{g\in\mathcal G}\widehat{\mathcal R}_M(g)+M^{-1} \qquad\text{almost surely}.
\]

\begin{theorem}[Optimal regression along the width--radius curve]
\label{thm:generalization}
Fix $d\in\mathbb N$, $0<\beta\le1$, and $M\in\mathbb N$, and assume the
sampling model above. Let $N\ge2d+3$ be an integer and $T\ge1$, and suppose
that, for some fixed constants $0<c_-\le c_+<\infty$,
\begin{equation}\label{eq:width-radius-curve}
c_-M^{\frac{d}{2\beta+d}}
\le N^2\log(eNT)
\le c_+M^{\frac{d}{2\beta+d}},
\end{equation}
\begin{samepage}
If $\widehat f_M$ is a measurable $M^{-1}$-approximate empirical-risk
minimizer over $\mathcal F_\DTA(N,23,T)$, then
\[
\sup_{f\in\mathcal H^\beta}
\mathbb E\norm{\widehat f_M-f}_{L^2(\mu)}^2
\le C_3M^{-\frac{2\beta}{2\beta+d}},
\]
where the constant $C_3$ depends only on $d,\beta,\kappa,\sigma, c_-, c_+$.
\par
\end{samepage}
\end{theorem}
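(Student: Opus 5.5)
We combine three ingredients already stated: the oracle inequality of Proposition~\ref{prop:oracle} (display \eqref{eq:intro-oracle}), the covering bound of Proposition~\ref{prop:holder-class-cover} for globally H\"older activations, and the joint approximation law of Theorem~\ref{thm:joint-approximation} at $p=2$. Since $17+\lceil 6\beta\rceil\le 23$ and the class $\mathcal F_\DTA(N,L,T)$ is increasing in $L$, Theorem~\ref{thm:joint-approximation} applies to $\mathcal G=\mathcal F_\DTA(N,23,T)$. Before invoking the oracle inequality we check its hypotheses on $\mathcal G$. The class is nonempty and maps into $[-1,1]$ because of the clipping $\pi$. It is separable in the supremum norm: the parameter-to-function map is continuous (uniformly on $[0,1]^d$, since $\DTA$ is $1$-Lipschitz and the parameters range over a compact box), so rational parameters give a dense countable subset.

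\textbf{Approximation term.} For $f\in\mathcal H^\beta$, the density transfer \eqref{eq:density-transfer} and Theorem~\ref{thm:joint-approximation} with $p=2$ give
\[
\inf_{g\in\mathcal G}\norm{g-f}_{L^2(\mu)}^2\le\kappa C_2^2\bigl[N^2\log(eNT)\bigr]^{-2\beta/d}\le\kappa C_2^2c_-^{-2\beta/d}M^{-\frac{2\beta}{2\beta+d}},
\]
where the last step uses the lower half of \eqref{eq:width-radius-curve}.

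\textbf{Estimation term.} Since $\DTA$ is $1$-Lipschitz, it is globally $\alpha$-H\"older with $\alpha=1$. Proposition~\ref{prop:holder-class-cover} at depth $23$, with $\delta=(48M)^{-1}$ and $n=2M$, then yields
\[
\log\mathcal N\bigl((48M)^{-1},\mathcal G,2M\bigr)\le C N^2\log(48eMNT),
\]
where $C$ depends only on $d$. Write $\log(48eMNT)\le\log 48+\log M+\log(eNT)$. The term $N^2\log(eNT)$ is at most $c_+M^{d/(2\beta+d)}$ by the upper half of \eqref{eq:width-radius-curve}.

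The remaining term $N^2\log(eM)$ is the main point. It is where the logarithm would be lost, and we control it using $\log(eNT)\ge1$ together with $N\ge 2d+3\ge1$. From $N^2\le N^2\log(eNT)\le c_+M^{d/(2\beta+d)}$ we get only $N^2\log M\lesssim M^{d/(2\beta+d)}\log M$, which is too weak. The correct observation is that $\log(eNT)\ge\log(eN)$, so the width alone does not absorb $\log M$. We therefore split into two cases.

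\textbf{Case 1: $\log(eNT)\ge\log(eM)$.} Then $N^2\log(eM)\le N^2\log(eNT)\le c_+M^{d/(2\beta+d)}$.

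\textbf{Case 2: $T<M$.} Then $N^2\le c_+M^{d/(2\beta+d)}$. Discretizing the parameters at scale $\delta/(CNT)^{L}$ costs $\log(NT/\delta)$ per parameter, and $\log(M)$ is genuinely present here. So this case needs either a sharper, radius-free estimate or an empirical-cover argument like the one in \eqref{eq:intro-dta-risk}, which the paper writes as $N^2\log(eMNT)/M$.

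The Case 2 obstacle dissolves once we bound the width directly. The curve condition gives $N^2\log(eN)\le c_+M^{d/(2\beta+d)}$, hence $N\le C M^{d/(2(2\beta+d))}/\sqrt{\log M}$ for large $M$. Therefore
\[
N^2\log M\lesssim\frac{M^{d/(2\beta+d)}}{\log(eN)}\log M .
\]
This is bounded by a constant times $M^{d/(2\beta+d)}$ only if $\log M\lesssim\log(eNT)$. Since $N^2\log(eNT)\ge c_-M^{d/(2\beta+d)}$, we have $\log(eNT)\ge\log\bigl(c_-M^{d/(2\beta+d)}/N^2\bigr)$. Together with $N^2\le c_+M^{d/(2\beta+d)}$, this shows that $eNT$ is at least polynomial in $M$ unless $N^2$ is close to $M^{d/(2\beta+d)}$, and in that regime $\log(eN)\gtrsim\log M$. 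In both subcases $\log(eM)\le C'\log(eNT)$ with $C'$ depending on $d,\beta,c_\pm$. This logarithmic comparison is the step I expect to be the crux, and I would verify it carefully, including small $M$, where all quantities are bounded and constants absorb everything.

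Given the comparison, the estimation term is at most $C''M^{d/(2\beta+d)}/M=C''M^{-2\beta/(2\beta+d)}$. Adding the approximation term completes the proof, with $C_3$ depending on $d,\beta,\kappa,\sigma,c_-,c_+$.
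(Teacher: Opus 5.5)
Your argument is sound, but the decisive step, absorbing the $\log M$ in the covering entropy, takes a different route from the paper's.

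\textbf{Your route.} With $a=d/(2\beta+d)$, you prove the comparison $\log(eM)\le C'\log(eNT)$ along the curve. This uses \emph{both} halves of \eqref{eq:width-radius-curve}:
if $N^2\ge M^{a/2}$, then $\log(eNT)\ge\log(eN)\ge\frac a4\log(eM)$;
if $N^2<M^{a/2}$, the lower half gives $\log(eNT)\ge c_-M^a/N^2>c_-M^{a/2}\ge\frac{ac_-}{a+2}\log(eM)$.
State the second case this way, not via the weaker $\log(eNT)\ge\log(c_-M^a/N^2)$; then small $M$ needs no separate treatment.

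\textbf{The paper's route.} It writes $N^2\log M=a^{-1}\{2N^2\log N+N^2\log(M^a/N^2)\}$ and applies $\log x\le x/e$ to the second term. This gives $N^2\log M\le a^{-1}(2c_++e^{-1})M^a$ from the upper half alone; the lower half enters only through the bias.

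\textbf{Comparison.} The paper's route is shorter. It shows the estimation term is controlled by $N^2\log(eNT)\lesssim M^a$ by itself: the width, not the radius, absorbs $\log M$. Yours gives the more conceptual statement that the full entropy $N^2\log(eMNT)$ is comparable to $N^2\log(eNT)$. The price is a $c_-$-dependence in the variance constant, which is harmless because $C_3$ may depend on $c_-$.

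\textbf{Write-up.} Two corrections.
\begin{enumerate}
\item Your intermediate bound $N\le CM^{a/2}/\sqrt{\log M}$ follows from the upper half alone and already gives $N^2\log M\le C^2M^a$. This is exactly the paper's mechanism, so your next sentence, that the bound holds ``only if $\log M\lesssim\log(eNT)$'', is wrong, and you could have stopped there.
\item Delete the Case~1/Case~2 split. The cases are not complementary: $\log(eNT)<\log(eM)$ means $NT<M$, not $T<M$. Case~2 is also left unresolved, and your final comparison supersedes both cases.
\end{enumerate}
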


The theorem gives a family of minimax-optimal model choices, not a single
calibration of network size. This family is nonempty for every integer
$M\ge\left\lceil\{(2d+3)^2\log(e(2d+3))/c_+\}^{(2\beta+d)/d}\right\rceil$,
since one may take $N=2d+3$ and choose $T\ge1$ accordingly. Width may remain fixed, with
$\log T\asymp M^{\frac{d}{2\beta+d}}$, or radius may remain one, with
$N^2\log(eN)\asymp M^{\frac{d}{2\beta+d}}$. Every deterministic calibration
on \eqref{eq:width-radius-curve} attains the same risk order, so greater
width can be exchanged for smaller parameter magnitudes without a
statistical penalty in the rate. Under uniform design and fixed
nondegenerate Gaussian noise, the bound matches the classical H\"older
minimax rate \citep{Stone1982}, with no logarithmic loss.

The guarantee concerns approximate least squares over the full clipped
class, not a finite dictionary of constructed approximants. It applies
to any measurable approximate minimizer satisfying the stated empirical-risk
condition, but does not assert that one can be computed efficiently. Measurable approximate minimizers
exist by the separability argument in \hyperref[app:measurable-minimizers]{Appendix~\ref*{app:gen}}. The risk proof is in
Section~\ref{sec:regression-proof}.

\section{Proofs of Theorems~\ref{thm:approximation}--\ref{thm:joint-approximation}}\label{sec:approximation}

A fine grid reduces approximation of $f$ to two finite tasks: locate the
grid cell containing the input, and recover the value stored for that cell.
We call the corresponding maps an encoder and a decoder.  Since a continuous
network cannot change the cell index exactly at every boundary, we allow
short boundary strips and control their contribution to the $L^p$ error.

For Theorem~\ref{thm:approximation}, the stored values are arbitrary.  The
dyadic table decoder encodes the entire table through a single integer; its
quantitative construction ultimately relies on algebraic separation of
dyadic frequencies and a Gaussian Fourier argument for a Kronecker-type
approximation problem.  For Theorem~\ref{thm:optimal-approximation}, we
instead use the H\"older condition: after rescaling and rounding, neighboring
values differ by at most one.  Each block can therefore be stored by its
initial value and increments from $\{-1,0,1\}$.  The resulting coupled
decoder encodes these increments as ternary digits and uses an affine
cancellation between two code evaluations to remove a shared coding error.
Both the encoder and decoder change in this construction.

Theorem~\ref{thm:joint-approximation} distributes the same coded information
across more neurons and combines the large-radius construction with a
separate unit-radius theorem.  The latter requires a different
bounded-parameter encoding and decoding scheme to reach the endpoint
$T=1$.  These two decoders and the unit-radius construction contain the main
technical machinery; their detailed proofs are placed in the appendices.
Here we state the needed ingredients and show how they combine to yield the
three approximation theorems.

\subsection{Dyadic table decoder}
Start with a finite list $c_1,\ldots,c_N$.
We assign distinct inputs $2^{-i/(N+1)}-1$ to its entries and fit all the
values using one hidden unit.  Bounding the output between these inputs
will also control the error near grid boundaries.

\begin{proposition}[Dyadic table decoder]\label{prop:decoder}
Let $N\in\mathbb N$, $0<\eps<1$, and $c_1,\ldots,c_N\in[-1,1]$.
There is a $\DTA$ network $\mathcal R:\R\to\R$ with one hidden unit such
that
\begin{equation}\label{eq:decoder-properties}
\max_{1\le i\le N}
\left|\mathcal R\!\left(2^{-i/(N+1)}-1\right)-c_i\right|\le\eps,
\qquad
\sup_{-1<v\le1}|\mathcal R(v)|\le1.
\end{equation}
The network has at most four nonzero parameters.  Its parameter radius is at
most $6\bigl(40N^{3/2}\eps^{-1}\bigr)^N$.
\end{proposition}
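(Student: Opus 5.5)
The plan is to realize $\mathcal R$ as
\[
\mathcal R(v)=2\,\DTA\bigl(Kv+(K+b)\bigr)-1,\qquad K>0,\ b\ge0,
\]
which has one hidden unit and exactly four parameters: inner weight $K$, inner bias $K+b$, outer weight $2$, outer bias $-1$.

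For $-1<v\le1$ the preactivation $K(v+1)+b$ is nonnegative. By \eqref{eq:dta-identities} we then get $\mathcal R(v)=2\tau(K(v+1)+b)-1\in[-1,1]$, which is the second condition in \eqref{eq:decoder-properties} for free.

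At the nodes $v_i=2^{-i/(N+1)}-1$ we have $v_i+1=x_i:=2^{-i/(N+1)}$. Since $\tau$ is $1$-Lipschitz, $2$-periodic, and maps $[0,1]$ onto itself, it suffices to find $K$ and $b$ with
\[
\bigl|(Kx_i+b)-t_i\bigr|_{2\mathbb Z}\le\eps/2,\qquad t_i:=\tfrac{1+c_i}{2}\in[0,1],
\]
where $|\cdot|_{2\mathbb Z}$ is distance modulo $2$. So the proposition reduces to a simultaneous (inhomogeneous Kronecker) approximation problem for the vector $(x_1,\dots,x_N)$ with an explicit bound $K+b\lesssim(40N^{3/2}\eps^{-1})^N$. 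The shift $b\in[0,2)$ can be fixed to make all targets compatible, or simply absorbed by enlarging $K$.

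\emph{Qualitative input.} The numbers $1,x_1,\dots,x_N$ are $\mathbb Q$-linearly independent. They are the powers $\theta^{0},\dots,\theta^{N}$ (up to reindexing and scaling by $2^{-1}$) of $\theta=2^{-1/(N+1)}$, whose minimal polynomial $2X^{N+1}-1$ has degree $N+1$ by Eisenstein. Hence every nonzero integer combination $\Lambda(\bm m)=m_0+\sum_i m_ix_i$ is nonzero.

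\emph{Quantitative separation.} I would use the norm argument. After multiplying by $2$ and rewriting in powers of $\theta^{-1}=2^{1/(N+1)}$, $\Lambda(\bm m)$ becomes an algebraic integer in $\mathbb Q(2^{1/(N+1)})$. Its field norm is a nonzero rational integer, so it has absolute value at least $1$. All Galois conjugates have modulus at most $2\sum|m_i|\le 2(N+1)\norm{\bm m}_\infty$, and $\Lambda(\bm m)$ is one of $N+1$ conjugates. Therefore
\[
|\Lambda(\bm m)|\gtrsim\bigl(2(N+1)\norm{\bm m}_\infty\bigr)^{-N}.
\]

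\emph{Gaussian Fourier step.} Next I would prove that some $K$ in a window works, by averaging against a Gaussian. Let $\psi\ge0$ be a smooth $2$-periodic bump in each coordinate, supported in $[-\eps/2,\eps/2]$ mod $2$, with Fourier coefficients $\hat\psi(\bm m)$. Let $K$ be distributed as a Gaussian with center $K_0$ and width $S$. Then
\[
\mathbb E\prod_i\psi(Kx_i-t_i)=\sum_{\bm m}\hat\psi(\bm m)\,e^{-i\pi\bm m\cdot\bm t}\,e^{i\pi K_0\bm m\cdot\bm x}\,e^{-\pi^2S^2(\bm m\cdot\bm x)^2/2}.
\]
The $\bm m=0$ term equals $\hat\psi(0)\asymp(\eps/2)^N$. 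For $\bm m\neq0$, the separation bound gives $|\bm m\cdot\bm x|\ge c(N\norm{\bm m})^{-N}$, here with no constant term since $b$ has been absorbed. So the Gaussian factor is tiny once $S\gg(N Q)^N$. The coefficients with $\norm{\bm m}_\infty>Q\approx N/\eps$ are controlled by the decay of $\hat\psi$, for example using a product of Fej\'er-type kernels of degree about $Q$, which are exactly band-limited. With band-limited kernels the tail vanishes outright, and only $\norm{\bm m}_\infty\le Q$ remains.

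With $S$ chosen this way, the expectation is positive, so some $K\in[K_0,K_0+O(S\sqrt{N})]$ satisfies all $N$ approximations simultaneously. Taking $K_0\asymp S$ keeps $K\ge0$ and $K\lesssim(CN\cdot N^{1/2}\eps^{-1})^N$. The extra $N^{1/2}$ comes from the Gaussian tail needed to localize $K$, plus the $\ell^1$ versus $\ell^\infty$ loss.

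The main obstacle is the bookkeeping in the last two steps. The kernel degree $Q$, the separation exponent $N$, and the Gaussian width have to combine to give exactly $6(40N^{3/2}\eps^{-1})^N$ rather than, say, $(CN^2/\eps)^N$. I expect to use the $\ell^2$ norm $\norm{\bm m}_2\le\sqrt N\norm{\bm m}_\infty$ in the Gaussian exponent and a careful count of the nonzero frequencies, about $(2Q+1)^N$ of them, to get the stated constants.
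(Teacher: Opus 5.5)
Your reduction is sound. With $K>0$ the preactivation $K(v+1)+b$ is nonnegative on $(-1,1]$, which gives $\sup_{-1<v\le1}|\mathcal R(v)|\le1$. At the nodes it then suffices that $Kx_i+b$ lies within $\eps/2$ of $t_i$ modulo $2$. Your norm argument in $\Q(2^{1/(N+1)})$ is essentially the paper's Step~1.

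The genuine gap is the Gaussian Fourier step. This is exactly where the radius $6(40N^{3/2}\eps^{-1})^N$ has to be earned, and it does not work as sketched.

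First, the kernel you describe does not exist. A nonzero trigonometric polynomial has only finitely many zeros, so $\psi$ cannot be both band-limited (Fej\'er-type) and supported in $[-\eps/2,\eps/2]$ modulo $2$.

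If $\psi$ is band-limited, positivity of $\mathbb E\prod_i\psi(Kx_i-t_i)$ locates no $K$ in the box. A Fej\'er product still takes values of order $(Q+1)^{N-2}\eps^{-2}$ off the box, far above its mean $1$ once $N\ge2$, so no comparison argument rescues it.

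If instead $\psi\ge0$ is supported in the box, the frequencies with $\norm{\bm m}_\infty>Q$ do not vanish, and bounding them by $\sum|\hat\Psi(\bm m)|$ is too lossy. Since $\hat\psi(0)\le\frac{\eps}{2}\norm{\psi}_\infty\le\frac{\eps}{2}\sum_m|\hat\psi(m)|$, the slab $\{|m_1|>Q\}\times\Z^{N-1}$ alone forces the requirement
\[
\sum_{|m|>Q}|\hat\psi(m)|\le\hat\psi(0)\,(\eps/2)^{N-1}.
\]
A Remez-type inequality for trigonometric polynomials of degree $Q$ then forces $Q\gtrsim(N-1)\log(2/\eps)/\eps$ for every such bump. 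For triangle- or Fej\'er-decay bumps, $Q$ must even be exponential in $N$.

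With only your separation bound $|\bm m\cdot\bm x|\gtrsim(2\norm{\bm m}_1)^{-N}$, the Gaussian width must then satisfy $S\gtrsim(2NQ)^N\gtrsim(cN^2\log(2/\eps)/\eps)^N$. Localizing $K$ in a Gaussian window costs a further $\sqrt{N\log(2/\eps)}$, not $\sqrt N$, because $\norm{\Psi}_\infty/\hat\Psi(\bm 0)\ge(2/\eps)^N$. A factor $(\log(1/\eps))^N$ cannot be absorbed into $(40N^{3/2}/\eps)^N$. So this is not bookkeeping: the step ``the expectation is positive, so some $K\lesssim(CN^{3/2}/\eps)^N$ works'' is unsupported. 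Making flow averaging sharp would need an extra count of the frequencies $\bm m$ in thin slabs $|\bm m\cdot\bm x|\lesssim1/S$, which your sketch does not contain.

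The missing idea is the paper's Step~2: a transference argument with Gaussians on an $(N+1)$-dimensional lattice. It needs no compactly supported kernel and suffers no peak-to-mean loss. Restricting the multiplier to $2m$ with $m\in\Z$, as the paper does via $B_1(v)=2mv+4|m|+2$, is what makes a lattice available.

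Set $r=N+1$ and $Q=(10N^{3/2}/\delta)^N$, and take $G=\bm B\Z^r$ with $\bm B=\begin{pmatrix}\bm I_N&-\bm\alpha\\ \bm 0^\top&\delta/Q\end{pmatrix}$. The dual vectors are $(\bm h,(Q/\delta)(\bm h\cdot\bm\alpha+m))$. The separation bound, together with minimizing $\delta A+Q5^{-N}A^{-N}$ over $A>0$, shows that every nonzero dual vector has $\ell^1$ norm at least $\Lambda:=r^{3/2}/\delta$.

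Now consider the Gaussian sums $S_a(\bm y)=\sum_{\bm v\in G}e^{-\pi\norm{\bm v-\bm y}_2^2/a^2}$ with $s_0=r/\Lambda$. Poisson summation gives $S_{s_0}(\bm y)>\tfrac{2}{3}s_0^r/|\det\bm B|$ and $S_{\sqrt2 s_0}(\bm y)<\tfrac{4}{3}2^{r/2}s_0^r/|\det\bm B|$. If no lattice point lay within $s_0\sqrt r=\delta$ of $\bm y$, one would have $S_{s_0}(\bm y)\le e^{-\pi r/2}S_{\sqrt2 s_0}(\bm y)$, which contradicts these two bounds.

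Taking $\bm y=(-\bm t,0)$ yields $\norm{q\bm\alpha-\bm p-\bm t}_\infty\le\delta$ and, from the last coordinate, $|q|\le Q$ at the same time. The bound on the multiplier thus comes from the geometry of $G$ rather than from a Gaussian tail. Applied to the targets $y_i/2$ with $\delta=\eps/4$, this gives $|m|\le(40N^{3/2}/\eps)^N$ and hence the stated radius.
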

The proof is given in \hyperref[app:dyadic-decoder-proof]{Appendix~\ref*{app:orbit}}.
For later compositions we write
$\mathcal R=B_2\circ\DTA\circ B_1$, where
$B_1(v)=2mv+4|m|+2$ and $B_2(w)=2w-1$.  Thus, once an encoder returns
$-i/(N+1)$, one more activation produces $2^{-i/(N+1)}-1$ and
$\mathcal R$ approximates the $i$th table value to error at most $\eps$.

\subsection{Trifling region and grid encoder}
A continuous network cannot make an exact jump at every grid boundary.  We
therefore let the staircase interpolate on short intervals of width
$\delta$ immediately before those boundaries.  The encoder will be exact
outside their union, while the volume of the union controls the remaining
$L^p$ error.  For an integer $K\ge2$ and $0<\delta<1/K$, define
\[
\Omega_{K,\delta}([0,1]^d)
=\bigcup_{j=1}^d
\left\{\bm x\in[0,1]^d:
 x_j\in\bigcup_{k=1}^{K}
 \left(\frac{k}{K}-\delta,\frac{k}{K}\right]
\right\}.
\]

\begin{figure}[htbp]
\centering
\begin{minipage}[c]{.43\textwidth}
\centering
\begin{tikzpicture}[x=.82cm,y=1cm]
  \fill[goodgreenstrong] (0,-.10) rectangle (6,.10);
  \foreach \k in {1,...,6}{
    \fill[stripred] ({\k-.18},-.11) rectangle (\k,.11);
  }
  \foreach \k in {1,...,5}{
    \draw[gridred,densely dashed,line width=.65pt]
      (\k,-.30)--(\k,.48);
  }
  \draw[softgray,line width=.58pt] (0,0)--(6,0);
  \foreach \k in {0,...,6}{
    \draw[softgray,line width=.45pt]
      (\k,-.055)--(\k,.055);
  }
  \foreach \k in {0,...,5}{
    \fill[black] ({\k+.5},0) circle (.035);
  }
  \draw[<->,stripred!85!black,line width=.62pt]
    (1.82,.55)--(2,.55);
  \node[
    font=\DiagramLabelFont,
    text=stripred!70!black,
    above=2pt
  ] at (1.91,.55) {$\delta$};
  \node[font=\DiagramLabelFont,below=3pt] at (0,-.10) {$0$};
  \node[font=\DiagramLabelFont,below=3pt] at (6,-.10) {$1$};
  \node[font=\DiagramPanelFont] at (3,-.70)
    {(a) $d=1$, $K=6$};
\end{tikzpicture}
\end{minipage}
\hfill
\begin{minipage}[c]{.53\textwidth}
\centering
\begin{tikzpicture}[x=4.05cm,y=4.05cm]
  \fill[goodgreenstrong] (0,0) rectangle (1,1);
  \foreach \k in {1,2,3,4}{
    \fill[stripred]
      ({\k/4-.035},0) rectangle ({\k/4},1);
    \fill[stripred]
      (0,{\k/4-.035}) rectangle (1,{\k/4});
  }
  \foreach \k in {1,2,3}{
    \draw[gridred,densely dashed,line width=.62pt]
      ({\k/4},0)--({\k/4},1);
    \draw[gridred,densely dashed,line width=.62pt]
      (0,{\k/4})--(1,{\k/4});
  }
  \draw[softgray,line width=.58pt]
    (0,0) rectangle (1,1);
  \foreach \i in {0,...,3}{
    \foreach \j in {0,...,3}{
      \fill[black]
        ({(\i+.5)/4},{(\j+.5)/4}) circle (.010);
    }
  }
  \node[font=\DiagramLabelFont,below=3pt] at (0,0) {$0$};
  \node[font=\DiagramLabelFont,below=3pt] at (1,0) {$1$};
  \node[font=\DiagramLabelFont,left=3pt] at (0,1) {$1$};
  \node[font=\DiagramPanelFont] at (.5,-.19)
    {(b) $d=2$, $K=4$};
\end{tikzpicture}
\end{minipage}

\GridLegendGap{}

\begin{tikzpicture}[x=1cm,y=1cm]
  \fill[goodgreenstrong] (0,0) rectangle (.34,.14);
  \node[font=\DiagramLabelFont,anchor=west] at (.44,.07)
    {good part of a cell};
  \fill[stripred] (4.05,0) rectangle (4.39,.14);
  \node[font=\DiagramLabelFont,anchor=west] at (4.49,.07)
    {$\Omega_{K,\delta}([0,1]^d)$};
  \fill[black] (7.65,.07) circle (.027);
  \node[font=\DiagramLabelFont,anchor=west] at (7.77,.07)
    {cell representative};
\end{tikzpicture}

\caption{Good cells and the trifling strips $\Omega_{K,\delta}$.}
\label{fig:trifling-region}
\end{figure}
These are the strips of width $\delta$ immediately to the left of the grid
boundaries, including the boundary at $1$.  In particular, a point with
$x_j=1$ lies in the trifling region, so the floor formula below is used only
when every grid index belongs to $\{0,\ldots,K-1\}$.  The one-dimensional
intervals are disjoint.  Taking the product of their complements gives
\begin{equation}\label{eq:trifling-volume}
\begin{aligned}
m_d\left(\Omega_{K,\delta}([0,1]^d)\right)
&=1-(1-K\delta)^d
 =K\delta\sum_{j=0}^{d-1}(1-K\delta)^j\le dK\delta.
\end{aligned}
\end{equation}
Figure~\ref{fig:trifling-region} shows these sets in one and two
dimensions.  The next network combines the $d$ cell indices
$\lfloor Kx_j\rfloor$ into one base-$K$ integer, then rescales it to the
input needed by Proposition~\ref{prop:decoder}.

\begin{proposition}[Grid encoder]\label{prop:grid-encoder}
Let $d,K\in\mathbb N$, $K\ge2$, and $0<\delta<1/K$.  There is a $\DTA$
network $\Lambda_{K,\delta}:\R^d\to\R$ with hidden-layer widths
$[2d+1,d+1]$.  It is negative on $[0,1]^d$, and outside the trifling region,
\[
\Lambda_{K,\delta}(\bm x)
=-\frac1{K^d+1}
  \left(1+\sum_{j=1}^dK^{j-1}\lfloor Kx_j\rfloor\right).
\]
The network has at most $8d+3$ nonzero parameters.  Its parameter radius is at
most $4\max\{K,(K\delta)^{-1}\}$.
\end{proposition}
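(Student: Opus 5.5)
The construction implements each coordinate floor $\lfloor Kx_j\rfloor$ by a two-neuron staircase built from the positive branch of $\DTA$, then combines the $d$ indices affinely in the next layer. On $[0,1]$, $\DTA=\tau$ is the identity, and more generally $\DTA(z)=\tau(z)$ for $z\ge0$. The first hidden layer therefore uses, for each coordinate $j$:
\begin{itemize}
\item one unit computing $\tau$ of a rescaled ramp, which produces a sawtooth-type function;
\item one unit passing the coordinate through, as $\DTA(x_j)=x_j$ on $[0,1]$.
\end{itemize}
Together with one extra unit carrying a constant, this gives width $2d+1$.

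Concretely, on $[0,1)$ the fractional part satisfies $Kx-\lfloor Kx\rfloor=\{Kx\}$. Away from the strips $(k/K-\delta,k/K]$, the clamp
\[
s_\delta(t)=\min\{\{t\}/(1-K\delta),1\}
\]
(in $t=Kx$) can be written through the triangular wave. Take
\[
u_j=\tau\!\left(\frac{Kx_j}{1-K\delta}\right)\text{-type}
\]
with a phase shift and slope $(K\delta)^{-1}$. Here $\tau$ rises with slope one on $[2m,2m+1]$ and falls on $[2m+1,2m+2]$. So I would instead use the affine input $a_j(x)=Kx_j/\ldots$ chosen so that on each good piece $[k/K,(k+1)/K-\delta]$ the argument lies in an increasing branch. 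The falling branch of $\tau$ is then used on the strip, and I would verify that it produces the reset.

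The simplest robust version is a single unit
\[
h_j=\tau\bigl(A x_j+B\bigr)
\]
with slope $A$ of order $K$ on good pieces. This is not exact, however, since the two branches of $\tau$ have equal slopes, so I would use two units per coordinate. The first unit is $\tau(2K x_j\cdot\theta)$ with $\theta=1$, and it equals the distance to the even integers, so it is exact on good parts. Outside the trifling region,
\[
\lfloor Kx_j\rfloor=Kx_j-\{Kx_j\},
\]
and $\{Kx_j\}$ is a piecewise-linear periodic function. It can be realized as $\frac12\bigl(\tau(\cdot)\bigr)$ combinations with one unit of slope $\pm(K\delta)^{-1}$ composed in the second layer.

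Second hidden layer (width $d+1$): unit $j$ applies $\DTA=\tau$ to an affine combination of $x_j$ and $h_j$ so that its output equals $\{Kx_j\}/(1-K\delta)$ clipped. This is where the steep slope $(K\delta)^{-1}$ enters, which explains the radius bound $4\max\{K,(K\delta)^{-1}\}$. The extra unit carries a constant or a pass-through. The output affine map then forms
\[
\lfloor Kx_j\rfloor=Kx_j-(1-K\delta)\cdot(\text{unit}_j)
\]
on good points, since there the clipped value is exactly $\{Kx_j\}/(1-K\delta)\cdot(1-K\delta)$. It sums these with weights $-K^{j-1}/(K^d+1)$, which lie in $[-1,1]$, and adds the bias $-1/(K^d+1)$.

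One point needs care: weights $K^{j-1}$ would violate the radius bound. To avoid this, I would rescale the pass-through and stair units by $K^{j-1}/(K^d+1)\le1$ already in the first layer, so that every parameter is at most $4\max\{K,(K\delta)^{-1}\}$.

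Negativity on all of $[0,1]^d$ follows if each reconstructed index lies in $[0,K-1]$, including on strips, where the clipped unit interpolates monotonically between consecutive indices. The index at $x_j=1$ must also be checked not to exceed $K-1$, which I would get from the clipping and a boundary check. The sum is then at least $1$ and at most $K^d$, so $\Lambda<0$.

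The main obstacle is implementing an exact clamp $\min\{\cdot,1\}$ and a periodic fractional part using only $\tau$ with positive arguments in two layers within the width budget, and counting nonzeros to get $8d+3$: roughly $4$ in layer one, $3$ in layer two and $1$ at output per coordinate, plus $3$ constants. The step I would try first is the identity $\tau(t)=t$ on $[0,1]$ and $\tau(t)=2-t$ on $[1,2]$ to build the clamp.
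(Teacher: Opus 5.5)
There is a genuine gap. The proposal never produces a working two-layer staircase, and the neuron budget you describe cannot support one. That budget is one triangular unit and one pass-through per coordinate, plus a constant unit.

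\textbf{The staircase.} With only $\tau(\alpha x_j+\beta)$ and $x_j$ available, the second-layer preactivation has slopes $a\pm b\alpha$ alternating on intervals of \emph{equal} length $1/\alpha$. Good intervals and strips, by contrast, have lengths $1/K-\delta$ and $\delta$. So after $\tau$ the unit cannot be affine with one slope on every good interval while resetting inside every strip, at least when $K\delta$ is small. Your fallback $\tau(2Kx_j)$ has a kink at every cell midpoint, so it is not affine on good intervals either. The clamp $\min\{\{Kx_j\}/(1-K\delta),1\}$ you aim for jumps from $1$ to $0$ at each $k/K$, so no continuous network realizes it.

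The missing idea is to put two \emph{shifted} copies, $\tau(Kx_j)$ and $\tau(Kx_j+K\delta)$, in the first layer. Off the strips both arguments lie on the same branch of $\tau$. Hence the difference quotient $(\tau(Kx_j+K\delta)-\tau(Kx_j))/(K\delta)$ equals $+1$ or $-1$ according to the parity of $\lfloor Kx_j\rfloor$. Therefore
\[
q_j=\tau(Kx_j)+\frac12+\frac{\tau(Kx_j+K\delta)-\tau(Kx_j)}{2K\delta}\in[0,2]
\]
equals $1+r$ or $1-r$, where $r=Kx_j-\lfloor Kx_j\rfloor$, and in both cases $\tau(q_j)=1-r$. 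It follows that $S_{K,\delta}(x_j)=Kx_j+\tau(q_j)-1=\lfloor Kx_j\rfloor$ off the strips.
\begin{itemize}
\item The steep weight $(2K\delta)^{-1}$ enters exactly at this step.
\item On each strip the same formula interpolates linearly from $k-1$ to $k$, so $S_{K,\delta}\in[0,K]$ on $[0,1]$.
\item Negativity only requires $S_{K,\delta}\ge0$, so the value $K$ at $x_j=1$ is harmless; your upper-bound check is unnecessary.
\end{itemize}

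\textbf{The width.} Your output map needs the linear term $Kx_j$ for every $j$. After the $d$ staircase neurons, however, the second hidden layer has only one spare neuron. A constant is useless there, since biases are free, and a pass-through of a single coordinate cannot supply all $d$ terms when $d\ge2$. The paper instead uses a \emph{shared carrier}, exploiting that only the combination $\sum_jK^{j-1}\cdot Kx_j$ is ever needed. One neuron computes
\[
c=D_{K,d}^{-1}\sum_{j=1}^dK^jx_j,\qquad D_{K,d}=1+\sum_{j=1}^dK^j .
\]
It lies in $[0,1]$, so it passes unchanged through both activations, and
\[
(K^d+1)\Lambda_{K,\delta}=-D_{K,d}\,c-\sum_{j=1}^dK^{j-1}\bigl(\tau(q_j)-1\bigr)-1=-1-\sum_{j=1}^dK^{j-1}S_{K,\delta}(x_j).
\]
The resulting architecture is as follows.
\begin{itemize}
\item The first hidden layer is $(Kx_j)_j$, $(Kx_j+K\delta)_j$ and $c$, of width $2d+1$, with no per-coordinate pass-throughs.
\item The second hidden layer has preactivations $(q_j)_j$ and $c$, giving width $d+1$.
\item The three affine maps have at most $4d$, $3d+1$ and $d+2$ nonzero parameters, for a total of $8d+3$.
\item The radius is at most $\max\{K,(2K\delta)^{-1},2\}$, well within $4\max\{K,(K\delta)^{-1}\}$.
\end{itemize}
Your worry about the weights $K^{j-1}$ is moot, because they only ever appear divided by $K^d+1$.
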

The proof, including the affine maps and parameter count, is in \hyperref[app:grid-proof]{Appendix~\ref*{app:grid-proof}}.
The scalar staircase used in both encoders is
\[
S_{K,\delta}(x)=Kx+
\tau\!\left(\tau(Kx)+\frac12+
\frac{\tau(Kx+K\delta)-\tau(Kx)}{2K\delta}\right)-1.
\]
It equals $\lfloor Kx\rfloor$ off the trifling intervals and interpolates
linearly from $j-1$ to $j$ on the interval immediately before $j/K$.

\subsection{Proof of Theorem~\ref{thm:approximation}}\label{sec:proof-first}
Outside the trifling region, the grid encoder supplies the input at which
the decoder approximates the prescribed cell value.  We choose the grid
size and table accuracy to control the error there, then choose the strip
width to bound the remaining contribution to the $L^p$ integral.  Finally,
we count the nonzero parameters and bound the radius of the composition.

\begin{proof}[Proof of Theorem~\ref{thm:approximation}]
Choose $K=\left\lceil\left(2^{1+1/p}/\eta\right)^{1/\beta}\right\rceil$ and $\delta=\eta^p/(2^{p+1}dK)$.
Then $K\ge2$, $0<\delta<1/K$, and
\begin{equation}\label{eq:approximation-error-scales}
K^{-\beta}\le2^{-1-1/p}\eta,
\qquad
m_d\bigl(\Omega_{K,\delta}([0,1]^d)\bigr)
\le dK\delta=\frac{\eta^p}{2^{p+1}}.
\end{equation}
The first inequality controls the oscillation in one grid cell, and the
second makes the total trifling region small.  Let
$\mathcal I_K=\{0,\ldots,K-1\}^d$ and set \[
J_r=
\begin{cases}
[r/K,(r+1)/K),&0\le r\le K-2,\\
[(K-1)/K,1],&r=K-1.
\end{cases}
\] For $\bm\ell=(\ell_1,\ldots,\ell_d)^\top\in\mathcal I_K$, define
\begin{equation}\label{eq:grid-cell-address}
Q_{\bm\ell}=\prod_{j=1}^dJ_{\ell_j},
\qquad
\bm x_{\bm\ell}=\frac1K\left(\bm\ell+\frac12\one_d\right),
\qquad
I(\bm\ell)=1+\sum_{j=1}^dK^{j-1}\ell_j.
\end{equation}
The cells $Q_{\bm\ell}$ form a disjoint partition of $[0,1]^d$, and
$I$ is a bijection from $\mathcal I_K$ onto $\{1,\ldots,K^d\}$.  Thus every
point outside the trifling region has one grid address, and it remains only
to recover the value stored at that address.

Apply Proposition~\ref{prop:decoder} with $N=K^d$,
$\eps=2^{-1-1/p}\eta$, and
$c_{I(\bm\ell)}=f(\bm x_{\bm\ell})$.  It gives a decoder
$\mathcal R=B_2\circ\DTA\circ B_1$ such that
\begin{equation}\label{eq:decoder-table}
\max_{\bm\ell}
\left|\mathcal R\!\left(2^{-I(\bm\ell)/(K^d+1)}-1\right)
-f(\bm x_{\bm\ell})\right|\le2^{-1-1/p}\eta,
\qquad
\sup_{-1<v\le1}|\mathcal R(v)|\le1.
\end{equation}
Let $\Lambda_{K,\delta}=A_3\circ\DTA\circ A_2\circ\DTA\circ A_1$
be the encoder from Proposition~\ref{prop:grid-encoder} and define
\begin{equation}\label{eq:compact-network-composition}
\Phi:=\mathcal R\circ\DTA\circ\Lambda_{K,\delta}
=B_2\circ\DTA\circ B_1\circ\DTA\circ A_3
 \circ\DTA\circ A_2\circ\DTA\circ A_1.
\end{equation}
The network $\Phi$ has hidden-layer widths $[2d+1,d+1,1,1]$.
Figure~\ref{fig:full-network}
shows the composition for $d=3$: the first scalar node after the encoder
forms the dyadic code, and the last hidden node decodes the table value.

\begin{figure}[htbp]
\centering
\includegraphics[width=\FigureNetworkWidth]{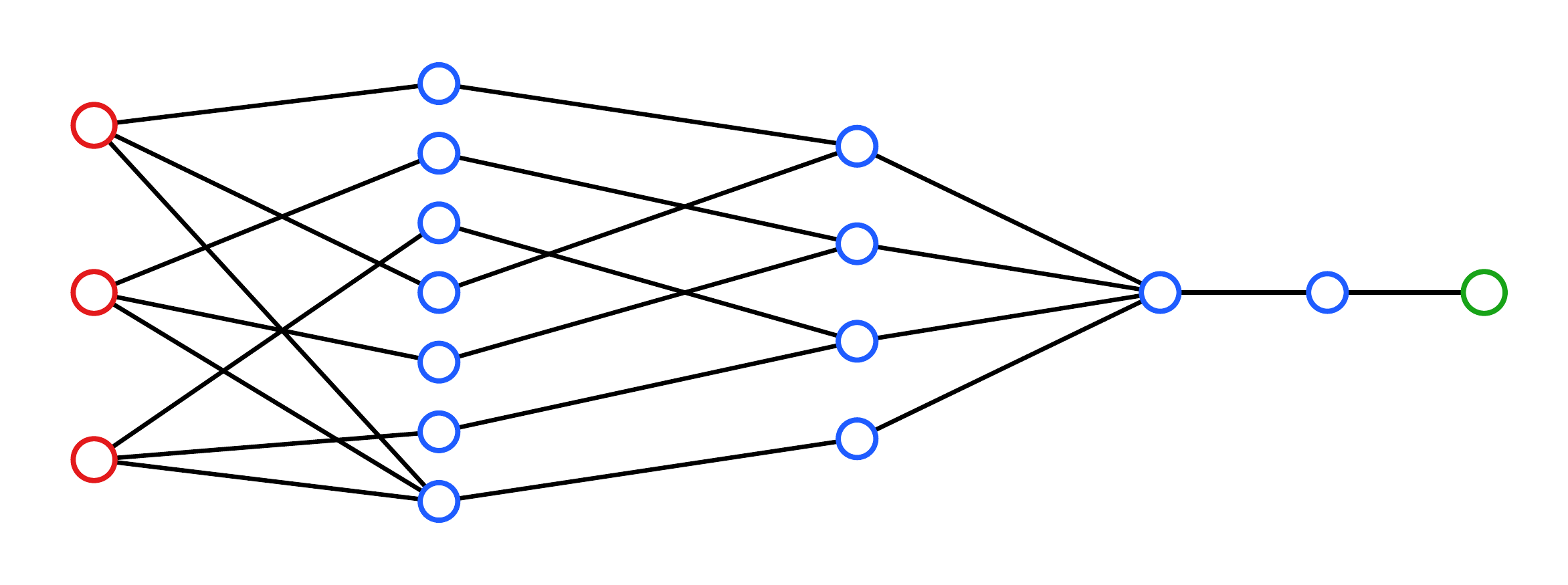}
\caption{The four-hidden-layer network $\Phi$ for $d=3$.}
\label{fig:full-network}
\end{figure}

Fix $\bm x\in Q_{\bm\ell}\setminus\Omega_{K,\delta}([0,1]^d)$.  Then
$\lfloor Kx_j\rfloor=\ell_j$,
$\norm{\bm x-\bm x_{\bm\ell}}_\infty\le1/(2K)$, and
\[
\Lambda_{K,\delta}(\bm x)=-\frac{I(\bm\ell)}{K^d+1}<0,
\qquad
\Phi(\bm x)=\mathcal R\!\left(2^{-I(\bm\ell)/(K^d+1)}-1\right).
\]
The H\"older condition, \eqref{eq:approximation-error-scales}, and
\eqref{eq:decoder-table} therefore give
\[
|f(\bm x)-\Phi(\bm x)|
\le|f(\bm x)-f(\bm x_{\bm\ell})|+|f(\bm x_{\bm\ell})-\Phi(\bm x)|
\le K^{-\beta}+2^{-1-1/p}\eta
\le2^{-1/p}\eta.
\]
On the whole cube, $\Lambda_{K,\delta}<0$, so
$-1<\DTA(\Lambda_{K,\delta})<0$ and $|\Phi|\le1$ by
\eqref{eq:decoder-table}.  Put
$\Omega=\Omega_{K,\delta}([0,1]^d)$.  Splitting the $p$th power over the
good and trifling regions gives
\begin{align*}
\|f-\Phi\|_{L^p([0,1]^d)}^p
&=\int_{[0,1]^d\setminus\Omega}|f(\bm x)-\Phi(\bm x)|^p\,\dd\bm x
 +\int_\Omega|f(\bm x)-\Phi(\bm x)|^p\,\dd\bm x\\*
&\le\frac{\eta^p}{2}+2^pm_d(\Omega)
\le\frac{\eta^p}{2}+\frac{\eta^p}{2}=\eta^p.
\end{align*}
This argument uses only the $p$th power of the quasi-norm and is therefore
valid for every $0<p<\infty$.

It remains to bound the parameters in \eqref{eq:compact-network-composition}.
Since $(K\delta)^{-1}=2^{p+1}d\eta^{-p}$, Propositions~\ref{prop:decoder}
and~\ref{prop:grid-encoder} give
\begin{align*}
\log T_\eta
&\le\max\Bigg\{
\log\left(4\max\{K,(K\delta)^{-1}\}\right),
\log6+K^d\log\left(40K^{3d/2}2^{1+1/p}\eta^{-1}\right)
\Bigg\}\\*
&\le CK^d\{1+\log K+\log(\eta^{-1})\}
\le C_0\eta^{-d/\beta}\log(\eta^{-1}),
\end{align*}
where $K\le C\eta^{-1/\beta}$ and
$\log K\le C\log(\eta^{-1})$; here $C>0$ depends only on
$d$, $\beta$, and $p$.
The encoder and decoder use at most $8d+3$ and $4$ nonzero affine parameters,
respectively, so the network has at most $8d+7$ nonzero parameters.  Taking $\phi=\Phi$ completes the proof.
\end{proof}

\subsection{Block-grid encoding and coupled decoding}

To exploit neighboring values, divide the ordered grid into blocks of
$B$ cells.  The quotient and remainder of a grid address on division by
$B$ give its block index and position within the block.  Staircases at
frequencies $H$ and $H/B$ compute these two quantities; the next
proposition gives their normalized form.

\begin{proposition}[Block-grid encoder]\label{prop:block-grid-encoder}
Let $d,H,B\in\mathbb N$, $B\mid H$, $1\le B\le H/2$, and
$0<\delta<1/H$.  There is a $\DTA$ network
$\mathcal E=(\mathcal E_1,\mathcal E_2):\R^d\to\R^2$ with hidden-layer
widths $[2d+3,d+2]$ such that, for $\bm x\in[0,1]^d$,
\[
\mathcal E_1(\bm x)\ge0,\qquad
\frac1{2B}\le\mathcal E_2(\bm x)\le1-\frac1{2B},
\]
and, whenever $\bm x\notin\Omega_{H,\delta}([0,1]^d)$ and
$\ell_j=\lfloor Hx_j\rfloor$ for the integers $1\le j\le d$,
\[
\mathcal E(\bm x)=
\left(
\left\lfloor\frac{\ell_1+H\ell_2+\cdots+H^{d-1}\ell_d}{B}\right\rfloor,
\frac{\ell_1-B\lfloor\ell_1/B\rfloor+1/2}{B}
\right).
\]
The network has at most $8d+12$ nonzero parameters, and its parameter radius is at most $\max\{4H^d,B/(2H\delta)\}$.
\end{proposition}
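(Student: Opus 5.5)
I would build $\mathcal E$ by reusing the scalar staircase $S_{H,\delta}$ from the grid encoder (Proposition~\ref{prop:grid-encoder}), now run at two frequencies. In the first hidden layer, each coordinate $x_j$ gets the two units $\DTA(Hx_j)=\tau(Hx_j)$ and $\DTA(Hx_j+H\delta)=\tau(Hx_j+H\delta)$; the raw $x_j$ is carried linearly into the next affine map. That accounts for $2d$ units, with one more passing a constant or $x_j$-type coordinate. The remaining two units of the width $2d+3$ serve the block staircase on $x_1$ at frequency $H/B$. They compute $\tau((H/B)x_1)$ and $\tau((H/B)x_1+(H/B)\delta')$, where the strip width $\delta'=B\delta$ is chosen so that the trifling intervals of the coarse staircase coincide with those of the fine one. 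This works because every boundary $k/(H/B)$ is also a boundary of the fine grid.

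In the second hidden layer, the $j$th unit receives the argument $\tau(Hx_j)+\tfrac12+\frac{\tau(Hx_j+H\delta)-\tau(Hx_j)}{2H\delta}$, and one more unit receives the analogous argument for the coarse staircase. These are $d+1$ units; the $(d+2)$th can carry a needed pass-through, using $\DTA(z)=z$ on $[0,1]$. The argument of $\tau$ lies in a bounded nonnegative range, so $\DTA=\tau$ there. Off $\Omega_{H,\delta}$, the output affine map then forms $S_{H,\delta}(x_j)=\ell_j$ for each $j$ and $S_{H/B,B\delta}(x_1)=\lfloor \ell_1/B\rfloor$, which holds because $\lfloor Hx_1\rfloor$ floored by $B$ equals $\lfloor (H/B)x_1\rfloor$. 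The two outputs follow from two identities. First, $\lfloor(\ell_1+H\ell_2+\cdots)/B\rfloor=\lfloor\ell_1/B\rfloor+(H/B)(\ell_2+H\ell_3+\cdots)$, since $B\mid H$. Second, $(\ell_1-B\lfloor \ell_1/B\rfloor+1/2)/B$ is affine in $\ell_1$ and $\lfloor\ell_1/B\rfloor$. So both outputs are affine in the staircase values.

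The coefficients $H^{j-1}/B\le H^{d-1}$ and the division by $2H\delta$ account for the radius bound $\max\{4H^d,B/(2H\delta)\}$; the coarse slope $1/(2(H/B)B\delta)=B/(2H^2\delta)$ is smaller. Counting entries gives roughly $8d+12$ nonzero parameters.

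The remaining work is the global bounds on all of $[0,1]^d$, including the trifling region. $\mathcal E_1\ge0$ follows because each staircase is monotone, interpolating between consecutive integers, and hence nonnegative on $[0,1]$; all coefficients are nonnegative. The main obstacle is forcing $\frac1{2B}\le\mathcal E_2\le1-\frac1{2B}$ inside the strips. There the fine and coarse staircases interpolate simultaneously and could push the remainder outside $[0,B-1]$: at a block boundary the fine index rises by one while the coarse one also rises. I would handle this by choosing the coarse strip to lie inside the fine strip with matched linear ramps. Then on a block-boundary strip the combination $\ell_1-B\lfloor\ell_1/B\rfloor$ moves linearly from $B-1$ to $0$, and stays in $[0,B-1]$. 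If the ramps cannot be matched exactly, I would fall back to clipping $\mathcal E_2$ into $[1/(2B),1-1/(2B)]$. The clip uses a $\DTA$ unit applied to a nonnegative rescaled argument, via $\DTA(z)=z$ on $[0,1]$ and the fold of $\tau$. It is absorbed into the second layer's extra unit, which is where the width $d+2$ and the additional parameters would go.
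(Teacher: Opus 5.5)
Your architecture is the paper's: fine staircases on every coordinate, a coarse staircase of frequency $H/B$ on $x_1$, one spare unit per hidden layer, and an affine read-out. But your concrete choice $\delta'=B\delta$ breaks the exactness claim. The trifling intervals of $S_{H/B,\delta'}$ are $(jB/H-\delta',\,jB/H]$, so with $\delta'=B\delta$ they have width $B\delta$ in $x$, not $\delta$. You matched the shifts $K\delta$ in the rescaled variable instead of the strips themselves.

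For $B\ge2$, the interval $(\max\{(jB-1)/H,\,jB/H-B\delta\},\,jB/H-\delta]$ has positive length and meets no fine strip. For $x_1$ in it, $S_{H/B,B\delta}(x_1)=j-s'$ with $s'\in[1/B,1)$, whereas $\lfloor\ell_1/B\rfloor=j-1$. So both formulas for $\mathcal E(\bm x)$ fail on a set of positive measure outside $\Omega_{H,\delta}([0,1]^d)$.

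The correct choice is $\delta'=\delta$. The coarse first-layer units are $\tau((H/B)x_1)$ and $\tau((H/B)x_1+H\delta/B)$, and the coarse second-layer weights are $1-B\lambda$ and $B\lambda$ with $\lambda=(2H\delta)^{-1}$. Then:
\begin{itemize}
\item The coarse strip before $qB/H$ is exactly the fine strip there, and $x=qB/H-\delta s$ parametrizes both ramps, so $S_{H,\delta}(x)-BS_{H/B,\delta}(x)=(Bq-s)-B(q-s)=(B-1)s$.
\item On the other fine strips, before $(Bq+r)/H$ with $1\le r<B$, the coarse staircase is exact and the difference is $r-s\in[r-1,r]$.
\end{itemize}
Hence $0\le S_{H,\delta}(x_1)-BS_{H/B,\delta}(x_1)\le B-1$ on $[0,1]$, which is exactly $1/(2B)\le\mathcal E_2\le1-1/(2B)$. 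No clipping is needed. The coarse slope $B\lambda=B/(2H\delta)$ is then precisely the second term of the radius bound, not a smaller quantity. Separately, $1/(2(H/B)B\delta)=1/(2H\delta)$, not $B/(2H^2\delta)$.

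Your clipping fallback is not available within the stated architecture. $\mathcal E_2$ is produced only by the output affine map acting on the second hidden layer, so clipping it would require a third activation layer. In any case the spare second-layer unit is already occupied by the linear carrier.

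You should also make that carrier explicit, since one unit cannot carry the $d$ raw coordinates $x_j$. One unit suffices because the linear parts cancel in $\mathcal E_2$, where $Hx_1-B\cdot(H/B)x_1=0$, and combine in $\mathcal E_1$ into $(H/B)\sum_{j=1}^dH^{j-1}x_j$. So one carries $c=\Gamma_{H,d}^{-1}\sum_{j=1}^dH^{j-1}x_j\in[0,1]$ with $\Gamma_{H,d}=\sum_{j=1}^dH^{j-1}$, which $\DTA$ leaves unchanged in both layers. Its read-out coefficient $H\Gamma_{H,d}/B\le2H^d$ accounts for the $4H^d$ term in the radius.

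With $\delta'=\delta$ and this carrier, your argument becomes the paper's proof.
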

The proof, including the affine maps and parameter count, is in \hyperref[app:block-grid-proof]{Appendix~\ref*{app:block-grid-proof}}. The two coordinates produced by the encoder are precisely the inputs needed
by the decoder below: the first gives the block index, while the second gives
the normalized position within that block.  We now describe how the target
values in each block are stored and recovered.

Now consider integers $a_t$ whose adjacent differences belong to
$\{-1,0,1\}$ within each row.  The value at position $r$ in a block is
its starting value plus the first $r$ increments.  We encode the increments
as ternary digits.  Two evaluations with a common error recover the exact
position of the required partial sum by an affine subtraction; this is why
we call the decoder \emph{coupled}.  Part~(a) uses two neurons per hidden
layer.  Part~(b) places the codes in a square array to reduce their length.

\begin{proposition}[Coupled block decoder]\label{prop:block-decoder}
Let $H,Q\in\mathbb N$, let $4\le U\le H$, assume that $H$ is a power of
two and $H\mid Q$, and suppose that
$a_0,\ldots,a_{Q-1}\in[-U,U]\cap\mathbb Z$ satisfy
$|a_{t+1}-a_t|\le1$ for every integer $t$ with $0\le t<Q-1$ and
$t\not\equiv H-1\pmod H$.  There is a power of two $B\mid H$ such that
$1\le B\le U/4$ and $B\le U^{1/5}$, with the following properties.
\begin{enumerate}
\item[(a)] There is a four-hidden-layer $\DTA$ network
$\mathcal D_2:\R^2\to\R$ whose hidden layers all have width two and such that
\[
\max_{\substack{b,r\in\mathbb Z\\0\le b<Q/B,\ 0\le r<B}}
\left|\mathcal D_2\!\left(b,\frac{r+1/2}{B}\right)
-\frac{a_{bB+r}}U\right|<\frac1{25U},\qquad
\sup_{\substack{b\ge-1\\0\le\nu\le1}}|\mathcal D_2(b,\nu)|
\le1+\frac BU.
\]
Moreover, the network has at most $18$ nonzero parameters and parameter radius
$T_{\mathcal D_2}$ with $\log T_{\mathcal D_2}\le12Q$.

\item[(b)] For every integer $N\ge2$, there is a four-hidden-layer
$\DTA$ network $\mathcal D_N:\R^2\to\R$ whose hidden layers all have width
$N$.  Its table error satisfies the first inequality in part~(a), with $b,r$ integers,
and 
\[
\sup_{\substack{b\ge-1\\0\le\nu\le1}}|\mathcal D_N(b,\nu)| \le1+\frac{3B}{U}.
\]
 When $Q=H^d$, write
$\mathcal D_N=D_5\circ\DTA\circ D_4\circ\DTA\circ D_3\circ\DTA\circ
D_2\circ\DTA\circ D_1$ and take $A_3^{\mathrm{bl}}$ from
Proposition~\ref{prop:block-grid-encoder}.  For a numerical constant
$C_{\mathrm{D}}>0$, the maps may be chosen so that
\[
\parnorm{D_1\circ A_3^{\mathrm{bl}}}\le C_{\mathrm D}H^d,\qquad
\max_{2\le j\le5}\log\parnorm{D_j}
\le C_{\mathrm D}\frac{Q}{N^2}+\log Q+\tfrac95\log U+C_{\mathrm D}.
\]
\end{enumerate}
\end{proposition}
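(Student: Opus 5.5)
The plan is to reduce the table to one code per block, in the spirit of Proposition~\ref{prop:decoder}, and then to design a decoding stage whose required precision does not depend on $U$. This is what should remove the $\log U$ factor that a direct application of Proposition~\ref{prop:decoder} to $Q$ values at accuracy $1/(25U)$ would cost.

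\textbf{Choice of $B$ and block data.} First choose $B$ as the largest power of two with $B\le\min\{U/4,U^{1/5}\}$; such a $B$ divides $H$ because $H$ is a power of two and $U\le H$. Since $B\mid H$, no block crosses a row boundary, so inside block $b$ every increment $e_{b,s}=a_{bB+s+1}-a_{bB+s}$ with $0\le s\le B-2$ lies in $\{-1,0,1\}$. Hence
\[
a_{bB+r}=a_{bB}+\sum_{s<r}(d_{b,s}-1),
\qquad d_{b,s}=e_{b,s}+1\in\{0,1,2\}.
\]
I would store block $b$ as a single real number
\[
z_b=\frac{a_{bB}+U}{4U}+\sum_{s<B-1}\frac{d_{b,s}}{3^{s+1}}\cdot(\text{small scale}),
\]
so that the start value and the ternary increment digits occupy separated scales.

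\textbf{Producing the codes and decoding.} The first two hidden layers should produce $z_b$ from the integer $b$. The negative branch turns $b$ into the distinct point $2^{-b/Q}-1$, and a triangular wave at a huge frequency then realizes $\tau(m2^{-b/Q}+c)\approx z_b$ for all $b$ simultaneously. This is the Kronecker-type step already used for Proposition~\ref{prop:decoder}, now with $Q/B$ targets and with tolerance $\eps\asymp 3^{-B}$ only. Using $B\le U^{1/5}$ and $Q/B$ targets, the radius bound $(CQ^{3/2}\eps^{-1})^{Q/B}$ should give $\log T\le12Q$.

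The coupled step uses the second neuron of each layer to carry a copy of the same code, shifted according to $\nu=(r+1/2)/B$. The two neurons receive $3^{r}$-type scalings of $z_b+e$, where $e$ is the common approximation error. Subtracting them affinely isolates $\sum_{s<r}d_{b,s}$, with $e$ cancelled exactly up to a term of size $3^B|e|$, which is tiny. A final triangular wave snaps this to the nearest admissible value. Adding the start value and rescaling by $U^{-1}$ then gives error $<1/(25U)$.

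\textbf{Output bound and part (b).} The sup bound $1+B/U$ follows because every intermediate activation is bounded and the partial sums deviate from $a_{bB}$ by at most $B$. Part~(b) would lay the $Q/B$ block codes out in an array whose $N$ coordinates each carry about $Q/N^2$ codes. One neuron in the first decoding layer selects a row of length about $Q/N$, and each of the $N$ neurons then handles only about $Q/N^2$ Kronecker targets. This gives $\log\parnorm{D_j}\lesssim Q/N^2+\log Q+\tfrac95\log U$. The bound $\parnorm{D_1\circ A_3^{\mathrm{bl}}}\le C_{\mathrm D}H^d$ comes from merging the encoder's last affine map into $D_1$.

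\textbf{Main obstacle.} I expect the hardest step to be making the coupled subtraction work with only two neurons per layer while keeping $\nu$-dependent partial sums exact: the digit-truncation at position $r$ must be implemented by triangular waves at frequencies depending on $\nu$. I would first try encoding cumulative sums directly as digits, so that truncation becomes a single fractional-part extraction, and then verify that the common error survives only through a $3^{B}$ factor absorbed by $B\le U^{1/5}$.
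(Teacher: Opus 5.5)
\textbf{Gap: the partial-sum step.} The proposal breaks down at the step you flag as the main obstacle: nothing in it produces $\sum_{s<r}d_{b,s}$ with two neurons per layer.

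Two triangle-wave reads of $z_b+e$ at scales $c$ and $3^rc$, combined affinely, yield at best the integer $\sum_{s<r}d_{b,s}3^{r-1-s}$ formed by the leading ternary digits, with $e$ cancelling. That is a geometrically weighted sum, not the digit sum. The digit sum is not an affine function of that integer: the ternary words $1,2,3$ have digit sums $1,2,1$.

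Extracting the $r$ digits one at a time would need width of order $B$ and a gate for $s<r$. Moreover, $3^r$ cannot be an affine weight, because $r$ is an input; it would have to enter through the exponent of the negative branch.

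Your fallback stores the cumulative sums themselves as digits. That needs base at least $2B+1$, so about $B\log B$ bits per block and $\log T\gtrsim Q\log B$. With your $B\asymp U^{1/5}$ this is of order $Q\log U$, exactly the overhead the proposition is meant to remove, and for this encoding no choice of $B$ does better than $Q\log\log U$. Also, a final triangular wave cannot `snap' to the nearest admissible value; errors must be controlled by keeping every read accurate.

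\textbf{The missing idea: look the partial sum up.} The paper never computes the partial sum. One integer $m$ stores two tables:
\begin{itemize}
\item the block phases $(2w(b)+y(b))/(8\cdot3^{B-1})$ at addresses $b+1$, with the word $w(b)$ at the coarse scale and the start value $y(b)$ at the fine scale;
\item the partial sums $\frac14(1+s(w,r)/B)$ for \emph{all} words $w<3^{B-1}$ and positions $r<B$, at addresses $Q/B+1+Bw+r$.
\end{itemize}
The two neurons read the same block phase at scales $8\cdot3^{B-1}$ and $2$, the second with $\nu$ added. Then $2\cdot3^{B-1}h_{2,2}-\frac12h_{2,1}=w(b)+\nu$ holds exactly: the shared error cancels completely, not up to $3^B|e|$. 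Exactness is essential, because this quantity is the exponent of the second read $\DTA\bigl(-g(Q/B+B(w(b)+\nu)+\frac12)\bigr)$, whose output is then multiplied by $m$.

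The second table has $B3^{B-1}$ entries, so $B$ must be of order $\log U$. The paper takes $n/8<B\le n/4$ with $n=\lceil\log_2U\rceil$, which gives $3^{B-1}<2U^{2/5}$ and $4B^23^{B-1}\le H\le Q$. With $B\asymp U^{1/5}$ the table would dwarf $Q=H^d$ for large $U$.

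\textbf{Further problems.}
\begin{itemize}
\item The paper builds its codes by the exact base-$G$ expansion of Lemma~\ref{lem:guarded-code}, read at $G^{-i}=\DTA(-gi)+1$, which costs $P\log G$ for $P$ phases. The Kronecker bound of Lemma~\ref{lem:orbit} adds $\frac32P\log P$. With $P=Q/B$ it therefore cannot give $\log T\le12Q$ uniformly in $Q$: for $U=4$ your rule gives $B=1$ and $\log T$ of order $\frac32Q\log Q$. For the same reason it cannot give the $Q/N^2$ form in (b).
\item The tolerance must be $\lesssim3^{-B}/U$, not $3^{-B}$, since the start value needs resolution $1/U$.
\item In (b), `one neuron selects a row' needs an actual gating mechanism. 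The paper uses the cyclic selector of Lemma~\ref{lem:cyclic-selector} and the adjacent differences of geometric tails in Lemma~\ref{lem:short-code-array} to isolate $m_{pq}G^{-\ell}$ affinely. It then extracts the selected row with paired reads $\DTA(F_j)-\DTA(F_j+(\bm e_p)_j)$. The selector's $\ell^1$ bound $<3$ is what yields $1+3B/U$ between table points.
\end{itemize}
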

Parts~(a) and~(b) are proved in online
\hyperref[app:coupled-a-proof]{Appendices~\ref*{app:coupled-a-proof}}
and~\hyperref[app:coupled-b-proof]{\ref*{app:coupled-b-proof}}, respectively.
The two radius bounds in part~(b) distinguish the map merged with the
grid encoder from the remaining affine maps.  This distinction is needed
to control the composed network.  In the proof of
Theorem~\ref{thm:optimal-approximation}, we will verify the adjacent-difference
condition directly from H\"older continuity and rounding.

\subsection{Proof of Theorem~\ref{thm:optimal-approximation}}\label{sec:proof-optimal}
The block-grid encoder gives the block number and position; the coupled
decoder approximates the corresponding rounded value.  We construct the
network for an arbitrary decoder width $n\ge2$.  Taking $n=2$ proves this
theorem, while keeping $n$ variable gives the radius estimate for
Theorem~\ref{thm:joint-approximation}.

\begin{proof}[Proof of Theorem~\ref{thm:optimal-approximation}]
Fix $f\in\mathcal H^\beta([0,1]^d)$, $\eta\in(0,1/2]$, and an integer
$n\ge2$.  Choose~$U=\lceil2^{1+1/p}/\eta\rceil$ and
$\log_2H=\lceil\log_2((U+1)^{1/\beta})\rceil$.  Then $U\ge4$, and $H$ is a power of two satisfying $H\ge U+1$; in
particular, $H\ge8$.  Moreover,
\begin{equation}\label{eq:optimal-scale-bounds}
H^{-\beta}\le\frac1{U+1},\qquad
H<2(U+1)^{1/\beta},\qquad U+1\le C_p\eta^{-1},
\end{equation}
where one may take $C_p=2^{1+1/p}+1$.
For $\bm\ell\in\{0,\ldots,H-1\}^d$, put
$\bm x_{\bm\ell}=H^{-1}(\bm\ell+\one_d/2)$ and
$a_{\bm\ell}=\lfloor Uf(\bm x_{\bm\ell})+1/2\rfloor$.  Then
$a_{\bm\ell}\in[-U,U]\cap\mathbb Z$ and
$|a_{\bm\ell}-Uf(\bm x_{\bm\ell})|\le1/2$.  If $\ell_1\le H-2$, then 
\[
|a_{\bm\ell+\bm e_1}-a_{\bm\ell}| \le1+U|f(\bm x_{\bm\ell+\bm e_1})-f(\bm x_{\bm\ell})| \le1+UH^{-\beta}<2,
\]
 so this integer is at most one.  Flatten the table by $a_t=a_{\bm\ell}$
when $t=\ell_1+H\ell_2+\cdots+H^{d-1}\ell_d$.  Whenever
$t\not\equiv H-1\pmod H$, increasing $t$ changes only $\ell_1$; hence
Proposition~\ref{prop:block-decoder} applies with $Q=H^d$.

Use part~(a) when $n=2$ and part~(b) otherwise, and denote the resulting
decoder by $\mathcal D_n$.  Let $T_n^\star$ be the parameter radius after
the encoder and decoder are merged, and set
\begin{equation}\label{eq:optimal-trifling-width}
\delta=\frac{\eta^p}{2dH(2+3B/U)^p}.
\end{equation}
Since $0<H\delta<1$, Proposition~\ref{prop:block-grid-encoder} supplies
$\mathcal E=A_3^{\mathrm{bl}}\circ\DTA\circ A_2^{\mathrm{bl}}\circ\DTA\circ
A_1^{\mathrm{bl}}$.  Define $\phi_n^\star=\mathcal D_n\circ\mathcal E$ and merge
the last encoder map with the first decoder map.  The resulting $\DTA$
network has hidden-layer widths $[2d+3,d+2,n,n,n,n]$.
For $n=2$, $D_1(b,\nu)=(-g(b+1),\nu)^\top$.  Write
$\bm\omega_{H,d}=(1,H,\ldots,H^{d-1})^\top$,
$\Gamma_{H,d}=\sum_{j=1}^d H^{j-1}$, and let $\bm e_1$ be the first
coordinate vector.  The encoder output map in
(\ExternalNumber{A.10}{eq:block-encoder-A3-matrix}) then gives
\begin{align*}
(D_1\circ A_3^{\mathrm{bl}})(\bm u,\bar u,c)
&=\begin{bmatrix}
-gB^{-1}(\bm\omega_{H,d}-\bm e_1)^\top&-g&-gH\Gamma_{H,d}/B\\
B^{-1}\bm e_1^\top&-1&0
\end{bmatrix}
\begin{bmatrix}\bm u\\\bar u\\c\end{bmatrix}
+\begin{bmatrix}g(\Gamma_{H,d}-1)/B\\1-1/(2B)\end{bmatrix}.
\end{align*}
Its first row has at most $d+2$ nonzero parameters including its bias,
and its second row has three.  The first two encoder maps and the four
remaining decoder maps therefore give the total
\[
(4d+3)+(3d+4)+(d+5)+5+4+3+3=8d+27.
\]  Figure~\ref{fig:optimal-full-network}
shows this decoder-width-two network when $d=3$.

Fix $\bm x\notin\Omega_{H,\delta}([0,1]^d)$, put
$\ell_j=\lfloor Hx_j\rfloor$, and write
$t=\ell_1+H\ell_2+\cdots+H^{d-1}\ell_d=Bb+r$.  The encoder gives
$\mathcal E(\bm x)=(b,(r+1/2)/B)^\top$, and therefore
\begin{align*}
|f(\bm x)-\phi_n^\star(\bm x)|
&\le |f(\bm x)-f(\bm x_{\bm\ell})|
 +\frac{|Uf(\bm x_{\bm\ell})-a_{\bm\ell}|}{U}
 +\left|\mathcal D_n\!\left(b,\frac{r+1/2}{B}\right)
 -\frac{a_{\bm\ell}}U\right|\\*
&\le H^{-\beta}+\frac1{2U}+\frac1{25U}
 <\frac2U\le2^{-1/p}\eta.
\end{align*}
On the whole cube, the encoder output lies in the decoder domain and
$B/U\le1/4$, so $\|\phi_n^\star\|_{L^\infty}\le7/4$; for $n=2$,
part~(a) improves this to $5/4$.  With
$\Omega=\Omega_{H,\delta}([0,1]^d)$ and $m_d(\Omega)\le dH\delta$,
\begin{align*}
\|f-\phi_n^\star\|_{L^p([0,1]^d)}^p
&=\int_{[0,1]^d\setminus\Omega}|f(\bm x)-\phi_n^\star(\bm x)|^p\,\dd\bm x
 +\int_\Omega|f(\bm x)-\phi_n^\star(\bm x)|^p\,\dd\bm x\\*
&\le\frac{\eta^p}{2}+(2+3B/U)^pdH\delta=\eta^p.
\end{align*}
Only the $p$th power is used, so this calculation is valid for every
$0<p<\infty$.

\begin{figure}[!tbp]
\centering
\includegraphics[width=\FigureOptimalNetworkWidth]{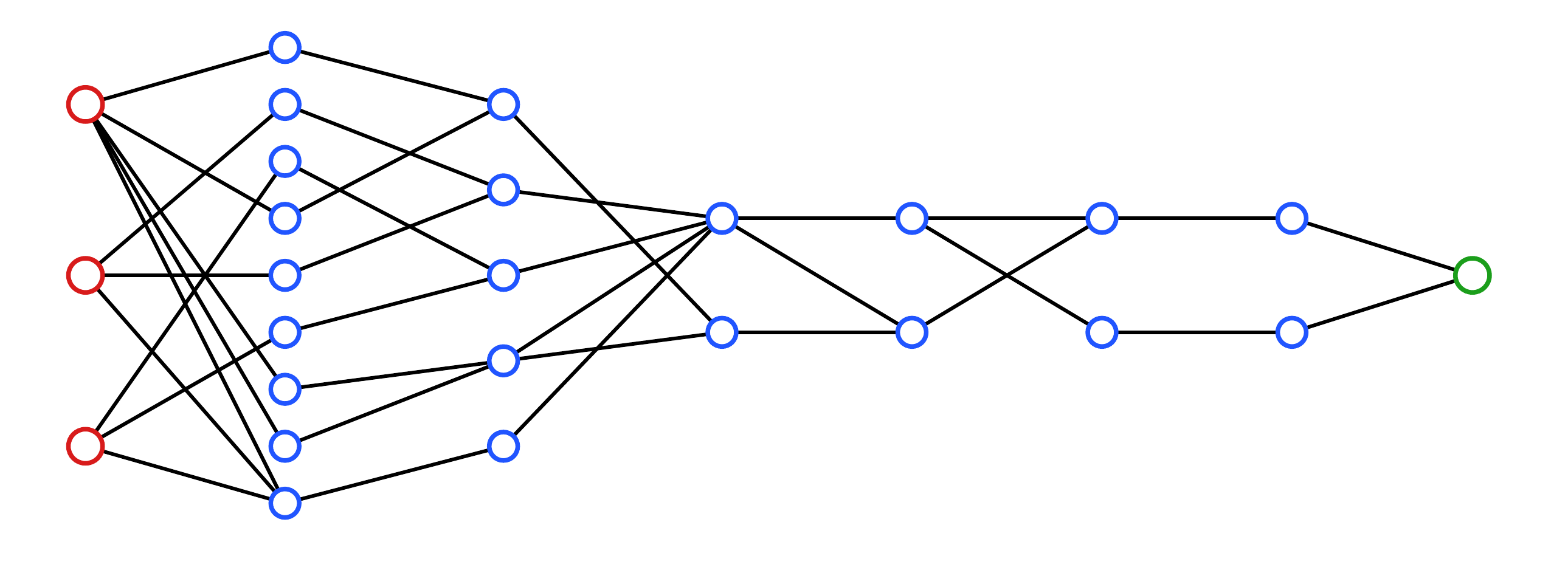}
\caption{The six-hidden-layer approximator for $d=3$ and decoder width two.
The first two hidden layers form the block address, and the four width-two
layers carry out the coupled decoder evaluations.}
\label{fig:optimal-full-network}
\end{figure}

We bound the radius after merging the affine interface.  Increase the
numerical constant $C_{\mathrm D}$, if necessary, so that the first term
in the maximum below covers the scalar case $n=2$ and bounds
both $\log(4H^d)$ and $\log(C_{\mathrm D}H^d)$.
Since $B/(2H\delta)=dB(2+3B/U)^p\eta^{-p}$,
$B\le U^{1/5}$, and $B/U\le1/4$, the two possible contributions give
\begin{align*}
\log T_n^\star
&\le\max\left\{
 C_{\mathrm D}\frac{H^d}{n^2}+d\log H+\tfrac95\log U+C_{\mathrm D},\,
 \log\frac{B}{2H\delta}\right\}\\*
&\le C_{\mathrm D}\frac{H^d}{n^2}
 +\max\left\{d\log H+\tfrac95\log U,\,
 p\log(\eta^{-1})+\tfrac15\log U\right\}+C\\*
&\le Cn^{-2}\eta^{-d/\beta}
 +\max\left\{\tfrac d\beta+\tfrac95,\,p+\tfrac15\right\}\log(C/\eta).
\end{align*}
Here we used $U+1\le C_p\eta^{-1}$ and
$H<2(U+1)^{1/\beta}$; $C\ge1$ depends only on $d,\beta,p$.
Set $\gamma=3+3(p+2)\beta/d$.  Since
$2d\gamma/(5\beta)$ dominates both coefficients in the maximum, we may
choose $C_1=C_1(d,\beta,p)\ge(\gamma+2)^2$ large enough so that
\begin{equation}\label{eq:coupled-width-radius-bound}
\log T_n^\star
\le C_1n^{-2}\eta^{-d/\beta}
 +\frac{2d\gamma}{5\beta}
  \log\left(C_1^{\beta/(2d)}\eta^{-1}\right).
\end{equation}
For $n=2$, the logarithmic term is bounded by a constant multiple of
$\eta^{-d/\beta}$.  Thus
$\log T_\eta^\star:=\log T_2^\star\le C_1\eta^{-d/\beta}$ after fixing
$C_1$ once and for all.  The network has at most $8d+27$ nonzero parameters and
$\norm{\phi_2^\star}_{L^\infty}\le5/4$, so
Theorem~\ref{thm:optimal-approximation} follows.
\end{proof}

\subsection{The unit-radius theorem and Proof of Theorem~\ref{thm:joint-approximation}}\label{sec:proof-joint}
The preceding construction gives the required error when the radius is
large relative to width.  To cover the remaining radii, we first need the
endpoint in which every affine parameter is bounded by one.

\begin{theorem}[Unit-radius approximation]\label{thm:unit-radius}
Let $d\in\mathbb N$, $0<p<\infty$, and $0<\beta\le1$.  There is a constant
$C_{\mathrm{ur}}=C_{\mathrm{ur}}(d,\beta,p)>0$ such that, for every
$N\in\mathbb N$,
\begin{equation}\label{eq:unit-radius-class-rate}
\sup_{f\in\mathcal H^\beta}
\inf_{g\in\mathcal F_\DTA(N,17+\lceil3p\beta\rceil,1)}
\norm{f-g}_{L^p([0,1]^d)}
\le C_{\mathrm{ur}}\bigl[N^2\log(eN)\bigr]^{-\beta/d}.
\end{equation}
\end{theorem}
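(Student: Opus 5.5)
The plan is a bounded-parameter version of the block-grid encoding and coupled decoding used for Theorem~\ref{thm:optimal-approximation}. Everything large, meaning frequencies, slopes and code lengths, will be produced by depth and width rather than by parameter size. We may assume $N\ge N_0(d,\beta,p)$, since for small $N$ the constant $C_{\mathrm{ur}}$ absorbs everything (the zero network has error at most $1$).

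\textbf{Step 1: amplification and grid.} A hidden layer of width $N$ with all weights in $\{0,\pm1\}$ can output $\sum_{i\le N}\DTA(z)=Nz$ for $z\in[0,1]$, and $\tau$ is $2$-periodic. Hence a constant number of layers realizes multiplication of a bounded coordinate by $N^k$, and triangular waves of frequency $N^k$, all with unit radius. The target scales are grid size $H\asymp(N^2\log(eN))^{1/d}$ (a fixed power of $N$, up to a logarithm) and block length $B\asymp\log(eN)$. With these, the staircase $S_{H,\delta}$ of the grid encoder, and of Proposition~\ref{prop:block-grid-encoder}, can be rebuilt with unit radius.

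The only delicate parameter is the strip width $\delta$. The $L^p$ bookkeeping from the proof of Theorem~\ref{thm:optimal-approximation} requires $dH\delta\lesssim\eta^p$ with $\eta\asymp H^{-\beta}$. The slope $1/(H\delta)\asymp H^{p\beta}$ is therefore a power of $N$ whose exponent grows linearly in $p\beta$. I expect the $\lceil 3p\beta\rceil$ extra layers to be spent exactly on building this slope, one factor of at most about $N^{2/d}\le N$ per layer, with some room for the logarithm. The remaining $17$ layers should be a fixed budget for encoding and decoding.

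\textbf{Step 2: storing the table in unit-size parameters.} As in the proof of Theorem~\ref{thm:optimal-approximation}, I will round to $a_{\bm\ell}=\lfloor Uf(\bm x_{\bm\ell})+1/2\rfloor$ with $U\asymp H^\beta$. Adjacent differences then lie in $\{-1,0,1\}$, so each block of $B$ consecutive cells is described by a start value and $B-1$ ternary increments. There are $H^d/B\asymp N^2$ blocks. I will store block $b$ as two numbers in $[0,1]$: the normalized start value, and the ternary code $\sum_r(\text{increment}_r+1)3^{-r-1}$. These $\asymp N^2$ numbers become weights of magnitude at most one.

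Retrieving the number belonging to block $b$ uses the classical two-layer point-fitting trick of Shen--Yang--Zhang. I will split $b=b_1N+b_2$, select row $b_1$ using $N$ indicator-like DTA units (steep ramps obtained again by amplification), and then pick the entry $b_2$ by a second selection layer whose outgoing weights are the stored codes. All weights stay in $[-1,1]$. This is where the $N^2$ factor enters.

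\textbf{Step 3: digit extraction and the coupled subtraction (main obstacle).} From the retrieved code $c_b$ and the position $r<B$ we must output the partial sum of the first $r$ increments. This requires the digits of $3^{r}c_b$, where $r$ ranges up to $\log N$. With unit radius we cannot multiply by $3^r$ via a weight, and the network has no products.

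The first thing I would try is the exponential branch: $\DTA(-u)=2^{-u}-1$ turns the affine input $u\propto r$ into the scale $2^{-r\log_2 3}$. I would then realize the product of this scale with $c_b$ by a bounded-weight DTA multiplication gadget, built from the identity $xy=\tfrac14[(x+y)^2-(x-y)^2]$ with squares approximated by triangular-wave sums. This gadget costs a fixed number of layers per precision level.

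The fallback is to avoid products altogether and unroll the extraction over the bounded number of layers. Each layer applies a width-$N$ amplified triangular map that shifts the code by $\asymp\log_3 N$ ternary digits at once, so only $O(1)$ shifts are needed to cover $B\asymp\log N$ digits. A ramp in $r$ then selects which shifted copy to keep.

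In either route, the coupled trick of Proposition~\ref{prop:block-decoder} is reused. I evaluate the partial-sum code at two nearby arguments that share the same storage error and subtract them, which cancels the rounding of the stored parameters and recovers the exact integer partial sum.

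\textbf{Step 4: error accounting.} On cells outside the trifling region, the total error is $H^{-\beta}+1/(2U)+O(1/U)\lesssim[N^2\log(eN)]^{-\beta/d}$. The trifling region contributes $m_d(\Omega_{H,\delta})\cdot\sup|\phi|^p\lesssim\eta^p$ to the $p$th power of the error, since the output stays bounded as in Step~1. Finally, clipping by $\pi$ only decreases the error, so the resulting approximant lies in $\mathcal F_\DTA(N,17+\lceil3p\beta\rceil,1)$ and gives~\eqref{eq:unit-radius-class-rate}.
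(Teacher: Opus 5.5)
Your outer layers agree with the paper's construction. Both amplify by summing copies with unit weights: the paper sums $m>N/(8d)$ copies per layer, so $K<m^3$ and the strip slope $(K\delta)^{-1}=K^{p\beta}$ costs at most $\lceil3p\beta\rceil$ layers. Both round so that increments lie in $\{-1,0,1\}$, and both read a square table of $\asymp N^2$ start values and increment codes with a two-index selector.

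The gap is Step~3, which is exactly where the extra factor $\log N$ has to be earned. Neither of your routes outputs $\sum_{s\le r}(\theta_s-1)$.
\begin{itemize}
\item Multiplying $c_b$ by the scale $3^{-r}$ from the exponential branch, and then by a fixed amplification, can at best give a shifted tail $\{3^kc_b\}$. That is the remaining digits read as a weighted number, not the sum of the first $r$ digits.
\item In the fallback, $O(1)$ fixed shifts cannot realize the $B\asymp\log N$ shifts over which $r$ ranges, and a ramp that selects one shifted copy again returns a tail.
\item Storing the prefix sums themselves as digits is not a repair. It needs an alphabet of size $\asymp B$, and a unit-radius network of width $N$ and fixed depth has Lipschitz constant $N^{O(1)}$. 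This forces $B\log B=O(\log N)$ and loses a $\log\log N$ factor in the rate.
\item The coupled subtraction does not help. In Proposition~\ref{prop:block-decoder} it cancels the single phase error $\bm e[b+1]$ of the guarded integer code, which enters two readings of the same code with known coefficients. Unit-radius weights store $c_b$ exactly, so there is no storage error to cancel. The error of a squaring gadget depends on its input and is not shared between evaluations.
\item Base $3$ with digits $\{0,1,2\}$ has no guard. The tails fill $[0,1)$, so $\tau(2\cdot3^{j-1}c_b)$ folds back whenever a tail exceeds $1/2$, even in exact arithmetic.
\end{itemize}

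The missing idea is that no variable multiplier is needed.
\begin{itemize}
\item The paper stores $\xi_b=\sum_{s=1}^{n-1}\theta_{b,s}5^{-s}$ with $\theta_{b,s}\in\{0,1,2\}$, so every tail $\rho_j$ lies in $[0,1/2)$.
\item One layer makes $2\cdot5^{j-1}$ copies of $\xi_b$ for every $j<n$. The total width is $(5^{n-1}-1)/2\le N$, and this is what fixes $n\approx\log_5N$.
\item The next activation returns $z_j=\tau(2\cdot5^{j-1}\xi_b)=2\rho_j$ exactly, for all $j$ in parallel, so $2\theta_j=5z_j-z_{j+1}$.
\item The cyclic selector applied to $r/n$ gives the indicators $\one_{\{j\le r\}}/n$.
\item The identity $\DTA(1+\frac{u-c}2)-\DTA(1+\frac{u+c}2)=\min\{u,c\}$ gates each digit.
\item The gated digits are summed with unit weights, and copying the sum $2n$ times supplies the factor $2n$. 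This prefix decoder costs four hidden layers.
\end{itemize}

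Your proposal also leaves two checks open that the statement requires.
\begin{itemize}
\item You do not show that everything fits in $17+\lceil3p\beta\rceil$ hidden layers. In the paper the encoder uses $1+q_K+q_\delta\le4+\lceil3p\beta\rceil$ layers, followed by $4+5+4$ for the block index, square table and prefix decoder.
\item You do not show that the merged affine maps at the interfaces keep radius one, and this is not automatic. The output map of Proposition~\ref{prop:block-grid-encoder}, which you propose to rebuild, has weights $H^{j-1}/B$. The paper instead passes normalized indices $\ell_j/K$ through an address map with weights $K^{1-j}$ and verifies the merged map entrywise.
\end{itemize}
Finally, increments lie in $\{-1,0,1\}$ only within $x_1$-rows, so your blocks must not straddle rows; the paper inserts interpolating bridges between rows for this.
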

For large $N$, the construction fits a table of order $N^2\log(eN)$ grid
values with every parameter bounded by one.  A constant network handles
the remaining widths after enlarging $C_{\mathrm{ur}}$, so no additional
lower bound on $N$ is needed.  The proof is given in \hyperref[app:unit-theorem-proof]{Appendix~\ref*{app:unit-radius}}.

The two constructions cover all radii.  When $T$ is bounded by a fixed
power of $N$, $\log(eNT)$ is comparable to $\log(eN)$, so
Theorem~\ref{thm:unit-radius} suffices.  For larger $T$, we use
\eqref{eq:coupled-width-radius-bound}.  We now make this division and the
constants explicit.

\begin{proof}[Proof of Theorem~\ref{thm:joint-approximation}]
Keep the constant $C_1$ in \eqref{eq:coupled-width-radius-bound}, set
$\gamma=3+3(p+2)\beta/d$, and choose
\begin{equation}\label{eq:joint-constant-choice}
C_2:=\max\left\{
2(16C_1)^{\beta/d},\,
C_{\mathrm{ur}}(1+\gamma)^{\beta/d}
\right\}.
\end{equation}
Write $L_\star=17+\lceil3p\beta\rceil$ for this proof, and fix
$N\ge2d+3$ and $T\ge1$.

\par\medskip\noindent
\emph{Case 1: $T\ge N^\gamma$.}
For $f\in\mathcal H^\beta([0,1]^d)$, put
$\eta=(16C_1)^{\beta/d}[N^2\log(eNT)]^{-\beta/d}$.
If $\eta>1/2$, the zero network gives error at most
$1<2\eta\le C_2[N^2\log(eNT)]^{-\beta/d}$, proving the claim.
Assume henceforth that $\eta\le1/2$ and apply
\eqref{eq:coupled-width-radius-bound} with $n=N$.
Because $N\ge5$ and $T\ge N^\gamma$, we have
$\log N\le\gamma^{-1}\log T$ and
$\log(eNT)\le(\gamma+2)\gamma^{-1}\log T$.
Using $\log x\le x/e$ for $x>0$, followed by
$\gamma\ge3$ and $\sqrt{C_1}\ge\gamma+2$, we obtain
\begin{align*}
\log T_N^\star
&\le
C_1N^{-2}\eta^{-d/\beta}
+\frac{2d\gamma}{5\beta}
 \log\left(C_1^{\beta/(2d)}\eta^{-1}\right)
=
\frac1{16}\log(eNT)
+\frac{2\gamma}{5}
 \log\left(\frac{N^2\log(eNT)}{16\sqrt{C_1}}\right)\\*
&=
\frac1{16}\log(eNT)
+\frac{4\gamma}{5}\log N
+\frac{2\gamma}{5}
 \log\left(\frac{\log(eNT)}{16\sqrt{C_1}}\right)\\
&\le
\frac1{16}\log(eNT)
+\frac{4\gamma}{5}\log N
+\frac{\gamma}{40e\sqrt{C_1}}\log(eNT)\\
&=
\left(\frac1{16}+\frac{\gamma}{40e\sqrt{C_1}}\right)\log(eNT)
+\frac{4\gamma}{5}\log N
\le
\left(
\frac{\gamma+2}{16\gamma}
+\frac{\gamma+2}{40e\sqrt{C_1}}
+\frac45
\right)\log T\\*
&\le
\left(\frac5{48}+\frac1{40e}+\frac45\right)\log T
<
\left(\frac5{48}+\frac1{96}+\frac45\right)\log T
=
\frac{439}{480}\log T
<
\log T.
\end{align*}
Clipping cannot increase the distance to $f$.  The constructed network has
hidden widths $[2d+3,d+2,N,N,N,N]$, hence width $N$ and depth six because
$N\ge2d+3$.  Since $6\le L_\star$, its clipped realization
belongs to
$\mathcal F_\DTA(N,6,T)\subseteq
\mathcal F_\DTA(N,L_\star,T)$.  The construction is valid
for every $f\in\mathcal H^\beta([0,1]^d)$, so taking the infimum over the stated
class gives
\[
\inf_{g\in\mathcal F_\DTA(N,L_\star,T)}
\norm{f-g}_{L^p}
\le(16C_1)^{\beta/d}[N^2\log(eNT)]^{-\beta/d}
\le C_2[N^2\log(eNT)]^{-\beta/d}.
\]

\par\medskip\noindent
\emph{Case 2: $1\le T<N^\gamma$.}
Since $T\ge1$, every unit-radius network allowed by
Theorem~\ref{thm:unit-radius} also belongs to the class used here.  Since $\log(eNT)\le(1+\gamma)\log(eN)$,
\begin{align*}
\inf_{g\in\mathcal F_\DTA(N,L_\star,T)}
\norm{f-g}_{L^p}
&\le
\inf_{g\in\mathcal F_\DTA(N,L_\star,1)}
\norm{f-g}_{L^p}\le C_{\mathrm{ur}}[N^2\log(eN)]^{-\beta/d}\\*
&\le C_{\mathrm{ur}}(1+\gamma)^{\beta/d}
[N^2\log(eNT)]^{-\beta/d}
\le C_2[N^2\log(eNT)]^{-\beta/d}.
\end{align*}
The two cases cover every $T\ge1$ and prove the theorem.
\end{proof}

\section{Proofs of Theorems~\ref{thm:near-optimal-parameter} and~\ref{thm:generalization}}\label{sec:generalization}

Both results use parameter covers: comparison with H\"older entropy gives the
lower bound, while the empirical-process estimate gives the regression risk.
For a class $\mathcal G$ of real-valued functions on $\Omega$ and
$M\in\mathbb N$, define the uniform
proper empirical covering number by
\begin{equation}\label{eq:proper-empirical-cover}
\mathcal N(\eps,\mathcal G,M)
=\sup_{\bm x_1,\ldots,\bm x_M\in\Omega}
\mathcal N_{\mathrm{prop}}\!\left(
\eps,
\left\{\bigl(g(\bm x_1),\ldots,g(\bm x_M)\bigr):g\in\mathcal G\right\},
\norm{\cdot}_\infty
\right).
\end{equation}
Every center belongs to the set being covered, so losses and multipliers
remain evaluated within the hypothesis class.  The supremum makes the bound
design-independent.

\subsection{Proof of Theorem~\ref{thm:near-optimal-parameter}}\label{sec:lower-proof}
At a given error scale, the H\"older ball contains many separated
functions.  A network class with too small a cover cannot approximate
them all.  We first quantify this comparison, then cover the network
class by discretizing its parameters.

The entropy order below is classical
\citep{KolmogorovTikhomirov1961,KerkyacharianPicard2003}.  We include a
direct packing proof because it treats the endpoint $\beta=1$ and the
quasi-norm range $0<p<1$ in exactly the form used here.

\begin{samepage}
\begin{lemma}[H\"older packing]\label{lem:holder-packing}
Let $d\in\mathbb N$, $0<\beta\le1$, and $0<p<\infty$.  There are
$c_{\mathrm{H}},\delta_{\mathrm{H}}>0$, depending only on $d,\beta,p$, such that
\begin{equation}\label{eq:holder-lp-cover-lower}
\log\mathcal N_{\mathrm{prop}}\!\left(
\delta,\mathcal H^\beta([0,1]^d),\norm{\cdot}_{L^p([0,1]^d)}
\right)
\ge c_{\mathrm{H}}\delta^{-d/\beta},
\qquad 0<\delta\le\delta_{\mathrm{H}}.
\end{equation}
\end{lemma}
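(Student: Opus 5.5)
We prove the lower bound by exhibiting a large $L^p$-separated subfamily of $\mathcal H^\beta([0,1]^d)$: a Varshamov--Gilbert-type family of bump superpositions on a grid. Any proper $\delta$-cover has cardinality at least that of a $2\delta$-packing (in the quasi-norm case, of a $2^{1/p}\delta$-style packing, using $\|u+v\|_p^p\le\|u\|_p^p+\|v\|_p^p$). So it suffices to build, for each small $\delta$, a packing with $\log(\text{cardinality})\gtrsim\delta^{-d/\beta}$.

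\textbf{Construction.} Fix an integer $K\ge2$ and partition $[0,1]^d$ into $K^d$ cubes $Q_{\bm\ell}$ of side $1/K$ with centers $\bm x_{\bm\ell}$. Take the bump
\[
\psi(\bm y)=\max\{0,\tfrac12-\norm{\bm y}_\infty\},
\]
which is $1$-Lipschitz in $\norm{\cdot}_\infty$, is supported in $[-\tfrac12,\tfrac12]^d$, and is bounded by $1/2$. For each sign vector $\bm s\in\{-1,1\}^{K^d}$ set
\[
f_{\bm s}(\bm x)=\tfrac12K^{-\beta}\sum_{\bm\ell}s_{\bm\ell}\,\psi\bigl(K(\bm x-\bm x_{\bm\ell})\bigr).
\]
Clearly $\norm{f_{\bm s}}_\infty\le1$.

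\textbf{Hölder condition.} Take two points in the same cell at distance $r\le1/K$. The difference is at most $\tfrac12K^{-\beta}\min\{Kr,1\}$, and $\min\{Kr,1\}\le (Kr)^\beta$, so the difference is at most $r^\beta$. For two points in different cells, pass through the boundary point on the segment between them, where both bumps vanish, and use $a^\beta+b^\beta\le 2^{1-\beta}(a+b)^\beta$. The factor $\tfrac12$ absorbs the resulting $2^{1-\beta}\le2$. This case split is the only place needing care, especially at $\beta=1$, and the factor $\tfrac12$ handles it uniformly.

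\textbf{Separation.} The supports are disjoint, so
\[
\norm{f_{\bm s}-f_{\bm s'}}_p^p=\#\{\bm\ell:s_{\bm\ell}\ne s'_{\bm\ell}\}\cdot K^{-\beta p}K^{-d}\norm{\psi}_p^p.
\]
By Varshamov--Gilbert there is a subset $S\subset\{-1,1\}^{K^d}$ with $\log|S|\ge K^d\log2/8$ and pairwise Hamming distances at least $K^d/8$. For distinct $\bm s,\bm s'\in S$ this gives
\[
\norm{f_{\bm s}-f_{\bm s'}}_p\ge c_0K^{-\beta}, \qquad c_0=8^{-1/p}\norm{\psi}_p>0.
\]

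\textbf{Cover bound.} Choose $K=\lfloor (c_0/(2^{1+1/p}\delta))^{1/\beta}\rfloor$. This is at least $2$ once $\delta\le\delta_{\mathrm H}$, and then the functions in $S$ are pairwise more than $2^{1/p+1}\delta$ apart. A single ball of radius $\delta$ in the quasi-norm contains at most one of them, since $\|u-v\|_p\le 2^{1/p-1}$-type constants times $(\|u\|_p+\|v\|_p)$, or directly via $p$-th powers for $p<1$. Hence $\mathcal N_{\mathrm{prop}}\ge|S|$, and
\[
\log\mathcal N_{\mathrm{prop}}\ge \tfrac{\log 2}{8}K^d\ge c_{\mathrm H}\delta^{-d/\beta}.
\]

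The main obstacle is purely bookkeeping: tracking the quasi-triangle constant for $p<1$ and confirming the Hölder seminorm at $\beta=1$. Neither is deep.
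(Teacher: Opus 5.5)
Your proposal is correct and follows essentially the same route as the paper. Both arguments take a Varshamov--Gilbert family of scaled bump superpositions on a grid, obtain $L^p$ separation of order $K^{-\beta}$ from disjoint supports, and note that a $\delta$-ball contains at most one member (via $p$th powers when $p<1$). The only difference is cosmetic: the paper uses $\{0,1\}$ coefficients and leaves gaps of width $3h$ between supports, which makes the cross-support H\"older check immediate. Your $\pm1$ bumps fill the cells and need the boundary-point argument you sketch, which works because every bump vanishes on all grid hyperplanes and $\psi$ is globally $1$-Lipschitz.
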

\end{samepage}

\begin{proof}
Let $\psi(\bm x)=(1-2\norm{\bm x}_\infty)_+$ and, for an integer $m\ge1$,
put $h=(4m)^{-1}$.  For
$\bm k\in\{0,\ldots,m-1\}^d$, let
$\bm z_{\bm k}=((4k_j+2)h)_{j=1}^d$.  The supports of
$\psi((\cdot-\bm z_{\bm k})/h)$ lie in $[0,1]^d$ and are mutually
separated by at least $3h$.  For
$\bm\omega=(\omega_{\bm k})\in\{0,1\}^{m^d}$, define
\[
f_{\bm\omega}(\bm x)
=\frac14 h^\beta
\sum_{\bm k\in\{0,\ldots,m-1\}^d}
\omega_{\bm k}\,
\psi\!\left(\frac{\bm x-\bm z_{\bm k}}h\right).
\]
Since the supports are disjoint and $0\le\psi\le1$,
$\norm{f_{\bm\omega}}_\infty\le h^\beta/4\le1$.  To check the H\"older
condition, put $r=\norm{\bm x-\bm y}_\infty$.  If $\bm x,\bm y$ do not
belong to two distinct supports, the $2$-Lipschitz property of $\psi$
gives
$|f_{\bm\omega}(\bm x)-f_{\bm\omega}(\bm y)|
\le h^\beta\min\{2r/h,1\}/4\le r^\beta/2$,
using $\min\{2t,1\}\le2t^\beta$.  If they belong to distinct supports,
then $r\ge3h$, while the difference is at most $h^\beta/4\le
r^\beta/(4\cdot3^\beta)$.  Hence
$f_{\bm\omega}\in\mathcal H^\beta([0,1]^d)$ for every $\bm\omega$.

For $m^d$ sufficiently large, the Varshamov--Gilbert bound
\citep[Lemma~2.9]{Tsybakov2009} gives
$\mathcal V\subset\{0,1\}^{m^d}$ with
$\log|\mathcal V|\ge c m^d$ such that any two distinct elements of
$\mathcal V$ differ in at least $m^d/8$ coordinates.  Since the
corresponding bump supports are disjoint, for
$\bm\omega\ne\bm\omega'$ in $\mathcal V$,
\[
\begin{aligned}
\norm{f_{\bm\omega}-f_{\bm\omega'}}_{L^p}^p
&=
\frac{h^{\beta p}}{4^p}
\sum_{\bm k:\,\omega_{\bm k}\ne\omega'_{\bm k}}
\int_{\mathbb R^d}
\psi\!\left(\frac{\bm x-\bm z_{\bm k}}h\right)^p
\,\mathrm d\bm x 
=
\frac{h^{\beta p+d}}{4^p}
\#\{\bm k:\omega_{\bm k}\ne\omega'_{\bm k}\}
\int_{\mathbb R^d}\psi(\bm u)^p\,\mathrm d\bm u \\
&\ge
\frac{m^d}{8\cdot4^p}h^{\beta p+d}
\int_{\mathbb R^d}\psi(\bm u)^p\,\mathrm d\bm u
=
\frac{1}{8\cdot4^{d+p}}
\left(\int_{\mathbb R^d}\psi(\bm u)^p\,\mathrm d\bm u\right)
h^{\beta p}
:=
c_{d,p}h^{\beta p}.
\end{aligned}
\]
Thus the pairwise $L^p$ distance is at
least $c_*h^\beta$ for some $c_*=c_*(d,p)>0$.

For sufficiently small $\delta$, choose
$m=\lfloor c_0\delta^{-1/\beta}\rfloor$, where
$c_0=c_0(d,\beta,p)>0$ is fixed sufficiently small.  Then
$m\ge(c_0/2)\delta^{-1/\beta}$ and
$c_*h^\beta\ge c_*(4c_0)^{-\beta}\delta$, which by the choice of
$c_0$ is larger than $2\delta$ when $p\ge1$ and larger than
$2^{1/p}\delta$ when $0<p<1$.  Hence a radius-$\delta$ $L^p$ ball
contains at most one $f_{\bm\omega}$ with $\bm\omega\in\mathcal V$:
for $p\ge1$ this is the triangle inequality, while for $p<1$ two
such functions in the same ball would satisfy
$\norm{f_{\bm\omega}-f_{\bm\omega'}}_{L^p}^p\le2\delta^p$.
Therefore every proper radius-$\delta$ cover of
$\mathcal H^\beta([0,1]^d)$ contains at least $|\mathcal V|$ centers,
and hence
\[
\log\mathcal N_{\mathrm{prop}}\!\left(
\delta,\mathcal H^\beta([0,1]^d),\norm{\cdot}_{L^p([0,1]^d)}
\right)
\ge \log|\mathcal V|
\ge c m^d
\ge c_{\mathrm H}\delta^{-d/\beta}.
\]
After decreasing $\delta_{\mathrm H}>0$ so that the preceding choices
are valid, the proof is complete.
\end{proof}

The next proposition converts this entropy into a lower bound on the worst-case
$L^p$ approximation error of any class satisfying a logarithmic empirical-cover
estimate.

\begin{proposition}[Entropy-to-approximation lower bound]
\label{prop:necessary-parameter-growth}
Let $d\in\mathbb N$, $0<\beta\le1$, and $0<p<\infty$, and let
$\mathcal U$ be a nonempty class of measurable functions on $[0,1]^d$.
Suppose that some $A_0,A_1\ge1$ satisfy
\begin{equation}\label{eq:empirical-cover-upper}
\log\mathcal N(\eps,\mathcal U,M)
\le A_0\log\left(A_1\eps^{-1}\right)
\end{equation}
for every $M\in\mathbb N$ and $0<\eps\le1$.  Then there is
$c>0$, depending only on $d,\beta$, and $p$, such that
\[
\sup_{f\in\mathcal H^\beta}
\inf_{g\in\mathcal U}
\norm{f-g}_{L^p([0,1]^d)}
\ge
c\left[A_0\log(eA_0A_1)\right]^{-\beta/d}.
\]
\end{proposition}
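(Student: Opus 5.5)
Set $\eta:=\sup_{f\in\mathcal H^\beta}\inf_{g\in\mathcal U}\norm{f-g}_{L^p}$. We compare the empirical cover of $\mathcal U$ with the H\"older packing of Lemma~\ref{lem:holder-packing}. The only difficulty is that \eqref{eq:empirical-cover-upper} controls covers in the maximum norm on finitely many points, whereas the packing is in $L^p$. The key observation is that the bound is uniform in $M$. We may assume $\eta$ is small, say $\eta\le\delta_{\mathrm H}/4$; otherwise the claim holds after shrinking $c$, since $A_0\log(eA_0A_1)\ge1$.

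\textbf{Step 1: an $L^p$ cover of $\mathcal U$.} Any $g\in\mathcal U$ within $2\eta$ of some $f\in\mathcal H^\beta$ may be clipped to $[-1,1]$; clipping does not increase the distance to $f$. So we work with $\pi\circ\mathcal U$, whose empirical covers are no larger because $\pi$ is $1$-Lipschitz and preserves properness after composition. Fix $\eps\le1$ and let $\mathcal U'$ be a finite subfamily of size $P$ that is $\eps$-separated in $L^p$; we want to show $P\le\exp(A_0\log(A_1/\eps'))$ for a comparable $\eps'$. Draw $\bm x_1,\dots,\bm x_M$ i.i.d.\ uniform. By the law of large numbers (only finitely many functions, all bounded by $1$), for $M$ large there is a realization where $\frac1M\sum_i|u(\bm x_i)-v(\bm x_i)|^p\ge\frac12\norm{u-v}_{L^p}^p$ for all pairs in $\mathcal U'$. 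Then distinct members have sup-distance on the sample at least $2^{-1/p}\eps$, so a proper max-norm cover at scale $\eps'=2^{-1/p}\eps/3$ must use distinct centers for them. Hence $\log P\le A_0\log(3\cdot2^{1/p}A_1\eps^{-1})$. This bounds the $L^p$ packing, and therefore the $L^p$ covering, numbers of (clipped) $\mathcal U$ at every scale $\eps\le1$.

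\textbf{Step 2: transfer to $\mathcal H^\beta$.} Take a maximal $\eta$-separated set of $\pi\circ\mathcal U$ in $L^p$ (quasi-triangle constants are absorbed for $p<1$). Every $f\in\mathcal H^\beta$ is within $2\eta$ of some $g$, and hence within a constant multiple $C_p\eta$ of a net point. Choose, for each net point used, one H\"older function near it. This gives a proper $2C_p\eta$-cover of $\mathcal H^\beta$ with at most $P$ elements. Lemma~\ref{lem:holder-packing} then yields
\[
c_{\mathrm H}(2C_p\eta)^{-d/\beta}\le A_0\log(C A_1\eta^{-1}).
\]

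\textbf{Step 3: solve the inequality.} This is the main bookkeeping step. Set $x=\eta^{-d/\beta}$, so that $x\le C'A_0\log(CA_1x^{\beta/d})\le C''A_0[\log A_1+\log x]$. The standard inversion of $x\le a\log(bx)$ gives $x\lesssim a\log(eab)$, that is, $x\le C'''A_0\log(eA_0A_1)$, where the constants depend on $d,\beta,p$ and we use $A_0,A_1\ge1$. Inverting, $\eta\ge c[A_0\log(eA_0A_1)]^{-\beta/d}$. The main obstacle is Step~1, the passage from empirical sup-norm covers to $L^p$ separation. I would handle it by the finite-family sampling argument above rather than by a uniform empirical process bound, since the hypothesis holds for all $M$.
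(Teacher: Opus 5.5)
Your argument is correct up to routine bookkeeping, noted at the end. It runs the comparison in the opposite direction from the paper, and it uses a heavier tool at the key step.

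\textbf{The paper's route.} It argues by contradiction at the explicit scale $\eta=c_0(A_0B)^{-\beta/d}$, where $B=\log(eA_0A_1)$.
\begin{enumerate}
\item It takes a maximal $a_p\eta$-separated family $f_1,\ldots,f_J$ in $\mathcal H^\beta$, so Lemma~\ref{lem:holder-packing} gives $\log J\ge c_{\mathrm H}a_p^{-d/\beta}\eta^{-d/\beta}$.
\item It attaches to each $f_i$ an approximant $g_i\in\mathcal U$. These remain $5\eta$-separated in $L^p$.
\item For each pair $i<j$ it picks a single point $\bm x_{ij}$ with $|g_i(\bm x_{ij})-g_j(\bm x_{ij})|>5\eta$. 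Such a point exists because the cube has unit measure.
\item On these finitely many points the $g_i$ are $5\eta$-separated in the maximum norm. Hence $\log J\le\log\mathcal N(2\eta,\mathcal U,M)\le A_0\log(A_1/(2\eta))$.
\item The constant $c_0$ is chosen small enough to make this impossible, so no inversion of $x\le a\log(bx)$ is ever needed.
\end{enumerate}

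\textbf{Your route.} You first convert the hypothesis into a scale-uniform $L^p$ packing bound for the clipped class $\pi\circ\mathcal U$, using random samples and the law of large numbers. You then push a maximal net of $\pi\circ\mathcal U$ back to a proper cover of $\mathcal H^\beta$ by choosing H\"older representatives, and finally invert.

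\textbf{Comparison.} The one-witness-per-pair device would replace your Step~1 entirely. It is deterministic, needs neither boundedness nor clipping, and loses no factor $2^{-1/p}$. Transporting the H\"older packing into $\mathcal U$, rather than a net of $\mathcal U$ back into $\mathcal H^\beta$, also avoids two things you must supply: the existence of a finite maximal net inside $\pi\circ\mathcal U$, and the selection of H\"older representatives. What your route buys is a reusable intermediate statement. You get an $L^p$ entropy bound $\log P\le A_0\log(3\cdot2^{1/p}A_1\eps^{-1})$ for $\pi\circ\mathcal U$ at every scale $\eps\le1$, and the lower bound then follows by a generic entropy comparison.

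\textbf{Bookkeeping.}
\begin{itemize}
\item The cutoff $\eta\le\delta_{\mathrm H}/4$ is not the right one. Your smallness threshold must ensure that the scale at which you invoke Lemma~\ref{lem:holder-packing} is at most $\delta_{\mathrm H}$. That scale is $2C_p\eta$, or $2^{1/p}C_p\eta$ when $p<1$. You also need $\eta\le1$ for Step~1.
\item If $\eta=0$, the phrase ``within $2\eta$'' fails, since the infimum need not be attained. Run the argument with an arbitrary $\eta'>\eta$ and let $\eta'\downarrow\eta$.
\end{itemize}
Neither point affects the conclusion.
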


\begin{proof}
Let $c_{\mathrm{H}}$ and $\delta_{\mathrm{H}}$ be the constants from
Lemma~\ref{lem:holder-packing}, put $a=d/\beta$, and set
\[
a_p=\begin{cases}8,&p\ge1,\\8^{1/p},&0<p<1.\end{cases}
\]
Choose $c_0>0$, depending only on $d,\beta,p$, so small that
\[
c_0\le\min\{1/2,\delta_{\mathrm{H}}/a_p\},\qquad
c_{\mathrm{H}}a_p^{-a}c_0^{-a}>1+\log(c_0^{-1})+\frac2a.
\]
Put $B=\log(eA_0A_1)$ and
$\eta=c_0(A_0B)^{-1/a}$, and suppose for contradiction that every target in
the H\"older ball has distance at most $\eta$ from $\mathcal U$.

Since $\mathcal H^\beta([0,1]^d)$ is compact in $L^p([0,1]^d)$, it
contains a finite maximal $a_p\eta$-separated set
$f_1,\ldots,f_J$.  Maximality makes these same functions a proper
$a_p\eta$-cover of the H\"older ball, and Lemma~\ref{lem:holder-packing}
therefore gives
\begin{equation}\label{eq:holder-packing-lower}
\log J\ge c_{\mathrm{H}}a_p^{-a}\eta^{-a}.
\end{equation}
For each $i$, choose $g_i\in\mathcal U$ with
$\norm{f_i-g_i}_{L^p}<3\eta/2$.  If $p\ge1$, the triangle inequality gives
$\norm{g_i-g_j}_{L^p}>5\eta$ for $i\ne j$.  If $0<p<1$ and the latter
inequality failed, then
\[
\norm{f_i-f_j}_{L^p}^p
\le\norm{f_i-g_i}_{L^p}^p
 +\norm{g_i-g_j}_{L^p}^p
 +\norm{g_j-f_j}_{L^p}^p
<\{2(3/2)^p+5^p\}\eta^p<(a_p\eta)^p,
\]
again contradicting the separation.  Thus
\begin{equation}\label{eq:approximant-lp-separation}
\norm{g_i-g_j}_{L^p}>5\eta,
\qquad i\ne j.
\end{equation}

For every $i<j$, choose $\bm x_{ij}\in[0,1]^d$ with
$|g_i(\bm x_{ij})-g_j(\bm x_{ij})|>5\eta$; otherwise
\eqref{eq:approximant-lp-separation} would fail because the cube has unit
measure.  List these points as $\bm x_1,\ldots,\bm x_M$ and let
\[
\bm v_i=(g_i(\bm x_1),\ldots,g_i(\bm x_M)),\qquad1\le i\le J.
\]
Then $\norm{\bm v_i-\bm v_j}_\infty>5\eta$ whenever $i\ne j$.  Hence every
proper empirical $2\eta$-cover of $\mathcal U$ on these sample points has at
least $J$ elements.  Combining this with \eqref{eq:holder-packing-lower},
$\eta^{-a}=c_0^{-a}A_0B$, and $B=\log(eA_0A_1)\ge1$, we obtain
\begin{align*}
c_{\mathrm{H}}a_p^{-a}c_0^{-a}A_0B
&\le\log J\le\log\mathcal N(2\eta,\mathcal U,M)
 \le A_0\log\left(\frac{A_1}{2\eta}\right)
\le A_0\left\{\log A_1+\log(\eta^{-1})\right\}\\*
&=A_0\left\{\log A_1+\log(c_0^{-1})+\frac1a\log A_0+\frac1a\log B\right\}\\*
&\le A_0B\left\{1+\log(c_0^{-1})+\frac2a\right\},
\end{align*}
contradicting the choice of $c_0$.  Thus the asserted lower bound holds with
$c=c_0$.
\end{proof}

To apply this comparison to $\mathcal F_\varphi(N,L,T)$, cover each
parameter cube and then take the finite union over the possible architectures.
For ReLU networks, related parameter-covering estimates appear in
Schmidt-Hieber~\citeyearpar[Lemma~5]{SchmidtHieber2020}; the parameter-grid correction for that
lemma is recorded in Schmidt-Hieber and Vu~\citeyearpar{SchmidtHieberVu2024}.  See also
Ou and B\"olcskei~\citeyearpar[Theorem~2.1]{OuBolcskei2024}.

\begin{proposition}[H\"older covering for bounded width and depth]
\label{prop:holder-class-cover}
Let $d,N,L,M\in\mathbb N$, $T\ge1$, and $0<\eps\le1$.  Suppose that
$0<\alpha\le1$ and $\varphi:\mathbb R\to\mathbb R$ is globally
$\alpha$-H\"older continuous with constant $H_\varphi$.  Then
\begin{equation}\label{eq:holder-general-class-cover}
\log\mathcal N\!\left(\eps,\mathcal F_\varphi(N,L,T),M\right)
\le C_{\varphi,d}\alpha^{-L}(dN+LN^2)
\log\left(\frac{e(N+1)^LT^{L+1}}{\eps}\right).
\end{equation}
One may take
$C_{\varphi,d}=4+4\log_2\bigl((d+1)
(1+|\varphi(0)|+H_\varphi)\bigr)$.
\end{proposition}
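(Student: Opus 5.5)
The plan is a parameter-discretization argument: cover the parameter cube of each fixed architecture by a grid, bound how much the network output can move when the parameters move by a small amount, and then take a union over architectures. Clipping $\pi$ is $1$-Lipschitz, so it can be ignored for the covering. Evaluation at $M$ points is also harmless, because we bound the sup-norm perturbation over inputs in $[0,1]^d$ uniformly. The resulting cover is therefore design-independent, and covering numbers on $M$ points are bounded by this uniform cover. Properness costs a factor $2$ in $\eps$: if a cover with arbitrary centers has radius $\eps/2$, replacing each center by a class member in its cell gives a proper cover of radius $\eps$.

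\emph{Step 1: perturbation bound.} Fix an architecture with $\ell\le L$ hidden layers and widths $n_j\le N$. Along the realization \eqref{eq:network-realization}, activations are bounded at each layer: inputs satisfy $\|\bm x\|_\infty\le1$. Writing $h_j$ for the post-activation vector, we have $|\varphi(t)|\le|\varphi(0)|+H_\varphi|t|^\alpha\le |\varphi(0)|+H_\varphi(1+|t|)$. Inductively, $\|h_j\|_\infty\le R_j$ with $R_j\lesssim ((N+1)T)^{j}$ times powers of $c_\varphi=1+|\varphi(0)|+H_\varphi$, and the preactivation bound carries a factor $(d+1)$ in the first layer. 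Next, suppose two parameter vectors $\theta,\theta'$ in $[-T,T]^P$ differ by at most $\rho$ entrywise. Then the preactivation at layer $j$ changes by at most $(N+1)T\cdot(\text{previous change})+ (N+1)R_{j-1}\rho$. Passing through $\varphi$ turns a change $\Delta$ into $H_\varphi\Delta^\alpha$ (for $\Delta\le1$). Iterating over $L$ layers gives a final sup-norm change of at most $\bigl(A\,(N+1)^{L}T^{L+1}\rho\bigr)^{\alpha^{L}}$, where $\log_2 A\lesssim L\log_2((d+1)c_\varphi)$. The exponent $\alpha^L$ comes from composing $L$ Hölder maps, and this is the source of the $\alpha^{-L}$ in \eqref{eq:holder-general-class-cover}.

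\emph{Step 2: grid count.} Choose $\rho$ so that the bound from Step 1 equals $\eps/2$, i.e.
\[
\rho=\frac{(\eps/2)^{\alpha^{-L}}}{A(N+1)^LT^{L+1}}.
\]
The grid on $[-T,T]^P$ then has at most $(2T/\rho+1)^P$ points. The parameter count satisfies $P\le (d+1)N+(L-1)(N+1)N+(N+1)\le 4(dN+LN^2)$. Taking logarithms gives
\[
P\log(3T/\rho)\le P\bigl[\alpha^{-L}\log(2/\eps)+\log A+L\log(N+1)+(L+2)\log T+\log 3\bigr].
\]
Using $\alpha^{-L}\ge1$ and absorbing $\log A$ into the constant, this is at most $C\alpha^{-L}P\log\bigl(e(N+1)^LT^{L+1}/\eps\bigr)$.

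\emph{Step 3: union over architectures.} The number of architectures is at most $LN^L$, whose logarithm $\log L+L\log N$ is dominated by the main term. Collecting constants gives the stated form with $C_{\varphi,d}=4+4\log_2((d+1)c_\varphi)$.

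The main obstacle is Step 1. We must track the growth of $R_j$ and the Hölder compounding carefully enough that the final logarithm involves only $(N+1)^LT^{L+1}/\eps$ with a constant linear in $\log((d+1)c_\varphi)$. The delicate point is handling $\Delta>1$ versus $\Delta\le1$ in $|\varphi(s)-\varphi(t)|\le H_\varphi|s-t|^\alpha$. I would first try the bound $\Delta^\alpha\le \max\{\Delta,1\}$ to keep everything multiplicative, and then check that the exponents still combine into $\alpha^{-L}$ times a single logarithm.
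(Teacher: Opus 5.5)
Your strategy matches the paper's. You build a proper parameter cover for each fixed architecture; the paper takes a maximal $r$-separated subset of $[-T,T]^P$ rather than a grid, which is equivalent. You then prove a H\"older-in-parameters estimate by a joint induction on sizes and differences of hidden states, and take a union over width sequences. The trouble is Step~1, which carries the whole argument and is wrong as written in two ways.

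\emph{The claimed shape of the perturbation bound is false.} Factors picked up in affine maps after an activation are not raised to the H\"older power. Take $L=1$, $\varphi(t)=|t|^{1/2}$, zero first-layer weights, first-layer biases $0$ versus $\rho$, and all output weights equal to $T$. The clipped realizations differ by $\min\{1,NT\rho^{1/2}\}$. For small $\rho$ this exceeds $(A(N+1)T^2\rho)^{1/2}$ once $N^2>A(N+1)$.

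The correct shape is $G\rho^{\alpha^\ell}$, where $G=[2K^2(N+1)T]^\ell$ and $K=(d+1)(1+|\varphi(0)|+H_\varphi)$. Hence $\log(1/\rho)=\alpha^{-L}\log(2G/\eps)$, so the $\log N$, $\log T$ and $\log K$ contributions are also multiplied by $\alpha^{-L}$. The target bound tolerates this, but your Step~2 inequality does not follow as written.

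\emph{The device $\Delta^\alpha\le\max\{\Delta,1\}$ would fail.} It removes all smallness in $\rho$: the resulting bound no longer tends to $0$ as $\rho\to0$, so it cannot give an $\eps$-cover. No case split is needed, since global H\"older continuity applies to every $\Delta$. The real difficulty is merging the propagated error, of order $\rho^{\alpha^{j-1}}$, with the fresh parameter error, of order $\rho$. The paper restricts to $\|\bm\theta-\bm\theta'\|_\infty\le\rho\le1$ and uses $\rho\le\rho^{\alpha^{j-1}}$. For $j\ge2$ this gives
\[
\|\bm z_j-\bm z_j'\|_\infty\le\bigl[NTc_{j-1}+(N+1)(1+R_{j-1})\bigr]\rho^{\alpha^{j-1}},
\]
where $c_{j-1}\rho^{\alpha^{j-1}}$ bounds the previous hidden-state difference. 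One application of $\varphi$ then yields exactly $c_j\rho^{\alpha^j}$. The size bounds $R_j$ are carried through the same induction.

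\emph{The stated constant does not follow from your bookkeeping.} With $P\le4(dN+LN^2)$ and the $G$ the induction actually produces, the term $P\alpha^{-L}L\log(2K^2)$ is too large; take $N=T=\eps=1$ and $L$ large. Two changes fix this.
\begin{itemize}
\item Use the true count $P\le2(dN+LN^2)$, which holds since $d\ge1$.
\item Drop the $\eps/2$ properness step. Grid points of $[-T,T]^P$ are admissible parameters, so the cover is already proper.
\end{itemize}
Then $\log(1+2T/\rho)\le\alpha^{-L}\log\bigl(4(2K^2)^L(N+1)^LT^{L+1}/\eps\bigr)$, and the union over at most $(N+1)^L$ width sequences gives $C_{\varphi,d}=4+4\log_2K$.
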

The proof is given in \hyperref[app:covering-proof]{Appendix~\ref*{app:gen}}.

\begin{corollary}[Covering the general $\DTA$ class]
\label{cor:dta-cover}
For $d,N,L,M\in\mathbb N$, $T\ge1$, and $0<\eps\le1$,
\begin{equation}\label{eq:dta-general-class-cover}
\log\mathcal N\!\left(\eps,\mathcal F_\DTA(N,L,T),M\right)
\le C_d(dN+LN^2)
\log\left(\frac{e(N+1)^LT^{L+1}}{\eps}\right).
\end{equation}
One may take $C_d=8+4\log_2(d+1)$.
\end{corollary}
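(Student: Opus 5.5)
The corollary is the special case $\varphi=\DTA$, $\alpha=1$ of Proposition~\ref{prop:holder-class-cover}, so the plan is to check the hypotheses and evaluate the constant $C_{\varphi,d}$. No new covering argument is needed, because the proposition already covers the clipped class $\mathcal F_\varphi(N,L,T)$ for arbitrary $d,N,L,M$, $T\ge1$ and $0<\eps\le1$.

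First, I verify the hypotheses. The discussion after \eqref{eq:dta-definition} shows that $\DTA$ is globally $1$-Lipschitz. Hence it is globally $\alpha$-H\"older with $\alpha=1$ and $H_\DTA=1$. With $\alpha=1$, the factor $\alpha^{-L}$ in \eqref{eq:holder-general-class-cover} equals one. The right-hand side then becomes $C_{\DTA,d}(dN+LN^2)\log\bigl(e(N+1)^LT^{L+1}/\eps\bigr)$, which is exactly the form of \eqref{eq:dta-general-class-cover}.

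Second, I compute the constant. Since $\DTA(0)=0$, we have $1+|\DTA(0)|+H_\DTA=2$. The formula then gives
\[
C_{\DTA,d}=4+4\log_2\bigl(2(d+1)\bigr)=4+4+4\log_2(d+1)=8+4\log_2(d+1)=C_d.
\]

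The only point requiring care is the H\"older constant, specifically that the Lipschitz constant is indeed one across the origin. This was already checked in the text via $0\le\DTA(v)-\DTA(u)\le v-u$ for $u<0\le v$, together with the derivative bound $(\log 2)2^t<1$ on the negative branch and the $1$-Lipschitz property of $\tau$. No other step poses an obstacle.
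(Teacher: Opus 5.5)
Your proposal is correct and matches the paper's proof: both apply Proposition~\ref{prop:holder-class-cover} with $\alpha=H_\varphi=1$ and $\DTA(0)=0$. The constant then evaluates to $4+4\log_2\bigl(2(d+1)\bigr)=8+4\log_2(d+1)$, as you computed.
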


\begin{proof}
The activation $\DTA$ is globally $1$-Lipschitz and satisfies $\DTA(0)=0$.
Apply Proposition~\ref{prop:holder-class-cover} with
$\alpha=H_\varphi=1$.
\end{proof}

The covering estimate now gives the joint lower bound directly.
\begin{proof}[Proof of Theorem~\ref{thm:near-optimal-parameter}]
Proposition~\ref{prop:holder-class-cover} supplies the hypothesis of
Proposition~\ref{prop:necessary-parameter-growth} with
$\mathcal U=\mathcal F_\varphi(N,L,T)$ and
\[
A_0=C_{\varphi,d}\alpha^{-L}(dN+LN^2),\qquad
A_1=e(N+1)^LT^{L+1}.
\]
Set $C_*=C_{\varphi,d}\alpha^{-L}(d+L)\ge1$.  Since $N,T\ge1$, we have
$A_0\le C_*N^2$, and the logarithmic factor satisfies
\begin{align*}
\log(eA_0A_1)
&=2+\log A_0+L\log(N+1)+(L+1)\log T\\*
&\le 2+\log C_*+2\log N+L\log(2N)+(L+1)\log T\\
&=2+\log C_*+L\log2+(L+2)\log N+(L+1)\log T\\*
&\le\bigl(L+4+\log C_*+L\log2\bigr)\log(eNT).
\end{align*}
Thus $A_0\log(eA_0A_1)\le C N^2\log(eNT)$, where one may take
$C=C_*\bigl(L+4+\log C_*+L\log2\bigr)$.
Proposition~\ref{prop:necessary-parameter-growth} now gives
\begin{align*}
\sup_{f\in\mathcal H^\beta}
\inf_{g\in\mathcal F_\varphi(N,L,T)}\norm{f-g}_{L^p}
&\ge c_0\bigl[A_0\log(eA_0A_1)\bigr]^{-\beta/d}\ge c_0C^{-\beta/d}\bigl[N^2\log(eNT)\bigr]^{-\beta/d}.
\end{align*}
Taking $c=c_0C^{-\beta/d}$ proves the claim with the stated dependence
of the constant.
\end{proof}

For $\varphi=\DTA$ and $L=17+\lceil3p\beta\rceil$, Theorems~\ref{thm:joint-approximation} and~\ref{thm:near-optimal-parameter} give
\[
\sup_{f\in\mathcal H^\beta}
\inf_{g\in\mathcal F_\DTA(N,17+\lceil3p\beta\rceil,T)}
\norm{f-g}_{L^p}
\asymp_{d,\beta,p}[N^2\log(eNT)]^{-\beta/d}
\]
for every integer $N\ge2d+3$ and every $T\ge1$.  This is joint optimality in width and
parameter radius, including the unit-radius endpoint $T=1$.

For the fixed-size result, put $N_0=2d+3$ and $L_0=6$.
If $T_\eta$ suffices uniformly for the architecture in
Theorem~\ref{thm:optimal-approximation}, clipping and the lower bound give
\[
\begin{gathered}
\eta
\ge\sup_{f\in\mathcal H^\beta}
 \inf_{g\in\mathcal F_\DTA(N_0,L_0,T_\eta)}\norm{f-g}_{L^p}
\ge c\{N_0^2\log(eN_0T_\eta)\}^{-\beta/d},
\\[0.4em]
\log T_\eta
\ge\frac{c^{d/\beta}}{N_0^2}\eta^{-d/\beta}-\log(eN_0)
\ge\frac{c^{d/\beta}}{N_0^2}\eta^{-d/\beta}
-\frac{c^{d/\beta}}{2N_0^2}\eta^{-d/\beta}
=\frac{c^{d/\beta}}{2N_0^2}\eta^{-d/\beta}
\end{gathered}
\]
for sufficiently small $\eta$.  Thus the fixed-size logarithmic radius in
Theorem~\ref{thm:optimal-approximation} is optimal in the worst case over
the H\"older ball.

\subsection{Proof of Theorem~\ref{thm:generalization}}\label{sec:regression-proof}
The approximation theorem bounds the best error in the class, and an
empirical-risk inequality controls the cost of fitting noisy data.  It
remains to absorb the extra $\log M$ in the covering bound.  We show that
$\log(eNT)$ controls this term along the entire width--radius curve,
including the unit-radius endpoint.

The following oracle inequality is given in an explicit form that separates
approximation, optimization, and covering errors; related oracle inequalities
appear in \citet[Theorem~2]{SchmidtHieber2020} and
\citet[Proposition~4]{Suzuki2019}.

\begin{proposition}[Sub-Gaussian empirical-risk bound]\label{prop:oracle}
Let $d,M\in\mathbb N$, $\xi\ge0$, and let $\mathcal G$ be a nonempty class
of measurable functions on
$[0,1]^d$ that is separable under $\norm{\cdot}_\infty$.  Assume that
$\norm{g}_\infty\le1$ for every $g\in\mathcal G$, and let
$f:[0,1]^d\to\R$ be measurable with $\norm{f}_\infty\le1$.  Under the
sampling model of Section~\ref{sec:setting}, suppose that a measurable
$\widehat g\in\mathcal G$ satisfies, almost surely, 
\[
\widehat{\mathcal R}_M(\widehat g) \le\inf_{g\in\mathcal G}\widehat{\mathcal R}_M(g)+\xi.
\]
 Then
\begin{align*}
\mathbb E\norm{\widehat g-f}_{L^2(\mu)}^2
&\le
4\inf_{g\in\mathcal G}\norm{g-f}_{L^2(\mu)}^2+3\xi
+\frac{83+37\sigma^2}{M}
\left\{
\log\left[2\mathcal N\!\left(\frac1{48M},\mathcal G,2M\right)\right]+1
\right\}.
\end{align*}
\end{proposition}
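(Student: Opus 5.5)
The plan is a standard approximate-ERM analysis, carried out with a proper empirical cover and a symmetrization through a ghost sample. First, fix $g^*\in\mathcal G$ with $\norm{g^*-f}_{L^2(\mu)}^2\le\inf_{g}\norm{g-f}^2_{L^2(\mu)}+M^{-1}$. By separability this infimum is approached through a countable dense subset, so the measurability issues are confined to countable suprema. Expanding the empirical risk, $\widehat{\mathcal R}_M(\widehat g)\le\widehat{\mathcal R}_M(g^*)+\xi$ becomes
\[
\norm{\widehat g-f}_M^2\le\norm{g^*-f}_M^2+\frac2M\sum_i\varepsilon_i(\widehat g-g^*)(\bm X_i)+\xi,
\]
where $\norm{\cdot}_M$ is the empirical norm. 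Taking expectations, the $g^*$ term gives $\norm{g^*-f}^2_{L^2(\mu)}$. Two quantities remain to control: the gap between the population risk $\mathbb E\norm{\widehat g-f}^2_{L^2(\mu)}$ and the empirical risk $\mathbb E\norm{\widehat g-f}_M^2$, and the noise term $\mathbb E\frac1M\sum\varepsilon_i(\widehat g-f)(\bm X_i)$.

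For the population--empirical gap, introduce an independent ghost sample $\bm X_1',\ldots,\bm X_M'$. Conditional on the $2M$ points, the differences $(g-f)^2(\bm X_i')-(g-f)^2(\bm X_i)$ can be randomized by Rademacher signs. Then cover $\mathcal G$ on the combined $2M$ points by a proper cover at resolution $\delta=1/(48M)$ with $\mathcal N:=\mathcal N(\delta,\mathcal G,2M)$ centers. Proper centers lie in $\mathcal G$, so every center is bounded by $1$ and the losses stay in $[0,4]$. Apply Bernstein's inequality with a union bound over the centers, using that the variance of $(g-f)^2$ is at most $4\norm{g-f}^2$. Choose the normalization in the ratio form $\bigl(P-P_M\bigr)/(P+\text{const}\cdot\log\mathcal N/M)$ so that the result is a self-bounding estimate
\[
\mathbb E\norm{\widehat g-f}^2_{L^2(\mu)}\le 2\,\mathbb E\norm{\widehat g-f}^2_M+C\,\frac{\log(2\mathcal N)+1}{M}.
\]
The discretization error from replacing $\widehat g$ by its center contributes only $O(\delta)=O(1/M)$.

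For the noise term, condition on the design and pick the nearest center $g_j$ to $\widehat g$ on the sample points. The sub-Gaussian tail of $\sum\varepsilon_i(g_j-f)(\bm X_i)/(\sqrt M\norm{g_j-f}_M)$, combined with a maximal inequality over $\mathcal N$ centers, gives
\[
\mathbb E\frac1M\sum\varepsilon_i(g_j-f)(\bm X_i)\le\sigma\sqrt{\frac{2\log(2\mathcal N)}{M}}\bigl(\mathbb E\norm{\widehat g-f}_M^2\bigr)^{1/2}+O\!\left(\frac{\sigma\delta}{1}\right).
\]
The residual $\frac1M\sum\varepsilon_i(\widehat g-g_j)(\bm X_i)$ is bounded by $\delta\,\mathbb E|\varepsilon|\le\sigma\delta$. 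Use $2ab\le a^2/2+2b^2$ to absorb the square-root term into the left side. This yields $\mathbb E\norm{\widehat g-f}^2_M\le 2\norm{g^*-f}^2+2\xi+C\sigma^2(\log(2\mathcal N)+1)/M$.

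Combining the empirical bound with the self-bounding ghost-sample inequality gives the factor $4$ on approximation and a linear factor on $\xi$. The constant $3$ in front of $\xi$ requires weights slightly better than $2\cdot2$, so I will use asymmetric absorption weights, e.g. $(1+\theta)$ splits, tuned to hit $4$ and $3$ exactly. I expect the main obstacle to be that bookkeeping, getting the explicit constants $83+37\sigma^2$, together with the ratio-type Bernstein step, since that step must hold uniformly over the random $\widehat g$ without losing a $\log M$ factor. The fallback is to accept the generic form with the constants recomputed along the way.
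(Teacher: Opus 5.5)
Your outline works, but it handles the noise differently from the paper.

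\emph{How the paper handles the noise.} The paper never uses $\mathbb E\nu_M(h)=0$. It bounds both $2|\nu_M(\widehat g)|$ and $2|\nu_M(h)|$ pathwise by the offset supremum $S=\sup_{u\in\mathcal G}\{2|\nu_M(u)|-\tfrac13r_M(u)\}$. This gives the almost-sure inequality $r(\widehat g)\le4r_M(h)+3\xi+6S_++Q_+$, where $Q=\sup_{u}\{r(u)-2r_M(u)\}$. The unequal constants $4$ and $3$ come exactly from the resulting $\tfrac43r_M(h)$. The paper then controls $\mathbb E S_+$ by one exponential-moment computation per fixed design, followed by a proper cover and tail integration. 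Concretely, with $\lambda=[6(1+\sigma^2)]^{-1}$ one has $\mathbb E_\varepsilon e^{\lambda M(\pm2\nu_M(g)-r_M(g)/3)}\le1$ for each fixed $g$.

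\emph{How your route differs.} You use the centering at the deterministic comparator. You then bound $\mathbb E\nu_M(\widehat g)$ via nearest centers, a sub-Gaussian maximal inequality, Cauchy--Schwarz and absorption, in the Schmidt-Hieber style. Your route gives a slightly better approximation constant. The paper's is pathwise, needs no normalization by $\norm{g_j-f}_M$ and no absorption inside expectations, and yields the explicit constants directly. Your quadratic-deviation step is essentially the paper's Lemma: ghost sample, proper cover on $2M$ points, random exchanges, variance proportional to mean.

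\emph{First correction: the coefficient on $\xi$.} In your scheme $r(g^*)$ and $\xi$ always enter together as $\norm{g^*-f}_M^2+\xi$, so no asymmetric weighting can separate them. None is needed. Set $a^2=\mathbb E\norm{\widehat g-f}_M^2$ and $b=\sigma\sqrt{2\{\log(2\mathcal N)+1\}/M}$; the maximal inequality for $\max_j\xi_j^2$ needs that $+1$. Absorbing via $2ab\le a^2/3+3b^2$ gives $a^2\le\tfrac32\{r(g^*)+\xi\}+\tfrac92b^2$, plus discretization terms of order $\sigma\delta\sqrt{\log(2\mathcal N)+1}$. That puts coefficient $3$ on both terms, which implies the statement. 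The noise then costs about $18\sigma^2$. So $83+37\sigma^2$ is reached provided your quadratic-deviation constant stays below roughly $80$. That requires a direct argument like the paper's, which gets $46$; generic relative-deviation inequalities carry much larger numerical constants.

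\emph{Second correction: the quadratic step.} You cannot combine Bernstein with the population variance bound $4\norm{g-f}_{L^2(\mu)}^2$ and a union bound over centers of a cover built on the observed $2M$ points. Those centers depend on the data. After conditioning on the points, only the exchanges are random. The self-bounding fact you need there is the empirical one: $(u_i-v_i)^2\le4(u_i+v_i)$ for paired losses in $[0,4]$.

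The population variance belongs to the earlier step, which transfers a witness from $r(g)$ to the ghost empirical risk $\widetilde r_M$. The paper does this with a Bernstein lower-tail bound, at the cost of a factor $2$ in probability. Alternatively, since $r(g)=\mathbb E\,\widetilde r_M(g)$ and $[\sup(\cdot)]_+$ is convex, Jensen gives $\mathbb E[\sup_g\{r(g)-2r_M(g)\}]_+\le\mathbb E[\sup_g\{\widetilde r_M(g)-2r_M(g)\}]_+$ directly.
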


The proof of Proposition~\ref{prop:oracle} is given in
\hyperref[app:oracle-proof]{Appendix~\ref*{app:gen}}.

\begin{proof}[Proof of Theorem~\ref{thm:generalization}]
Let $0<c_-\le c_+<\infty$ be the fixed lower and upper comparison constants
in \eqref{eq:width-radius-curve}.  Fix $M\ge1$, let
$f\in\mathcal H^\beta([0,1]^d)$, and denote $\mathcal G=\mathcal F_\DTA(N,23,T)$.
Since $17+\lceil6\beta\rceil\le23$, Theorem~\ref{thm:joint-approximation}
and the density bound \eqref{eq:density-transfer} yield
\begin{equation}\label{eq:regression-bias-bound}
\begin{aligned}
\inf_{g\in\mathcal G}\norm{g-f}_{L^2(\mu)}^2
&\le\kappa\inf_{g\in\mathcal G}\norm{g-f}_{L^2([0,1]^d)}^2
\le\kappa C_2^2\bigl[N^2\log(eNT)\bigr]^{-2\beta/d}\\
&\le\kappa C_2^2c_-^{-2\beta/d}M^{-\frac{2\beta}{2\beta+d}}.
\end{aligned}
\end{equation}

For the estimation term, the only additional factor is $\log M$.
The inequality $\log x\le x/e$, applied to
$x=M^{\frac{d}{2\beta+d}}/N^2$, and the upper comparison in
\eqref{eq:width-radius-curve} give
\begin{equation}\label{eq:curve-logM-bound}
\begin{aligned}
N^2\log M
&=\frac{2\beta+d}{d}\left\{
 2N^2\log N+N^2\log\left(\frac{M^{\frac{d}{2\beta+d}}}{N^2}\right)\right\}\\
&\le\frac{2\beta+d}{d}\left\{
 2N^2\log(eNT)+e^{-1}M^{\frac{d}{2\beta+d}}\right\}\\
&\le\frac{2\beta+d}{d}(2c_++e^{-1})M^{\frac{d}{2\beta+d}}.
\end{aligned}
\end{equation}
Since $\log(N+1)\le\log(eN)$ and $\log(eNT)\ge1$, it follows that
\begin{equation}\label{eq:curve-full-cover-bound}
\begin{aligned}
N^2\log\bigl(48eM(N+1)^{23}T^{24}\bigr)
&=N^2\log M+N^2\{\log(48e)+23\log(N+1)+24\log T\}\\
&\le N^2\log M+\{\log(48e)+24\}N^2\log(eNT)\\
&\le\left\{\frac{2\beta+d}{d}(2c_++e^{-1})
 +c_+\bigl(\log(48e)+24\bigr)\right\}M^{\frac{d}{2\beta+d}}\\
&=:K_+M^{\frac{d}{2\beta+d}}.
\end{aligned}
\end{equation}
Using Corollary~\ref{cor:dta-cover} with $L=23$ and
$\eps=(48M)^{-1}$, we obtain
\begin{equation}\label{eq:regression-cover-bound}
\begin{aligned}
\log\left[2\mathcal N\left(\frac1{48M},\mathcal G,2M\right)\right]+1
&\le
\log 2+1
+C_d(dN+23N^2)
 \log\bigl(48eM(N+1)^{23}T^{24}\bigr)\\
&\le
\log 2+1
+24C_dN^2
 \log\bigl(48eM(N+1)^{23}T^{24}\bigr)\\
&\le
\log 2+1
+24C_dK_+M^{\frac{d}{2\beta+d}}\\
&\le
\bigl(\log 2+1+24C_dK_+\bigr)
M^{\frac{d}{2\beta+d}}.
\end{aligned}
\end{equation}
Here $C_d=8+4\log_2(d+1)$ is the constant from
Corollary~\ref{cor:dta-cover}.  The second inequality uses
$dN+23N^2\le24N^2$, the third follows from~\eqref{eq:curve-full-cover-bound},
and the last uses $M^{\frac{d}{2\beta+d}}\ge1$.

Continuous realization maps send the finitely many parameter cubes to
compact subsets of the supremum-norm function space, so $\mathcal G$ is
separable.  Applying Proposition~\ref{prop:oracle} with $\xi=M^{-1}$,
then using \eqref{eq:regression-bias-bound},
\eqref{eq:regression-cover-bound}, and
$M^{-1}\le M^{-\frac{2\beta}{2\beta+d}}$, gives
\begingroup
\allowdisplaybreaks[0]
\begin{align*}
&\mathbb E\norm{\widehat f_M-f}_{L^2(\mu)}^2\\*[-2pt]
&\le
4\inf_{g\in\mathcal G}\norm{g-f}_{L^2(\mu)}^2+\frac3M
+\frac{83+37\sigma^2}{M}
 \left\{\log\left[2\mathcal N\!\left(\frac1{48M},\mathcal G,2M\right)\right]+1\right\}\\
&\le
4\kappa C_2^2c_-^{-2\beta/d}M^{-\frac{2\beta}{2\beta+d}}
+\frac3M
+(83+37\sigma^2)
\left\{
\frac{\log2+1}{M}
+24C_dK_+M^{-\frac{2\beta}{2\beta+d}}
\right\}\\
&\le
\left\{
4\kappa C_2^2c_-^{-2\beta/d}
+(83+37\sigma^2)(\log2+1+24C_dK_+)
\right\}
M^{-\frac{2\beta}{2\beta+d}}
+\frac3M\\*
&\le
\left\{
4\kappa C_2^2c_-^{-2\beta/d}+3
+(83+37\sigma^2)(\log2+1+24C_dK_+)
\right\}
M^{-\frac{2\beta}{2\beta+d}}.
\end{align*}
\endgroup
The expression in braces defines $C_3$ and depends only on
$d,\beta,\kappa,\sigma,c_-,c_+$.  Taking the supremum over
$f\in\mathcal H^\beta([0,1]^d)$ completes the proof.
\end{proof}

\section{Conclusion}\label{sec:conclusion}

We have established the sharp width--radius approximation law
$[N^2\log(eNT)]^{-\beta/d}$ for the unit $\beta$-H\"older ball,
$0<\beta\le1$, in every finite $L^p$, at fixed depth. One explicit
bounded $1$-Lipschitz activation, $\DTA$, attains this order for all
$N\ge2d+3$ and $T\ge1$, matching the lower bound for globally
H\"older activations at fixed depth. At fixed size, four hidden layers
give $\log T\lesssim\eta^{-d/\beta}\log(\eta^{-1})$, while six
remove the logarithm and attain the optimal order
$\log T\lesssim\eta^{-d/\beta}$. The decoding method also yields
fixed-size Transformer approximation with
$\log T\lesssim\eta^{-dn/\beta}$. Under our sampling assumptions,
approximate least squares over the full clipped class at depth $23$
attains squared risk $\mathcal O(M^{-2\beta/(2\beta+d)})$ along
$N_M^2\log(eN_MT_M)\asymp M^{d/(2\beta+d)}$. Thus exact minimax
accuracy permits an entire curve of choices, from unit radius to fixed
width.

Three directions remain. Allowing depth to vary suggests the conjecture
$\mathcal E_\DTA(N,L,T)\asymp[N^2L^2\log(eNT)]^{-\beta/d}$,
uniformly above fixed width and depth thresholds and for $T\ge1$.
A matching construction would extend the regression curve to the surface
$N_M^2L_M^2\log(eN_MT_M)\asymp M^{d/(2\beta+d)}$.
Another question is whether the same optimal laws hold for a broad class
of activations, particularly a single elementary, explicitly specified,
bounded Lipschitz activation that is real analytic on $\R$.
Finally, an effective training theory must connect these statistical
guarantees to computation. Proposition~\ref{prop:oracle} shows that an
empirical-loss gap of order $M^{-2\beta/(2\beta+d)}$ suffices to
preserve the minimax rate. Finding algorithms that attain this tolerance,
while controlling arithmetic precision and sensitivity to large
parameters, would help choose among the statistically equivalent models.

\clearpage

\input{main.bbl}
\fi

\ifSupplementPart
\ifMainPart\clearpage\fi
\ifSupplementOnly
\begin{center}
{\LARGE Supplementary Material\par}\medskip
{\large\PaperTitle\par}
\end{center}
\pdfbookmark[1]{Contents}{supplement-contents}
\tableofcontents
\fi

\appendix

\section{Proofs of Propositions~\ExternalNumber{2}{prop:grid-encoder} and~\ExternalNumber{3}{prop:block-grid-encoder}}\label{app:encoders}

We give the affine maps for the two encoders used in the main proofs.
The first returns a normalized grid address; the second returns a block
index and a position within that block.  Both use the same scalar
staircase, whose interpolation on the trifling intervals also controls
the output on the whole cube.

\subsection{Proof of Proposition~\ExternalNumber{2}{prop:grid-encoder}}\label{app:grid-proof}
\begin{proof}[Proof of Proposition~\ExternalNumber{2}{prop:grid-encoder}]
We apply the scalar staircase to each input coordinate and combine the
resulting linear terms in one shared coordinate.

\par\medskip\noindent
\emph{Step 1: the one-dimensional staircase.}
Let
\[
\bm W_1=
\begin{bmatrix}1\\K\\K\end{bmatrix},
\quad
\bm b_1=
\begin{bmatrix}0\\0\\K\delta\end{bmatrix},
\quad
\bm W_2=
\begin{bmatrix}
0&1-\dfrac1{2K\delta}&\dfrac1{2K\delta}\\[1mm]
\dfrac K{K+1}&0&0
\end{bmatrix},
\quad
\bm b_2=
\begin{bmatrix}\dfrac12\\[1mm]0\end{bmatrix}.
\]
For $x\in[0,1]$, define the affine maps, their activated outputs, and the staircase by
\[
\begin{gathered}
A_1(x)=\bm W_1x+\bm b_1,\qquad
A_2(\bm h)=\bm W_2\bm h+\bm b_2,\qquad
A_3^{\mathrm{stair}}(\bm u)=u_1+(K+1)u_2-1,
\\[0.4em]
\bm h=\DTA(A_1(x))=(h_1,h_2,h_3)^\top,\qquad
\bm u=\DTA(A_2(\bm h))=(u_1,u_2)^\top,
\\[0.4em]
S_{K,\delta}
=A_3^{\mathrm{stair}}\circ\DTA\circ A_2\circ\DTA\circ A_1.
\end{gathered}
\]
We claim that $S_{K,\delta}(x)=\lfloor Kx\rfloor$ whenever $x\notin\bigcup_{k=1}^{K}\left(\frac{k}{K}-\delta,\frac{k}{K}\right]$ (see Figure~\ref{fig:staircase-function} for an illustration).

\begin{figure}[htbp]
\centering
\includegraphics[width=\FigureStaircaseWidth]{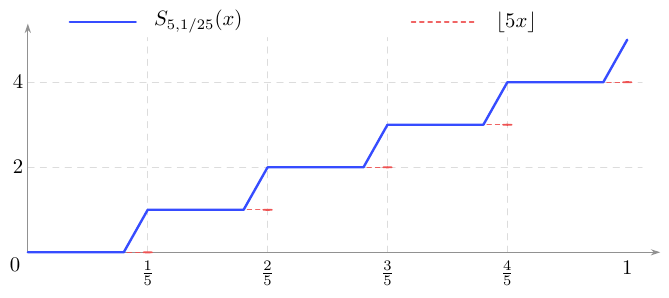}
\caption{The exact staircase away from its trifling intervals.}
\label{fig:staircase-function}
\end{figure}

We verify the claim directly.  All first-layer preactivations are nonnegative, so the nonnegative branch in (\ExternalNumber{2.1}{eq:dta-definition}) gives
\[
h_1=x,\qquad h_2=\tau(Kx),\qquad h_3=\tau(Kx+K\delta).
\]
Writing $q$ and $z$ for the two second-layer preactivations, direct substitution gives
\begin{align*}
q
&=\left(1-\frac1{2K\delta}\right)h_2
  +\frac1{2K\delta}h_3+\frac12
 =\tau(Kx)+\left(\frac12
  +\frac{\tau(Kx+K\delta)-\tau(Kx)}{2K\delta}\right),\quad
z=\frac{Kx}{K+1}.
\end{align*}
Because $\tau$ is $1$-Lipschitz, 
\[
\frac12+\frac{\tau(Kx+K\delta)-\tau(Kx)}{2K\delta}\in\left[0,1\right].
\]
 Together with $0\le\tau\le1$ this shows $0\le q\le2$. Also $0\le z\le K/(K+1)<1$. Hence $u_1=\DTA(q)=\tau(q)$ and $u_2=\DTA(z)=z$, so the last affine map reduces to
\[
S_{K,\delta}(x)=\tau(q)+(K+1)z-1=Kx+\tau(q)-1.
\]
Outside the trifling region, write
$r=Kx-\lfloor Kx\rfloor$.  Then $0\le r\le1-K\delta$, so $Kx$ and
$Kx+K\delta$ lie in the same unit interval.  On its increasing or decreasing
branch, direct substitution yields
\begin{equation}\label{eq:q-eta}
\bigl(\tau(Kx),\tau(Kx+K\delta),q,\tau(q)\bigr)=
\begin{cases}
(r,r+K\delta,1+r,1-r),&\lfloor Kx\rfloor\text{ even},\\
(1-r,1-r-K\delta,1-r,1-r),&\lfloor Kx\rfloor\text{ odd}.
\end{cases}
\end{equation}
In either case, $S_{K,\delta}(x)=Kx+\tau(q)-1=Kx-r=\lfloor Kx\rfloor$.
Therefore
\begin{equation}\label{eq:staircase-exact}
S_{K,\delta}(x)=\lfloor Kx\rfloor
\end{equation}
for every $x$ outside the one-dimensional trifling region.  On a trifling interval, write $Kx=j-K\delta s$, where $0\le s\le1$.
The two branches meeting at $j$ give
\[
\begin{array}{c|ccc}
 &\tau(Kx)&\tau(Kx+K\delta)&q\\ \hline
j\text{ even}&K\delta s&K\delta(1-s)&1-s+K\delta s\\
j\text{ odd}&1-K\delta s&1-K\delta(1-s)&1+s-K\delta s
\end{array}
\]
so in both cases $\tau(q)=1-s+K\delta s$ and
$S_{K,\delta}(x)=j-K\delta s+\tau(q)-1=j-s\in[j-1,j]$.
 Together with \eqref{eq:staircase-exact}, this also proves the global bound
$S_{K,\delta}(x)\in[0, K]$. This proves the claim.

\par\medskip\noindent
\emph{Step 2: a shared carrier and the normalized address.}
Set
\[
D_{K,d}:=1+\sum_{j=1}^dK^j,\qquad
\lambda_{K,\delta}:=\frac1{2K\delta},\qquad
\bm\gamma_{K,d}^{\top}:=D_{K,d}^{-1}(K,K^2,\ldots,K^d).
\]
The three affine maps are
\begingroup
\EncoderMatrixLayout{}
\begin{align}
&A_1^{(d)}(\bm x)
=\begin{bmatrix}
K\bm I_d\\ K\bm I_d\\ \bm\gamma_{K,d}^{\top}
\end{bmatrix}\bm x
 +\begin{bmatrix}\bzero_d\\K\delta\one_d\\0\end{bmatrix},
\quad
A_2^{(d)}(\bm h)
=\begin{bmatrix}
(1-\lambda_{K,\delta})\bm I_d&
\lambda_{K,\delta}\bm I_d&\bzero_d\\
\bzero_d^{\top}&\bzero_d^{\top}&1
\end{bmatrix}\bm h
 +\begin{bmatrix}\one_d/2\\0\end{bmatrix},\notag\\[1mm]
&A_3^{(d)}(\bm u)
=-\frac{D_{K,d}u_{d+1}+\sum_{j=1}^dK^{j-1}u_j
 -\sum_{j=1}^dK^{j-1}+1}{K^d+1}.
\label{eq:compact-encoder-matrices}
\end{align}
\endgroup
Let $\bm h=\DTA(A_1^{(d)}(\bm x))$ and
$\bm u=\DTA(A_2^{(d)}(\bm h))$.  The first $d$ coordinates reproduce the
scalar staircase, while the last coordinate carries the weighted linear
combination of the input.  Thus
\begin{equation}\label{eq:shared-carrier-staircases}
S_{K,\delta}(x_j)=Kx_j+u_j-1,
\qquad
D_{K,d}u_{d+1}=\sum_{j=1}^dK^jx_j.
\end{equation}
The carrier cancels the linear terms $Kx_j$ in the weighted sum
$\sum_{j=1}^dK^{j-1}(u_j-1)$, leaving only the staircase address.
Multiplying the last affine output by
$K^d+1$ makes this cancellation explicit:
\begin{align}
(K^d+1)A_3^{(d)}(\bm u)
&=-D_{K,d}u_{d+1}
 -\sum_{j=1}^dK^{j-1}(u_j-1)-1\notag\\
&=-\sum_{j=1}^dK^jx_j
 -\sum_{j=1}^dK^{j-1}\bigl(S_{K,\delta}(x_j)-Kx_j\bigr)-1\notag\\
&=-1-\sum_{j=1}^dK^{j-1}S_{K,\delta}(x_j).
\label{eq:compact-encoder-master}
\end{align}
Set
$\Lambda_{K,\delta}
=A_3^{(d)}\circ\DTA\circ A_2^{(d)}\circ\DTA\circ A_1^{(d)}$.  By
\eqref{eq:compact-encoder-master}, this network is
negative on $[0,1]^d$.  Outside
$\Omega_{K,\delta}([0,1]^d)$, the identity
$S_{K,\delta}(x_j)=\lfloor Kx_j\rfloor$ gives the asserted normalized
address.  Figure~\ref{fig:parallel-encoder} shows the resulting
shared-carrier architecture for $d=2$.

\begin{figure}[!t]
\centering
\includegraphics[width=\FigureEncoderWidth]{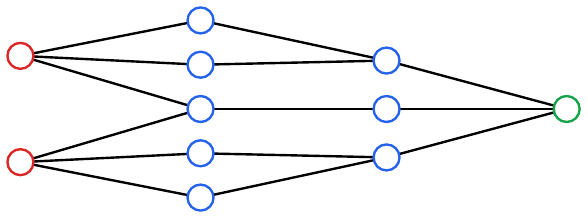}
\caption{The grid encoder $\Lambda_{K,\delta}$ for $d=2$.}
\label{fig:parallel-encoder}
\end{figure}

The three affine maps contain at most $4d$, $3d+1$, and $d+2$ nonzero parameters.
Moreover, the first map contributes at most $K$ to the parameter radius,
the second at most $\max\{1,(2K\delta)^{-1}\}$, and $\frac{D_{K,d}}{K^d+1}<2$, $0\le\frac{\sum_{j=1}^dK^{j-1}-1}{K^d+1}<1$. Thus the network has at most $8d+3$ nonzero parameters and parameter radius
at most $4\max\{K,(K\delta)^{-1}\}$.
\end{proof}

\subsection{Proof of Proposition~\ExternalNumber{3}{prop:block-grid-encoder}}\label{app:block-grid-proof}
\begin{proof}[Proof of Proposition~\ExternalNumber{3}{prop:block-grid-encoder}]
We first isolate the only scalar fact used by the encoder.  Consider the fine
trifling interval immediately before $k/H$, where $k$ is an integer with
$1\le k\le H$.  Write $k=Bq+r$ with integers $q,r$ satisfying
$0\le r<B$, and let $x=k/H-\delta s$ with the real parameter $0\le s\le1$.  If $r=0$, this is also a coarse trifling interval; otherwise it lies between two coarse trifling intervals.  In the two cases,
\[
S_{H,\delta}(x)-B S_{H/B,\delta}(x)=
\begin{cases}
(Bq-s)-B(q-s)=(B-1)s,&r=0,\\
(Bq+r-s)-Bq=r-s\in[r-1,r],&1\le r<B.
\end{cases}
\]
Between the fine trifling intervals both staircases are exact.  Therefore
\begin{equation}\label{eq:fine-coarse-range}
0\le S_{H,\delta}(x)-B S_{H/B,\delta}(x)\le B-1
\qquad(0\le x\le1),
\end{equation}
and outside the fine trifling set
\begin{equation}\label{eq:fine-coarse-exact}
S_{H,\delta}(x)=\floor{Hx},
\qquad
S_{H/B,\delta}(x)=\floor{Hx/B}.
\end{equation}

The network is given by affine matrices followed by coordinatewise
$\DTA$.  Let $\bm e_1=(1,0,\ldots,0)^\top\in\R^d$, put
\[
\bm\omega_{H,d}:=(1,H,\ldots,H^{d-1})^\top,\qquad
\Gamma_{H,d}:=\one_d^\top\bm\omega_{H,d},\qquad
\lambda:=(2H\delta)^{-1},
\]
and order the first two hidden vectors as
$(\bm h,\bm h^+,\bar h,\bar h^+,c)$ and $(\bm u,\bar u,c)$.  The first
affine map is
\begin{equation}\label{eq:block-encoder-A1-matrix}
A_1^{\mathrm{bl}}(\bm x)=
\begin{bmatrix}
H\bm I_d\\
H\bm I_d\\
(H/B)\bm e_1^\top\\[2pt]
(H/B)\bm e_1^\top\\[2pt]
\Gamma_{H,d}^{-1}\bm\omega_{H,d}^\top
\end{bmatrix}\bm x+
\begin{bmatrix}
\bzero_d\\ H\delta\one_d\\ 0\\ H\delta/B\\ 0
\end{bmatrix}.
\end{equation}
For $\bm z_1=(\bm h,\bm h^+,\bar h,\bar h^+,c)^\top$, the second map is
\begin{equation}\label{eq:block-encoder-A2-matrix}
A_2^{\mathrm{bl}}(\bm z_1)=
\begin{bmatrix}
(1-\lambda)\bm I_d&\lambda\bm I_d&\bzero_{d\times1}&\bzero_{d\times1}&\bzero_{d\times1}\\
\bzero_{1\times d}&\bzero_{1\times d}&1-B\lambda&B\lambda&0\\
\bzero_{1\times d}&\bzero_{1\times d}&0&0&1
\end{bmatrix}\bm z_1+
\begin{bmatrix}\frac12\one_d\\[1mm]\frac12\\[2pt]0\end{bmatrix}.
\end{equation}
Finally, for $\bm z_2=(\bm u,\bar u,c)^\top$, set
\begin{equation}\label{eq:block-encoder-A3-matrix}
A_3^{\mathrm{bl}}(\bm z_2)=
\begin{bmatrix}
B^{-1}(\bm\omega_{H,d}-\bm e_1)^\top&1&H\Gamma_{H,d}/B\\
B^{-1}\bm e_1^\top&-1&0
\end{bmatrix}\bm z_2+
\begin{bmatrix}
-1-(\Gamma_{H,d}-1)/B\\[1mm]
1-1/(2B)
\end{bmatrix}.
\end{equation}
These three maps have dimensions
$d\to2d+3\to d+2\to2$, so 
\[
\mathcal E =A_3^{\mathrm{bl}}\circ\DTA\circ A_2^{\mathrm{bl}} \circ\DTA\circ A_1^{\mathrm{bl}}
\]
 has the claimed hidden widths.  In the two activated vectors we reuse the
symbol $c$: the last coordinate of \eqref{eq:block-encoder-A1-matrix} lies in
$[0,1]$, and both activations therefore leave it unchanged.

Let
\[
\bm z_1:=\DTA(A_1^{\mathrm{bl}}(\bm x))=(\bm h,\bm h^+,\bar h,\bar h^+,c)^\top,
\quad
\bm z_2:=\DTA(A_2^{\mathrm{bl}}(\bm z_1))=(\bm u,\bar u,c)^\top.
\]
The scalar staircase identity gives
\[
S_{H,\delta}(x_j)=Hx_j+u_j-1,\quad
S_{H/B,\delta}(x_1)=\frac HBx_1+\bar u-1,\quad
\Gamma_{H,d}c=\bm\omega_{H,d}^\top\bm x.
\]
Substituting these three identities into \eqref{eq:block-encoder-A3-matrix}
yields
\begin{align*}
\mathcal E_1(\bm x)
&=S_{H/B,\delta}(x_1)
  +\frac1B\sum_{j=2}^dH^{j-1}S_{H,\delta}(x_j),\qquad
\mathcal E_2(\bm x)
=\frac{S_{H,\delta}(x_1)-B S_{H/B,\delta}(x_1)+1/2}{B},
\end{align*}
where the sum is empty when $d=1$.  The first expression is nonnegative, and
\eqref{eq:fine-coarse-range} gives
$1/(2B)\le\mathcal E_2\le1-1/(2B)$.  Outside
$\Omega_{H,\delta}([0,1]^d)$, equation~\eqref{eq:fine-coarse-exact} and
$B\mid H$ give the stated block index and normalized within-block position.

The three affine maps contain at most $4d+3$, $3d+4$, and $d+5$ nonzero
parameters.  Their radii are at most $H$,
$\max\{1,B/(2H\delta)\}$, and $4H^d$, respectively.  Hence the whole encoder
has at most $8d+12$ nonzero parameters and radius at most
$\max\{4H^d,B/(2H\delta)\}$.
\end{proof}

\section{Proof of Proposition~\ExternalNumber{1}{prop:decoder}}\label{app:orbit}

The radius estimate in Proposition~\ExternalNumber{1}{prop:decoder}
leads to a quantitative Kronecker problem: given a target vector, find one
integer $m$ that approximates all its dyadic phases, with a bound on $|m|$
that is uniform over the targets.  Qualitative orbit density does not give
the parameter bound needed here.  We resolve this problem for the present
frequencies by combining an algebraic separation estimate with Gaussian
Fourier analysis.  The separation estimate excludes short dual-lattice
vectors; comparing periodized Gaussian sums at two scales then forces a
primal-lattice point near every target.  A suitable scaling converts that
point into the required integer $m$.  Effective Kronecker results in more
general settings are developed by
\citet{GonekMontgomery2016,FukshanskyMoshchevitin2018}; the argument below
uses the specific dyadic frequencies to obtain the explicit bound required
by our decoder.

\begin{lemma}[Dyadic orbit density]\label{lem:orbit}
Let $N\in\mathbb N$, define $\alpha_i=2^{-i/(N+1)}$ for the integers
$1\le i\le N$, and put $\bm\alpha=(\alpha_1,\ldots,\alpha_N)^\top$.  For every $0<\delta<1$ and
$\bm t\in\R^N$, there exist $m\in\Z$ and $\bm p\in\Z^N$ such that
\[
\abs m\le\bigl(10\delta^{-1}N^{3/2}\bigr)^N,
\quad
\norm{m\bm\alpha-\bm p-\bm t}_\infty\le\delta.
\]
\end{lemma}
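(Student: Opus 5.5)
We follow the route the section preamble announces: an algebraic separation estimate excludes short dual-lattice vectors, and a Gaussian Fourier comparison then forces a lattice point near every target. Work in $\R^{N}$ with the lattice generated by $\Z^N$ and the single vector $\bm\alpha$, restricted to a bounded range of $m$. Concretely, fix $R\in\mathbb N$ (to be $\lceil(10\delta^{-1}N^{3/2})^N\rceil$) and consider the finite multiset of points $\{m\bm\alpha\bmod\Z^N: |m|\le R\}$ on the torus $\mathbb T^N$. We want every $\bm t$ to lie within $\ell^\infty$-distance $\delta$ of one of them. Replacing hard counting by a smooth weight, define
\[
F(\bm t)=\sum_{|m|\le R}w(m)\,G_\delta(m\bm\alpha-\bm t),
\]
where $w$ is a Fejér-type weight (for instance $w(m)=(1-|m|/R)_+$) and $G_\delta$ is the $\Z^N$-periodization of a Gaussian of width $\asymp\delta/\sqrt N$, so that its mass outside the $\ell^\infty$-ball of radius $\delta$ is small. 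If $F(\bm t)$ exceeds the contribution of that tail mass, some $m$ works.

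Step 1: expand in Fourier series. We get
\[
F(\bm t)=\sum_{\bm k\in\Z^N}\widehat G_\delta(\bm k)e^{-2\pi i\bm k\cdot\bm t}\widehat w(\bm k\cdot\bm\alpha),
\]
where $\widehat w(\theta)=\sum_m w(m)e^{2\pi i m\theta}$ is the Fejér kernel. It is nonnegative, equals $R$ at $\theta\in\Z$, and satisfies $\widehat w(\theta)\le 1/(R\|\theta\|^2)$ with $\|\cdot\|$ the distance to $\Z$. The term $\bm k=0$ gives the main contribution $R\,\widehat G_\delta(0)=R$. Gaussian decay $\widehat G_\delta(\bm k)\approx e^{-c\delta^2|\bm k|^2/N}$ makes the terms with $|\bm k|_\infty\gtrsim K:=C N\delta^{-1}\sqrt{\log}$ negligible. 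For the remaining $0<|\bm k|_\infty\le K$, we need $\|\bm k\cdot\bm\alpha\|$ bounded below.

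Step 2: the separation estimate, which is the main obstacle. The numbers $1,2^{-1/(N+1)},\dots,2^{-N/(N+1)}$ are powers of $\theta=2^{-1/(N+1)}$, whose minimal polynomial is $x^{N+1}-1/2$; it is irreducible by Eisenstein applied to $2x^{N+1}-1$. So they are $\mathbb Q$-linearly independent, and for integers $k_0,\bm k$ not all zero, $\Lambda=k_0+\sum k_i\theta^i\neq0$. To quantify this, take the norm from $\mathbb Q(\theta)$. The quantity $2^{N}\Lambda$ times its $N$ conjugates (with $\theta$ replaced by $\zeta\theta$, $\zeta^{N+1}=1$) is a nonzero integer, up to a controlled power of $2$. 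Each conjugate is bounded by $(N+1)\max|k_i|$. Hence
\[
|\Lambda|\ge 2^{-N}\bigl((N+1)K\bigr)^{-N}.
\]
Choosing $k_0$ as the nearest integer gives $\|\bm k\cdot\bm\alpha\|\ge c^N(NK)^{-N}$. The care needed is the power of $2$ from clearing denominators, and making the constant come out as $10N^{3/2}/\delta$. The $N^{3/2}$ should arise from $K\asymp N^{1/2}\cdot N^{1/2}\delta^{-1}$ (Gaussian width $\delta/\sqrt N$, plus the $\sqrt{\log}$ absorbed) together with the factor $N+1$ from the conjugate bound.

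Step 3: conclude. With $R\ge(10\delta^{-1}N^{3/2})^N$, each $\widehat w(\bm k\cdot\bm\alpha)\le R^{-1}\|\bm k\cdot\bm\alpha\|^{-2}$ for $0<|\bm k|_\infty\le K$. Summing against $\sum_{\bm k}\widehat G_\delta(\bm k)\lesssim(C\sqrt N/\delta)^N$, the error is at most a small fraction of $R$, so $F(\bm t)\ge R/2$ uniformly in $\bm t$. Comparing with the tail mass of $G_\delta$ outside the $\delta$-ball, which is the second Gaussian scale, yields some $|m|\le R$ with $\|m\bm\alpha-\bm t\|_{\mathbb T^N,\infty}\le\delta$. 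Lifting gives $\bm p$. If the constants do not close as stated, I would first try shrinking the Gaussian width, since $\delta$ enters only polynomially per coordinate.
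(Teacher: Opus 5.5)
\textbf{The gap is in your Step~3.} Your algebraic step is sound and is essentially the paper's Step~1. One slip: $2x^{N+1}-1$ is not Eisenstein at any prime. Apply Eisenstein at $2$ to the reciprocal polynomial $x^{N+1}-2$, and use that $2\Lambda$ is an algebraic integer.

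That step is not the main obstacle, though. The difficulty is the passage from separation to covering, and there your Step~3 does not close.

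The Fej\'er kernel only gives $\widehat w(\theta)\le(4R\norm{\theta}_{\Torus}^2)^{-1}$. So the requirement that the nonzero frequencies contribute at most $R/2$ becomes $R^2\gtrsim\sum_{\bm k\ne\bm 0}\widehat G_\delta(\bm k)\norm{\bm k\cdot\bm\alpha}_{\Torus}^{-2}$. You bound every $\norm{\bm k\cdot\bm\alpha}_{\Torus}^{-1}$ by a uniform $\rho^{-1}$ and then sum $\widehat G_\delta$. That gives only $R\gtrsim\rho^{-1}\bigl(\sum_{\bm k}\widehat G_\delta(\bm k)\bigr)^{1/2}$, and both factors are large:
\begin{itemize}
\item For a Gaussian localized at scale $\delta/\sqrt N$, $\sum_{\bm k}\widehat G_\delta(\bm k)=G_\delta(\bm 0)\ge(\sqrt N/\delta)^N$.
\item No uniform bound on the relevant range can beat $\rho<\delta^N$. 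By pigeonhole, two of the more than $\delta^{-N}$ frequencies with $\norm{\bm k}_\infty\le1/\delta$ have $\bm k\cdot\bm\alpha$ within $\delta^N$ modulo $1$. Their difference still carries Gaussian weight at least $e^{-4\pi}$.
\end{itemize}
So this route forces $R\gtrsim N^{N/4}\delta^{-3N/2}$. For small $\delta$ this exceeds $(10\delta^{-1}N^{3/2})^N$: the exponent of $\delta^{-1}$ is $3N/2$, not $N$. Two further points:
\begin{itemize}
\item Your own bookkeeping ($K\asymp N/\delta$ times the conjugate factor $N+1$) puts $N^2$, not $N^{3/2}$, in the base.
\item Shrinking the Gaussian, as you suggest, only increases $\sum_{\bm k}\widehat G_\delta$.
\end{itemize}

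\textbf{The closing comparison is also misstated.} A small tail \emph{mass} of $G_\delta$ off the $\delta$-ball gives no pointwise control. If no $\abs{m}<R$ works, you only get $F(\bm t)\le R\sup_{\text{off the ball}}G_\delta$. At width $\delta/\sqrt N$ that supremum is about $(\sqrt N/(e^{\pi}\delta))^N\gg1$ for small $\delta$. A pointwise comparison would instead force width $\asymp\delta/\sqrt{N\log(N/\delta)}$, costing further factors.

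\textbf{The missing idea is to make the range of $m$ part of the lattice,} so that it carries a Gaussian weight rather than a Fej\'er weight.
\begin{itemize}
\item The paper works in $\R^{r}$, $r=N+1$, with $G=\{(\bm p-q\bm\alpha,(\delta/Q)q)\}$. Its dual vectors are $\bigl(\bm h,(Q/\delta)(\bm h\cdot\bm\alpha+m)\bigr)$.
\item Poisson summation of a Gaussian over $G$ then involves only the dual lattice, with Gaussian weights in every direction. A small $\norm{\bm h\cdot\bm\alpha}_{\Torus}$ is harmful only jointly with a small $\bm h$.
\item By the separation estimate, every nonzero dual vector has $\ell^1$ norm at least $\min_{A>0}\{A+(Q/\delta)5^{-N}A^{-N}\}\ge r^{3/2}/\delta$. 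Balancing this against the covering-radius loss $r^{3/2}$ is where both $N^{3/2}$ and the constant $10$ come from.
\item The theta sums are compared at $s_0=r/\Lambda$ and $\sqrt2\,s_0$. Nonzero dual terms are below $1/3$ at both scales, while a hole of radius $s_0\sqrt r$ would force $2<4\cdot2^{r/2}e^{-\pi r/2}$.
\item This produces a lattice point within $r^{3/2}/\Lambda=\delta$, i.e.\ $\abs{q}\le Q$ and $\norm{q\bm\alpha-\bm p-\bm t}_\infty\le\delta$, with no square-root loss.
\end{itemize}

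If you want to keep a Fej\'er weight, you would need at least two changes:
\begin{itemize}
\item Use the mutual spacing of the points $\bm k\cdot\bm\alpha$ instead of a uniform bound. Applying the separation estimate to differences gives $\sum_{0<\norm{\bm k}_1\le A}\norm{\bm k\cdot\bm\alpha}_{\Torus}^{-2}\le\frac{\pi^2}{3}(10A)^{2N}$.
\item Use the same two-scale comparison instead of a tail-mass argument.
\end{itemize}
That plausibly gives $(CN^{3/2}/\delta)^N$, though the constant would still need checking. Any bound with $\log\abs{m}=\mathcal O(N\log(N/\delta))$ would suffice for Theorem~\ref{thm:approximation} up to constants. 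It would not give the explicit bound in the lemma or in Proposition~\ref{prop:decoder}.
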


\begin{proof}
Set $\balpha=(\alpha_1,\ldots,\alpha_N)^\top$ and write the prescribed target
as $\bt=(t_1,\ldots,t_N)^\top$.

\par\medskip\noindent
\emph{Step 1: algebraic separation of the frequencies.}
We first show that no nonzero integer combination of the dyadic frequencies
can lie too close to an integer.  This is the only place where the special
choice $\alpha_i=2^{-i/(N+1)}$ is used.  For every nonzero $\bh\in\Z^N$,
\[
\norm{\bh\cdot\balpha}_{\Torus}\ge 5^{-N}\norm{\bh}_1^{-N},
\qquad
\norm{x}_{\Torus}:=\min_{m\in\Z}|x-m|.
\]
Let $A=\norm{\bh}_1$.  Since $\bh\ne0$, $A\ge1$.  Put $\theta=2^{1/(N+1)}$ and choose $k\in\Z$ such that
\[
\left|k+\sum_{i=1}^Nh_i\theta^{-i}\right|=\left\|\sum_{i=1}^Nh_i\theta^{-i}\right\|_{\Torus}.
\]
Let $\zeta=e^{2\pi i/(N+1)}$ and $r_j=\theta\zeta^j$, $j=0,\ldots,N$.  These are precisely the distinct roots of $x^{N+1}-2$.  Define
\[
L_j:=k+\sum_{i=1}^Nh_ir_j^{-i}\quad (j=0,\ldots,N),
\qquad
P(x):=kx^N+h_1x^{N-1}+h_2x^{N-2}+\cdots+h_N.
\]
Then $P(r_j)=r_j^NL_j$.  By Eisenstein's criterion at the prime $2$,
$x^{N+1}-2$ is irreducible over $\Q$, so every $r_j$ has degree $N+1$.
The polynomial $P$ is nonzero because some $h_i$ is nonzero, and
$\deg P\le N$; hence $P(r_j)\ne0$ for every $j$.

Because \(r_j^N L_j=P(r_j)\neq0\), we have
\begin{equation}\label{eq:algebraic-norm}
\left(\prod_{j=0}^N r_j^N\right)
\left(\prod_{j=0}^N L_j\right)
=
\prod_{j=0}^N P(r_j)
=
N_{\mathbb Q(\theta)/\mathbb Q}\bigl(P(\theta)\bigr)
\in\mathbb Z\setminus\{0\}.
\end{equation}
Indeed, the embeddings of $\mathbb Q(\theta)$ send $\theta$ to
$r_0,\ldots,r_N$, giving the displayed product for the norm.
Since $P(\theta)$ is a nonzero algebraic integer, its norm is a nonzero
integer; see Neukirch~\citeyearpar[Chapter~I, \S~2]{Neukirch1999}.
Taking absolute values and using
\(
\left|\prod_{j=0}^N r_j\right|=2
\)
gives
\[
1
\le
\left|\left(\prod_{j=0}^N r_j^N\right)
\left(\prod_{j=0}^N L_j\right)\right|
=
\left|\prod_{j=0}^N r_j\right|^{N}
\prod_{j=0}^N|L_j|
=
2^N\prod_{j=0}^N|L_j|.
\]
Equivalently,
$
\prod_{j=0}^N|L_j|\ge2^{-N}
$.
It remains to upper bound the $N$ conjugate factors $L_j$, $j\ne0$.  By the choice of $k$,
\[
|k|\le \left|\sum_{i=1}^Nh_i\theta^{-i}\right|+\frac12
\le \sum_{i=1}^N|h_i|+\frac12
\le A+\frac A2=\frac32A,
\]
where we used $A\ge1$.  Also $|r_j^{-i}|=\theta^{-i}\le1$.  Hence, for $j\ne0$,
\[
|L_j|\le |k|+\sum_{i=1}^N|h_i||r_j|^{-i}\le \frac32A+A=\frac52A.
\]
Combining this estimate with $\prod_{j=0}^{N}|L_j|\ge2^{-N}$ gives
\[
\norm{\bh\cdot\balpha}_{\Torus}=\left|k+\sum_{i=1}^Nh_i\theta^{-i}\right|=|L_0|\ge 2^{-N}\left(\frac52A\right)^{-N}=5^{-N}\norm{\bh}_1^{-N}.
\]
This proves the no-resonance estimate.  We next turn such a lower bound on
the dual lattice into a covering bound for the primal lattice.

\par\medskip\noindent
\emph{Step 2: Gaussian Fourier analysis and lattice covering.}
Let $r\ge2$, let $\bm B$ be an invertible $r\times r$ real matrix, and set
$G=\bm B\Z^r$ and $G^*=\bm B^{-T}\Z^r$.  Suppose every nonzero
$\bw\in G^*$ satisfies $\norm{\bw}_1\ge\Lambda$.  We claim that, for every
$\by\in\R^r$, there is $\bv\in G$ such that
\[
\norm{\bv-\by}_2\le\frac{r^{3/2}}{\Lambda}.
\]
We prove this covering estimate by comparing two periodized Gaussian
sums.  Fourier expansion controls them through the dual lattice, whereas
a hole in the original lattice would force an incompatible ratio.
From $\norm{\bw}_1\le\sqrt r\norm{\bw}_2$, every nonzero point of $G^*$ has Euclidean norm at least $\lambda:=\Lambda/\sqrt r$.
Put $s_0:=\sqrt r/\lambda=r/\Lambda$. For $a>0$ and $\by,\bu\in\R^r$, write $\bv=\bm B\bn$ with
$\bn\in\Z^r$, and define
\[
S_a(\by):=\sum_{\bv\in G}
\exp\!\left(-\pi\frac{\norm{\bv-\by}_2^2}{a^2}\right),
\qquad
F_{\by}(\bu):=\sum_{\bn\in\Z^r}
\exp\!\left(-\pi\frac{\norm{\bm B(\bu+\bn)-\by}_2^2}{a^2}\right).
\]
Gaussian decay makes $F_{\by}$ smooth and $\Z^r$-periodic, with
$S_a(\by)=F_{\by}(\bzero)$, and justifies termwise integration.
To compute $S_a$, first unfold the Fourier coefficient at $\bk\in\Z^r$:
\begin{align*}
\widehat F_{\by}(\bk)
&=\sum_{\bn\in\Z^r}\int_{[0,1]^r}
 \exp\left(-\pi\frac{\|\bm B(\bu+\bn)-\by\|_2^2}{a^2}\right)
 e^{-2\pi i\bk\cdot\bu}\,\dd\bu\\*
&=\sum_{\bn\in\Z^r}\int_{\bn+[0,1]^r}
 \exp\left(-\pi\frac{\|\bm B\bv-\by\|_2^2}{a^2}\right)
 e^{-2\pi i\bk\cdot(\bv-\bn)}\,\dd\bv\\*
&=\int_{\R^r}\exp\left(-\pi\frac{\|\bm B\bv-\by\|_2^2}{a^2}\right)
 e^{-2\pi i\bk\cdot\bv}\,\dd\bv,
\end{align*}
since $\bk\cdot\bn\in\Z$ and the translated unit cubes partition
$\R^r$ up to their boundaries.
With $\bz=\bm B\bv-\by$, so $\bv=\bm B^{-1}(\bz+\by)$ and
$\dd\bv=|\det\bm B|^{-1}\dd\bz$, this becomes
\[
\widehat{F}_{\by}(\bk)=\frac{e^{-2\pi i(\bm{B}^{-T}\bk)\cdot\by}}{|\det \bm{B}|}
\int_{\R^r}e^{-\pi\norm{\bz}_2^2/a^2}e^{-2\pi i(\bm{B}^{-T}\bk)\cdot\bz}\,\dd\bz.
\]
The integral is computed coordinate by coordinate.  For real $\xi$,
\begin{align*}
\int_{\R}e^{-\pi t^2/a^2}e^{-2\pi i\xi t}\,\dd t
=e^{-\pi a^2\xi^2}
  \int_{\R+i a^2\xi}e^{-\pi z^2/a^2}\,\dd z
=e^{-\pi a^2\xi^2}
  \int_{\R}e^{-\pi z^2/a^2}\,\dd z
 =a e^{-\pi a^2\xi^2}.
\end{align*}
For the contour shift, apply Cauchy's theorem on the rectangle with vertical
sides at $\pm R$; their integrals tend to zero as $R\to\infty$ because the
height is fixed and the Gaussian factor is $\mathcal O(e^{-\pi R^2/a^2})$.
Consequently,
\begin{align*}
\widehat{F}_{\by}(\bk)
&=
\frac{e^{-2\pi i(\bm B^{-T}\bk)\cdot\by}}{|\det\bm B|}
\prod_{j=1}^r
\int_{\R}e^{-\pi z_j^2/a^2}
e^{-2\pi i(\bm B^{-T}\bk)_jz_j}\,\dd z_j\\*
&=
\frac{e^{-2\pi i(\bm B^{-T}\bk)\cdot\by}}{|\det\bm B|}
\prod_{j=1}^r a e^{-\pi a^2(\bm B^{-T}\bk)_j^2}\\*
&=
\frac{a^r}{|\det\bm B|}
\exp\!\left(-\pi a^2\norm{\bm B^{-T}\bk}_2^2\right)
e^{-2\pi i(\bm B^{-T}\bk)\cdot\by}.
\end{align*}
The Fourier coefficients have Gaussian decay on $G^*$, so the
Fourier series converges absolutely to $F_{\by}$.  Evaluating it at zero
gives
\begin{align}\label{eq:poisson-gaussian-lattice}
S_a(\by)
=\sum_{\bk\in\Z^r}\widehat F_{\by}(\bk)\nonumber
&=\frac{a^r}{|\det \bm{B}|}
  \sum_{\bk\in\Z^r}
  \exp\left(-\pi a^2\norm{\bm{B}^{-T}\bk}_2^2\right)
  e^{-2\pi i(\bm{B}^{-T}\bk)\cdot\by}\nonumber\\
&=\frac{a^r}{|\det \bm{B}|}
  \sum_{\bw\in G^*}
   e^{-\pi a^2\norm{\bw}_2^2}
   e^{-2\pi i\bw\cdot\by}.
\end{align}

We next estimate the nonzero lattice contribution.  For $k\ge1$, let
\[
A_k:=\{\bw\in G^*: k\lambda\le\norm{\bw}_2<(k+1)\lambda\}.
\]
The balls $B_{\lambda/2,|\cdot|}(\bw)$, $\bw\in A_k$, are pairwise
disjoint and lie in $B_{(k+3/2)\lambda,|\cdot|}(\bzero)$.  Indeed, distinct
centers differ by a nonzero vector of $G^*$ and are therefore at least
$\lambda$ apart; the containment follows from the triangle inequality.  Hence
\[
\#A_k\,\vol B_{\lambda/2,|\cdot|}(\bzero)
\le \vol B_{(k+3/2)\lambda,|\cdot|}(\bzero),
\qquad \#A_k\le(2k+3)^r.
\]
Since $s_0^2\lambda^2=r$,
\begin{align*}
\sum_{\bzero\ne\bw\in G^*}e^{-\pi s_0^2\norm{\bw}_2^2}
&=\sum_{k=1}^\infty\sum_{\bw\in A_k}e^{-\pi s_0^2\norm{\bw}_2^2}\le \sum_{k=1}^\infty \#A_k e^{-\pi s_0^2k^2\lambda^2}\le\sum_{k=1}^\infty\bigl((2k+3)e^{-\pi k^2}\bigr)^r.
\end{align*}
For $k\ge1$, $(2k+3)e^{-\pi k^2}\le5k e^{-\pi k^2}\le5e^{-\pi}<1$.  Since $r\ge2$, each base is less than one, so raising it to the power $r$ only decreases it; hence
\[
\sum_{k=1}^\infty\big((2k+3)e^{-\pi k^2}\big)^r
\le \sum_{k=1}^\infty 5k e^{-\pi k^2}
\le 5e^{-\pi}+5\int_1^\infty xe^{-\pi x^2}\,\dd x=5e^{-\pi}\left(1+\frac1{2\pi}\right)<\frac1{3},
\]
because $x\mapsto xe^{-\pi x^2}$ is decreasing on $[1,\infty)$.  Thus, with
\[
R_a:=\sum_{\bzero\ne\bw\in G^*}e^{-\pi a^2\norm{\bw}_2^2},
\]
we have $R_{s_0}<1/3$.  Each summand decreases with $a$, so also
$R_{\sqrt2s_0}\le R_{s_0}<1/3$.
Using \eqref{eq:poisson-gaussian-lattice} first with $a=s_0$ and then with $a=\sqrt2s_0$, together with
\[
\left|
\sum_{\bzero\ne\bw\in G^*}
e^{-\pi a^2\norm{\bw}_2^2}
e^{-2\pi i\bw\cdot\by}
\right|
\le
\sum_{\bzero\ne\bw\in G^*}
e^{-\pi a^2\norm{\bw}_2^2}
=
R_a,
\]
yields
\[
S_{s_0}(\by)\ge\frac{s_0^r}{|\det \bm{B}|}(1-R_{s_0})>\frac{2s_0^r}{3|\det \bm{B}|},
\quad
S_{\sqrt2s_0}(\by)\le\frac{(\sqrt2s_0)^r}{|\det \bm{B}|}(1+R_{\sqrt2s_0})
<\frac{4\,2^{r/2}s_0^r}{3|\det \bm{B}|}.
\]
We claim that some $\bv\in G$ satisfies $\norm{\bv-\by}_2\le s_0\sqrt r$.  If not, then $\norm{\bv-\by}_2^2/s_0^2>r$ for every $\bv\in G$, and hence
\[
\exp\left(-\pi\frac{\norm{\bv-\by}_2^2}{s_0^2}\right)
\le e^{-\pi r/2}\exp\left(-\pi\frac{\norm{\bv-\by}_2^2}{2s_0^2}\right).
\]
Summing over $\bv\in G$ gives
$S_{s_0}(\by)\le e^{-\pi r/2}S_{\sqrt2s_0}(\by)$.  Combining the preceding
bounds and cancelling $s_0^r/(3|\det\bm B|)$ gives
\[
2<4\,2^{r/2}e^{-\pi r/2}.
\]
This is impossible for $r\ge2$: since $e^{\pi/2}>4$, the right-hand side is
smaller than $4(\sqrt2/4)^r\le1/2$.  Hence some $\bv\in G$ satisfies
$\norm{\bv-\by}_2\le s_0\sqrt r$.  Finally,
\[
\norm{\bv-\by}_\infty\le\norm{\bv-\by}_2\le s_0\sqrt r=\frac{r}{\lambda}=\frac{r^{3/2}}{\Lambda}.
\]

The covering estimate is now proved.  We finish by building a lattice with $r=N+1$ and $\Lambda=r^{3/2}/\delta$.

\par\medskip\noindent
\emph{Step 3: convert the lattice bound into a phase approximation.}
The first $N$ lattice coordinates measure $\bp-q\balpha+\bt$, and the last one forces $|q|\le Q$.  Set $Q=(10N^{3/2}/\delta)^N$, $r=N+1$, and
\[
G=\left\{\binom{\bp-q\balpha}{(\delta/Q)q}:\bp\in\Z^N,
q\in\Z\right\}=\bm{B}\Z^r,
\qquad
\bm{B}=\begin{pmatrix}\bm{I}_N&-\balpha\\ \bzero^\top&\delta/Q\end{pmatrix}.
\]
Then
\[
\bm{B}^{-T}=\begin{pmatrix}\bm{I}_N&\bzero\\ (Q/\delta)\balpha^\top&Q/\delta\end{pmatrix},
\]
which follows by inverting the block upper-triangular matrix $\bm{B}$ and then transposing,
so every vector in $G^*=\bm{B}^{-T}\Z^r$ has the form
\[
\bw(\bh,m)=\binom{\bh}{(Q/\delta)(\bh\cdot\balpha+m)},
\qquad \bh\in\Z^N,\quad m\in\Z.
\]
If $\bh=\bzero$ and $\bw(\bh,m)\ne\bzero$, then $m\ne0$.  Since $0<\delta<1$,
$Q\ge10N^{3/2}\ge(N+1)^{3/2}=r^{3/2}$, and hence
$\norm{\bw(\bh,m)}_1=(Q/\delta)|m|\ge r^{3/2}/\delta$.  If $\bh\ne\bzero$ and $A=\norm{\bh}_1$, then, using $|\bh\cdot\balpha+m|\ge\norm{\bh\cdot\balpha}_{\Torus}$ and Step~1,
\[
\norm{\bw(\bh,m)}_1\ge A+\frac Q\delta\norm{\bh\cdot\balpha}_{\Torus}
\ge \frac{\delta A+Q5^{-N}A^{-N}}{\delta}.
\]
It remains to lower bound the numerator.  Minimizing over all real $A>0$
can only decrease it.  For
$
\psi(A)=\delta A+Q5^{-N}A^{-N}
$, the unique critical point is
$
A_*=\left(NQ5^{-N}\delta^{-1}\right)^{1/(N+1)}
$.
It is the minimum because $\psi$ is strictly convex.  Substitution, together
with $Q5^{-N}=(2N^{3/2}/\delta)^N$, gives
\begin{align*}
\min_{A>0}\psi(A)
&=(N+1)N^{-N/(N+1)}\delta^{N/(N+1)}(Q5^{-N})^{1/(N+1)}\\*
&=(N+1)N^{-N/(N+1)}(2N^{3/2})^{N/(N+1)}\\
&=(N+1)\left[(4N)^{N}\right]^{\frac{1}{2(N+1)}}\\
&\ge(N+1)\left[(N+1)^{N+1}\right]^{\frac{1}{2(N+1)}}\\*
&=r^{3/2}.
\end{align*}
In the penultimate line we used $(4N)^N\ge (N+1)^{N+1}$, which follows
from $N+1\le2N$ and $N\le2^{N-1}$ for $N\ge1$.  Thus every nonzero vector
of $G^*$ has $\ell^1$ norm at least $r^{3/2}/\delta$.  Step~2, with
$\Lambda=r^{3/2}/\delta$ and target $\by=(-\bt^\top,0)^\top$, now gives a
lattice point of $G$ within $\delta$ in $\ell^\infty$ norm.  By the definition
of $G$, there are $\bp\in\Z^N$ and $q\in\Z$ such that
\[
\norm{\bp-q\balpha+\bt}_\infty\le\delta,
\qquad
\left|(\delta/Q)q\right|\le\delta.
\]
The second inequality gives $|q|\le Q$, and the first is equivalent to
$\norm{q\balpha-\bp-\bt}_\infty\le\delta$.  Since
$Q=(10N^{3/2}/\delta)^N$, this is the claimed bound.
\end{proof}

Apply the orbit lemma to the table in Proposition~\ExternalNumber{1}{prop:decoder}.  The
integers $p_i$ and residuals $e_i$ below are the integer and residual parts of
that approximation.

\phantomsection\label{app:dyadic-decoder-proof}%
\begin{proof}[Proof of Proposition~\ExternalNumber{1}{prop:decoder}]
Recall the activation identities in (\ExternalNumber{2.2}{eq:dta-identities}) and the decoder definition in Proposition~\ExternalNumber{1}{prop:decoder}.  Put $y_i=(c_i+1)/2\in[0,1]$.  Apply Lemma~\ref{lem:orbit} to
$(y_1/2,\ldots,y_N/2)^\top$ with tolerance $\eps/4$.  Then there exist
$m,p_1,\ldots,p_N\in\Z$ and $e_1,\ldots,e_N\in\R$ such that
\begin{align*}
|m|\le\left(40N^{3/2}\eps^{-1}\right)^N,\quad
m\,2^{-i/(N+1)}=p_i+\frac{y_i}{2}+e_i,
\quad |e_i|\le\frac\eps4,
\quad 1\le i\le N.
\end{align*}
Define, as in the proposition,
\[
B_1(v)=2mv+4|m|+2,
\qquad
B_2(w)=2w-1,
\qquad
\mathcal R=B_2\circ\DTA\circ B_1.
\]

For $v_i=2^{-i/(N+1)}-1$,
\[
B_1(v_i)=2p_i-2m+y_i+2e_i+4|m|+2.
\]
The integer $2p_i-2m+4|m|+2$ is even.  Since $\tau$ is $2$-periodic,
$\tau(y_i)=y_i$, and $\tau$ is $1$-Lipschitz,
\begin{align*}
|\mathcal R(v_i)-c_i|
=2|\tau(y_i+2e_i)-\tau(y_i)|\le4|e_i|\le\eps.
\end{align*}

For the uniform bound, if $m\ge0$ and $-1<v\le1$, then
$B_1(v)\ge2m+2\ge2$; if $m<0$, then
$B_1(v)\ge2m+4|m|+2=2|m|+2\ge2$.  Thus
$0\le\DTA(B_1(v))\le1$ and $|\mathcal R(v)|\le1$.

Finally, put $R=(40N^{3/2}\eps^{-1})^N$.  Lemma~\ref{lem:orbit} gives
$|m|\le R$, while $R\ge1$.  The two weights and two biases in $B_1,B_2$
are therefore bounded by
\[
2|m|\le2R,\qquad 4|m|+2\le6R,\qquad 2\le2R,\qquad 1\le R.
\]
Hence the parameter radius is at most $6R$, as claimed.
\end{proof}

\section{Proof of Proposition~\ExternalNumber{4}{prop:block-decoder}}
\label{app:block-decoder}

We approximate an integer sequence whose increments lie in
$\{-1,0,1\}$ within each row.  Each short block is stored through its
starting value and increments.  In the width-two construction, two
approximate evaluations have a common error that cancels when forming
the next address.  The wider construction preserves this cancellation
while distributing the data among shorter integers.

\subsection{A phase-coding lemma}
The first lemma stores any finite list of numbers in $[\zeta,1-\zeta]$.
Dividing the resulting integer by a power of $G$ recovers one prescribed
number up to an integer and a small error.  Keeping the numbers away from
$0$ and $1$ keeps each perturbation within the same linear branch of the triangular wave.

\begin{lemma}[Guarded phase code]\label{lem:guarded-code}
Let $N\in\mathbb N$, $0<\zeta\le1/2$, and
$\theta_1,\ldots,\theta_N\in[\zeta,1-\zeta]$.  For every integer
$G\ge2/\zeta$, there are $m\in\{0,\ldots,G^N-1\}$, an
integer-valued function $\bm k:\{1,\ldots,N\}\to\mathbb N_0$, and an error
function $\bm e:\{1,\ldots,N\}\to\mathbb R$ such that
\[
mG^{-i}=\bm k[i]+\theta_i+\bm e[i],
\qquad
|\bm e[i]|\le\frac1{2G}\le\frac\zeta4,
\qquad 1\le i\le N.
\]
\end{lemma}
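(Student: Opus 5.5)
The plan is to build $m$ as a base-$G$ expansion whose digits store the numbers $\theta_i$ one after another, so that division by $G^i$ exposes the $i$th stored value in the fractional part. First choose integer digits $d_i\in\{0,\ldots,G-1\}$ by $d_i=\lfloor G\theta_i\rfloor$, or the nearest integer clipped to that range. Then set
\[
m=\sum_{i=1}^N d_iG^{N-i}\cdot(\text{suitable reindexing}),
\]
where the reindexing is arranged so that $mG^{-i}$ places digit $d_i$ immediately after the radix point. Concretely, I would write $m=\sum_{j=1}^{N} d_j G^{j-1}$ with positions chosen so that $mG^{-i}$ has integer part $\sum_{j>i} d_jG^{j-1-i}$ and fractional part $d_iG^{-1}+\sum_{j<i}d_jG^{j-1-i}$. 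Then $m\in\{0,\ldots,G^N-1\}$ automatically, and $\bm k[i]$ is the integer part, which is a nonnegative integer.

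The main step is the error bound. The fractional part equals $d_i/G$ plus a tail $\sum_{j<i}d_jG^{j-1-i}$, which lies in $[0,(G-1)\sum_{s\ge2}G^{-s})=[0,1/G)$. Floor rounding alone would then give a total error up to about $2/G$, which is too large. To reach the stated bound $1/(2G)$, I would center the digit: take $d_i$ to be the integer nearest to $G\theta_i-1/2$, so that $d_i/G+1/(2G)$ is within $1/(2G)$ of $\theta_i$. The tail has mean-like size around $1/(2G)$, but it varies over $[0,1/G)$, so the error then lies in $[-1/(2G),1/G)$, which is still slightly too large.

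A cleaner option is to remove the tail entirely by spacing out the digits: give each $\theta_i$ its own full power of $G$. The difficulty is that $mG^{-i}$ with consecutive $i$ necessarily includes the lower digits, so a tail always contributes. I therefore expect the real fix to be choosing the digits sequentially (greedily) from $i=1$ upward. At step $i$, the tail $t_i=\sum_{j<i}d_jG^{j-1-i}\in[0,1/G)$ is already determined, and I choose $d_i$ as the integer nearest to $G(\theta_i-t_i)$. This gives $|d_i/G+t_i-\theta_i|\le1/(2G)$.

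It remains to check that this choice keeps $d_i$ in the digit range $\{0,\ldots,G-1\}$. Because $\theta_i\ge\zeta\ge2/G$ and $t_i<1/G$, we get $G(\theta_i-t_i)>1$. Because $\theta_i\le1-\zeta$, we get $G(\theta_i-t_i)\le G-2$. So the rounded digit lies in $\{1,\ldots,G-2\}$, and the guard $\zeta$ together with $G\ge2/\zeta$ is exactly what makes this work. Finally, $1/(2G)\le\zeta/4$ follows from $G\ge2/\zeta$, and $m<G^N$ because every digit is at most $G-1$. I expect the only real obstacle to be getting this indexing and the tail bookkeeping exactly right.
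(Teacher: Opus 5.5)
Your final construction (choose the digits in sequence as the nearest integer to $G(\theta_i-t_i)$, with carry $t_i=\sum_{j<i}d_jG^{j-1-i}\in[0,1/G)$, and set $m=\sum_j d_jG^{j-1}$) is exactly the paper's proof: your $t_i$ and $d_i$ are its $r_i$ and $q_i=\lfloor G(\theta_i-r_i)+1/2\rfloor$. Your inductive range check, which even gives $d_i\in\{1,\ldots,G-2\}$, is correct, as are the bounds $|\bm e[i]|\le1/(2G)\le\zeta/4$ and $0\le m<G^N$; the earlier floor-rounding and centering attempts can be dropped from a write-up.
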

\begin{proof}
We choose the base-$G$ digits successively: $r_i$ records the carry from the earlier digits, and $q_i$ is the nearest integer needed to match $\theta_i$.  For every integer $i$ with $1\le i\le N$, define
\[
r_i:=\sum_{j=1}^{i-1}q_jG^{j-i-1},
\qquad
q_i:=\left\lfloor G(\theta_i-r_i)+\frac12\right\rfloor,
\]
where the empty sum gives $r_1=0$.  We verify inductively that
$q_i\in\{0,\ldots,G-1\}$.  For $i=1$, since
$\theta_1\in[\zeta,1-\zeta]$ and $G\zeta\ge2$, we have
$0<G\theta_1+1/2<G$, and hence $q_1\in\{0,\ldots,G-1\}$.  If
$2\le i\le N$ and $q_1,\ldots,q_{i-1}\in\{0,\ldots,G-1\}$, then
\[
0\le r_i\le(G-1)\sum_{\ell=2}^{i}G^{-\ell}
=\frac1G\bigl(1-G^{-(i-1)}\bigr)
<\frac1G\le\frac\zeta2.
\]
Thus $\zeta/2\le\theta_i-r_i\le1-\zeta$, and consequently
$0<G(\theta_i-r_i)+1/2<G$, which gives
$q_i\in\{0,\ldots,G-1\}$ and completes the induction.

Since $q_i=\lfloor G(\theta_i-r_i)+1/2\rfloor$, the definition of the
floor function gives
$q_i-1/2\le G(\theta_i-r_i)<q_i+1/2$, and hence
\[
\left|\frac{q_i}{G}+r_i-\theta_i\right|
=\frac1G\left|q_i-G(\theta_i-r_i)\right|
\le\frac1{2G}.
\]
Now put
\[
m:=\sum_{j=1}^{N}q_jG^{j-1},
\qquad
\bm k[i]:=\sum_{j=i+1}^{N}q_jG^{j-i-1},
\qquad
\bm e[i]:=\frac{q_i}{G}+r_i-\theta_i.
\]
Since $0\le q_j\le G-1$, we have $0\le m<G^N$, while $\bm k[i]\in\mathbb N_0$.
Separating the $i$th digit gives
\begin{align*}
mG^{-i}
&=\sum_{j=i+1}^{N}q_jG^{j-i-1}+\frac{q_i}{G}
  +\sum_{j=1}^{i-1}q_jG^{j-i-1}
 =\bm k[i]+\frac{q_i}{G}+r_i
 =\bm k[i]+\theta_i+\bm e[i].
\end{align*}
Lastly, $|\bm e[i]|\le1/(2G)\le\zeta/4$ because $G\ge2/\zeta$.
\end{proof}

\subsection{Proof of Proposition~\ExternalNumber{4}{prop:block-decoder}, part~(a)}\label{app:coupled-a-proof}
\begin{proof}[Proof of Proposition~\ExternalNumber{4}{prop:block-decoder}, part~(a)]
\emph{Step 1: store starting values and increments.}
Recall that $H,Q,U$ and $(a_t)_{t=0}^{Q-1}$ are as in the proposition, with
unit jumps inside each length-$H$ row.  Put
$n=\lceil\log_2U\rceil$ and choose
\[
B=\begin{cases}
1,&2\le n<8,\\
2^{\lfloor\log_2(n/4)\rfloor},&n\ge8.
\end{cases}
\]
Then $B$ is a power of two dividing $H$.  If $n\ge8$, then
$n/8<B\le n/4$; if $2\le n<8$, then $B=1$.  Since $U\ge4$ and
$H$ is a power of two with $H\ge U$, these relations imply $1\le B\le U/4\le H/2$ and
\begin{equation}\label{eq:block-length-bounds}
\begin{gathered}
B\le U^{1/5},\quad 3^{B-1}<2^{2n/5}<2U^{2/5},\quad
4B^23^{B-1}\le2^n\le H,\quad 3^{B-1}\ge B.
\end{gathered}
\end{equation}
For $n\ge8$, use $\log_2 3<8/5$, $B\le n/4$, and
$n/4\le2^{(n-1)/5}<U^{1/5}$.  The stronger bound needed later follows from
\[
4B^23^{B-1}\le (n^2/4)2^{2n/5}\le2^n;
\]
indeed, $n^22^{-3n/5}$ is decreasing for $n\ge8$ and its value at $8$
is less than $4$.  For $2\le n<8$, $B=1$ gives the assertions directly.
Finally, $3^{B-1}\ge B$ follows by induction on $B$.

For every integer $b$ with $0\le b<Q/B$, encode the $B-1$ increments in
block $b$ by
\begin{align*}
&w(b):=\sum_{j=1}^{B-1}
 \bigl(a_{bB+j}-a_{bB+j-1}+1\bigr)3^{j-1},\\*
&s(w,r):=\sum_{j=1}^{r}
 \left(\left\lfloor\frac{w}{3^{j-1}}\right\rfloor
 -3\left\lfloor\frac{w}{3^j}\right\rfloor-1\right),
 \qquad w,r\in\mathbb Z,\quad 0\le w<3^{B-1},\quad 0\le r<B,
\end{align*}
with the convention that an empty sum equals zero.  Since $B\mid H$, the
index $bB+j-1$ is not congruent to $H-1$ modulo $H$ for
$1\le j\le B-1$.  Hence every coefficient in $w(b)$ lies in
$\{0,1,2\}$ and $0\le w(b)<3^{B-1}$.  More precisely, for
$1\le j\le B-1$,
\begin{align*}
&\left\lfloor\frac{w(b)}{3^{j-1}}\right\rfloor
=a_{bB+j}-a_{bB+j-1}+1
 +\sum_{\ell=j+1}^{B-1}
  \bigl(a_{bB+\ell}-a_{bB+\ell-1}+1\bigr)3^{\ell-j},\\*
&3\left\lfloor\frac{w(b)}{3^j}\right\rfloor
=\sum_{\ell=j+1}^{B-1}
  \bigl(a_{bB+\ell}-a_{bB+\ell-1}+1\bigr)3^{\ell-j}.
\end{align*}
For every integer $r$ with $0\le r<B$, subtracting the two identities and
then summing from $j=1$ to $r$ gives the partial block sum explicitly:
\begin{equation}\label{eq:block-reconstruction}
\begin{aligned}
&s(w(b),r)
=\sum_{j=1}^{r}\bigl(a_{bB+j}-a_{bB+j-1}\bigr)
 =a_{bB+r}-a_{bB}.
\end{aligned}
\end{equation}
In particular $|s(w,r)|\le r\le B-1$ for all integers
$w,r$ with $0\le w<3^{B-1}$ and $0\le r<B$.

To keep the starting value away from the endpoints of a triangular branch, set
\[
\begin{aligned}
&y(b):=\min\left\{1-\frac1{64U},
\max\left\{\frac1{64U},\frac12\left(1+\frac{a_{bB}}U\right)\right\}\right\},\\
\implies& y(b)\in\left[\frac1{64U} ,1-\frac1{64U}\right],
\qquad 2\left|y(b)-\frac12\left(1+\frac{a_{bB}}U\right)\right|
\le\frac1{32U}.
\end{aligned}
\]
Consider the following $Q/B+B3^{B-1}$ phases, with all three indices
$b,w,r$ restricted to integers in the following ranges:
\[
\frac{2w(b)+y(b)}{8\,3^{B-1}}
\quad(0\le b<Q/B),
\qquad
\frac14\left(1+\frac{s(w,r)}B\right)
\quad(0\le w<3^{B-1},\ 0\le r<B).
\]
They satisfy
\begin{align*}
\frac{2w(b)+y(b)}{8\,3^{B-1}}\in\left[\frac1{512\,3^{B-1}U}, \frac{1}{4}\right]
,\qquad
\frac14\left(1+\frac{s(w,r)}B\right)
 \in\left[\frac{1}{4B}, \frac12-\frac1{4B}\right].
\end{align*}
Because $3^{B-1}\ge B$ and $U\ge4$, all these phases belong to
\[
\left[\frac1{512\,3^{B-1}U},
1-\frac1{512\,3^{B-1}U}\right]:=[\zeta, 1-\zeta].
\]
Let $g\in\mathbb N$ be minimal such that
$G:=2^g\ge2/\zeta$ and apply Lemma~\ref{lem:guarded-code} to these
phases.  It gives an integer
$0\le m<G^{Q/B+B3^{B-1}}$, an integer-valued function $\bm k$, and an
error function $\bm e$ such that
\begin{equation}\label{eq:block-code-relations}
\left\{
\begin{aligned}
&mG^{-(b+1)}
=\bm k[b+1]+\frac{2w(b)+y(b)}{8\,3^{B-1}}+\bm e[b+1],\\
&mG^{-(Q/B+1+Bw+r)}
=\bm k[Q/B+1+Bw+r]+\frac14\left(1+\frac{s(w,r)}B\right)
 +\bm e[Q/B+1+Bw+r]
\end{aligned}
\right.
\end{equation}
where
\begin{equation}\label{eq:block-code-error}
G\in\left[1024\,3^{B-1}U ,2048\,3^{B-1}U\right),
\qquad
|\bm e[i]|\le\frac1{2G}\le\frac1{2048\,3^{B-1}U}.
\end{equation}

\par\medskip\noindent
\emph{Step 2: construct the four hidden layers.}
The first two affine maps recover the starting value and the ternary word.
The third forms the position of the needed partial sum; the last two maps
recover that sum and add it to the starting value.  For $(u,v)^\top\in\mathbb R^2$, let
\begin{align}
&D_1\binom{b}{\nu}
=
\begin{bmatrix}
-g & 0\\[1.5mm]
0 & 1
\end{bmatrix}
\binom{b}{\nu}
+
\binom{-g}{0},
\nonumber\\[2mm]
&D_2\binom{u}{v}
=
\begin{bmatrix}
8\,3^{B-1}m & 0\\[1.5mm]
2m & \dfrac{1}{2\,3^{B-1}}
\end{bmatrix}
\binom{u}{v}
+
\begin{bmatrix}
24\,3^{B-1}m+2\\[1.5mm]
6m+2
\end{bmatrix},
\nonumber\\[2mm]
&D_3\binom{u}{v}
=
\begin{bmatrix}
\dfrac{gB}{2} & -2gB\,3^{B-1}\\[1.5mm]
1 & 0
\end{bmatrix}
\binom{u}{v}
+
\begin{bmatrix}
-g\left(\dfrac{Q}{B}+\dfrac12\right)\nonumber\\[2mm]
0
\end{bmatrix},
\nonumber\\[2mm]
&D_4\binom{u}{v}
=
\begin{bmatrix}
2m & 0\\[1.5mm]
0 & 1
\end{bmatrix}
\binom{u}{v}
+
\binom{6m+2}{0},\quad
D_5\binom{u}{v}
=
\begin{bmatrix}
\dfrac{2B}{U} & 2
\end{bmatrix}
\binom{u}{v}
-
\left(1+\dfrac{B}{U}\right).
\label{eq:scalar-decoder-affine-maps}
\end{align}
Then
$\mathcal D_2=D_5\circ\DTA\circ D_4\circ\DTA\circ D_3
\circ\DTA\circ D_2\circ\DTA\circ D_1$ (see Figure~\ref{fig:decoder-network}
for the network illustration), which has the architecture in part~(a).

\begin{figure}[htbp]
\centering
\includegraphics[width=\FigureDecoderWidth]{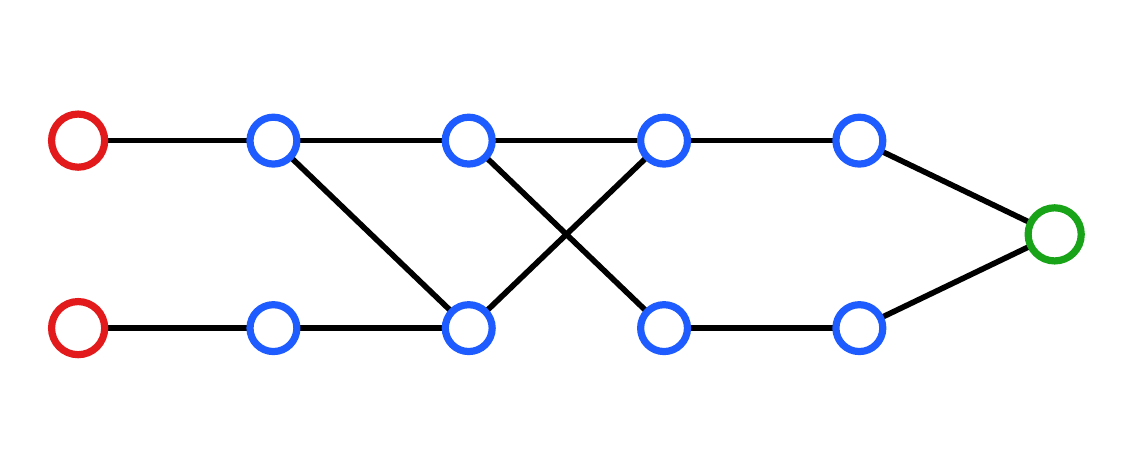}
\caption{The scalar decoder network $\mathcal D_2$.}
\label{fig:decoder-network}
\end{figure}

\par\medskip\noindent
\emph{Step 3: recover the block and cancel the error.}
Fix an integer $b$ with $0\le b<Q/B$ and a real number $\nu\in[0,1]$.
Define successively $\bm h_1,\ldots,\bm h_4$ by applying $\DTA$ after
$D_1,\ldots,D_4$.  The first hidden layer is
\[
h_{1,1}=\DTA(-g(b+1))=G^{-(b+1)}-1,
\qquad h_{1,2}=\DTA(\nu)=\nu.
\]
Substituting the first identity in \eqref{eq:block-code-relations} into
$D_2$ gives the two preactivations
\begin{align*}
&(D_2(\bm h_1))_1
=8\,3^{B-1}\bm k[b+1]+2w(b)+y(b)
 +8\,3^{B-1}\bm e[b+1]+16\,3^{B-1}m+2,\\*
&(D_2(\bm h_1))_2
=2\bm k[b+1]+\frac{4w(b)+2y(b)+4\nu}{8\,3^{B-1}}
 +2\bm e[b+1]+4m+2.
\end{align*}
The integer terms in both lines are even.  The remaining terms lie on
increasing branches of the triangular wave, because
$$
\begin{gathered}
\frac{3}{256U}
\le
y(b)+8\,3^{B-1}\bm e[b+1]
\le
1-\frac{3}{256U},
\\[0.6em]
\frac{3}{1024\,3^{B-1}U}
\le
\frac{4w(b)+2y(b)+4\nu}{8\,3^{B-1}}+2\bm e[b+1]
\le
\frac12+\frac{1}{4\,3^{B-1}}
-\frac{3}{1024\,3^{B-1}U}
<1.
\end{gathered}
$$
It follows from $\DTA(t)=\tau(t)$ for $t\ge0$ that
\begin{align*}
&h_{2,1}=y(b)+8\,3^{B-1}\bm e[b+1],\quad
h_{2,2}=\frac{4w(b)+2y(b)+4\nu}{8\,3^{B-1}}
 +2\bm e[b+1].
\end{align*}
The common error cancels in the following identity:
\begin{equation}\label{eq:exact-word-position}
\begin{aligned}
2\,3^{B-1}h_{2,2}-\frac12h_{2,1}
&=w(b)+\frac{y(b)}2+\nu+4\,3^{B-1}\bm e[b+1]
 -\frac{y(b)}2-4\,3^{B-1}\bm e[b+1]\\
&=w(b)+\nu.
\end{aligned}
\end{equation}
Consequently the first coordinate of $D_3(\bm h_2)$ is
\begin{align*}
&(D_3(\bm h_2))_1
=-g\left\{\frac QB+B\left[
2\,3^{B-1}h_{2,2}-\frac12h_{2,1}\right]+\frac12\right\}
=-g\left(\frac QB+Bw(b)+B\nu+\frac12\right),
\end{align*}
whereas $(D_3(\bm h_2))_2=h_{2,1}$.  At a table input, take an integer $r$ with $0\le r<B$ and set
$\nu=(r+1/2)/B$.  Then the first coordinate is exactly the integer address of
the second phase, and the preceding formula becomes
\[
(D_3(\bm h_2))_1=-g\left(\frac QB+1+Bw(b)+r\right),
\qquad
(D_3(\bm h_2))_2=y(b)+8\,3^{B-1}\bm e[b+1].
\]
The second quantity belongs to $(0,1)$ by the preceding branch estimate;
hence the third hidden layer is
\[
h_{3,1}=\DTA((D_3(\bm h_2))_1)=G^{-(Q/B+1+Bw(b)+r)}-1,
\qquad h_{3,2}=\DTA((D_3(\bm h_2))_2)=h_{2,1}.
\]
\par\medskip\noindent
\emph{Step 4: add the partial increment sum.}
Use the second identity in \eqref{eq:block-code-relations}, now with the
integer index $Q/B+1+Bw(b)+r$.  The first preactivation of $D_4$ is an even
integer plus
\[
\frac12\left(1+\frac{s(w(b),r)}B\right)
 +2\bm e[Q/B+1+Bw(b)+r].
\]
This number lies in $(0,1)$, since
\[
0<\frac1{2B}-\frac1G
\le\frac12\left(1+\frac{s(w(b),r)}B\right)
 +2\bm e[Q/B+1+Bw(b)+r]
\le1-\frac1{2B}+\frac1G<1.
\]
Here $G\ge1024\,3^{B-1}U\ge4096B$.  The second coordinate is $h_{3,2}\in[0,1]$ and is unchanged by $\DTA$.  Therefore
\[
\left\{
\begin{aligned}
&2h_{4,1}-1
=\frac{s(w(b),r)}B
 +4\bm e[Q/B+1+Bw(b)+r],\\
&2h_{4,2}-1
=2y(b)-1+16\,3^{B-1}\bm e[b+1],\\
&\mathcal D_2\!\left(b,\frac{r+1/2}{B}\right)
=2h_{4,2}-1+\frac BU(2h_{4,1}-1).
\end{aligned}
\right.
\]
Since $s(w(b),r)=a_{bB+r}-a_{bB}$, the two phase errors and the
initial-value guard contribute separately:
\[
\begin{aligned}
&\left|\mathcal D_2\!\left(b,\frac{r+1/2}{B}\right)
 -\frac{a_{bB+r}}U\right|\\
&=\left|2h_{4,2}-1+\frac BU(2h_{4,1}-1)
 -\frac{a_{bB+r}}U\right|\\
&=\left|2y(b)-1+16\,3^{B-1}\bm e[b+1]
 +\frac{s(w(b),r)}U
 +\frac{4B}{U}\bm e[Q/B+1+Bw(b)+r]
 -\frac{a_{bB+r}}U\right|\\
&=\left|2y(b)-1-\frac{a_{bB}}U
 +16\,3^{B-1}\bm e[b+1]
 +\frac{4B}{U}\bm e[Q/B+1+Bw(b)+r]\right|\\
&\le2\left|y(b)-\frac12\left(1+\frac{a_{bB}}U\right)\right|
 +16\,3^{B-1}\left|\bm e[b+1]\right|
 +\frac{4B}{U}\left|\bm e[Q/B+1+Bw(b)+r]\right|\\
&\le\frac1{32U}+\frac{8\,3^{B-1}}G+\frac{2B}{GU}
\le\frac1{32U}+\frac1{128U}+\frac1{512U^2}
 \le\frac{81}{2048U}<\frac1{25U}.
\end{aligned}
\]
Here $G\ge1024\,3^{B-1}U$ and $3^{B-1}\ge B$ give the first bound
in the last line, while $U\ge4$ gives the second.

\par\medskip\noindent
\emph{Step 5: bound the output between prescribed inputs.}
Let $b\ge-1$ be real and $\nu\in[0,1]$. Then
$-1<h_{1,1}\le0$ and $h_{1,2}=\nu$. The two coordinates of
$D_2(\bm h_1)$ are at least $16\,3^{B-1}m+2$ and $4m+2$,
respectively, so $\bm h_2\in[0,1]^2$. Since
$\DTA(\mathbb R)\subset(-1,1]$ and $\DTA$ is the identity on $[0,1]$,
we have $h_{3,1}>-1$ and $h_{3,2}=h_{2,1}$. Hence
$(D_4(\bm h_3))_1\ge4m+2$ and $h_{4,2}=h_{2,1}\in[0,1]$,
which give $\bm h_4\in[0,1]^2$ and
\[
\left|\mathcal D_2(b,\nu)\right|
=\left|2h_{4,2}-1+\frac BU\left(2h_{4,1}-1\right)\right|
\le\left|2h_{4,2}-1\right|+\frac BU\left|2h_{4,1}-1\right|
\le1+\frac BU.
\]

\par\medskip\noindent
\emph{Step 6: count parameters and bound their size.}
The five affine maps in \eqref{eq:scalar-decoder-affine-maps} have at most
$3,5,4,3,3$ nonzero parameters, respectively, totaling $18$.
Put $P=Q/B+B3^{B-1}$. Since $4B^2 3^{B-1}\le Q$ and $B/U\le1/4$,
inspection of these maps gives
\begin{equation}\label{eq:block-d2-radius}
T_{\mathcal D_2}
=\max\left\{g\left(\frac QB+\frac12\right),\,
24\,3^{B-1}m+2\right\}
\le\max\left\{gP,\,32\,3^{B-1}G^P\right\}
=32\,3^{B-1}G^P,
\end{equation}
where we used $m<G^P$ and $gP\le2^{gP}=G^P$.
If $B=1$, then $2\le n\le7$, $g\le n+10$, and $Q\ge2^n$.
If $B\ge2$, then $n<8B$, $Q\ge256$, and
$g\le n+2B+8\le10B+7\le27B/2$; moreover,
$P\le5Q/(4B)$ and $32\,3^{B-1}\le8Q/B^2\le2Q$.
Consequently,
\begin{equation}\label{eq:scalar-decoder-radius}
\begin{aligned}
\log T_{\mathcal D_2}
&\le\log\left(32\,3^{B-1}\right)
+\left(\frac QB+B3^{B-1}\right)g\log2\\
&\le Q\log2\,
\begin{cases}
n+10+(n+15)2^{-n},
&B=1,\\[1mm]
\displaystyle\frac{\log_2(2Q)}Q+\frac{135}{8}\le17,
&B\ge2,
\end{cases}\\
&\le\frac{1099}{64}Q\log2<12Q.
\end{aligned}
\end{equation}
The first expression is convex in $n\in[2,7]$ and has its larger endpoint
value at $n=7$; the second uses $\log_2(2Q)\le Q/8$ for $Q\ge256$.
This proves part~(a). For part~(b), the same estimates give $g\le17B$,
so rounding $g$ up to an even integer preserves the required bound:
$2\lceil g/2\rceil/B\le(g+1)/B\le18<20$.
\end{proof}

\subsection{Two technical lemmas}\label{app:coupled-technical}
The wider construction first selects a row from an integer address,
then extracts a prescribed phase from a short integer.  The next two
lemmas implement these operations with explicit parameter bounds.
The row selector is also used in the unit-radius construction in
\hyperref[app:unit-radius]{Appendix~\ref*{app:unit-radius}}.

\begin{lemma}[Cyclic triangular selector]\label{lem:cyclic-selector}
Let $W=2s+1\ge3$, let $\bm Q\bm e_j=\bm e_{(j+1)\bmod W}$, and define
\begin{equation}\label{eq:shared-selector-matrices}
\bm G=\bm I_W+\frac12(\bm Q^s+\bm Q^{s+1})
 -\frac{2W}{W^2+1}\one_W\one_W^\top,
\qquad \bm\Gamma=W\bm G,
\qquad \bm\lambda=(j/W)_{j=0}^{W-1}.
\end{equation}
These matrices have the following properties.
\begin{enumerate}[label=(\roman*),leftmargin=2em]
\item For every integer $p$, with $\bm e_p$ interpreted modulo $W$,
\begin{equation}\label{eq:shared-selector-exact}
\bm\Gamma\left(1-\tau\left(\frac{2(p-j)}W\right)\right)_{j=0}^{W-1}
=\bm e_p,
\qquad
\bm G\one_W=\frac{2}{W^2+1}\one_W.
\end{equation}
\item For every real $x=p+t$, where $p\in\mathbb Z$, $0\le t<1$, and
$u=\min\{t,1-t\}$,
\begin{equation}\label{eq:shared-selector-l1}
\left\|\bm\Gamma
\left(1-\tau\left(\frac{2(x-j)}W\right)\right)_{j=0}^{W-1}\right\|_1
=1+\frac{4uW(W-2)}{W^2+1}<3.
\end{equation}
\item The normalized matrix satisfies
\begin{equation}\label{eq:shared-selector-unit-bounds}
\|\bm G\|_{\max}\le1,
\qquad \|2\bm\lambda^\top\bm G\|_\infty<1,
\qquad
\max_t\left\{\sum_{j:G_{jt}>0}G_{jt},
                  \sum_{j:G_{jt}<0}|G_{jt}|\right\}<2.
\end{equation}
\end{enumerate}
\end{lemma}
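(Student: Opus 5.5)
The approach is to exploit the circulant structure. Every matrix in \eqref{eq:shared-selector-matrices} is a polynomial in the cyclic shift $\bm Q$ plus a multiple of $\one_W\one_W^\top$. The vector $\bm v(x)=(1-\tau(2(x-j)/W))_{j=0}^{W-1}$ is a discrete circular hat. All three parts then reduce to one-variable bookkeeping in the circular distance $d(k)=\min\{k,W-k\}$ on $\mathbb Z/W$.

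\emph{Part (i).} For integer $p$, put $k=(p-j)\bmod W$. Since $2k/W\le 2s/W<1$ for $k\le s$, and $2k/W\in(1,2)$ for $k\ge s+1$, we get $\tau(2k/W)=2d(k)/W$. Hence $v_j=1-2d(k)/W$, a hat peaked at $j=p$. Applying $\bm Q^s$ and $\bm Q^{s+1}$ shifts the index to the two positions antipodal to $k$. I will verify the elementary identity
\[
d(k)+\tfrac12\bigl(d(k+s)+d(k+s+1)\bigr)=s+\tfrac12-\tfrac12\one_{\{k=0\}},
\]
checking the cases $k=0$, $1\le k\le s$ and $s+1\le k\le W-1$ separately. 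This gives $(\bm I_W+\frac12(\bm Q^s+\bm Q^{s+1}))\bm v=\kappa\one_W+W^{-1}\bm e_p$ with an explicit constant $\kappa$. Next compute $\one_W^\top\bm v=W-2\sum_k d(k)/W=W-2s(s+1)/W$. The rank-one correction $\frac{2W}{W^2+1}\one_W(\one_W^\top\bm v)$ should cancel $\kappa\one_W$ exactly; I will check this with $W=2s+1$. Multiplying by $W$ then yields $\bm e_p$. The identity $\bm G\one_W=(2-2W^2/(W^2+1))\one_W=\frac{2}{W^2+1}\one_W$ is immediate.

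\emph{Part (ii).} I expect this identity to be the main obstacle. For $x=p+t$, each coordinate $v_j(x)$ is affine in $t\in[0,1]$ unless $2(x-j)/W$ crosses an odd integer. That crossing happens for exactly one index $j_*$, the one with $x-j_*\equiv W/2$, and it occurs at $t=1/2$. So I will write $\bm v(x)=(1-t)\bm v(p)+t\bm v(p+1)+\rho(t)\bm e_{j_*}$, where $\rho$ is a tent of height proportional to $u=\min\{t,1-t\}$; I expect $\rho=-2u/W$ up to sign, to be checked. Part (i) then gives
\[
\bm\Gamma\bm v(x)=(1-t)\bm e_p+t\bm e_{p+1}+\rho\,\bm\Gamma\bm e_{j_*}.
\]
The column $\bm\Gamma\bm e_{j_*}$ is explicit: $W(1-\epsilon)$ at $j_*$, $W(\frac12-\epsilon)$ at the two antipodal positions $j_*+s$ and $j_*+s+1$, which are exactly $p+1$ and $p$, and $-W\epsilon$ elsewhere, where $\epsilon=2W/(W^2+1)$. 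I will then sum the absolute values coordinatewise, tracking signs on $\{p,p+1\}$, on $j_*$ and on the remaining $W-3$ indices. The target is $1+4uW(W-2)/(W^2+1)$. The bound $<3$ follows from $u\le\frac12$ and $2W(W-2)<2(W^2+1)$. This sign bookkeeping is where the exact constants must be matched, and I would first test $W=3$ to calibrate $\rho$.

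\emph{Part (iii).} The entries of $\bm G$ take only three values: $1-\epsilon$, $\frac12-\epsilon$ (twice per column) and $-\epsilon$, with $\epsilon=2W/(W^2+1)\le\frac35$. So $\|\bm G\|_{\max}\le1$ holds directly. In each column the positive part is at most $1+2\cdot\frac12=2$ minus a positive amount, and the negative part is at most $(W-3)\epsilon+\dots<2$; both are checked by direct arithmetic. For $2\bm\lambda^\top\bm G$, the $t$-th entry is $\frac2W\sum_j jG_{jt}$. Using $\sum_j j=W(W-1)/2$ and locating the three special rows, I will bound it in absolute value by an explicit quantity below $1$, handling the worst placement of $t$.
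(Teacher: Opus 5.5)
Your proposal is correct and follows essentially the paper's route. Your identity $d(k)+\tfrac12\bigl(d(k+s)+d(k+s+1)\bigr)=s+\tfrac12-\tfrac12\one_{\{k=0\}}$ is the paper's relation $\kappa_{j-s}+\kappa_{j-s-1}=2-2\kappa_j+\tfrac2W\one_{\{j=0\}}$, which gives your constant $\kappa=1$, cancelled exactly by the rank-one term because $\one_W^\top\bm v=(W^2+1)/(2W)$; part (ii) is the paper's interpolation-plus-correction decomposition with $\rho=-2u/W$ at $j_*=p+s+1$ (the paper first shifts to $p=0$, so $j_*=s+1$), and part (iii) is the same three-valued-column bookkeeping.
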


\begin{proof}
\emph{(i) Integer indices.}
All matrix indices in this proof are taken modulo $W$.
Set $\kappa_j=1-\tau(2j/W)$ and
$\bm K=(\kappa_{p-j})_{p,j=0}^{W-1}$.  The two branches of $\tau$ give
\[
\kappa_j=
\begin{cases}1-2j/W,&0\le j\le s,\\2j/W-1,&s<j<W,\end{cases}
\qquad
\sum_{j=0}^{W-1}\kappa_j
=1+2\sum_{j=1}^{s}\left(1-\frac{2j}{W}\right)
=\frac{W^2+1}{2W}.
\]
To invert $\bm K$, add the two shifts on either side of the opposite
vertex.  Substitution of the same branch formulas yields
\begin{align*}
\kappa_{j-s}+\kappa_{j-s-1}
&=\begin{cases}
2/W,&j=0,\\
(2j+1)/W+(2j-1)/W,&1\le j\le s,\\
1-2(j-s)/W+1-2(j-s-1)/W,&s<j<W
\end{cases}\\*
&=2-2\kappa_j+\frac2W\one_{\{j=0\}}.
\end{align*}
Thus $\bm K(\bm Q^s+\bm Q^{s+1})
=2\one_W\one_W^\top-2\bm K+2\bm I_W/W$ and
$\bm K\one_W=(W^2+1)\one_W/(2W)$.  Substituting these two identities gives
\begin{align*}
\bm K\bm\Gamma
&=W\bm K+\frac W2\bm K(\bm Q^s+\bm Q^{s+1})
 -\frac{2W^2}{W^2+1}(\bm K\one_W)\one_W^\top\\*
&=W\bm K+W\one_W\one_W^\top-W\bm K+\bm I_W
 -W\one_W\one_W^\top\\*
&=\bm I_W.
\end{align*}
Since the matrices are square, $\bm\Gamma\bm K=\bm I_W$ as well.
Its $p$th column gives the first identity in
\eqref{eq:shared-selector-exact}; the second follows from
\[
\bm G\one_W
=\left(1+\frac12+\frac12-\frac{2W^2}{W^2+1}\right)\one_W
=\frac2{W^2+1}\one_W.
\]

\par\medskip\noindent
\emph{(ii) Noninteger indices.}
Write $\bm v(x)=(1-\tau(2(x-j)/W))_{j=0}^{W-1}$.
The identities $\bm v(p+t)=\bm Q^p\bm v(t)$ and
$\bm\Gamma\bm Q^p=\bm Q^p\bm\Gamma$ reduce the norm calculation to
$0\le t<1$.  Put $u=\min\{t,1-t\}$.  Only coordinate $s+1$ changes
slope in this interval: its value is $|2t-1|/W$, instead of the linear
interpolation $1/W$.  Thus
\begin{gather*}
\bm v(t)=(1-t)\bm v(0)+t\bm v(1)
+\frac{|2t-1|-1}{W}\bm e_{s+1}
=(1-t)\bm v(0)+t\bm v(1)-\frac{2u}{W}\bm e_{s+1},\\
\bm G\bm e_{s+1}
=\bm e_{s+1}+\frac12(\bm e_0+\bm e_1)
-\frac{2W}{W^2+1}\one_W,\\
\bm\Gamma\bm v(t)
=(1-t)\bm e_0+t\bm e_1-2u\bm G\bm e_{s+1}
=(1-t-u)\bm e_0+(t-u)\bm e_1-2u\bm e_{s+1}
+\frac{4uW}{W^2+1}\one_W.
\end{gather*}
In particular,
\begin{equation}\label{eq:shared-selector-coordinates}
[\bm\Gamma\bm v(t)]_j=
\begin{cases}
1-t-u+4uW/(W^2+1),&j=0,\\
t-u+4uW/(W^2+1),&j=1,\\
-2u(W-1)^2/(W^2+1),&j=s+1,\\
4uW/(W^2+1),&j\notin\{0,1,s+1\}.
\end{cases}
\end{equation}
Since $u\le t,1-t$, only coordinate $s+1$ can be negative.  Taking
first the signed sum and then the absolute sum gives
\begin{align*}
&\one_W^\top\bm\Gamma\bm v(t)
=1-2u\one_W^\top\bm G\bm e_{s+1}
 =1-\frac{4u}{W^2+1},\\
&\begin{aligned}[t]
\|\bm\Gamma\bm v(t)\|_1
&=\one_W^\top\bm\Gamma\bm v(t)
 -2[\bm\Gamma\bm v(t)]_{s+1}
=1-\frac{4u}{W^2+1}+\frac{4u(W-1)^2}{W^2+1}\\
&=1+\frac{4uW(W-2)}{W^2+1}
\le1+\frac{2W(W-2)}{W^2+1}<3.
\end{aligned}
\end{align*}

\par\medskip\noindent
\emph{(iii) Normalized matrix entries.}
Before subtracting $2W\one_W\one_W^\top/(W^2+1)$, each column of
$\bm G$ has the three nonzero entries $1,1/2,1/2$.  Consequently,
\[
\begin{gathered}
\|\bm G\|_{\max}
\le
\max\left\{
\frac{2W}{W^2+1},\,
1-\frac{2W}{W^2+1},\,
\left|\frac12-\frac{2W}{W^2+1}\right|
\right\}
\le 1,\\
\sum_{j:G_{jt}>0}G_{jt}
\le
1-\frac{2W}{W^2+1}+\frac12+\frac12
=2-\frac{2W}{W^2+1}<2,
\qquad
\sum_{j:G_{jt}<0}|G_{jt}|
\le
\frac{2W^2}{W^2+1}<2.
\end{gathered}
\]
For the remaining row, let $[j]_W\in\{0,\ldots,W-1\}$ denote the
residue of $j$.  The same three entries give
\begin{align*}
(2\bm\lambda^\top\bm G)_t
&=\frac{2t+[t+s]_W+[t+s+1]_W}{W}
 -\frac{4W}{W^2+1}\sum_{j=0}^{W-1}\frac jW\\*
&=-\frac{2W(W-1)}{W^2+1}
 +\begin{cases}
 1+4t/W,&0\le t<s,\\
 2-2/W,&t=s,\\
 4t/W-1,&s<t<W.
 \end{cases}
\end{align*}
The branch values range from $1$ to $3-4/W$.  Hence
\begin{align*}
&-1<-1+\frac{2(W+1)}{W^2+1}
=1-\frac{2W(W-1)}{W^2+1}
\le(2\bm\lambda^\top\bm G)_t,\\*
&(2\bm\lambda^\top\bm G)_t
\le3-\frac4W-\frac{2W(W-1)}{W^2+1}
=1-\frac{2(W^2-W+2)}{W(W^2+1)}<1.
\end{align*}
This proves \eqref{eq:shared-selector-unit-bounds}.
\end{proof}

We will also use one scalar identity to retain a selected value.
For $0\le u,c\le1$, both arguments below lie in $[0,2]$, and
\begin{equation}\label{eq:shared-triangular-gate}
\DTA\left(1+\frac{u-c}{2}\right)
-\DTA\left(1+\frac{u+c}{2}\right)
=\frac{u+c-|u-c|}{2}=\min\{u,c\}.
\end{equation}

The selector determines one row.  We now store the phase list in a
$W\times W$ array of integers, each containing $M$ phases.
The next lemma selects the required integer and the required power of
$G$ without multiplying two network outputs.

\begin{lemma}[Short-code array]\label{lem:short-code-array}
Let $W\ge3$ be odd, let $M\in\mathbb N$, let $g\ge2$ be an even integer,
and set $G=2^g$.  Given integers $0\le m_{pq}<G^M$ for $0\le p,q<W$,
there are explicit affine maps
$\mathcal A:\R^2\to\R^{4W}$ and
$\bm Z,\bm P:\R^{4W}\to\R^W$ with the following properties.
Write $\bm h(x,c)=\DTA(\mathcal A(x,c))$.
\begin{enumerate}[label=(\roman*),leftmargin=2em]
\item If $i=qWM+W(\ell-1)+p+1$, where $0\le p,q<W$ and
$1\le\ell\le M$, then, for $c\in[0,1]$,
\begin{equation}\label{eq:short-code-exact}
[\bm Z(\bm h(i,c))]_p=m_{pq}G^{-\ell},
\qquad \bm P(\bm h(i,c))=\bm e_p,
\qquad h_{4W}(i,c)=c.
\end{equation}
\item For every $x\ge0$ and $c\in\R$,
$\|\bm P(\bm h(x,c))\|_1<3$.
\item The affine maps satisfy
\begin{equation}\label{eq:short-code-radii}
\parnorm{\mathcal A}\le\max\{gWM,2W^2M\},
\qquad \parnorm{\bm Z}\le4W^2MG^M,
\qquad \parnorm{\bm P}\le W.
\end{equation}
\end{enumerate}
\end{lemma}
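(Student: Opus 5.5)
The plan is to give the $4W$ hidden coordinates of $\bm h(x,c)$ four roles: $W$ \emph{residue} units, $W$ \emph{row-block} units, $2W-1$ \emph{exponent} units, and one carrier unit. Write $i-1=qWM+W(\ell-1)+p$. The carrier is easiest: take the last coordinate of $\mathcal A(x,c)$ to be $c$ itself, so $h_{4W}=\DTA(c)=c$ for $c\in[0,1]$. This gives the third identity in \eqref{eq:short-code-exact}.

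\emph{Residue units and $\bm P$.} For $0\le j<W$, set the $j$th preactivation to $2(x-1-j)/W+2$. This is nonnegative for $x\ge0$, so the unit outputs $\tau(2(x-1-j)/W)$; the even shift does not matter because $\tau$ is $2$-periodic. Then define $\bm P(\bm h)=\bm\Gamma(\one_W-\bm h_{\mathrm{res}})$, with $\bm\Gamma=W\bm G$ from Lemma~\ref{lem:cyclic-selector}. At $x=i$ we have $x-1\equiv p\pmod W$, so Lemma~\ref{lem:cyclic-selector}(i) gives $\bm P=\bm e_p$ exactly. Lemma~\ref{lem:cyclic-selector}(ii) gives $\|\bm P\|_1<3$ for every real $x\ge0$, which is part (ii). Since $\|\bm\Gamma\|_{\max}\le W$, the bias $\bm\Gamma\one_W=\tfrac{2W}{W^2+1}\one_W$ is small, and $\mathcal A$ on these rows has entries at most $2/W$ and $2$, all three radii in \eqref{eq:short-code-radii} hold for this part.

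\emph{Exponent units and $\bm Z$.} The key observation is that $g$ is even. For an integer $n\ge0$ this gives $\DTA(gn)=\tau(gn)=0$, while for an integer $n\ge1$ we have $\DTA(-gn)=G^{-n}-1$. So a unit with preactivation $-g\,n(x)$, where $n(x)$ is an affine function that is an integer at table inputs, acts as an exact one-sided switch: it returns $G^{-n}-1$ when $n\ge1$ and $0$ when $n\le0$. I will use units whose preactivations are $-\tfrac gW\bigl(x-1-j-W(q'M)\bigr)-g$ (these have radius $\le gWM$), possibly also with shifts proportional to $2W^2M$. At the relevant $j=p$ they read $-g(qM+\ell-q'M)$. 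Together with their integer-valued companions, which switch off cleanly for $q'>q$, these units isolate the block $q'=q$ and output $G^{-\ell}$ up to the known constant $-1$. Coordinate $p$ of $\bm Z$ is then an affine combination of these units with coefficients $m_{pq'}$ (each $<G^M$), together with bias terms that cancel the $-1$ offsets. Counting the combination weights gives $\parnorm{\bm Z}\le4W^2MG^M$. For $j\ne p$ the value $[\bm Z]_j$ may be arbitrary, because downstream it is multiplied by the selector through the triangular gate \eqref{eq:shared-triangular-gate}. This is why only the $p$th coordinate is claimed in \eqref{eq:short-code-exact}.

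\emph{Main obstacle.} The difficulty is making $[\bm Z]_p$ \emph{exact}, with only $O(W)$ exponent units shared across all rows $p$ and coefficients of size $G^M$ rather than $G^{WM}$. A telescoping sum over $q'\le q$ would require coefficients of order $G^{q'M}$, which is too large. The first thing I will try is to difference two adjacent switches: units at thresholds $q'M$ and $q'M+M$ give $G^{-(\ell+(q-q')M)}$-type terms that vanish exactly for $q'>q$. With radius $G^M$, scaling by $G^{M}$ reduces the terms with $q'<q$ to multiples of $G^{-\ell}$ that can be absorbed row by row. If that bookkeeping fails, I will fall back on a second cyclic selector in $q$, with period $WM$ and $W$ units, to produce $\bm e_q$. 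The exponent units then only need to produce $G^{-\ell}$, and the code $m_{pq}$ would be inserted linearly through the selected coordinate. This is exact because $G^{-\ell}$ is then a single shared unit value multiplied by a parameter.
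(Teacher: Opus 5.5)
\paragraph{Gap in the construction of $\bm Z$.}
Your carrier and residue parts are fine. The shift $+2$ already keeps the residue preactivations nonnegative for $x\ge0$, so $\bm P(\bm h(i,c))=\bm e_p$, part~(ii), and $\parnorm{\bm P}\le W$ all go through. The gap is $\bm Z$, which is the real content of the lemma, and you leave it as a plan whose options do not work.

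\emph{Too many units.} The units $-\frac gW(x-1-j-Wq'M)-g$ have an integer exponent only at $j=p$. You would therefore need one unit for every pair $(j,q')$, about $W^2$ units, against your budget of $2W-1$ in a layer of width $4W$.

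\emph{Shared units cannot be clean switches.} An affine function of $x$ that is constant on the $W$ consecutive addresses differing only in $p$ is constant. Hence a shared unit of slope $-g/W$, such as $\DTA(-gx/W+gq'M)$, sees at $x=i$ the exponent $gM(q'-q)-g(\ell-1)-g(p+1)/W$. On the negative branch this is generally not of the form $G^{-n}-1$. On the positive branch the unit equals $\tau(g(p+1)/W)$, which is generally not $0$.

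\emph{The differencing idea stalls.} You propose rescaling the $q'<q$ terms by $G^M$ and absorbing them row by row. Even with clean switches, any fixed combination leaves constants that depend on where $q'$ lies relative to $q$. A fixed affine map can cancel them only if the same hidden layer supplies the indicator of $\{q'\le q\}$, and you never construct it.

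\emph{The fallback fails outright.} Given $\bm e_q$ and a unit equal to $G^{-\ell}$ in the same hidden layer, $\bigl(\sum_{q'}m_{pq'}(\bm e_q)_{q'}\bigr)G^{-\ell}$ is a product of two hidden quantities, not an affine function of $\bm h$. Avoiding such a product is exactly the purpose of this lemma.

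\paragraph{What the paper's construction supplies.}
It has three ingredients you are missing.
\begin{enumerate}
\item It uses the freedom in rows $j\ne p$ in the opposite direction. It keeps $W+1$ shared units $u_{q'}=\DTA(-gx/W+gq'M)$. It then repairs their row-dependent defect with row-dependent constants in row $p$ of $\bm Z$: the scaling $\Lambda_{pp}=2^{g(p+1)/W}/G$ and the correction $C_{pp}=c_p=\tau(g(p+1)/W)$.
\item The weighted differences $u_{q'}-G^{-M}u_{q'+1}$ cancel the geometric terms exactly for $q'<q$. With $\xi=2^{-g(\ell-1)-g(p+1)/W}$, they leave $-(1-G^{-M})$ for $q'<q$, $\xi-1-G^{-M}c_p$ for $q'=q$, and $(1-G^{-M})c_p$ for $q'>q$.
\item These $q$-dependent constants are removed by the exact column indicator $\bm T=(\one_{\{j\le q\}})_{j=0}^{W}$, which is affine in $\bm h$. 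It is built from $2W-2$ units $\DTA(1+x/(2W^2M)-j/(2W))$ and $\DTA(1+(x-1)/(2W^2M)-j/(2W))$ that straddle the kink of $\tau$ at $1$, as $W^2M(v^{(1)}_j-v^{(0)}_j)+\tfrac12$. This is what your unspecified ``row-block units'' would have to do.
\end{enumerate}
Set $J_{q'j}=\one_{\{j=q'\}}-G^{-M}\one_{\{j=q'+1\}}$. Then $\bm Z(\bm h)=\bm\Lambda\{\bm m\bm J\bm u+(\bm I_W+\bm C)\bm m\bm J\bm T(\bm h)-(1-G^{-M})\bm C\bm m\one_W\}$ is affine and returns exactly $\Lambda_{pp}m_{pq}\xi=m_{pq}G^{-\ell}$ in row $p$. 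The bound $\parnorm{\bm Z}\le4W^2MG^M$ can only be checked once such an explicit $\bm Z$ is written down; in your proposal it is asserted without one.
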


\begin{proof}
\emph{Step 1: define the affine maps.}
Use $\bm\Gamma$ from Lemma~\ref{lem:cyclic-selector} and let
$\bm m=(m_{pq})_{p,q=0}^{W-1}$.  Define
\begin{equation}\label{eq:short-code-auxiliary-matrices}
\begin{aligned}
&J_{qj}=\one_{\{j=q\}}-G^{-M}\one_{\{j=q+1\}},
\qquad 0\le q<W,\quad0\le j\le W,\\
&\bm C=\operatorname{diag}\left(\tau\left(\frac{g(p+1)}W\right)\right)_{p=0}^{W-1},
\qquad \bm\Lambda=\operatorname{diag}\left(\frac{2^{g(p+1)/W}}G\right)_{p=0}^{W-1}.
\end{aligned}
\end{equation}
The $W\times(W+1)$ matrix $\bm J$ takes adjacent differences, with
factor $G^{-M}$ on the second term.  The address map has five groups:
\begin{equation}\label{eq:short-code-address-map}
\mathcal A(x,c)=
\renewcommand{\arraystretch}{1.35}
\begin{pmatrix}
\left(-\frac{gx}{W}+gqM\right)_{q=0}^{W}\\
\left(1+\frac{x}{2W^2M}-\frac{q}{2W}\right)_{q=1}^{W-1}\\
\left(1+\frac{x-1}{2W^2M}-\frac{q}{2W}\right)_{q=1}^{W-1}\\
\left(2W^2M+\frac{2(x-j-1)}{W}\right)_{j=0}^{W-1}\\
c
\end{pmatrix}.
\end{equation}
For $\bm h=(\bm u,\bm v^{(0)},\bm v^{(1)},\bm r,c)$, with block
lengths $W+1,W-1,W-1,W,1$, set
\begin{equation}\label{eq:short-code-affine-maps}
\begin{aligned}
&\bm T(\bm h)=
\begin{pmatrix}1\\W^2M(\bm v^{(1)}-\bm v^{(0)})+\frac12\one_{W-1}\\0\end{pmatrix},
\qquad \bm P(\bm h)=\bm\Gamma(\one_W-\bm r),\\
&\bm Z(\bm h)=\bm\Lambda\left\{
\bm m\bm J\bm u+(\bm I_W+\bm C)\bm m\bm J\bm T(\bm h)
 -(1-G^{-M})\bm C\bm m\one_W\right\}.
\end{aligned}
\end{equation}
All parameters are fixed by the prescribed integers, so these maps
are affine in $\bm h$.  We verify that $\bm T$ identifies the column,
$\bm P$ selects the row, and the adjacent differences in $\bm Z$
retain the required power of $G$.

\par\medskip\noindent
\emph{Step 2: locate the row and column.}
Fix $i=qWM+W(\ell-1)+p+1$ in the stated ranges.
For $1\le j<W$, the middle preactivations are
\begin{align*}
&a_j^{(0)}
=1+\frac{i-jWM}{2W^2M}
 =1+\frac{q-j}{2W}+\frac{W(\ell-1)+p+1}{2W^2M},\\
&a_j^{(1)}
=1+\frac{i-1-jWM}{2W^2M}
 =a_j^{(0)}-\frac1{2W^2M}.
\end{align*}
Since $qWM<i\le(q+1)WM$, both belong to $(1/2,3/2)$; for $j\le q$
both are at least one, and for $j>q$ both are at most one.  Using the
two linear branches of $\DTA$ on $[0,2]$ therefore gives
\begin{align*} &W^2M(v_j^{(1)}-v_j^{(0)}) =W^2M\begin{cases} (2-a_j^{(1)})-(2-a_j^{(0)}),&j\le q,\\ a_j^{(1)}-a_j^{(0)},&j>q \end{cases} =\begin{cases}1/2,&j\le q,\\-1/2,&j>q,\end{cases}\\ &\bm T(\bm h(i,c)) =(\one_{\{j\le q\}})_{j=0}^{W}. \end{align*}
The fourth group of preactivations is nonnegative.  Its activated
coordinates satisfy
\[
\begin{gathered}
r_j
=
\tau\left(2W^2M+\frac{2(i-j-1)}W\right)
=
\tau\left(2W^2M+2qM+2(\ell-1)+\frac{2(p-j)}W\right)
=
\tau\left(\frac{2(p-j)}W\right),\\[-0.3ex]
\bm P(\bm h(i,c))
=
\bm\Gamma
\left(
1-\tau\left(\frac{2(p-j)}W\right)
\right)_{j=0}^{W-1}
=
\bm e_p.
\end{gathered}
\]
Here the last equality is Lemma~\ref{lem:cyclic-selector}(i).

\par\medskip\noindent
\emph{Step 3: recover the selected short integer.}
Put $\xi=2^{-g(\ell-1)-g(p+1)/W}$ and
$c_p=\tau(g(p+1)/W)$.  The first group has preactivation
\[
-gi/W+gjM=gM(j-q)-g(\ell-1)-g(p+1)/W.
\]
It is negative for $j\le q$ and nonnegative for $j>q$.
In the latter case, $gM(j-q)-g(\ell-1)$ is even.  Using the negative
branch of $\DTA$, and the evenness and period two of $\tau$, gives
\begin{equation}\label{eq:short-code-geometric-tail}
u_j=\begin{cases}
 G^{M(j-q)}\xi-1,&j\le q,\\
 \tau(-g(p+1)/W)=c_p,&j>q.
\end{cases}
\end{equation}
Write $\bm T=\bm T(\bm h(i,c))$.  The adjacent differences are
\begin{align*}
(\bm J\bm u)_{q'}
&=u_{q'}-G^{-M}u_{q'+1}\\*
&=\begin{cases}
 (G^{M(q'-q)}\xi-1)-G^{-M}(G^{M(q'+1-q)}\xi-1),&q'<q,\\
 \xi-1-G^{-M}c_p,&q'=q,\\
 c_p-G^{-M}c_p,&q'>q
 \end{cases}\\*
&=\begin{cases}
 -(1-G^{-M}),&q'<q,\\
 \xi-1-G^{-M}c_p,&q'=q,\\
 (1-G^{-M})c_p,&q'>q.
 \end{cases}
\end{align*}
whereas
\begin{equation}\label{eq:short-code-adjacent-differences}
(\bm J\bm T)_{q'}
=\one_{\{q'\le q\}}-G^{-M}\one_{\{q'+1\le q\}}
=\begin{cases}1-G^{-M},&q'<q,\\1,&q'=q,\\0,&q'>q.\end{cases}
\end{equation}
The correction in \eqref{eq:short-code-affine-maps} cancels the unwanted
terms on both sides of the selected column:
\begin{align*}
&(\bm J\bm u)_{q'}+(1+c_p)(\bm J\bm T)_{q'}-(1-G^{-M})c_p\\*
&\quad=\begin{cases}
 -(1-G^{-M})+(1+c_p)(1-G^{-M})-(1-G^{-M})c_p=0,&q'<q,\\
 \xi-1-G^{-M}c_p+1+c_p-(1-G^{-M})c_p=\xi,&q'=q,\\
 (1-G^{-M})c_p-(1-G^{-M})c_p=0,&q'>q.
 \end{cases}
\end{align*}
Substituting in the $p$th row of $\bm Z$ now yields
\begin{align*}
[\bm Z(\bm h(i,c))]_p
&=\frac{2^{g(p+1)/W}}G\sum_{q'=0}^{W-1}m_{pq'}
 \bigl\{(\bm J\bm u)_{q'}+(1+c_p)(\bm J\bm T)_{q'}
 -(1-G^{-M})c_p\bigr\}\\*
&=\frac{2^{g(p+1)/W}}Gm_{pq}\xi
 =m_{pq}2^{g(p+1)/W-g-g(\ell-1)-g(p+1)/W}\\
&=m_{pq}G^{-\ell}.
\end{align*}
The carried coordinate satisfies $h_{4W}(i,c)=\DTA(c)=c$.
For real $x\ge0$, all fourth-group preactivations are at least
$2W^2M-2\ge0$, independently of $c$.  Hence
\[
\|\bm P(\bm h(x,c))\|_1
=\left\|\bm\Gamma
 \left(1-\tau\left(\frac{2(x-1-j)}W\right)\right)_{j=0}^{W-1}\right\|_1<3
\]
by Lemma~\ref{lem:cyclic-selector}(ii).  This proves parts (i) and (ii).

\par\medskip\noindent
\emph{Step 4: bound the affine parameters.}
Write $\mathcal A(x,c)=\bm a x+\bm e_{4W}c+\bm b$.
Formula~\eqref{eq:short-code-address-map} gives
\begin{align*}
&\bm a
=\left(-\frac gW\one_{W+1}^{\top},\,
 \frac1{2W^2M}\one_{W-1}^{\top},\,
 \frac1{2W^2M}\one_{W-1}^{\top},\,
 \frac2W\one_W^{\top},\,0\right)^\top,\qquad
 \|\bm a\|_\infty=g/W,\\
&\|\bm b\|_\infty\le\max\{gWM,2W^2M\}, \qquad
\parnorm{\mathcal A}
=\max\{\|\bm a\|_\infty,1,\|\bm b\|_\infty\}
 \le\max\{gWM,2W^2M\}.
\end{align*}
For $\bm Z$, let $\bm J^\circ$ consist of columns $1,\ldots,W-1$
of $\bm J$, and set $\bm t_0=(1,\one_{W-1}^{\top}/2,0)^\top$.
The weight matrix and bias in $\bm Z(\bm h)=\bm Z_0\bm h+\bm z_0$
are
\begin{equation}\label{eq:short-code-coefficient-blocks}
\begin{aligned}
&\bm Z_0
=\bm\Lambda\bigl[\,
 \bm m\bm J,\,
 -W^2M(\bm I_W+\bm C)\bm m\bm J^\circ,\,
 W^2M(\bm I_W+\bm C)\bm m\bm J^\circ,\,
 \bm0_{W\times W},\,\bm0_{W\times1}\,\bigr],\\
&\bm z_0
=\bm\Lambda(\bm I_W+\bm C)\bm m\bm J\bm t_0
 -(1-G^{-M})\bm\Lambda\bm C\bm m\one_W.
\end{aligned}
\end{equation}
Since $0\le m_{pq}<G^M$, the adjacent columns satisfy
\begin{align*}
&(\bm m\bm J)_{pj}
=\begin{cases}
 m_{p0},&j=0,\\
 m_{pj}-G^{-M}m_{p,j-1},&1\le j<W,\\
 -G^{-M}m_{p,W-1},&j=W,
 \end{cases}
\qquad \|\bm m\bm J\|_{\max}\le G^M,\\*
&\|\bm J\bm t_0\|_1
=1-\frac{G^{-M}}2+\frac{W-2}{2}(1-G^{-M})+\frac12
 =\frac{W+1-(W-1)G^{-M}}2\le W.
\end{align*}
Also $0<\Lambda_{pp}\le1$, $0\le C_{pp}\le1$.
Thus every weight and bias in \eqref{eq:short-code-coefficient-blocks}
is bounded as follows:
\[
\begin{gathered}
\|\bm Z_0\|_{\max}
\le
\max\{G^M,2W^2MG^M\}
=
2W^2MG^M,\\[-0.2ex]
\|\bm z_0\|_\infty
\le
2G^M\|\bm J\bm t_0\|_1+WG^M
\le
3WG^M,\\[-0.2ex]
\parnorm{\bm Z}
\le
\max\{2W^2MG^M,3WG^M\}
\le
4W^2MG^M.
\end{gathered}
\]
Finally, $\bm P$ has weight block $-\bm\Gamma$ on $\bm r$, zero on
the other blocks, and bias $\bm\Gamma\one_W$.  Therefore
\[
\parnorm{\bm P}
=\max\{\|\bm\Gamma\|_{\max},\|\bm\Gamma\one_W\|_\infty\}
\le\max\left\{W,\frac{2W}{W^2+1}\right\}=W.\qedhere
\]
\end{proof}

\subsection{Proof of Proposition~\ExternalNumber{4}{prop:block-decoder}, part~(b)}\label{app:coupled-b-proof}
\begin{proof}[Proof of Proposition~\ExternalNumber{4}{prop:block-decoder}, part~(b)]
We distribute the phase list of part~(a) among $W^2$ shorter codes.
Lemma~\ref{lem:short-code-array} lets us read them in the same four hidden
layers, while the cyclic selector of Lemma~\ref{lem:cyclic-selector}
controls the output between integer addresses.

\par\medskip\noindent
\emph{Step 1: distribute the phases.}
For now, assume $N\ge12$ and $Q\ge9$, and take $W$ to be the largest
odd integer not exceeding $\min\{N/4,\sqrt Q\}$. Then $W\ge3$,
$4W\le N$, and $W^2\le Q$. The remaining cases are treated at the end.
Keep $B$, $w(b)$, $y(b)$, and the ordered phase list from part~(a), and set
\[
P=\frac QB+B3^{B-1},\qquad M=\left\lceil\frac P{W^2}\right\rceil.
\]
Recall from part~(a) that $4B^2 3^{B-1}\le H\le Q$, so
$P\le5Q/(4B)$. Extend the list by $1/4$ to length $W^2M$, and denote its entries by
$\theta_1,\ldots,\theta_{W^2M}$. Round the exponent from part~(a) up to
an even integer $g$. Its estimates give
$g\le20B$ and $1024\,3^{B-1}U\le G:=2^g<4096\,3^{B-1}U$.
For each $0\le p,q<W$, apply Lemma~\ref{lem:guarded-code} to the $M$
phases at addresses $i=qWM+W(\ell-1)+p+1$, $1\le\ell\le M$.
There are integers $m_{pq}$ and $k_{pq\ell}\ge0$ such that
\begin{equation}\label{eq:matrix-code}
0\le m_{pq}<G^M,\qquad
m_{pq}G^{-\ell}=k_{pq\ell}+\theta_i+e_i,\qquad
|e_i|\le\frac1{2G}.
\end{equation}
Use these integers in Lemma~\ref{lem:short-code-array}, and retain its
affine maps $\mathcal A$, $\bm Z$, and $\bm P$.

\par\medskip\noindent
\emph{Step 2: read the first phase and recover the second address.}
To extract a selected row, we use two copies of each triangular branch,
one shifted by the selector. If the selector equals $\bm e_p$ and
$F_p=2k+v$, where $k\in\mathbb N_0$ and $v\in[0,1]$, then
\begin{equation}\label{eq:matrix-paired-read}
\sum_{j=0}^{W-1}\left\{\DTA(F_j)-\DTA\left(F_j+(\bm e_p)_j\right)\right\}
=\DTA(F_p)-\DTA(F_p+1)=2v-1.
\end{equation}
All other rows cancel. Accordingly, set $D_1(b,\nu)=\mathcal A(b+1,\nu)$
and, with $c=h_{4W}$, define
\begin{equation}\label{eq:matrix-D12-new}
D_2(\bm h)=
\begin{pmatrix}
8\,3^{B-1}\bm Z(\bm h)+2\one_W\\
8\,3^{B-1}\bm Z(\bm h)+\bm P(\bm h)+2\one_W\\
2\bm Z(\bm h)+\left(2+\frac{c}{2\,3^{B-1}}\right)\one_W\\
2\bm Z(\bm h)+\bm P(\bm h)+\left(2+\frac{c}{2\,3^{B-1}}\right)\one_W
\end{pmatrix}.
\end{equation}
Write $\bm h_1=\DTA(D_1(b,\nu))$ and
$\bm h_2=\DTA(D_2(\bm h_1))=(\bm a_1,\bm b_1,\bm a_2,\bm b_2)$.
For $\bm h=(\bm a_1,\bm b_1,\bm a_2,\bm b_2)$, let
$H_j(\bm h)=\frac12+\frac12\one_W^\top(\bm a_j-\bm b_j)$, $j=1,2$,
and abbreviate $H_j=H_j(\bm h_2)$.

Fix integers $0\le b<Q/B$ and $0\le r<B$, and take
$\nu=(r+1/2)/B$. At address $b+1$, the short-code lemma gives
$\bm P(\bm h_1)=\bm e_p$ and
$[\bm Z(\bm h_1)]_p=k+\theta_{b+1}+e_{b+1}$ for some row $p$ and
integer $k\ge0$, while $h_{1,4W}=\nu$.
The branch check in Step~3 of part~(a) still applies, since the phase
error bound and the lower bound on $G$ are unchanged. Thus
\eqref{eq:matrix-paired-read} yields
\[
H_1=y(b)+8\,3^{B-1}e_{b+1},\qquad
H_2=\frac{4w(b)+2y(b)+4\nu}{8\,3^{B-1}}+2e_{b+1},
\]
with $0<H_1,H_2<1$. As in part~(a), the common phase error cancels
in the affine address
$I(\bm h)=Q/B+B\left(2\,3^{B-1}H_2(\bm h)-H_1(\bm h)/2\right)+1/2$:
\begin{equation}\label{eq:matrix-second-address}
\begin{aligned}
I(\bm h_2)
&=\frac QB+B\left(w(b)+\frac{y(b)}2+\nu+4\,3^{B-1}e_{b+1}
 -\frac{y(b)}2-4\,3^{B-1}e_{b+1}\right)+\frac12\\
&=\frac QB+Bw(b)+r+1.
\end{aligned}
\end{equation}
This is an integer in $[Q/B+1,P]\subset[1,W^2M]$, so it can be read
exactly by the same array.

\par\medskip\noindent
\emph{Step 3: read the second phase and reconstruct the table value.}
Use the address just obtained and carry $H_1$ through the next layer.
The remaining affine maps are
\begin{gather}
D_3(\bm h)=\mathcal A\left(I(\bm h),H_1(\bm h)\right),\qquad
D_4(\bm h)=
\begin{pmatrix}
2\bm Z(\bm h)+2\one_W\\
2\bm Z(\bm h)+\bm P(\bm h)+2\one_W\\
1+h_{4W}
\end{pmatrix},
\label{eq:matrix-D34-new}\\[1mm]
D_5(\bm a_3,\bm b_3,c_2)
=\frac BU\one_W^\top(\bm a_3-\bm b_3)-2c_2+1.
\label{eq:matrix-D5-new}
\end{gather}
Set $\bm h_3=\DTA(D_3(\bm h_2))$,
$\bm h_4=\DTA(D_4(\bm h_3))=(\bm a_3,\bm b_3,c_2)$, and
$\mathcal D_N=D_5(\bm h_4)$. The active hidden widths are
$(4W,4W,4W,2W+1)$; since $\DTA(0)=0$, zero padding gives width
exactly $N$ in every hidden layer.

At a table input, $0<H_1<1$ ensures that the carried coordinate passes
unchanged through the third activation. Apply the short-code lemma
at $(I(\bm h_2),H_1)$, and write $I=I(\bm h_2)$.
The second phase in \eqref{eq:block-code-relations}, the branch check
in Step~4 of part~(a), and \eqref{eq:matrix-paired-read} give
\[
\one_W^\top(\bm a_3-\bm b_3)=\frac{s(w(b),r)}B+4e_I,
\qquad c_2=\DTA(1+H_1)=1-H_1.
\]
Substitution into $D_5$ recovers the same value as in part~(a):
\[
\mathcal D_N\left(b,\frac{r+1/2}{B}\right)
=2y(b)-1+\frac{s(w(b),r)}U
 +16\,3^{B-1}e_{b+1}+\frac{4B}{U}e_I.
\]
Since $a_{bB+r}=a_{bB}+s(w(b),r)$ and
$|2y(b)-1-a_{bB}/U|\le1/(32U)$, the error satisfies
\begin{equation}\label{eq:matrix-table-error}
\begin{aligned}
\left|\mathcal D_N\left(b,\frac{r+1/2}{B}\right)-\frac{a_{bB+r}}U\right|
&\le\frac1{32U}+\frac{8\,3^{B-1}}G+\frac{2B}{GU}
\le\frac1{32U}+\frac1{128U}+\frac1{512U^2}\\
&\le\frac{81}{2048U}<\frac1{25U}.
\end{aligned}
\end{equation}
Distributing the phases therefore introduces no additional error.

\par\medskip\noindent
\emph{Step 4: bound the output between table inputs.}
Take real $b\ge-1$ and $\nu\in[0,1]$. The first address $b+1$ is
nonnegative. The $1$-Lipschitz property of $\DTA$ and the selector bound
in Lemma~\ref{lem:short-code-array}(ii) imply, for $j=1,2$,
\[
\left|H_j-\frac12\right|
\le\frac12\sum_{p=0}^{W-1}|a_{j,p}-b_{j,p}|
\le\frac12\sum_{p=0}^{W-1}|[\bm P(\bm h_1)]_p|<\frac32.
\]
Hence $-1<H_1,H_2<2$. Since $Q/B\ge4B3^{B-1}$, the second address
remains positive:
\[
\begin{aligned}
I(\bm h_2)
&=\frac QB+B\left(2\,3^{B-1}H_2-\frac12H_1\right)+\frac12
>\frac QB-2B3^{B-1}-B+\frac12\\
&\ge2B3^{B-1}-B+\frac12\ge B+\frac12>0.
\end{aligned}
\]
The same selector bound therefore applies to the second read, even
though the carried scalar $H_1$ need not lie in $[0,1]$.
Here $c_2=\DTA(1+\DTA(H_1))=1-|\DTA(H_1)|\in[0,1]$, because
$\DTA(\mathbb R)\subset(-1,1]$. Consequently,
\[
\begin{aligned}
|\mathcal D_N(b,\nu)|
&\le\frac BU\sum_{p=0}^{W-1}|a_{3,p}-b_{3,p}|+|1-2c_2|
\le\frac BU\sum_{p=0}^{W-1}|[\bm P(\bm h_3)]_p|+1
<1+\frac{3B}{U}.
\end{aligned}
\]

\par\medskip\noindent
\emph{Step 5: bound the parameters and cover every width.}
First note that $B\le\sqrt Q/2$ and $W\le\sqrt Q$, so $BW\le Q/2$.
The definition of $M$ then gives
\begin{gather*}
W^2M\le P+W^2\le\frac{5Q}{4B}+Q\le\frac{9Q}{4},\\
BWM\le\frac{BP}{W}+BW\le\frac{5Q}{4W}+\frac Q2\le Q.
\end{gather*}
Write the address map from \eqref{eq:short-code-address-map} as
$\mathcal A(x,c)=\bm a x+\bm e_{4W}c+\bm b$.
It has $a_{4W}=b_{4W}=0$, $|a_j|\le g/W$, and
$|b_j|\le\max\{gWM,2W^2M\}\le20Q$.
Thus the address and the carried scalar enter disjoint coordinates.

For the encoder interface, write
$A_3^{\mathrm{bl}}(\bm z)=(L_1\bm z+\beta_1,L_2\bm z+\beta_2)^\top$.
Equation~\eqref{eq:block-encoder-A3-matrix} gives
$|(L_1)_k|\le2Q/B$, $|\beta_1+1|\le Q/B$, and
$|(L_2)_k|,|\beta_2|\le1$. In particular, the shift $b+1$ in $D_1$
cancels the constant $-1$ in the encoder. For $Q=H^d$ this yields
\[
\begin{aligned}
\parnorm{D_1\circ A_3^{\mathrm{bl}}}
&\le\max\left\{\frac{2gQ}{WB},\frac{gQ}{WB}+20Q,1\right\}
\le\max\left\{\frac{40Q}{W},\frac{20Q}{W}+20Q\right\}
\le30H^d.
\end{aligned}
\]
For $D_3$, the parameters of $I$ are bounded by
$Q/B+B3^{B-1}+1/2$, and those of $H_1$ by $1/2$. The same coordinate
separation gives
\[
\parnorm{D_3}
\le\frac gW\left(\frac QB+B3^{B-1}+\frac12\right)+20Q
\le\frac{55Q}{2W}+20Q<30Q.
\]
For the remaining maps, use
$\parnorm{\bm Z}\le4W^2MG^M$ and $\parnorm{\bm P}\le W$ from
Lemma~\ref{lem:short-code-array}(iii) to obtain
\[
\begin{aligned}
\max_{j\in\{2,4,5\}}\parnorm{D_j}
&\le8\,3^{B-1}\parnorm{\bm Z}+\parnorm{\bm P}+3
\le32\,3^{B-1}W^2MG^M+W+3\\
&\le72\,3^{B-1}QG^M+W+3
\le75\,3^{B-1}QG^M.
\end{aligned}
\]
This also bounds $D_3$. Now use $M\le P/W^2+1$, $g\le20B$, and
$G<4096\,3^{B-1}U$. Since $3^{B-1}<2U^{2/5}$ by part~(a),
\begin{equation}\label{eq:matrix-radius-W}
\begin{aligned}
\max_{2\le j\le5}\log\parnorm{D_j}
&\le M\log G+\log Q+(B-1)\log3+\log75\\
&\le\frac{5Q}{4BW^2}\,20B\log2+\log Q
 +\log\left(75\,3^{B-1}G\right)\\
&\le25\log2\,\frac Q{W^2}+\log Q+\frac95\log U+\log(1228800).
\end{aligned}
\end{equation}

We now check that the asserted bounds hold for every $N\ge2$.
If $2\le N\le11$ or $Q<9$, use the scalar decoder from part~(a),
padded with zeros. Its approximation and output bounds are sufficient.
Moreover, $\log T_{\mathcal D_2}\le12Q$ is at most $1452Q/N^2$
when $N\le11$, and at most $96$ when $Q<9$.
The encoder cancellation above gives
$\parnorm{D_1\circ A_3^{\mathrm{bl}}}\le2gQ/B\le40Q$.
Thus both parameter bounds hold in these cases with $C_{\mathrm D}=1500$.

For $N\ge12$ and $Q\ge9$, the choice of the largest odd integer $W$ gives
\begin{gather}\label{eq:matrix-width-bounds}
W\le\min\{N/4,\sqrt Q\}<W+2\le\frac{5W}{3},\nonumber\\
\frac Q{W^2}\le\frac{25}{9}\max\left\{\frac{16Q}{N^2},1\right\}
\le\frac{400}{9}\frac Q{N^2}+\frac{25}{9}.
\end{gather}
Substituting into \eqref{eq:matrix-radius-W}, with the same
$C_{\mathrm D}=1500$, we conclude that
\begin{equation}\label{eq:matrix-final-radius}
\begin{aligned}
\max_{2\le j\le5}\log\parnorm{D_j}
&\le\frac{10000\log2}{9}\frac Q{N^2}+\log Q+\frac95\log U
 +\frac{625\log2}{9}+\log(1228800)\\
&\le C_{\mathrm D}\frac Q{N^2}+\log Q+\frac95\log U+C_{\mathrm D}.
\end{aligned}
\end{equation}
Zero padding changes neither the output nor the parameter bounds.
\end{proof}

\section{Proof of Theorem~\ExternalNumber{7}{thm:unit-radius}}
\label{app:unit-radius}
\begingroup
\UnitRadiusLayout{}
We first produce normalized grid indices, then fit the values assigned
to them, using only parameters of magnitude \emph{at most one}.  Repeated
coordinates replace large multipliers, and
Lemma~\ref{lem:cyclic-selector} selects the stored values.  We then
compose the two networks and check the resulting affine interface.

For $k\in\mathbb Z$ and $0\le u\le1$, we use
\begin{equation}\label{eq:unit-tau-branches}
\tau(2k+u)=u,
\qquad \tau(2k+1+u)=1-u.
\end{equation}
\subsection{The unit-radius encoder}
Recall the trifling region $\Omega_{K,\delta}([0,1]^d)$ from
Section~\ExternalNumber{3}{sec:approximation}.  The following encoder differs from the
staircase used in Proposition~\ExternalNumber{2}{prop:grid-encoder}: its scalar output is
normalized by $K$, and every affine parameter has absolute value at most
one.

\begin{samepage}
\begin{proposition}[Normalized unit-radius grid encoder]
\label{prop:unit-encoder}
Let $d,m,K\in\mathbb N$, let $m,K\ge2$ be powers of two, and let
$0<\delta<1/K$.  Set $q_K=\ceil{\log_m K}$ and
$q_\delta=\ceil{\log_m(K\delta)^{-1}}$.  There is an explicit $\DTA$ network
$\mathcal E^{\mathrm{ur}}_{K,\delta}:[0,1]^d\to[0,1]^d$ of width at most
$d(2m+1)$, hidden depth at most $1+q_K+q_\delta$, and parameter radius at most one,
such that
\begin{equation}\label{eq:unit-encoder-exact}
\begin{aligned}
&\mathcal E^{\mathrm{ur}}_{K,\delta}(\bm x)
=\frac1K\bigl(\floor{Kx_1},\ldots,\floor{Kx_d}\bigr)
\qquad  (\bm x\notin\Omega_{K,\delta}([0,1]^d)).
\end{aligned}
\end{equation}
\end{proposition}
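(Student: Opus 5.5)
The plan is to build the encoder coordinatewise: $d$ parallel copies of one scalar network $e:[0,1]\to[0,1]$ with $e(x)=\floor{Kx}/K$ off the one-dimensional trifling intervals, each of width at most $2m+1$. Large multipliers are replaced by repetition. If a hidden layer holds $m$ identical copies of a value $v\in[0,1]$, then the next preactivation $\sum v=mv$ uses only unit weights. Since $m$ is even, $\tau(m\,\tau(t))=\tau(mt)$, because $\tau$ is an even $2$-periodic fold. Hence $q_K$ such layers turn $\tau(x)$ into $\tau(m^{q_K}x)=\tau(Kx)$, using $K=m^{q_K}$ for powers of two (after adjusting $m$ or padding by an identity layer if $\log_2 K$ is not a multiple of $\log_2 m$). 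The same chain, started from $\tau(x+\delta)$, produces $\tau(Kx+K\delta)$. The bias $\delta<1$ is admissible. The input $x$ is carried alongside unchanged, since $\DTA(z)=z$ on $[0,1]$.

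Next I will replace the large factor $(2K\delta)^{-1}$ in the staircase $S_{K,\delta}$ by $q_\delta$ repeated saturating amplifications. Start from
\[
v_0=\tfrac12+\tfrac12\bigl(\tau(Kx+K\delta)-\tau(Kx)\bigr)\in[0,1],
\]
which has coefficients $\pm\tfrac12$. Then iterate $v\mapsto\pi_{[0,1]}\bigl(\tfrac12+m(v-\tfrac12)\bigr)$. Each step is realized with unit parameters by summing $m$ copies and applying the clipping identity built from $\DTA(1+z)=1-|z|$, in the spirit of \eqref{eq:exact-dta-clipping}; this is what the width budget $2m+1$ per coordinate accommodates. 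After $q_\delta$ steps the effective slope is at least $m^{q_\delta}\ge (K\delta)^{-1}$. Off the trifling set, the table \eqref{eq:q-eta} shows that $v_0-\tfrac12=\pm K\delta/2$, so the iterate saturates exactly at $1$ or $0$ according to the parity of $\floor{Kx}$. The correction term therefore coincides with the one in $S_{K,\delta}$ there.

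The last hidden layer forms $\tau\bigl(\tau(Kx)+v\bigr)$, whose argument lies in $[0,2]$ and has unit weights. The output map is then
\[
e(x)=x+\tfrac1K\bigl(\tau(\cdot)-1\bigr),
\]
with weights $1$ and $1/K\le1$. Off the trifling set this equals $\floor{Kx}/K$, by the argument used for \eqref{eq:staircase-exact}. The hidden depth is $1+q_K+q_\delta$: one initial layer, $q_K$ folding layers and $q_\delta$ amplification layers. The folding and amplification stages can overlap if needed. The range $[0,1]$ on the whole cube should follow from the monotone interpolation on trifling intervals, as in Appendix~\ref{app:grid-proof}.

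I expect the main obstacle to be the amplification stage. I must check that each saturating step is exactly realizable with width at most $2m+1$ and unit parameters, and that saturation is exact outside $\Omega_{K,\delta}$ despite the slope $m^{q_\delta}$ overshooting $(K\delta)^{-1}$. Inside the strips the output need only stay in $[0,1]$, which I will verify by a monotonicity argument rather than exact interpolation.
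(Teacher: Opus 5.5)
Your frequency stage matches the paper's, up to one repair. In general $K\ne m^{q_K}$, and an identity layer does not fix that. The last factor $k=K/m^{q_K-1}\in[2,m]$ should instead be realized by weights $k/m\le1$ on the $m$ copies; the fold identity $\tau(k\tau(t))=\tau(kt)$ holds for every positive integer $k$.

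\textbf{The genuine gap is the amplification stage, exactly where you suspect it.}
\begin{itemize}
\item A hidden layer only applies $\DTA$ to an affine function, and on $[0,\infty)$ the activation $\DTA$ is the fold $\tau$. So a preactivation $\tfrac12+m(v-\tfrac12)>1$ is folded back rather than clipped, and a negative one gives $2^s-1\ne0$.
\item The identity \eqref{eq:exact-dta-clipping} you cite costs two hidden layers of widths $[2,3]$ and radius $3$. Applying it at every step violates both the depth bound and unit radius.
\item Even the unclipped map $v\mapsto\tfrac12+m(v-\tfrac12)$, built from $m$ copies of $v$, needs bias $(1-m)/2$. This exceeds one in magnitude once $m\ge4$.
\item An overshooting last step cannot be merged with the fold. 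So your separate layer forming $\tau(\tau(Kx)+v)$ gives hidden depth $2+q_K+q_\delta$, not $1+q_K+q_\delta$.
\end{itemize}

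The missing observation is that nothing needs to saturate if you amplify by the exact factor. This is allowed because weights need not be integers.
\begin{itemize}
\item With $h=\tau(Kx)$ and $h^+=\tau(Kx+K\delta)$, the $1$-Lipschitz property of $\tau$ gives $|h^+-h|\le K\delta$ everywhere. Hence $z=(h^+-h+K\delta)/2\in[0,K\delta]$.
\item Factor $(K\delta)^{-1}=\alpha_1\cdots\alpha_{q_\delta}$ with $\alpha_i=m$ for $i<q_\delta$ and $\alpha_{q_\delta}=(K\delta)^{-1}m^{1-q_\delta}\in[1,m]$. Realize each $\alpha_i$ by weights $\alpha_i/m\le1$ on $m$ copies.
\item Then $\alpha_1\cdots\alpha_i z\le z/(K\delta)\le1$, so every intermediate activation acts as the identity.
\item The amplification biases are only $\alpha_1K\delta/2\le\tfrac12$, or $\tfrac12$ when $q_\delta=1$. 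Amplifying the zero-based quantity $z$ rather than centring at $\tfrac12$ is what keeps them bounded by one.
\item Carrying $h$ alongside, the last multiplication merges with the fold. The $q_\delta$th amplification layer already outputs $\tau\bigl(h+z/(K\delta)\bigr)=\tau\bigl(h+\tfrac12+(h^+-h)/(2K\delta)\bigr)$.
\end{itemize}
This is precisely the staircase of Appendix~\ref{app:grid-proof}. Three things then follow with your output map $x+(\tau(\cdot)-1)/K$: exactness off $\Omega_{K,\delta}$, exact interpolation $(j-s)/K$ on the strips (hence range $[0,1]$), and hidden depth $1+q_K+q_\delta$.
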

\end{samepage}

\begin{proof}[Proof of Proposition~\ref{prop:unit-encoder}]
\emph{Step 1: generate the grid frequency.}
We first construct one scalar output.  Evenness and period two give
$t=2q+\epsilon\tau(t)$ for some $q\in\mathbb Z$ and
$\epsilon\in\{-1,1\}$.  Hence
\begin{equation}\label{eq:unit-fold}
\tau(at)=\tau(2aq+\epsilon a\tau(t))
=\tau(\epsilon a\tau(t))=\tau(a\tau(t))
\qquad(a\in\mathbb N,\ t\in\R).
\end{equation}
Write $K=2^u$ and $m=2^v$.  The factors
$k_i=m$ for $i<q_K$ and $k_{q_K}=2^{u-v(q_K-1)}$ are integers in
$[1,m]$ with product $K$.  Each factor is implemented by summing $m$
equal coordinates with weight $k_i/m$:
\begin{equation}\label{eq:unit-encoder-frequency}
\begin{aligned}
&A_0(x)=(x\one_m,(x+\delta)\one_m,x),\\
&A_i(\bm u,\bm v,c)
=\left(\frac{k_i}{m}\one_m\one_m^\top\bm u,
\frac{k_i}{m}\one_m\one_m^\top\bm v,c\right),
\qquad 1\le i\le q_K.
\end{aligned}
\end{equation}
All parameters lie in $[0,1]$, and the only nonzero bias is
$\delta<1$. The first activation folds $x+\delta$ when it exceeds one;
\eqref{eq:unit-fold} still applies. If the activated state before $A_i$ contains
$\tau(k_1\cdots k_{i-1}x)$, then
\[
\DTA\left(\frac{k_i}{m}\sum_{j=1}^{m}
 \tau(k_1\cdots k_{i-1}x)\right)
=\tau\bigl(k_i\tau(k_1\cdots k_{i-1}x)\bigr)
=\tau(k_1\cdots k_i x)
\]
by \eqref{eq:unit-fold}; the same calculation applies to $x+\delta$.
Hence the activated states form the chain
\begin{equation}\label{eq:unit-encoder-frequency-arrow}
\begin{aligned}
x
&\xmapsto{\ \DTA\circ A_0\ }
(x\one_m,\tau(x+\delta)\one_m,x)
\xmapsto{\ \DTA\circ A_1\ }
(\tau(k_1x)\one_m,\tau(k_1(x+\delta))\one_m,x)
\xmapsto{\ \DTA\circ A_2\ }\\
&\cdots
\xmapsto{\ \DTA\circ A_{q_K}\ }
(\tau(Kx)\one_m,\tau(K(x+\delta))\one_m,x):=(h\one_m,h^+\one_m,x)
.
\end{aligned}
\end{equation}
due to the fact that $k_1\cdots k_{q_K}=K$.

\par\medskip\noindent
\emph{Step 2: sharpen the short interpolation intervals.}
Put $z=(h^+-h+K\delta)/2$.  Since $\tau$ is $1$-Lipschitz,
$0\le z\le K\delta$.  Take $\alpha_i=m$ for $i<q_\delta$ and
$\alpha_{q_\delta}=(K\delta)^{-1}m^{1-q_\delta}$, so that
$1\le\alpha_i\le m$ and $\prod_{i=1}^{q_\delta}\alpha_i=(K\delta)^{-1}$.
If $q_\delta=1$, use
\begin{equation}\label{eq:unit-encoder-transition-one}
T_1(\bm u,\bm v,c)
=\left(\frac{1-\alpha_1/2}{m}\one_m^\top\bm u
+\frac{\alpha_1}{2m}\one_m^\top\bm v+\frac12,c\right).
\end{equation}
At the state in \eqref{eq:unit-encoder-frequency-arrow}, its first
coordinate is $h+\alpha_1(h^+-h)/2+1/2=h+z/(K\delta)$.
For $q_\delta\ge2$, use instead
\begin{equation}\label{eq:unit-encoder-transition-many}
\begin{aligned}
&T_1(\bm u,\bm v,c)
=\left(\frac1m\one_m\one_m^\top\bm u,
\frac{\alpha_1}{2m}\one_m\one_m^\top(\bm v-\bm u)
+\frac{\alpha_1K\delta}{2}\one_m,c\right),\\
&T_i(\bm u,\bm v,c)
=\left(\frac1m\one_m\one_m^\top\bm u,
\frac{\alpha_i}{m}\one_m\one_m^\top\bm v,c\right),
\qquad 2\le i<q_\delta,\\
&T_{q_\delta}(\bm u,\bm v,c)
=\left(\frac1m\one_m^\top\bm u
+\frac{\alpha_{q_\delta}}m\one_m^\top\bm v,c\right).
\end{aligned}
\end{equation}
Before the final map, the amplified coordinate remains in $[0,1]$:
\[
0\le\alpha_1\cdots\alpha_i z
\le\alpha_1\cdots\alpha_{q_\delta}z
=\frac{z}{K\delta}\le1
\qquad(1\le i<q_\delta).
\]
Thus these activations are identities, and the last activated output is
\begin{equation}\label{eq:unit-encoder-transition-arrow}
\begin{aligned}
\DTA\left(T_{q_\delta}
(h\one_m,\alpha_1\cdots\alpha_{q_\delta-1}z\one_m,x)\right)
&=\left(\tau(h+\alpha_1\cdots\alpha_{q_\delta}z),x\right)
=\left(\tau\left(h+\frac{z}{K\delta}\right),x\right).
\end{aligned}
\end{equation}
The same final expression holds when $q_\delta=1$.

\par\medskip\noindent
\emph{Step 3: recover the normalized grid index.}
The last preactivation and the affine output are
\begin{equation}\label{eq:unit-encoder-output}
r=h+\frac{z}{K\delta}
=h+\frac12+\frac{h^+-h}{2K\delta}\in[0,2],
\qquad C(w,c)=c+\frac{w-1}{K}.
\end{equation}
Off the trifling intervals, write $Kx=\ell+u$ with
$\ell=\lfloor Kx\rfloor$ and $0\le u\le1-K\delta$.
The two parities give $(z,r)=(K\delta,1+u)$ or $(z,r)=(0,1-u)$;
in either case,
\[
\tau(r)=1-u,
\qquad C(\tau(r),x)=\frac{\ell+u}{K}+\frac{(1-u)-1}{K}
=\frac\ell K.
\]
On $(j/K-\delta,j/K]$, write $Kx=j-K\delta s$, $0\le s\le1$.
Here
\begin{align*}
&(h,h^+,r)
=\begin{cases}
(K\delta s,\ K\delta(1-s),\ 1-s+K\delta s),&j\text{ even},\\
(1-K\delta s,\ 1-K\delta(1-s),\ 1+s-K\delta s),&j\text{ odd},
\end{cases}\\*
&\tau(r)=1-s+K\delta s,
\qquad C(\tau(r),x)
=\frac{j-K\delta s}{K}+\frac{-s+K\delta s}{K}
=\frac{j-s}{K}.
\end{align*}
The scalar network therefore interpolates between consecutive normalized
indices on every trifling interval and stays in $[0,1]$.

Its affine maps, in order, are
$A_0,\ldots,A_{q_K},T_1,\ldots,T_{q_\delta},C$, with an activation after
each map except $C$.  The remaining parameter bounds are
\[
\frac{\alpha_i}{m}\le1,\qquad
\frac{\alpha_1}{2m}\le\frac12,\qquad
\frac{|1-\alpha_1/2|}{m}\le\frac12,\qquad
\frac{\alpha_1K\delta}{2}\le\frac12,\qquad \frac1K\le1.
\]
Thus the width is at most $2m+1$, the hidden depth is
$1+q_K+q_\delta$, and the radius is at most one.
Putting $d$ copies in parallel gives block-diagonal affine matrices and
output
$\mathcal E^{\mathrm{ur}}_{K,\delta}(\bm x)
=(S^{\mathrm{ur}}_{K,\delta}(x_1),\ldots,S^{\mathrm{ur}}_{K,\delta}(x_d))$,
where $S^{\mathrm{ur}}_{K,\delta}$ is the scalar network just constructed.
Only the width is multiplied by $d$, proving the proposition.
\end{proof}

\subsection{The unit-radius decoder}
The grid values will form a finite sequence with small adjacent
differences.  The next proposition fits this sequence with radius one:
it determines the block and position, recovers the starting value, and
adds the required increments.

\begin{proposition}[Unit-radius point-fitting decoder]\label{prop:unit-decoder}
Let $N,M\in\mathbb N$ with $N\ge128$, let $0<\varepsilon\le1$, and suppose that
\[
y_0,\ldots,y_{M-1}\in[-1,1],
\qquad
|y_j-y_{j-1}|\le\varepsilon
\quad\text{for every integer }j\text{ with }1\le j<M.
\]
If $(24\log 5)M\le N^2\log(eN)$, 
then there are an integer $M_\ast\ge M$ and a $\DTA$ network
$\mathcal D:\R\to\R$ of width at most $N$, hidden depth 13,
and parameter radius at most one such that
\begin{equation}\label{eq:unit-decoder-conclusion}
 \left|\pi\circ\mathcal D\left(\frac j{M_\ast}\right)-y_j\right|
 \le\varepsilon,
 \qquad0\le j<M.
\end{equation}
Moreover, the first affine map may be chosen in the form
\begin{equation}\label{eq:unit-decoder-first-map}
 D_1(t)=\bigl(t\one_{\widehat W},(t+\Delta)\one_{\widehat W},t\bigr)
\end{equation}
for some odd $\widehat W$ with $2\widehat W+1\le N$ and some
$0<\Delta\le1/2$.
\end{proposition}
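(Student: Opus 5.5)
The plan is to mirror the coupled block decoder of Proposition~\ref{prop:block-decoder}(b), trading every large multiplier for repeated coordinates as in Proposition~\ref{prop:unit-encoder}.

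\emph{Discretization and blocks.} Quantize first: put $a_j=\lfloor y_j/\varepsilon+1/2\rfloor$, so that $|a_j-y_j/\varepsilon|\le1/2$. Adjacent $a_j$ then differ by at most two, giving increments in the five-letter alphabet $\{-2,\dots,2\}$; this is where $\log5$ enters. Choose a block length $B\asymp\log(eN)$ and store each block through its starting value and a base-$5$ word of its $B-1$ increments, exactly as in Step~1 of the proof of Proposition~\ref{prop:block-decoder}(a). To keep starting values integral, store them as phases with resolution $1/(64U)$, $U\asymp\varepsilon^{-1}$; if needed, replace $\varepsilon$ by a fixed fraction of itself inside the construction.

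\emph{Unit-radius address decomposition.} The input is $t=j/M_*$, where $M_*\ge M$ is a power-of-two multiple adapted to $W^2B$. Proceed as follows.
\begin{itemize}
\item[(i)] Run the frequency chain of Proposition~\ref{prop:unit-encoder}. With $\widehat W$ equal copies of $t$ and $t+\Delta$ (this is the form \eqref{eq:unit-decoder-first-map}), repeated folding \eqref{eq:unit-fold} produces $\tau$ of $t$ multiplied by powers of two using only unit weights.
\item[(ii)] Extract the block index $b=\lfloor j/B\rfloor$ at normalized scale, together with the normalized position $(r+1/2)/B$. Use the staircase identity with the $\Delta$-shifted copy, as in the unit encoder. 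Since the inputs are exact grid points $j/M_*$, no trifling region arises.
\item[(iii)] Select a row of a $W\times W$ array, $W\asymp N$, via the cyclic selector of Lemma~\ref{lem:cyclic-selector}. Its normalized matrix $\bm G$ already satisfies $\|\bm G\|_{\max}\le1$ and $\|2\bm\lambda^\top\bm G\|_\infty<1$ by part~(iii); the factor $W$ in $\bm\Gamma=W\bm G$ is absorbed by summing $W$ repeated coordinates.
\end{itemize}

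\emph{Storing the data with unit parameters.} Each of the $W^2$ cells stores about $M/W^2\lesssim\log(eN)/(24\log 5)$ phases. The guarded phase code (Lemma~\ref{lem:guarded-code}) packs them into an integer $m_{pq}<G^{O(\log N)}$, which is only polynomially large in $N$. Because the code size is polynomial, the multiplication by $m_{pq}$ and the division by powers of $G$ can both be realized through $O(1)$ layers of folding. Multiplication by a large integer is carried out by repeated summation and folding, with coordinates duplicated up to width $N$. The required power $G^{-\ell}$ comes from the negative branch $2^{-u}-1$ evaluated at an address built from the normalized index; only bounded multipliers of order $\log N$ are needed, and these are split into factors at most $m$ as in Step~2 of Proposition~\ref{prop:unit-encoder}.

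\emph{Coupled cancellation and depth count.} Two reads share the same phase error, and the affine subtraction \eqref{eq:exact-word-position} cancels it, recovering the increment address exactly; the second read then returns the partial sum. The output is $\varepsilon(\text{start}+\text{sum})$; clipping handles the range and $\pi$ is $1$-Lipschitz. The layers are then accounted for: encoder folding, selector, two reads, and recombination. Each multiplication by a polynomial in $N$ costs a bounded number of layers because $m$ can be taken as large as $N/(2\widehat W)$. The target is a total hidden depth of exactly $13$, padding with identity layers via $\DTA(z)=z$ on $[0,1]$ where needed.

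\emph{Main obstacle.} The hard step is the unit-radius read of $m_{pq}G^{-\ell}$. The products $m_{pq}\cdot(\text{selector})$ must come out exactly, using only weights bounded by one, within a fixed depth. The approach to try first is to split $m_{pq}$ into base-$N$ digits held in separate neurons. Each digit is at most $N$ in size and can be produced by summing at most $N$ repeated coordinates. The fractional phase is then recovered by a final fold, which is valid because all dropped integer parts are even. The counting condition $(24\log5)M\le N^2\log(eN)$ is precisely what guarantees that the total number of stored base-$5$ digits fits into $N^2$ neurons-worth of $\log N$-bit codes.
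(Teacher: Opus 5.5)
There is a genuine gap, and it sits exactly at the step you flag as the main obstacle: an integer phase code in the style of Lemma~\ref{lem:guarded-code} cannot be read at unit radius and fixed depth.

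\emph{Why the read fails.} Since $\DTA$ takes values in $(-1,1]$, every preactivation after the first layer of a width-$N$, radius-one network lies in $[-(N+1),N+1]$. Every affine map is at most $N$-Lipschitz in the max norm, so each part of the depth-$13$ decoder is at most $N^{13}$-Lipschitz. The summation-and-folding mechanism you propose therefore yields only $\tau(k_1\tau(k_2\cdots))=\tau(k_1k_2\cdots x)$, with at most thirteen integer factors $k_i\le N$. A data-dependent integer $m_{pq}$ neither factors this way nor is that small. More concretely, reading $m_{pq}G^{-\ell}$ from the neuron $2^{-g\ell}-1$ must turn inputs that differ by about $G^{-\ell}$, for consecutive $\ell$, into phases that differ by order one. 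That needs a Lipschitz factor of order $G^{\ell}$.

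\emph{Why this is unbounded in your scheme.} You store starting values as phases of resolution $1/(64U)$ with $U\asymp\varepsilon^{-1}$. Hence $G\gtrsim\varepsilon^{-1}$, and the preactivation $-g\ell$ for the negative branch exceeds $N+1$ once $\varepsilon<2^{-N-1}$. Both grow without bound as $\varepsilon\to0$, while the proposition requires width, depth and radius independent of $\varepsilon\in(0,1]$. Even for fixed $\varepsilon$, your bound $m_{pq}<G^{O(\log N)}$ with about $\log(eN)$ phases per cell and $G\ge5^{B-1}$, $B\asymp\log N$, is $N^{\Theta(\log N)}$, not polynomial.

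\emph{Why the base-$N$ split does not rescue it.} Splitting $m_{pq}$ into base-$N$ digits does not help. The triangle wave is even and $2$-periodic, so $\tau$ of a sum is not determined by the $\tau$'s of the summands. Also, the number of digits grows like $\log(1/\varepsilon)$. A minor point: rounding gives $|a_j-a_{j-1}|\le1$, so the increments lie in $\{-1,0,1\}$, and the $\log5$ does not come from a five-letter alphabet.

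\emph{The missing idea.} Radius one still permits arbitrary \emph{real} weights in $[-1,1]$, so no integer code is needed; the paper places the data themselves in the weights. Let $W$ be the largest odd integer $\le(N-1)/4$, and $n$ the largest odd integer with $(5^{n-1}-1)/2+n+3\le N$. Each of the $W^2$ blocks of length $n$ is stored through two reals, $\gamma_b=(1+\pi(\varepsilon a_{bn}))/2\in[0,1]$ and $\xi_b=\sum_{s<n}\theta_{b,s}5^{-s}\in[0,1/2)$, where $\theta_{b,s}=a_{bn+s}-a_{bn+s-1}+1\in\{0,1,2\}$. The decoder is then assembled from three lemmas.
\begin{enumerate}
\item Lemma~\ref{lem:unit-index-encoder} maps $(bn+r)/(W^2n)$ to $(b/W^2,r/n)$.
\item Lemma~\ref{lem:unit-square-table} reads the two table entries at $b=qW+p$ using two cyclic selectors and the gate \eqref{eq:shared-triangular-gate}. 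The entries enter as weights, bounded through the column sums of $\bm G$.
\item Lemma~\ref{lem:unit-prefix} recovers the digits from $z_j=\tau(2\cdot5^{j-1}\xi)$, realized with $2\cdot5^{j-1}$ copies of $\xi$, via $5z_j-z_{j+1}=2\theta_j$. Base five with digits at most two keeps every tail below $1/2$, on one increasing branch. It keeps the first $r$ digits and outputs $2\gamma_b-1+\varepsilon(\sum_{s\le r}\theta_{b,s}-r)$, with $\varepsilon$ appearing only as a weight $\le1$.
\end{enumerate}
The copy count forces $n\asymp\log_5N$ digits per entry. The capacity check $M<W^2n=:M_\ast$ is exactly what $(24\log5)M\le N^2\log(eN)$ guarantees, and this is where $\log5$ enters. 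The depths add to $4+5+4=13$, the merged interfaces keep radius one, and $\varepsilon$ never affects the architecture or the radius.
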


We prove the three components in this order.  After padding, the sequence
has $W^2$ blocks of odd length $n$, and $j=bn+r$, $0\le r<n$.  Writing
$b=qW+p$, the required calculation is
\[
\frac{bn+r}{W^2n}
\longmapsto\left(\frac b{W^2},\frac rn\right)
\longmapsto\left(\gamma_b,\xi_b,\frac rn\right)
\longmapsto 2\gamma_b-1+\varepsilon\sum_{s=1}^r(\theta_{b,s}-1).
\]
Here $\gamma_b$ represents the starting value and
$\xi_b=\sum_{s=1}^{n-1}\theta_{b,s}5^{-s}$ stores the increments.
The proof of Proposition~\ref{prop:unit-decoder}, given after
Lemma~\ref{lem:unit-prefix} at the end of this subsection, specifies these
data and verifies that every merged affine parameter is bounded by one.

\begin{lemma}[Block-index encoder]\label{lem:unit-index-encoder}
Let $3\le n\le W$ be integers.  There is a unit-radius $\DTA$ network
$\mathcal S_{W,n}:\R\to\R^2$ with four hidden layers and width at most
$2W+1$ such that
\[
\mathcal S_{W,n}\left(\frac{bn+r}{W^2n}\right)
=\left(\frac b{W^2},\frac rn\right),
\qquad0\le b<W^2,\quad0\le r<n.
\]
Its first affine map is
$S_1(x)=(x\one_W,(x+\Delta)\one_W,x)$, where
$\Delta=(2W^2n)^{-1}$.
\end{lemma}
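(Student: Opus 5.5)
The plan is to imitate the scalar staircase of Proposition~\ref{prop:grid-encoder} at frequency $W^2$. The large multipliers are replaced by sums of repeated coordinates, as in Step~1 of Proposition~\ref{prop:unit-encoder}. Write $j=bn+r$ and $x=j/(W^2n)$, so that $W^2x=b+r/n$ with $0\le r/n\le1-1/n$. The two outputs then satisfy
\[
\frac rn=W^2x-b,\qquad \frac b{W^2}=x-\frac1{W^2}\cdot\frac rn .
\]
So it suffices to compute the fractional part $r/n$ of $W^2x$ with unit parameters, and to carry $x$ itself through every activated layer. The carry is exact because $\DTA$ is the identity on $[0,1]$.

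\emph{Layers 1--3: frequency $W^2$.} The first map is the prescribed $S_1(x)=(x\one_W,(x+\Delta)\one_W,x)$ with $\Delta=(2W^2n)^{-1}$. All preactivations are nonnegative, so the activated state is $(x\one_W,\tau(x+\Delta)\one_W,x)$. If $x+\Delta>1$ the second block has been folded, but the fold identity \eqref{eq:unit-fold} still applies. The second map sums the $W$ equal copies in each block with weight $1$ and repeats the result $W$ times. By \eqref{eq:unit-fold} this produces $(\tau(Wx)\one_W,\tau(W(x+\Delta))\one_W,x)$. Repeating the same map once more gives
\[
(h\one_W,h^+\one_W,x),\qquad h=\tau(W^2x),\quad h^+=\tau\bigl(W^2x+\tfrac1{2n}\bigr),
\]
where I used $W^2\Delta=1/(2n)$. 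All weights are $0$ or $1$ and the only nonzero bias is $\Delta<1$.

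\emph{Layer 4: sharpen and read the fractional part.} The fourth map forms the single preactivation
\[
q=h+\tfrac12+n(h^+-h)=h+\tfrac12+\sum_{i=1}^n\bigl(h^+_i-h_i\bigr),
\]
which uses $n\le W$ copies of each block with weights $\pm1$, together with the first copy of $h$ and bias $1/2$. It also carries $x$ forward. This is exactly the staircase quantity $\tau(Kx)+\tfrac12+(\tau(Kx+K\delta)-\tau(Kx))/(2K\delta)$ with $K=W^2$ and $K\delta=1/(2n)$.

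At a grid point, $u=r/n\le1-1/n<1-\frac1{2n}$, so $W^2x$ and $W^2x+\frac1{2n}$ lie in the same unit interval. The parity table \eqref{eq:q-eta} then gives $q\in\{1+u,1-u\}\subset[0,2]$ and $\tau(q)=1-u$. The output affine map is therefore
\[
\Bigl(x-\frac{1-\tau(q)}{W^2},\;1-\tau(q)\Bigr),
\]
whose coefficients are $1$, $1/W^2$, and $1$. By the display above, this equals $(b/W^2,r/n)$.

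The widths are $2W+1$ in the first three hidden layers and $2$ in the fourth. Every parameter is bounded by one, and there are four hidden layers, as required.

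The main point to verify is the branch bookkeeping at the grid points. First, every preactivation is nonnegative, so only the $\tau$ branch of $\DTA$ is used. Second, the fold identity applies with the integer factors $a=W$. Third, $q\in[0,2]$ and the same-interval condition hold, so that the computation reproduces \eqref{eq:q-eta} exactly. The constraint $n\le W$ is used only to realize the amplification by $n$ with unit weights; $n\ge3$ is not otherwise needed.
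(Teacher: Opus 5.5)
Your proposal is correct and follows essentially the same route as the paper. Both proofs obtain frequency $W^2$ by two rounds of summing $W$ equal copies with unit weights, form the staircase preactivation $(1-n)h+nh^++\tfrac12$, and use the output map $(x+(w-1)/W^2,\,1-w)$ with $w=\tau(q)$. The only cosmetic difference is how the factor $n$ is realized: the paper spreads it as weights $(1-n)/W$ and $n/W$ over all $W$ copies, while you use $\pm1$ weights on $n$ of the copies; both keep the radius at one.
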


\begin{proof}
We need to separate the integer and fractional parts of
$W^2x=b+r/n$.  Two successive sums of $W$ equal coordinates create the
frequency $W^2$ without using a parameter larger than one.
Define
\begin{equation}\label{eq:unit-index-encoder-maps-first}
\begin{aligned}
&S_1(x)=(x\one_W,(x+\Delta)\one_W,x),\quad S_2(\bm u,\bm v,c)=S_3(\bm u,\bm v,c)
=(\one_W\one_W^\top\bm u,\one_W\one_W^\top\bm v,c),\\
&S_4(\bm u,\bm v,c)
=\left(\frac{1-n}{W}\one_W^\top\bm u
       +\frac nW\one_W^\top\bm v+\frac12,c\right),\quad S_5(w,x)=\left(x+\frac{w-1}{W^2},1-w\right).
\end{aligned}
\end{equation}
Let $\mathcal S_{W,n}=S_5\circ\DTA\circ S_4\circ\DTA\circ S_3
\circ\DTA\circ S_2\circ\DTA\circ S_1$.
At a sample $x=(bn+r)/(W^2n)$, we have $0\le x<x+\Delta<1$.
The first activation preserves the copies of $x$ and $x+\Delta$.
The next two hidden layers are, using \eqref{eq:unit-fold},
\begin{align*}
&\bm h_2
=(\tau(Wx)\one_W,\tau(Wx+W\Delta)\one_W,x),\\
&\begin{aligned}[t]
\bm h_3
&=(\tau(W\tau(Wx))\one_W,\tau(W\tau(Wx+W\Delta))\one_W,x)\\
&=(\tau(W^2x)\one_W,\tau(W^2x+W^2\Delta)\one_W,x)
 =(\tau(W^2x)\one_W,\tau(W^2x+1/(2n))\one_W,x).
\end{aligned}
\end{align*}
Put $u=r/n$.  The two arguments lie on the same linear branch because
$0\le u\le1-1/n$.  Evaluating $S_4$ and its activation gives
\begin{equation}\label{eq:unit-index-encoder-parity}
\begin{array}{c|c|c|c|c}
 &\tau(W^2x)&\tau(W^2x+1/(2n))&(S_4)_1&\DTA((S_4)_1)\\ \hline
b\text{ even}&u&u+1/(2n)&1+u&1-u\\
b\text{ odd}&1-u&1-u-1/(2n)&1-u&1-u
\end{array}
\end{equation}
Consequently
\[
S_5(1-u,x)=\left(x-\frac u{W^2},u\right)
=\left(\frac b{W^2},\frac rn\right).
\]
All affine parameters have magnitude at most one, since
$n/W\le1$, $(n-1)/W\le1$, and $\Delta,W^{-2}\le1$.
The four hidden widths are at most $2W+1$.
\end{proof}

The next lemma solves a finite interpolation problem: recover an arbitrary
entry $\Theta_{pq}$ from the single number $(qW+p)/W^2$.  It uses
Lemma~\ref{lem:cyclic-selector} twice, first for $p$ and then for $q$.

\begin{samepage}
\begin{lemma}[Square-table decoder]\label{lem:unit-square-table}
Let $W\ge3$ be odd, let $h\in\mathbb N$, and let
$\bm\Theta^{(1)},\ldots,\bm\Theta^{(h)}\in[0,1]^{W\times W}$.
There is a unit-radius $\DTA$ network
$\mathcal R_{\bm\Theta}:\R\to\R^h$ with five hidden layers and width
at most $\max\{2W+2,2hW\}$ such that
\begin{equation}\label{eq:unit-square-table-statement}
\left[\mathcal R_{\bm\Theta}\left(\frac{qW+p}{W^2}\right)\right]_\ell=\Theta^{(\ell)}_{pq},
\qquad0\le p,q<W,\quad1\le\ell\le h.
\end{equation}
For $h=1$, the output may instead be $2\Theta^{(1)}_{pq}-1$, with width
at most $2W+2$.  Any $k$ extra inputs in $[0,1]$ can be carried unchanged
through the network, increasing the width by at most $k$ and leaving the
depth and radius unchanged.
\end{lemma}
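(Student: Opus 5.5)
Given $x=(qW+p)/W^2$, the plan is to recover $p$ and $q$ with two applications of the cyclic selector from Lemma~\ref{lem:cyclic-selector}, keeping every parameter bounded by one. The first part generates the selector input for $p$. Following Lemma~\ref{lem:unit-index-encoder}, the first layer takes $W$ copies of $x$ together with a carried copy of $x$. Summing these copies with unit weights produces $Wx=q+p/W$. Since $\tau$ has period two, the quantity needed for row $p$ is $\tau(2(Wx\cdot W - j)/W)$. Equivalently, it is $1-\tau(2(p-j)/W)$ evaluated at $W^2x=qW+p$. The obstacle is that weights $2/W$ and offsets $-2j/W$ are admissible, but the frequency $W^2$ must be built from repeated sums, exactly as in Lemma~\ref{lem:unit-index-encoder}. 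So after two summing layers I will form $\bm r=(\tau(2W^2x/W\cdot\tfrac1{}\,)\ldots)$. Concretely, I will aim for coordinates $\tau(2Wx-2j/W+2qW\cdot0)$. The next paragraph fixes this via the fold identity \eqref{eq:unit-fold}.

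Here are the concrete steps. (1) Compute $\tau(Wx)$ by one summing layer, and then $\tau(W\tau(Wx))=\tau(W^2x)$ by a second, as in Lemma~\ref{lem:unit-index-encoder}. Because the integer $qW+p$ may be odd, I will instead fold at the frequency $W^2/2$ stage. Equivalently, I first scale $x$ by $1/2$ (a weight $\le1$) and build $\tau(W^2x/2\cdot 2)$ from sums of $W$ equal coordinates. The preactivation $\tfrac{2}{W}(W^2x/2\cdot\ldots)$ is arranged so that the $j$th coordinate equals $\tau(2(qW+p-j)/W)$. This equals $\tau(2(p-j)/W)$ since $2q$ is even. (2) Apply $\bm G=\bm\Gamma/W$ from \eqref{eq:shared-selector-matrices}. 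By \eqref{eq:shared-selector-exact}, $\bm G(\one_W-\bm r)=\bm e_p/W$. The bounds \eqref{eq:shared-selector-unit-bounds} keep the merged affine entries at most one, and the correction $\bm G\one_W=2\one_W/(W^2+1)$ keeps the biases small. (3) Recover $q$ from $x-p/W^2=q/W$. The affine functional $2\bm\lambda^\top\bm G$ supplies $2p/W^2$ with entries of modulus less than one by \eqref{eq:shared-selector-unit-bounds}, and the carried copy of $x$ supplies the remaining term. With the same folding construction on $q$, a second selector then produces $\bm e_q/W$.

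(4) Read $\Theta^{(\ell)}_{pq}$. The first selector yields a column vector $\bm e_p/W$ (rescaled by a factor $W$ through $W$ summed copies). The row values $\sum_{q}\Theta^{(\ell)}_{pq}(\bm e_q)$ cannot be multiplied by $\bm e_p$ directly. Instead I will use the triangular gate \eqref{eq:shared-triangular-gate}, $\min\{u,c\}$. Put $c_{pq}=(\bm e_q)$-indicator and $u=\sum_{p'}\Theta_{p'q}[\bm e_p]_{p'}\in[0,1]$ (an affine map with entries in $[0,1]$). Then $\sum_q\min\{u_q,[\bm e_q]_q\}$ gives $\Theta_{pq}$. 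This costs $2W$ neurons per output $\ell$, which gives width $2hW$. For $h=1$, the output affine map $2(\cdot)-1$ gives $2\Theta_{pq}-1$ with unit weights, and the extra inputs pass through via $\DTA(z)=z$ on $[0,1]$.

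The main obstacle is the bookkeeping in steps (1)–(3). Each merged affine map, including the scaled selector, the subtraction of $2\bm\lambda^\top\bm G$ terms, and the factor $W$ needed to undo $\bm G=\bm\Gamma/W$, must stay within radius one. The depth must also fit into exactly five hidden layers. I would first try to absorb the factor $W$ by duplicating coordinates $W$ times inside the width budget $2W+2$.
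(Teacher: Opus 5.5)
There is a genuine gap in your step~(1), and the same misconception recurs in step~(3). You never need the frequency $W^2$, and folding is harmful here. Since $x=(qW+p)/W^2$, one has $2Wx=2q+2p/W$, so the selector input is simply $\tau\bigl(2Wx+2-2j/W\bigr)=\tau\bigl(2(p-j)/W\bigr)$, obtained in a single layer.

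Folding before the shift destroys exactly what you need. First, $\tau(W^2x)=\tau(qW+p)\in\{0,1\}$ keeps only a parity. Second, even $\tau(Wx)$ equals $1-p/W$ rather than $p/W$ when $q$ is odd. The fold identity \eqref{eq:unit-fold} does not commute with the shifts $-2j/W$, so the parity of $q$ contaminates what the selector sees. For the same reason, no \emph{folding construction on $q$} is needed in step~(3). Besides being wrong, the folding stages consume hidden layers that the five-layer budget, which you left open, does not have.

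The missing construction, which also explains the width bound $2W+2$, is as follows.
\begin{itemize}
\item The first layer is $(x\one_{2W},1,1)$.
\item The second layer forms $v_j=\DTA\bigl(\one_{2W}^\top\bm z+(1-j/W)(\kappa_1+\kappa_2)\bigr)=\tau(2(p-j)/W)$. It also carries $\kappa_1,\kappa_2$ and two copies of $x$. The constant neurons are needed because a single bias would have to be congruent to $-2j/W$ modulo $2$ and nonnegative at $x=0$. It would therefore exceed one for $1\le j<W/2$.
\item The third affine map performs your steps~(2) and~(3) simultaneously, directly from $\bm v$. It outputs $\bm G(\kappa_1\one_W-\bm v)=\bm e_p/W$. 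Because $\bm\lambda^\top\bm G(\bm v-\one_W)=-p/W^2$, it also outputs the preactivations $2+2x-2j/W+2\bm\lambda^\top\bm G(\bm v-\one_W)=2+2(q-j)/W$. These use the carried constants and the two carried copies of $x$. Their activations are the $q$-selector inputs $w_j=\tau(2(q-j)/W)$, again with no folding.
\item The fourth layer applies the gate \eqref{eq:shared-triangular-gate}, with $\bm G(\kappa\one_W-\bm w)=\bm e_q/W$ merged into the affine map rather than given its own layer. Your transposed choice also works: gate by $\bm e_q/W$ and read row $p$ through $\bm e_p/W$.
\item The fifth layer makes $W$ copies of $S_\ell=\Theta^{(\ell)}_{pq}/W$, and the output map sums them.
\end{itemize}
Keeping every intermediate quantity at scale $1/W$ until that last sum is what keeps the radius at one. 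The hidden widths are $2W+2,\,W+4,\,2W+1,\,2hW,\,hW$.
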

\end{samepage}

\begin{proof}
\emph{Step 1: recover the two indices.}
Use $\bm G$ and $\bm\lambda$ from Lemma~\ref{lem:cyclic-selector}.
The first two affine maps are
\begin{equation}\label{eq:unit-square-table-F12}
\begin{aligned}
&R_1(x)=(x\one_{2W},1,1),\\
&R_2(\bm z,\kappa_1,\kappa_2)
=\left(
\left(\one_{2W}^\top\bm z+(1-j/W)(\kappa_1+\kappa_2)\right)_{j=0}^{W-1},
\kappa_1,\kappa_2,z_0,z_1\right).
\end{aligned}
\end{equation}
Write the activated output of $R_2$ as
$(\bm v,\kappa_1,\kappa_2,x_1,x_2)$.  The third affine map is
\begin{equation}\label{eq:unit-square-table-F3}
\begin{aligned}
&R_3(\bm v,\kappa_1,\kappa_2,x_1,x_2)
=\bigl((a_j)_{j=0}^{W-1},\,
\bm G(\kappa_1\one_W-\bm v),\,\kappa_1\bigr),\\
&a_j
=2\bm\lambda^\top\bm G\bm v
 +\left(1-\frac jW-\frac{2(W-1)}{W^2+1}\right)\kappa_1
 +\left(1-\frac jW\right)\kappa_2+x_1+x_2.
\end{aligned}
\end{equation}
At $x=(qW+p)/W^2$, the first activation preserves the copies of $x$
and the constants.  Since $\bm\lambda^\top\bm G\one_W=(W-1)/(W^2+1)$,
\begin{align*}
&v_j
=\tau\left(2+2Wx-\frac{2j}{W}\right)
 =\tau\left(2+2q+\frac{2(p-j)}W\right)
 =\tau\left(\frac{2(p-j)}W\right),\\
&\bm c
:=\bm G(\one_W-\bm v)=\frac{\bm e_p}{W},
\qquad
\bm\lambda^\top\bm G(\bm v-\one_W)=-\frac p{W^2},\\
&\begin{aligned}[t]
a_j
&=2+2x-\frac{2j}{W}
 +2\bm\lambda^\top\bm G(\bm v-\one_W)=2+\frac{2(qW+p)}{W^2}-\frac{2j}{W}-\frac{2p}{W^2}
 =2+\frac{2(q-j)}W>0,
\end{aligned}\\
&w_j
:=\DTA(a_j)=\tau\left(\frac{2(q-j)}W\right),
\qquad \bm G(\one_W-\bm w)=\frac{\bm e_q}{W}.
\end{align*}
Thus the third hidden layer is $(\bm w,\bm e_p/W,1)$.

\par\medskip\noindent
\emph{Step 2: retain the required table entry.}
For an input $(\bm w,\bm c,\kappa)$ to the fourth affine map and
$0\le p'<W$, $1\le\ell\le h$, define
\begin{equation}\label{eq:unit-square-table-F4}
u_{p'\ell}
=\sum_{q'=0}^{W-1}\Theta^{(\ell)}_{p'q'}
[\bm G(\kappa\one_W-\bm w)]_{q'},
\quad
A_{p'\ell}=1+\frac{u_{p'\ell}-c_{p'}}2,
\quad
B_{p'\ell}=1+\frac{u_{p'\ell}+c_{p'}}2.
\end{equation}
Let $R_4$ consist of these $2hW$ rows, and put
$\bm a^{(\ell)}=(\DTA(A_{p'\ell}))_{p'}$ and
$\bm b^{(\ell)}=(\DTA(B_{p'\ell}))_{p'}$.
At the specified input, $u_{p'\ell}=\Theta^{(\ell)}_{p'q}/W$ and
$c_{p'}=\one_{\{p'=p\}}/W$.  Since $0\le u_{p'\ell}\le1/W$,
\eqref{eq:shared-triangular-gate} gives
\begin{align*}
&\DTA(A_{p'\ell})-\DTA(B_{p'\ell})
=\min\{u_{p'\ell},c_{p'}\}
 =\frac{\Theta^{(\ell)}_{pq}}W\one_{\{p'=p\}},\\
&S_\ell:=\one_W^\top(\bm a^{(\ell)}-\bm b^{(\ell)})
=\sum_{p'=0}^{W-1}\frac{\Theta^{(\ell)}_{pq}}W\one_{\{p'=p\}}
 =\frac{\Theta^{(\ell)}_{pq}}W\in[0,1].
\end{align*}
Define
\begin{equation}\label{eq:unit-square-table-F5}
R_5((\bm a^{(\ell)},\bm b^{(\ell)})_\ell)
=(S_\ell\one_W)_\ell,
\qquad
R_6((\bm g^{(\ell)})_\ell)=(\one_W^\top\bm g^{(\ell)})_\ell.
\end{equation}
The fifth activation preserves $S_\ell\one_W$, so
$R_6$ returns $WS_\ell=\Theta^{(\ell)}_{pq}$.

\par\medskip\noindent
\emph{Step 3: width and radius.}
The five hidden widths are
$2W+2,W+4,2W+1,2hW,hW$, and their maximum is the stated bound.
The only nontrivial parameters occur in $R_3,R_4$.
Lemma~\ref{lem:cyclic-selector}(iii) gives
$\|2\bm\lambda^\top\bm G\|_\infty<1$ and
$\|\bm G\|_{\max}\le1$.  The weight vector multiplying $\kappa_1$ in the
second block of $R_3$ is $\bm G\one_W=2\one_W/(W^2+1)$, whose
entries also have magnitude at most one.  The remaining parameters satisfy
\[
\begin{gathered}
1-\frac jW-\frac{2(W-1)}{W^2+1}
\in
\left[
\frac1W-\frac{2(W-1)}{W^2+1},
\,1-\frac{2(W-1)}{W^2+1}
\right]
\subset(-1,1),
\\[0.4em]
\frac12\left|\sum_{q'=0}^{W-1}\Theta^{(\ell)}_{p'q'}G_{q't}\right|
\le\frac12\max\left\{\sum_{q':G_{q't}>0}G_{q't},
\sum_{q':G_{q't}<0}|G_{q't}|\right\}<1,
\\[0.4em]
\frac12\left|\sum_{q'=0}^{W-1}\Theta^{(\ell)}_{p'q'}
(\bm G\one_W)_{q'}\right|
\le\frac{W}{W^2+1}<1.
\end{gathered}
\]
These bounds control, respectively, the remaining entries of $R_3$
and the weights multiplying $w_t$ and $\kappa$ in $R_4$.
The weights multiplying $c_{p'}$ are $\pm1/2$, and the $R_4$ bias is $1$.
All other entries are $0,1,-1$
or $1-j/W$, so the radius is at most one.
Thus
$\mathcal R_{\bm\Theta}=R_6\circ\DTA\circ R_5\circ\DTA\circ R_4
\circ\DTA\circ R_3\circ\DTA\circ R_2\circ\DTA\circ R_1$
has the required properties.

For $h=1$, let $R_5$ make $2W$ copies of $S_1$ and let $R_6$ sum them
and subtract one.  This gives $2\Theta^{(1)}_{pq}-1$ with the same width
bound.  Identity rows carry any extra $[0,1]$ coordinates through each
activation without changing the radius.
\end{proof}

The table decoder supplies the starting value and the base-five word.
We now recover and sum only the first $r$ digits.

\begin{lemma}[Prefix decoder]\label{lem:unit-prefix}
Let $n\ge3$ be odd and $0<\varepsilon\le1/2$.  There is a unit-radius
$\DTA$ network $\mathcal P_{\varepsilon,n}:\R^3\to\R$ such that, for
all $\gamma\in[0,1]$, all
$\xi=\sum_{j=1}^{n-1}\theta_j5^{-j}$ with $\theta_j\in\{0,1,2\}$,
and all integers $0\le r<n$,
\begin{equation}\label{eq:unit-prefix-target}
\mathcal P_{\varepsilon,n}\left(\gamma,\xi,\frac{r}{n}\right)
=2\gamma-1+2\varepsilon\left(\sum_{j=1}^{r}\theta_j-r\right).
\end{equation}
Its hidden depth is four and its width is at most
$\max\{(5^{n-1}-1)/2+n+3,\,3n+2\}$.
\end{lemma}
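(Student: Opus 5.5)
The plan is to build $\mathcal P_{\varepsilon,n}$ in three stages, each using only affine maps with entries in $[-1,1]$: digit extraction, prefix gating, and affine recombination. As in Lemma~\ref{lem:unit-index-encoder} and Proposition~\ref{prop:unit-encoder}, large multipliers are replaced by repeated coordinates.

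\emph{First hidden layer: copies.} We make copies of the inputs, all of which lie in $[0,1]$ and so pass through $\DTA$ unchanged.
\begin{itemize}
\item About $(5^{n-1}-1)/2$ copies of $\xi$. This many is needed so that a unit-weight sum at the next layer can realize the frequencies $5^{j}/2$ up to $j=n-1$.
\item $n$ copies of $r/n$.
\item One copy of $\gamma$.
\item One constant coordinate $1$, produced as $\DTA(1)$ from a unit bias.
\end{itemize}
This accounts for the first width bound $(5^{n-1}-1)/2+n+3$.

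\emph{Second hidden layer: digits and indicators.} For each $1\le j\le n-1$ we form preactivations that are sums of copies of $\xi$, giving multiples $5^{j}\xi$, $5^{j-1}\xi$ and their shifts, so that the activated outputs are triangular waves of these arguments.

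The structural fact to exploit is that digits lie in $\{0,1,2\}$. Hence
\[
5^{j}\xi=I_j+f_j,\qquad 0\le f_j\le 2\sum_{k\ge1}5^{-k}=\tfrac12 ,
\]
with the tail bounded away from $1$. Also $\theta_j=I_j-5I_{j-1}$ lies in $\{0,1,2\}$.

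I would recover $\theta_j$ by evaluating $\tau$ at arguments such as $5^{j-1}\xi\cdot\tfrac{2}{5}$ plus fixed offsets. Because $f_j$ avoids a neighbourhood of the branch points of $\tau$, each of these values is an affine function of $\theta_j$ together with $f_j$ or $f_{j-1}$. An affine combination of two such waves then cancels the tail terms, in the same spirit as the error cancellation \eqref{eq:exact-word-position}, leaving exactly $\theta_j/2$ or a similar normalized digit in $[0,1]$.

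In parallel, we sum the $n$ copies of $r/n$ with unit weights to get $r$. Shifting by $-j+1$ and clamping gives, through the two linear branches of $\DTA$ and \eqref{eq:unit-tau-branches}, the indicator $c_j=\one_{\{j\le r\}}\in\{0,1\}$. This uses the exact integer grid of $r$, and $\gamma$ is carried along unchanged.

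\emph{Third and fourth hidden layers: gating and output.} We apply the triangular gate \eqref{eq:shared-triangular-gate} to each pair $(\theta_j/2,\,c_j)$. Since the gate computes $\min\{u,c\}$ with $u=\theta_j/2\in[0,1]$ and $c=c_j\in\{0,1\}$, it returns $\theta_j/2$ when $j\le r$ and $0$ otherwise. The same gate applied to $(1/2,c_j)$ produces $\one_{\{j\le r\}}/2$, which supplies the $-r$ term. This costs about $3n+2$ neurons: two per gate, plus carriers for $\gamma$ and the constant.

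The output affine map is $2\gamma-1+4\varepsilon\sum_j(\text{gated }\theta_j/2-\text{gated }1/2)$. All its coefficients are bounded by one because $\varepsilon\le1/2$ gives $4\varepsilon\le 2$. If that coefficient exceeds one, I would duplicate each gate output once and sum with weight $2\varepsilon\le1$. Finally we check the depth count of four and pad the widths.

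\emph{Main obstacle.} The hard step is digit extraction at unit radius, specifically making the triangular-wave readout independent of the unknown parity of $I_j$. The approach I would try first is to use frequency $5^{j}/2$ rather than $5^{j}$. The factor $5$ is odd, so the parity ambiguity becomes a sign flip, $\tau(t)$ versus $1-\tau(t)$. I would then remove that ambiguity by pairing each wave with a shifted copy, exactly as in the parity table \eqref{eq:unit-index-encoder-parity}, where both parities give the same activated value. I would then check that the margin $f_j\le1/2$ keeps every argument on a single linear branch of $\tau$.
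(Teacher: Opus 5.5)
Your four-stage outline (copies, then waves and indicators, then the triangular gate \eqref{eq:shared-triangular-gate}, then output) matches the paper's layout. However, two of your steps fail as written.

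\emph{Digit extraction.} Halving the frequency does not turn the parity ambiguity into a sign flip. With $5^j\xi=I_j+f_j$, the value $\tau(5^j\xi/2)$ equals $f_j/2$, $(1+f_j)/2$, $1-f_j/2$, or $(1-f_j)/2$ according to $I_j \bmod 4$. The parity-table device \eqref{eq:unit-index-encoder-parity} merges the two cases $1\pm u$ only after a second application of $\DTA$. So the digits would not be affine in the second hidden layer, which your gates in the third layer require. The missing observation is to double rather than halve. Since $2\cdot5^{j-1}\xi=2I_{j-1}+2f_{j-1}$ has an even integer part and $2f_{j-1}<1$, you get $z_j=\tau(2\cdot5^{j-1}\xi)=2f_{j-1}$ exactly, with no parity dependence. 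Then $5z_j-z_{j+1}=2\theta_j$ (with $z_n=0$), which is affine with small coefficients. Your passing mention of the arguments $\tfrac25\,5^{j-1}\xi$ has this form, but you did not notice that it already removes the obstacle you name as the hard step.

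\emph{Indicators and output.} $\DTA$ has no flat branch, so ``shift by $-j+1$ and clamp'' is not available. For $r<j-1$, $\DTA(r-j+1)=2^{r-j+1}-1\neq0$; for $r\ge j-1$ it is the parity of $r-j+1$. In fact no single $\DTA(ar+b)$ equals $\one_{\{j\le r\}}$ on $\{0,\dots,n-1\}$ when $n\ge3$: $\DTA^{-1}(0)$ is the set of even nonnegative integers and $\DTA^{-1}(1)$ the set of odd positive ones. The paper instead forms the periodic tents $v_k=\tau(2+2\nu-2k/n)$ in layer two. It then applies the exact inverse of the circulant tent matrix from Lemma~\ref{lem:cyclic-selector}(i), giving $\bm T\bm G(\one_n-\bm v)=(\one_{\{j\le r\}}/n)_{j=1}^{n-1}$. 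This is affine in the layer-two outputs and is written directly into the gate preactivations. At unit radius one uses the normalized inverse $\bm G=\bm\Gamma/n$, which introduces the factor $1/n$. The gates therefore compare $\theta_j/(2n)$ with $\one_{\{j\le r\}}/n$, their sum $S$ lies in $[0,1)$, and the fourth hidden layer makes $2n$ copies of $S$ so that the output weights are $2\varepsilon\le1$. Your output stage also breaks the radius bound: a single copy of $\gamma$ with weight $2$ exceeds it, and ``$4\varepsilon\le2$'' does not give coefficients at most one. The paper carries $\gamma$ twice. It also obtains the $-r$ term directly from the $n$ carried copies of $r/n$ with weight $-2\varepsilon$, with no gating.
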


\begin{proof}
\emph{Step 1: recover the digits.}
Each digit is obtained from two consecutive base-five tails.  To create
$2\cdot5^{j-1}\xi$ with unit weights, the first layer makes that many
copies of $\xi$:
\begin{equation}\label{eq:unit-prefix-P1}
P_1(\gamma,\xi,\nu)
=\left(\gamma,\gamma,1,\nu\one_n,
(\xi\one_{2\cdot5^{j-1}})_{j=1}^{n-1}\right).
\end{equation}
Write a state in this layer as
$(\gamma_1,\gamma_2,\kappa,\bm u,(\bm x^{(j)})_{j=1}^{n-1})$,
where $\bm u\in\R^n$ and $\bm x^{(j)}\in\R^{2\cdot5^{j-1}}$.
The next affine map carries $\gamma_1,\gamma_2,\kappa,\bm u$ and appends
\begin{equation}\label{eq:unit-prefix-P2}
\left(\one_{2\cdot5^{j-1}}^\top\bm x^{(j)}\right)_{j=1}^{n-1},
\qquad
\left(u_0+u_1+(1-2k/n)\kappa+1\right)_{k=0}^{n-1}.
\end{equation}
This defines $P_2$ on its whole domain, with every weight and bias
in $[-1,1]$.  At the inputs of the lemma, the first activation is the
identity.  The new activated coordinates are therefore
$z_j=\tau(2\cdot5^{j-1}\xi)$ and
$v_k=\tau(2+2\nu-2k/n)$.
Set $z_n=0$ and
$\rho_j=\sum_{k=j}^{n-1}\theta_k5^{j-k-1}$, so $\rho_n=0$.
The earlier digits contribute an even integer, while each remaining
tail stays in one increasing branch:
\[
\rho_j\in
\left[0,\,2\sum_{k=1}^{n-j}5^{-k}\right]
=
\left[0,\,\frac12\left(1-5^{-(n-j)}\right)\right]
\subset
\left[0,\frac12\right).
\]
Consequently, for $1\le j<n$,
\begin{gather*}
z_j
=\tau(2\cdot5^{j-1}\xi)
=\tau\left(2\sum_{k<j}\theta_k5^{j-k-1}+2\rho_j\right)
=\tau(2\rho_j)=2\rho_j,
\\*
5\rho_j=\theta_j+\sum_{k=j+1}^{n-1}\theta_k5^{j-k}
=\theta_j+\rho_{j+1},\qquad
5z_j-z_{j+1}=2(5\rho_j-\rho_{j+1})=2\theta_j.
\end{gather*}
\par\medskip\noindent
\emph{Step 2: retain only the first $r$ digits.}
Take $\bm G$ from Lemma~\ref{lem:cyclic-selector} with $W=n$ and set
$T_{jk}=\one_{\{k\ge j\}}$ for $1\le j<n$, $0\le k<n$.
Define $\bm H=\bm T\bm G$ and $\bm b=\bm T\bm G\one_n$.
The row sums and column bounds in that lemma give
\begin{equation}\label{eq:unit-prefix-H-bounds}
\begin{gathered}
b_j=\sum_{t=j}^{n-1}(\bm G\one_n)_t
=\frac{2(n-j)}{n^2+1}\in(0,1),
\\[0.4em]
|H_{jk}|=\left|\sum_{t=j}^{n-1}G_{tk}\right|
\le\max\left\{\sum_{t:G_{tk}>0}G_{tk},\,
\sum_{t:G_{tk}<0}|G_{tk}|\right\}<2.
\end{gathered}
\end{equation}
At $\nu=r/n$, the same lemma converts $\bm v$ into the indicators needed
for the prefix:
\begin{equation}\label{eq:unit-prefix-selector}
\bm b-\bm H\bm v
=\bm T\bm G(\one_n-\bm v)
=\frac{\bm T\bm e_r}{n}
=\left(\frac{\one_{\{j\le r\}}}{n}\right)_{j=1}^{n-1}.
\end{equation}
The third affine map carries $\gamma_1,\gamma_2,\bm u$ and creates the
following $2(n-1)$ coordinates:
\begin{equation}\label{eq:unit-prefix-P3}
\begin{aligned}
&A_j=1+\frac{5z_j-z_{j+1}}{8n}-\frac{b_j}{2}\kappa
+\frac12\sum_{k=0}^{n-1}H_{jk}v_k,\\&B_j=1+\frac{5z_j-z_{j+1}}{8n}+\frac{b_j}{2}\kappa
-\frac12\sum_{k=0}^{n-1}H_{jk}v_k,
\qquad 1\le j<n.
\end{aligned}
\end{equation}
Take the output of $P_3$ in the order
$((A_j)_{j=1}^{n-1},(B_j)_{j=1}^{n-1},\gamma_1,\gamma_2,\bm u)$.
The term $z_n$ is omitted in the last pair because it is the constant zero.  Substituting the recovered digits and indicators, and then using
\eqref{eq:shared-triangular-gate}, gives
\begin{equation}\label{eq:unit-prefix-gate}
\begin{aligned}
&A_j=1+\frac12\left(\frac{\theta_j}{2n}
-\frac{\one_{\{j\le r\}}}{n}\right),
\qquad
B_j=1+\frac12\left(\frac{\theta_j}{2n}
+\frac{\one_{\{j\le r\}}}{n}\right),\\
&\DTA(A_j)-\DTA(B_j)
=\min\left\{\frac{\theta_j}{2n},\frac{\one_{\{j\le r\}}}{n}\right\}
=\frac{\theta_j}{2n}\one_{\{j\le r\}}.
\end{aligned}
\end{equation}
The last equality uses $0\le\theta_j\le2$.

\par\medskip\noindent
\emph{Step 3: sum the prefix and check the size.}
Write $a_j=\DTA(A_j)$, $b'_j=\DTA(B_j)$ and
$S=\one_{n-1}^\top(\bm a-\bm b')$.
The remaining maps make $2n$ copies of $S$ and add them with unit-radius
parameters:
\begin{equation}\label{eq:unit-prefix-P4}
\begin{aligned}
&P_4(\bm a,\bm b',\gamma_1,\gamma_2,\bm u)
=(S\one_{2n},\gamma_1,\gamma_2,\bm u),\\
&P_5(\bm z,\gamma_1,\gamma_2,\bm u)
=2\varepsilon\one_{2n}^\top\bm z+\gamma_1+\gamma_2
-2\varepsilon\one_n^\top\bm u-1.
\end{aligned}
\end{equation}
At the prescribed inputs, $S=\sum_{j=1}^r\theta_j/(2n)\in[0,1)$,
so the fourth activation preserves its inputs.  Hence
\begin{align*}
\mathcal P_{\varepsilon,n}(\gamma,\xi,r/n)
&=P_5(S\one_{2n},\gamma,\gamma,(r/n)\one_n)
=2\varepsilon(2nS)+\gamma+\gamma
 -2\varepsilon\sum_{k=1}^n\frac rn-1\\*
&=2\gamma-1+2\varepsilon\left(\sum_{j=1}^r\theta_j-r\right).
\end{align*}
The four hidden widths and the parameter bounds are
\[
\begin{gathered}
2+1+n+\sum_{j=1}^{n-1}2\cdot5^{j-1}
=\frac{1}{2}\left(5^{n-1}-1\right)+n+3,
\qquad
2+1+n+(n-1)+n=3n+2,
\\[0.em]
2(n-1)+2+n=3n,
\qquad
2n+2+n=3n+2,
\\[0.35em]
\parnorm{P_1},\parnorm{P_2},\parnorm{P_4}\le1,
\qquad
\parnorm{P_5}\le\max\{1,2\varepsilon\}=1,
\\[0.35em]
\parnorm{P_3}
\le\max\left\{
1,\frac5{8n},\frac1{8n},
\frac12\|\bm b\|_\infty,\frac12\|\bm H\|_{\max}
\right\}
=1.
\end{gathered}
\]
Thus $\mathcal P_{\varepsilon,n}
=P_5\circ\DTA\circ P_4\circ\DTA\circ P_3
\circ\DTA\circ P_2\circ\DTA\circ P_1$ has the required properties.
\end{proof}

\begin{proof}[Proof of Proposition~\ref{prop:unit-decoder}]
\emph{Step 1: choose the number and length of the blocks.}
Let $W$ be the largest odd integer not exceeding $(N-1)/4$, and let $n$
be the largest odd integer satisfying
$(5^{n-1}-1)/2+n+3\le N$.
Since $N\ge128$, $n\ge3$.  Also
$(5^{n-1}-1)/2+n+3\ge4n+1$ for odd $n\ge3$, so
\begin{equation}\label{eq:unit-decoder-width-geometry}
n\le W,\qquad4W+1\le N,\qquad
\max\left\{\frac{1}{2}\left(5^{n-1}-1\right)+n+3,\,3n+2\right\}\le N.
\end{equation}
We verify that $W^2$ blocks of length $n$ suffice.
Maximality gives $W>(N-9)/4$.
If $n=3$, then $128\le N<320$; if $n\ge5$, then
$N\ge320$ and $N<5^{n+1}$.
For $F(x)=x^2\log(ex)/(x-9)^2$,
\[
\frac{F'(x)}{F(x)}=\frac{1}{x\log(ex)}-\frac{18}{x(x-9)}>0
\quad(x\ge128),
\]
because $x-9>18\log(ex)$ at $128$ and its left-minus-right difference
has derivative $1-18/x>0$.  Hence
\begin{equation}\label{eq:unit-decoder-capacity}
\begin{aligned}
\frac{N^2\log(eN)}{W^2n\log5}
&<\frac{16N^2\log(eN)}{(N-9)^2n\log5}
\le\begin{cases}
\dfrac{16\cdot320^2\log(320e)}{3\cdot311^2\log5},&n=3,\\[1mm]
16\left(\dfrac{320}{311}\right)^2
\left(1+\dfrac15+\dfrac1{5\log5}\right),&n\ge5,
\end{cases}
<24.
\end{aligned}
\end{equation}
Thus $M\le N^2\log(eN)/(24\log5)<W^2n$.
Set $M_\ast=W^2n$ and extend the sequence to this length by repeating
$y_{M-1}$.  There are now exactly $W^2$ blocks, indexed by $b=qW+p$.

\par\medskip\noindent
\emph{Step 2: store an initial value and the increments.}
For $0\le j<M_\ast$ and $0\le b<W^2$, round the values and form
one initial value and one base-five word per block:
\begin{equation}\label{eq:unit-decoder-coding}
\begin{aligned}
&a_j=\left\lfloor\frac{y_j}{\varepsilon}+\frac12\right\rfloor,
\qquad \theta_{b,s}=a_{bn+s}-a_{bn+s-1}+1\quad(1\le s<n),\\
&\gamma_b=\frac{1+\pi(\varepsilon a_{bn})}{2},
\qquad \xi_b=\sum_{s=1}^{n-1}\theta_{b,s}5^{-s},
\qquad \Theta^{(1)}_{pq}=\gamma_{qW+p},\quad
\Theta^{(2)}_{pq}=\xi_{qW+p}.
\end{aligned}
\end{equation}
The rounding error is at most $\varepsilon/2$.  If $y_j\ge y_{j-1}$,
the increment assumption and monotonicity of the floor function give
\[
0\le a_j-a_{j-1}
\le\left\lfloor\frac{y_{j-1}}\varepsilon+\frac32\right\rfloor
-\left\lfloor\frac{y_{j-1}}\varepsilon+\frac12\right\rfloor=1.
\]
Interchanging the two indices covers the other ordering.  Hence
$\theta_{b,s}\in\{0,1,2\}$, $\gamma_b\in[0,1]$, and
$0\le\xi_b\le2\sum_{s=1}^{n-1}5^{-s}<1/2$; both tables therefore meet
Lemma~\ref{lem:unit-square-table}.

\par\medskip\noindent
\emph{Step 3: compose and evaluate the three networks.}
Take $\mathcal S_{W,n}$ from Lemma~\ref{lem:unit-index-encoder},
the two-table decoder $\widetilde{\mathcal R}_{\bm\Theta}$ from
Lemma~\ref{lem:unit-square-table} with the second coordinate carried unchanged,
and $\mathcal P_{\varepsilon/2,n}$ from Lemma~\ref{lem:unit-prefix}.
Define
\begin{equation}\label{eq:unit-decoder-composition}
\mathcal D=\mathcal P_{\varepsilon/2,n}
\circ\widetilde{\mathcal R}_{\bm\Theta}\circ\mathcal S_{W,n}.
\end{equation}
For $j=bn+r$, the intermediate values and output are
\begin{equation}\label{eq:unit-decoder-chain}
\begin{gathered}
\frac{bn+r}{W^2n}
\xmapsto{\ \mathcal S_{W,n}\ }
\left(\frac b{W^2},\frac rn\right)
\xmapsto{\ \widetilde{\mathcal R}_{\bm\Theta}\ }
\left(\gamma_b,\xi_b,\frac rn\right),
\\[0.em]
\mathcal D\left(\frac{bn+r}{W^2n}\right)
=2\gamma_b-1+\varepsilon\sum_{s=1}^{r}(\theta_{b,s}-1)
=\pi(\varepsilon a_{bn})+\varepsilon(a_{bn+r}-a_{bn}).
\end{gathered}
\end{equation}
Since $y_{bn}\in[-1,1]$, the distance from $\varepsilon a_{bn}$ to
$[-1,1]$ is at most its distance to $y_{bn}$.  Therefore
\begin{align*}
\left|\pi\circ\mathcal D\left(\frac j{M_\ast}\right)-y_j\right|
&\le\left|\mathcal D\left(\frac j{M_\ast}\right)-y_j\right|
=\left|\pi(\varepsilon a_{bn})-\varepsilon a_{bn}
 +\varepsilon a_j-y_j\right|\\*
&\le|\pi(\varepsilon a_{bn})-\varepsilon a_{bn}|
 +|\varepsilon a_j-y_j|
\le|y_{bn}-\varepsilon a_{bn}|+|\varepsilon a_j-y_j|\\
&\le\frac\varepsilon2+\frac\varepsilon2=\varepsilon.
\end{align*}

\par\medskip\noindent
\emph{Step 4: check the merged affine maps.}
The radius must remain one after adjacent affine maps are merged.
At the first interface, the final map $S_5$ and the first map $R_1$ merge
into
\begin{equation}\label{eq:unit-decoder-merge-one}
(w,x)\longmapsto
\left(\left(x+\frac{w-1}{W^2}\right)\one_{2W},1,1,1-w\right).
\end{equation}
Its weights are $0,1,-1,W^{-2}$ and its biases are
$0,1,-W^{-2}$.  At the second interface, $R_6$ followed by $P_1$ becomes
\begin{equation}\label{eq:unit-decoder-merge-two}
(\bm g^{(1)},\bm g^{(2)},\nu)\longmapsto
\left(\one_W^\top\bm g^{(1)},\one_W^\top\bm g^{(1)},1,\nu\one_n,
 \left((\one_W^\top\bm g^{(2)})\one_{2\cdot5^{j-1}}\right)_{j=1}^{n-1}\right).
\end{equation}
Every weight and bias of this map is $0$ or $1$.
The width bounds in \eqref{eq:unit-decoder-width-geometry} apply to all
three components, including the carried coordinate in the two-table
decoder.  Their hidden depths add to $4+5+4=13$.
The first affine map is the one in the proposition with
$\widehat W=W$ and $\Delta=(2M_\ast)^{-1}$, completing the proof.
\end{proof}

\subsection{Proof of Theorem~\ExternalNumber{7}{thm:unit-radius}}\label{app:unit-theorem-proof}
\begin{proof}[Proof of Theorem~\ExternalNumber{7}{thm:unit-radius}]
Fix $f\in\mathcal H^\beta([0,1]^d)$ and $N\in\mathbb N$.  We first treat
large widths, leaving the bounded-width case to a constant network.  Choose $N_0=N_0(d)\ge\max\{128,5d\}$ so that, for every
$N\ge N_0$,
\begin{equation}\label{eq:unit-main-scale}
 2^{d}\cdot72\log 5\le N^2\log(eN)
 \le\left(N/8d\right)^{3d}\cdot72\log 5.
\end{equation}
Such a choice is possible: the middle quantity tends to infinity, while its
ratio to $(N/(8d))^{3d}$ is a constant depending only on $d$ times
$\log(eN)/N^{3d-2}$, which tends to zero.

\par\medskip\noindent
\emph{Step 1: choose the grid and construct its normalized indices.}
Assume $N\ge N_0$.  Let $m$ be the largest power of two not exceeding
$(N/d-1)/2$.  Then $m\ge2$, $d(2m+1)\le N$, and
$m>(N/d-1)/4\ge N/(8d)$.  Choose $K$ to be the largest power of two satisfying
$K^d\le N^2\log(eN)/(72\log 5)$.  The first inequality in \eqref{eq:unit-main-scale} ensures $K\ge2$, and
dyadic maximality gives
\begin{equation}\label{eq:unit-K-lower}
K^d\in
\left(
\frac{N^2\log(eN)}{72\,2^d\log 5},
\frac{N^2\log(eN)}{72\log 5}
\right].
\end{equation}
The second inequality in \eqref{eq:unit-main-scale} now yields $K<m^3$.
Put $\delta=K^{-1-p\beta}$ and apply
Proposition~\ref{prop:unit-encoder}.  Since
$q_K=\ceil{\log_mK}\le3$ and
$q_\delta=\ceil{\log_mK^{p\beta}}\le\ceil{3p\beta}$, the resulting encoder
has width at most $N$, hidden depth at most $4+\ceil{3p\beta}$, and
parameter radius at most one.

\par\medskip\noindent
\emph{Step 2: make the grid values into a slowly varying sequence.}
Adjacent values in a last-coordinate row differ by at most $K^{-\beta}$.
Between rows this may fail, so we insert $2K$ interpolated values to bridge
each jump.  The original values keep explicit addresses.  For
$\bm\ell=(\ell_1,\ldots,\ell_d)\in\{0,\ldots,K-1\}^d$, let
\begin{equation}\label{eq:unit-slow-order}
 i(\bm\ell):=\sum_{j=1}^{d-1}\ell_jK^{d-1-j},
 \qquad
 j(\bm\ell):=3Ki(\bm\ell)+\ell_d,
 \qquad
 y_{j(\bm\ell)}:=f(\bm\ell/K).
\end{equation}
For each integer $i$ with $0\le i<K^{d-1}$, place the $K$ true values of row $i$ in its first
$K$ positions.  Between row $i$ and row $i+1$, fill the next $2K$ positions
by
\begin{equation}\label{eq:unit-slow-interpolation}
 y_{3Ki+K-1+s}
 :=\left(1-\frac{s}{2K+1}\right)y_{3Ki+K-1}
 +\frac{s}{2K+1}y_{3K(i+1)},
 \qquad 1\le s\le2K,
\end{equation}
for each integer $i$ with $0\le i<K^{d-1}-1$.  After the final row, repeat its last true value in
the remaining $2K$ positions.  There are $K^{d-1}$ rows, each occupying
$3K$ positions, so the resulting sequence has length $M:=3K^d$.

The H\"older condition bounds each increment within a row by $K^{-\beta}$.
Across a bridge, include its two endpoints as $s=0$ and $s=2K+1$ in
\eqref{eq:unit-slow-interpolation}.  For $1\le s\le2K+1$,
\begin{align*}
&|y_{3Ki+K-1+s}-y_{3Ki+K-2+s}|
=\frac{|y_{3K(i+1)}-y_{3Ki+K-1}|}{2K+1}
 \le\frac2{2K+1}<K^{-1}\le K^{-\beta}.
\end{align*}
The final padding is constant.  Thus all $M=3K^d$ values belong to
$[-1,1]$ and satisfy $|y_j-y_{j-1}|\le K^{-\beta}$.

\par\medskip\noindent
\emph{Step 3: compose the networks without increasing the radius.}
The next affine row converts the normalized grid index into the sample
address at which its value was stored:
\begin{equation}\label{eq:unit-address-row}
 \psi(\bm z):=\sum_{j=1}^{d-1}K^{1-j}z_j+\frac{z_d}{3K^{d-1}},
 \qquad
 \psi(\bm\ell/K)=\frac{j(\bm\ell)}{3K^d}.
\end{equation}
The upper bound in \eqref{eq:unit-K-lower} gives
$M=3K^d\le N^2\log(eN)/(24\log5)$; hence
Proposition~\ref{prop:unit-decoder}, with $\varepsilon=K^{-\beta}$, gives a
decoder $\mathcal D$ and an integer $M_\ast\ge M$.  Set
$\lambda=M/M_\ast$ and $A_{\mathrm{ad}}:=\lambda\psi$.  The approximating
network is
\begin{equation}\label{eq:unit-final-map-composition}
 \Phi_N:=\mathcal D\circ A_{\mathrm{ad}}
 \circ\mathcal E^{\mathrm{ur}}_{K,\delta},
 \qquad
 g_N:=\pi\circ\Phi_N.
\end{equation}

We must check the actual affine map left after composition, since
composing unit-radius maps need not preserve the radius.  Let
$\bm t=(t_1,\ldots,t_d)^\top$ denote the last activated coordinates of the
$d$ scalar encoders.  Order the last hidden coordinates as $(\bm x,\bm t)$.
By \eqref{eq:unit-encoder-output}, their final affine map is
\[
C^{(d)}\binom{\bm x}{\bm t}
=
\begin{bmatrix}\bm I_d&K^{-1}\bm I_d\end{bmatrix}
\binom{\bm x}{\bm t}
-\frac1K\one_d.
\]
Write $A_{\mathrm{ad}}(\bm z)=\bm a^\top\bm z$, where
$a_j=\lambda K^{1-j}$ for $j<d$ and
$a_d=\lambda/(3K^{d-1})$.  By
Proposition~\ref{prop:unit-decoder}, the first affine map of the decoder
may be chosen as
\[
D_1(s)
=
\begin{bmatrix}
\one_{\widehat W}\\
\one_{\widehat W}\\
1
\end{bmatrix}s
+
\begin{bmatrix}
\bzero_{\widehat W}\\
\Delta\one_{\widehat W}\\
0
\end{bmatrix},
\qquad
2\widehat W+1\le N,\quad 0<\Delta\le\frac12.
\]
The consecutive affine maps $C^{(d)}$, $A_{\mathrm{ad}}$, and $D_1$
merge as follows:
\begingroup
\UnitMatrixLayout{}
\begin{equation}\label{eq:unit-main-interface}
(D_1\circ A_{\mathrm{ad}}\circ C^{(d)})\binom{\bm x}{\bm t}
=
\begin{bmatrix}\one_{\widehat W}\\ \one_{\widehat W}\\ 1\end{bmatrix}
\begin{bmatrix}\bm a^\top&K^{-1}\bm a^\top\end{bmatrix}
\binom{\bm x}{\bm t}
-\frac{\bm a^\top\one_d}{K}
\begin{bmatrix}\one_{\widehat W}\\ \one_{\widehat W}\\ 1\end{bmatrix}
+
\begin{bmatrix}\bzero_{\widehat W}\\ \Delta\one_{\widehat W}\\ 0\end{bmatrix}.
\end{equation}
\endgroup
Since $0<\lambda\le1$ and $K\ge2$, every entry of $\bm a$ belongs to
$[0,1]$, and hence every entry of the matrix in
\eqref{eq:unit-main-interface} has absolute value at most one.  Moreover,
\[
0<
\frac{\bm a^\top\one_d}{K}
=
\lambda\left(
\sum_{j=1}^{d-1}K^{-j}+\frac1{3K^d}
\right)
<
\sum_{j=1}^{\infty}K^{-j}
=
\frac1{K-1}
\le1.
\]
Therefore the first and third bias blocks in
\eqref{eq:unit-main-interface} lie in $(-1,0)$, while the second lies in
$(-1,1/2]$ because $0<\Delta\le1/2$.  Thus every weight and bias of
the merged affine map belongs to $[-1,1]$, and its output dimension is
$2\widehat W+1\le N$.

The parameters in all other maps remain unchanged.  The hidden depths
add to $4+\ceil{3p\beta}+13$, with no new hidden layer at the merged
interface.  Hence
\[
\Phi_N\in
\NN_\DTA\bigl(d,1;N,17+\ceil{3p\beta},1\bigr).
\]

\par\medskip\noindent
\emph{Step 4: estimate the error.}
If $\bm x\notin\Omega_{K,\delta}([0,1]^d)$ and
$\bm\ell=(\floor{Kx_1},\ldots,\floor{Kx_d})$, then
\begin{equation}\label{eq:unit-main-exact-address}
 \mathcal E^{\mathrm{ur}}_{K,\delta}(\bm x)=\frac{\bm\ell}{K},
 \qquad
 A_{\mathrm{ad}}(\mathcal E^{\mathrm{ur}}_{K,\delta}(\bm x))
 =\frac{M}{M_\ast}\frac{j(\bm\ell)}{M}
 =\frac{j(\bm\ell)}{M_\ast}.
\end{equation}
The network is evaluated at the sample storing $f(\bm\ell/K)$.
The cell variation and point-fitting error each contribute $K^{-\beta}$:
\begin{equation}\label{eq:unit-good-error}
 \begin{aligned}
|f(\bm x)-g_N(\bm x)|
 &\le |f(\bm x)-f(\bm\ell/K)|+
 \left|y_{j(\bm\ell)}-\pi\circ\mathcal D\left(\frac{j(\bm\ell)}{M_\ast}\right)\right|\\
&\le K^{-\beta}+K^{-\beta}=2K^{-\beta}.
\end{aligned}
\end{equation}
On the trifling region both functions belong to $[-1,1]$.
Splitting the integral and using
(\ExternalNumber{3.2}{eq:trifling-volume}) with
$\delta=K^{-1-p\beta}$ gives
\begin{equation}\label{eq:unit-Lp-error}
\begin{aligned}
\norm{f-g_N}_{L^p}^p
&=\int_{[0,1]^d\setminus\Omega_{K,\delta}([0,1]^d)}|f-g_N|^p\,d\bm x
+\int_{\Omega_{K,\delta}([0,1]^d)}|f-g_N|^p\,d\bm x\\
&\le (2K^{-\beta})^p
+2^p m_d\bigl(\Omega_{K,\delta}([0,1]^d)\bigr)
\le2^pK^{-p\beta}+2^pdK\delta\\
&=2^p(d+1)K^{-p\beta}.
\end{aligned}
\end{equation}
Taking the $p$th root and applying \eqref{eq:unit-K-lower}, we obtain
\begin{equation}\label{eq:unit-large-width-error}
\begin{aligned}
\norm{f-g_N}_{L^p}&\le2(d+1)^{1/p}K^{-\beta}
<2(d+1)^{1/p}(72\,2^d\log5)^{\beta/d}
\bigl[N^2\log(eN)\bigr]^{-\beta/d}.
\end{aligned}
\end{equation}
Together with $g_N\in\mathcal F_\DTA(N,17+\lceil3p\beta\rceil,1)$,
this proves the required bound for every $N\ge N_0$.

For $1\le N<N_0$, put $\bm x_\circ=(1/2,\ldots,1/2)$ and use the constant
network $g_N\equiv f(\bm x_\circ)$.  It belongs to the same class and satisfies
$\norm{f-g_N}_{L^p}\le2^{-\beta}$.  Define
\begin{equation}\label{eq:unit-Cur-choice}
C_{\mathrm{ur}}
:=\max\left\{
2(d+1)^{1/p}(72\,2^d\log5)^{\beta/d},
\ 2^{-\beta}\max_{1\le N'<N_0}
\bigl[(N')^2\log(eN')\bigr]^{\beta/d}
\right\}.
\end{equation}
Since $N_0$ depends only on $d$, the constant $C_{\mathrm{ur}}$ depends only
on $d,\beta,p$ and bounds both width regimes.  Taking the infimum over the
network class and then the supremum over $f\in\mathcal H^\beta$ proves
Theorem~\ExternalNumber{7}{thm:unit-radius}.
\end{proof}

\endgroup

\section{Proof of Theorem~\ExternalNumber{5}{thm:transformer}}\label{app:transformer}

One attention layer combines local addresses into a global address;
the output block then decodes all coordinates in parallel.

\begin{proof}[Proof of Theorem~\ExternalNumber{5}{thm:transformer}]
Put $D=dn$ and choose
\begin{equation}\label{eq:transformer-scales}
U=\left\lceil\frac{77(2D)^{1/p}}{50\eta}\right\rceil,
\qquad
\log_2H=\left\lceil\log_2\bigl((U+1)^{1/\beta}\bigr)\right\rceil,
\qquad M=H^D,
\qquad Q=DM.
\end{equation}
Then $U\ge4$, and $H$ is a power of two satisfying $H\ge U+1$.
Consequently, $H\ge8$ and $U<H$.  Moreover,
\begin{equation}\label{eq:transformer-scale-bounds}
H^{-\beta}\le\frac1{U+1},
\qquad H<2(U+1)^{1/\beta},
\qquad
U+1\le\left(1+\frac{77(2D)^{1/p}}{50}\right)\eta^{-1}.
\end{equation}
For $\bm\ell=(\ell_{rs})\in\{0,\ldots,H-1\}^{d\times n}$, set
\[
\bm X_{\bm\ell}=H^{-1}(\bm\ell+\one_{d\times n}/2),
\qquad
t(\bm\ell)=\sum_{s=1}^n\sum_{r=1}^d
H^{(s-1)d+r-1}\ell_{rs},
\qquad q(r,s)=(s-1)d+r.
\]
Here $r,s$ are integer coordinate indices with $1\le r\le d$ and
$1\le s\le n$.  Define the table explicitly by
\begin{equation}\label{eq:transformer-quantized-table}
a_{t(\bm\ell)+(q(r,s)-1)M}
:=\left\lfloor Uf_{rs}(\bm X_{\bm\ell})+\frac12\right\rfloor.
\end{equation}
Then $a_u\in[-U,U]\cap\mathbb Z$ for every integer $u$ with $0\le u<Q$, and the rounding error is at most $1/2$.
Within each output table, its integer index $u$ satisfies $u\bmod H=\ell_{11}$ because $H\mid M$.
Thus increasing such an index $u\not\equiv H-1\pmod H$ changes only
$\ell_{11}$, and the H\"older condition gives
$|a_{u+1}-a_u|\le1+UH^{-\beta}<2$; this integer is at most one.
The boundary after the $q$th output table is $qM-1$, which is congruent to
$H-1$ modulo $H$, so every inter-table transition is among the excluded
indices.
Apply Proposition~\ExternalNumber{4}{prop:block-decoder}(a).  It gives a
power of two $B\mid H$ and the scalar coupled decoder,
$\mathcal D
=D_5\circ\DTA\circ D_4\circ\DTA\circ D_3
\circ\DTA\circ D_2\circ\DTA\circ D_1$.  Set
\begin{equation}\label{eq:transformer-trifling-width}
\delta=\frac{\eta^p}{2D^2H(2+B/U)^p}.
\end{equation}
Since
$0<H\delta=\eta^p/[2D^2(2+B/U)^p]<1$, we have
$0<\delta<1/H$.

\par\medskip\noindent
\emph{Step 1: the input feedforward block.}
The first two activated layers are the fine and coarse staircases from
Proposition~\ExternalNumber{3}{prop:block-grid-encoder}, applied with the same weights in
every column.  Put
$\Gamma_{H,d}=\sum_{r=1}^dH^{r-1}$,
$\lambda_{H,\delta}=(2H\delta)^{-1}$,
$\bm e_1=(1,0,\ldots,0)^\top$, and, with
$n_*=\max\{n-1,1\}$, let
$\rho_s=(s-1)/n_*$ and $\iota_s=\one_{\{s=1\}}$.
When $d=1$, sums and matrix blocks indexed from $2$ to $d$ are empty; when
$n=1$, $n_*=1$ and $\rho_1=0$.
Write
\[
\bm F_j^{\mathrm{in}}=\DTA(\bm W_j^{\mathrm{in}}\bm F_{j-1}^{\mathrm{in}}
+\bm B_j^{\mathrm{in}})\quad(1\le j\le3),
\qquad \bm F_0^{\mathrm{in}}=\bm X,
\qquad \bm Z=\bm W_4^{\mathrm{in}}\bm F_3^{\mathrm{in}}+\bm B_4^{\mathrm{in}}.
\]
The first two affine maps are
\begingroup
\TransformerMatrixLayout{}
\[
\bm W_1^{\mathrm{in}}=
\begin{bmatrix}
H\bm I_d\\ H\bm I_d\\ (H/B)\bm e_1^\top\\[2pt] (H/B)\bm e_1^\top\\[2pt]
\Gamma_{H,d}^{-1}(1,H,\ldots,H^{d-1})
\end{bmatrix},
\qquad
\bm B_1^{\mathrm{in}}=
\begin{bmatrix}
\bzero_{d\times n}\\ H\delta\one_d\one_n^\top\\
\bzero_{1\times n}\\ (H\delta/B)\one_n^\top\\ \bzero_{1\times n}
\end{bmatrix},
\]
\[
\bm W_2^{\mathrm{in}}=
\begin{bmatrix}
(1-\lambda_{H,\delta})\bm I_d&\lambda_{H,\delta}\bm I_d&\bzero&\bzero&\bzero\\
\bzero^\top&\bzero^\top&1-B\lambda_{H,\delta}&B\lambda_{H,\delta}&0\\
\bzero^\top&\bzero^\top&0&0&1
\end{bmatrix},
\qquad
\bm B_2^{\mathrm{in}}=
\begin{bmatrix}\one_d\one_n^\top/2\\\one_n^\top/2\\\bzero_{1\times n}\end{bmatrix}.
\]
\endgroup
The first two activated layers act separately on each column.  The $d$
pairs of fine staircases determine its grid digits, the coarse pair
locates the block containing the first digit, and $c$ carries the
weighted linear term needed for cancellation.  If the rows of
$\bm F_2^{\mathrm{in}}$ are denoted by
$(u_1,\ldots,u_d,\bar u,c)$, the scalar calculation gives
$S_{H,\delta}(x_{rs})=Hx_{rs}+u_r-1$,
$S_{H/B,\delta}(x_{1s})=(H/B)x_{1s}+\bar u-1$, and
$\Gamma_{H,d}c=\sum_{r=1}^dH^{r-1}x_{rs}$.  The last two maps are
\begingroup
\TransformerMatrixLayout{}
\[
\bm W_3^{\mathrm{in}}=
\begin{bmatrix}
\dfrac{1}{H\Gamma_{H,d}}(1,H,\ldots,H^{d-1})&0&1\\[1mm]
B^{-1}\bm e_1^\top&-1&0\\[2pt]
B^{-1}\bm e_1^\top&-1&0\\[2pt]
\bzero_{1\times d}&0&0\\ \bzero_{1\times d}&0&0\\ \bzero_{1\times d}&0&0
\end{bmatrix},
\quad
\bm B_3^{\mathrm{in}}=
\begin{bmatrix}
-H^{-1}\one_n^\top\\ (1-B^{-1})\one_n^\top\\
(2-B^{-1})\one_n^\top-(\iota_s)_{s=1}^n\\
(\iota_s)_{s=1}^n\\ (\rho_s)_{s=1}^n\\ \bzero_{1\times n}
\end{bmatrix},
\]
\[
\bm W_4^{\mathrm{in}}=
\begin{bmatrix}
0&0&0&0&dn_*\log H&-dn_*\log H\\[2pt]
\dfrac{H\Gamma_{H,d}}B&-\dfrac12&-\dfrac12&-\dfrac12&0&2-\dfrac{H\Gamma_{H,d}}B\\\noalign{\vskip3pt}
0&\dfrac12&\dfrac12&\dfrac{B+1}{2B}&0&-\dfrac{4B+1}{2B}\\\noalign{\vskip2pt}
0&0&0&0&0&1\\ 0&0&0&0&0&0\\ 0&0&0&0&0&0
\end{bmatrix},
\qquad
\bm B_4^{\mathrm{in}}=
\begin{bmatrix}0\\1/2\\-1/2\\1\\0\\0\end{bmatrix}\one_n^\top.
\]
\endgroup
Substituting the three carrier identities from the preceding paragraph into
the two displayed affine maps, row by row, gives
\begin{equation}\label{eq:transformer-block-local-output}
\bm Z=
\begin{bmatrix}
0&d\log H&\cdots&d(n-1)\log H\\
C_1&C_2&\cdots&C_n\\
\nu&0&\cdots&0\\
1&1&\cdots&1\\
0&0&\cdots&0\\
0&0&\cdots&0
\end{bmatrix},
\end{equation}
where
\begin{align*}
&C_1=S_{H/B,\delta}(x_{11})+\frac1B\sum_{r=2}^dH^{r-1}S_{H,\delta}(x_{r1}),
\qquad  C_s=\frac1B\sum_{r=1}^dH^{r-1}S_{H,\delta}(x_{rs})\quad(s\ge2),\\
&\nu=\frac{S_{H,\delta}(x_{11})-B S_{H/B,\delta}(x_{11})+1/2}{B}.
\end{align*}
In particular, $C_s\ge0$ and $1/(2B)\le\nu\le1-1/(2B)$ on the whole
cube.

\par\medskip\noindent
\emph{Step 2: the self-attention layer.}
Use one head of size two and take
\[
\bm W_K=\begin{bmatrix}1&0&0&0&0&0\\0&0&0&0&0&0\end{bmatrix},
\quad 
\bm W_Q=\begin{bmatrix}0&0&0&1&0&0\\0&0&0&0&0&0\end{bmatrix},\quad \bm W_V=\begin{bmatrix}0&1&0&0&0&0\\0&0&1&0&0&0\end{bmatrix}.
\]
\[
\bm W_O=
\begin{bmatrix}
0&0\\0&0\\0&0\\0&0\\
\sum_{t=1}^nH^{d(t-1)}&0\\0&\sum_{t=1}^nH^{d(t-1)}
\end{bmatrix}.
\]
From \eqref{eq:transformer-block-local-output},
\begingroup
\TransformerCalculationLayout
\begin{align*}
&\bm W_K\bm Z
=\begin{bmatrix}0&d\log H&\cdots&d(n-1)\log H\\0&0&\cdots&0\end{bmatrix},\\
&\bm W_Q\bm Z
=\begin{bmatrix}\one_n^\top\\\bzero_n^\top\end{bmatrix},
\qquad
\bm W_V\bm Z
=\begin{bmatrix}C_1&C_2&\cdots&C_n\\\nu&0&\cdots&0\end{bmatrix}.
\end{align*}
\endgroup
Multiplying the displayed key and query matrices gives
\begingroup
\TransformerCalculationLayout
\begin{align*}
&\bm A_H
:=(\bm W_K\bm Z)^\top(\bm W_Q\bm Z)=
\begin{bmatrix}
0 & d\log H & \cdots & d(n-1)\log H
\end{bmatrix}^{\!\top}
\one_n^\top,\\
&\bm\sigma_{\mathrm{S}}(\bm A_H)
=\frac{1}{\sum_{t=1}^ne^{d(t-1)\log H}}
\begin{bmatrix}1\\e^{d\log H}\\\vdots\\e^{d(n-1)\log H}\end{bmatrix}\one_n^\top
=\frac{1}{\sum_{t=1}^nH^{d(t-1)}}
\begin{bmatrix}1\\H^d\\\vdots\\H^{d(n-1)}\end{bmatrix}\one_n^\top.
\end{align*}
\endgroup
Consequently,
\begin{align*}
&\bm W_V\bm Z\,\bm\sigma_{\mathrm{S}}(\bm A_H)
=\frac1{\sum_{t=1}^nH^{d(t-1)}}
  \begin{bmatrix}b_H(\bm X)\\\nu\end{bmatrix}\one_n^\top,\qquad b_H(\bm X)=C_1+H^dC_2+\cdots+H^{d(n-1)}C_n.
\end{align*}
The matrix $\bm W_O$ cancels the denominator, and the residual connection
gives
\begin{equation}\label{eq:transformer-block-attention-output}
\bm Y:=\bm{\mathcal F}_{\mathrm{SA}}(\bm Z)
=\bm Z+
\left[0,0,0,0,b_H(\bm X),\nu\right]^\top
\one_n^\top.
\end{equation}
Outside the trifling region, let
$\ell_{rs}=\lfloor Hx_{rs}\rfloor$ and write uniquely
$t(\bm\ell)=Bb+r_0$ with integers $b,r_0$ satisfying $0\le r_0<B$.  Since $B\mid H$, the exact staircase
identities give
\begin{equation}\label{eq:transformer-block-exact-address}
b_H(\bm X)=b=\left\lfloor\frac{t(\bm\ell)}B\right\rfloor,
\qquad
\nu=\frac{r_0+1/2}{B}.
\end{equation}

\par\medskip\noindent
\emph{Step 3: the output feedforward block.}
The fifth and sixth rows of $\bm Y$ are the decoder inputs.  Since $B\mid M$, the complete address for output $(r,s)$ is
\begin{equation}\label{eq:transformer-complete-address}
\left\lfloor\frac{t(\bm\ell)+(q(r,s)-1)M}{B}\right\rfloor
=b_H(\bm X)+\frac{(q(r,s)-1)M}{B},
\end{equation}
and its remainder is still $r_0$.  We therefore place $d$ copies of the five
decoder maps in parallel.  With
$\bm G_j=\DTA(\bm W_j^{\mathrm{out}}\bm G_{j-1}+\bm B_j^{\mathrm{out}})$
for $1\le j\le4$, $\bm G_0=\bm Y$, and
$\bm{\mathcal T}_\eta=\bm W_5^{\mathrm{out}}\bm G_4+\bm B_5^{\mathrm{out}}$, take
\begingroup
\TransformerMatrixLayout{}
\[
\bm W_1^{\mathrm{out}}=
\begin{bmatrix}\bzero_{d\times4}&-g\one_d&\bzero_d\\
\bzero_{d\times5}&\one_d\end{bmatrix},
\qquad
(\bm B_1^{\mathrm{out}})_{r,s}=-g\left(1+\frac{(q(r,s)-1)M}{B}\right),
\qquad
(\bm B_1^{\mathrm{out}})_{d+r,s}=0,
\]
\[
\bm W_2^{\mathrm{out}}=
\begin{bmatrix}8\,3^{B-1}m\bm I_d&\bzero\\
2m\bm I_d&(2\,3^{B-1})^{-1}\bm I_d\end{bmatrix},
\quad
\bm B_2^{\mathrm{out}}=
\begin{bmatrix}(24\,3^{B-1}m+2)\one_d\one_n^\top\\
(6m+2)\one_d\one_n^\top\end{bmatrix},
\]
\[
\bm W_3^{\mathrm{out}}=
\begin{bmatrix}(gB/2)\bm I_d&-2gB\,3^{B-1}\bm I_d\\
\bm I_d&\bzero\end{bmatrix},
\quad
\bm B_3^{\mathrm{out}}=
\begin{bmatrix}-g\left(Q/B+1/2\right)\one_d\one_n^\top\\
\bzero_{d\times n}\end{bmatrix},
\]
\begin{align*}
&\bm W_4^{\mathrm{out}}=\begin{bmatrix}2m\bm I_d&\bzero\\\bzero&\bm I_d\end{bmatrix},
\qquad \bm B_4^{\mathrm{out}}=\begin{bmatrix}(6m+2)\one_d\one_n^\top\\\bzero_{d\times n}\end{bmatrix},\\
&\bm W_5^{\mathrm{out}}=\begin{bmatrix}(2B/U)\bm I_d&2\bm I_d\end{bmatrix},
\qquad \bm B_5^{\mathrm{out}}=-(1+B/U)\one_d\one_n^\top.
\end{align*}
\endgroup
These are exactly $D_1,\ldots,D_5$ constructed in the proof of Proposition~\ExternalNumber{4}{prop:block-decoder}(a) in~\ref{app:coupled-a-proof}, with the output-coordinate offset
inserted in the first bias.  Its magnitude is at most
$g\{1+(D-1)M/B\}\le gQ$; the exponentially large entries are already in
the scalar decoder matrices.  A direct evaluation as in~\ref{app:coupled-a-proof} gives
\begin{equation}\label{eq:transformer-block-output}
[\bm{\mathcal T}_\eta(\bm X)]_{rs}
=\mathcal D\!\left(b_H(\bm X)+\frac{(q(r,s)-1)M}{B},\nu\right).
\end{equation}
This proves the stated input, attention, and output architectures.

Let $\Omega_{H,\delta}^{d\times n}$ be the union of the one-dimensional
trifling intervals over the $D$ scalar inputs, and let $m_D$ denote
$D$-dimensional Lebesgue measure.  Then
$m_D(\Omega_{H,\delta}^{d\times n})\le DH\delta$.  Identify
$\bm X$ with $\bm x\in[0,1]^D$ and write $f_j=f_{rs}$ and
$[\bm{\mathcal T}_\eta]_j=[\bm{\mathcal T}_\eta]_{rs}$ when
$j=q(r,s)$.  Outside the trifling region,
\eqref{eq:transformer-block-exact-address},
\eqref{eq:transformer-complete-address},
\eqref{eq:transformer-block-output}, the decoder estimate, and
\eqref{eq:transformer-quantized-table} give, for every output coordinate,
\[
|[\bm{\mathcal T}_\eta(\bm X)]_{rs}-f_{rs}(\bm X)|
\le H^{-\beta}+\frac1{2U}+\frac1{25U}
<\frac{77}{50U}\le\frac{\eta}{(2D)^{1/p}}.
\]
The good region therefore contributes at most $\eta^p/2$ to
$d_p(\bm f,\bm{\mathcal T}_\eta)^p$. On the whole cube, the decoder's
uniform bound and the scalar argument in
Theorem~\ExternalNumber{2}{thm:optimal-approximation} give
\(
\norm{\bm{\mathcal T}_\eta(\bm X)}_{\max}
\le 1+(B/U)
\le 5/4
\).
Each coordinate error on the trifling region is at most $2+B/U$, so
\begin{align*}
d_p(\bm f,\bm{\mathcal T}_\eta)^p
&=
\sum_{j=1}^{D}
\int_{[0,1]^{d\times n}\setminus\Omega_{H,\delta}^{d\times n}}
\left|f_j(\bm X)-\mathcal T_{\eta,j}(\bm X)\right|^p\,d\bm X
+
\sum_{j=1}^{D}
\int_{\Omega_{H,\delta}^{d\times n}}
\left|f_j(\bm X)-\mathcal T_{\eta,j}(\bm X)\right|^p\,d\bm X
\\
&\le \frac{\eta^p}{2}
 +D(2+B/U)^p m_D(\Omega_{H,\delta}^{d\times n})
\le \frac{\eta^p}{2}+D^2(2+B/U)^pH\delta
 =\eta^p.
\end{align*}
This remains valid for $0<p<1$ because only the $p$th power of the
quasi-distance is used.

For a constant $C_{d,n}>0$ depending only on $d$ and $n$, the input block has parameter radius at most
\[
C_{d,n}\max\{H^d,B/(H\delta),1+\log H\}.
\]
The attention matrices are bounded by $nH^{d(n-1)}$.  The output block
uses the scalar decoder matrices together with coordinate offsets of size at
most $gQ$.  Proposition~\ExternalNumber{4}{prop:block-decoder}(a) gives
$\log T_{\mathcal D}\le12Q$, and the construction gives $g\le CQ$ for a
universal constant $C$.  Thus the offsets add only a polynomial factor.  For
another universal constant $C'>0$, the complete output block satisfies
\[
\log T_{\mathrm{out}}
\le\max\{12Q,\log(CQ^2)\}
\le C'Q.
\]
The input and attention terms have logarithmic size
$\mathcal{O}_{d,n,p,\beta}(1+\log\eta^{-1})$, whereas
\[
Q=DH^D
\le D2^D\left(1+77(2D)^{1/p}/50\right)^{D/\beta}
\eta^{-D/\beta}.
\]
All remaining terms are therefore absorbed by the last bound, and hence
\(
\log T_{\mathrm{Tr},\eta}\le C_{\mathrm{Tr}}\eta^{-dn/\beta}
\).
\end{proof}

\section{Proofs of Propositions~\ExternalNumber{6}{prop:holder-class-cover} and~\ExternalNumber{7}{prop:oracle}}\label{app:gen}

We first propagate parameter perturbations through the network to obtain
empirical covers.  We then bound the quadratic fluctuation from the random
design and the noise multiplier, and combine them in the least-squares
oracle inequality.

\begingroup
\StatisticsProofLayout{}
\phantomsection\label{app:covering-proof}%
\begin{proof}[Proof of Proposition~\ExternalNumber{6}{prop:holder-class-cover}]
Fix a hidden-width sequence $(n_1,\ldots,n_\ell)$ with
$1\le\ell\le L$ and $1\le n_j\le N$, and set $n_0=d$ and
$n_{\ell+1}=1$.  Write
\[
N_{\bm n}=\max_{1\le j\le\ell}n_j,
\quad
P_{\bm n}=\sum_{j=1}^{\ell+1}n_j(n_{j-1}+1),
\quad
K_{\varphi,d}=(d+1)(1+|\varphi(0)|+H_\varphi).
\]
We first prove a covering estimate for a general parameterized function
family.  We then apply it to a fixed network architecture and finally
take the union over all admissible hidden-width sequences.

\par\medskip\noindent
\emph{A parameter cover for a general function family.}
Let $\mathcal X$ be nonempty,
$\varnothing\ne\Theta\subset[-T,T]^P$, and
$h_{\bm\theta}:\mathcal X\to\mathbb R$ for $\bm\theta\in\Theta$.
Suppose
\[
\sup_{x\in\mathcal X}
|h_{\bm\theta}(x)-h_{\bm\theta'}(x)|
\le G\norm{\bm\theta-\bm\theta'}_\infty^q
\]
whenever $\norm{\bm\theta-\bm\theta'}_\infty\le1$, where
$G\ge1$ and $0<q\le1$.  Put
$\mathcal H=\{h_{\bm\theta}:\bm\theta\in\Theta\}$, fix
$x_1,\ldots,x_M\in\mathcal X$, and let
$r=(\eps/G)^{1/q}\le1$.
With this choice of $r$, an $r$-cover of the parameter set gives an
$\eps$-cover of the corresponding functions.

Choose a maximal $r$-separated sequence
$\bm\theta_1,\ldots,\bm\theta_J$ in $\Theta$.  The cubes
$\bm\theta_j+(-r/2,r/2)^P$ are pairwise disjoint and contained in
$[-T-r/2,T+r/2]^P$.  Hence, for every partial selection of size $m$,
\[
m r^P
=
\left|
\bigcup_{j=1}^m
\bigl(\bm\theta_j+(-r/2,r/2)^P\bigr)
\right|
\le
(2T+r)^P,
\qquad
m\le
\left(1+2T(G/\eps)^{1/q}\right)^P.
\]
Maximality gives
$\norm{\bm\theta-\bm\theta_j}_\infty\le r$ for some $j$ and every
$\bm\theta\in\Theta$.  Therefore
\[
\max_{1\le i\le M}
|h_{\bm\theta}(x_i)-h_{\bm\theta_j}(x_i)|
\le
G\norm{\bm\theta-\bm\theta_j}_\infty^q
\le\eps.
\]
The centers belong to $\mathcal H$, hence the cover is proper.  Taking
the supremum over $x_1,\ldots,x_M$ gives
\begin{equation}\label{eq:general-parameterized-cover}
\log\mathcal N(\eps,\mathcal H,M)
\le
P\log\left(1+2T(G/\eps)^{1/q}\right).
\end{equation}

\par\medskip\noindent
\emph{Specialization to the H\"older network class.}
We now verify the parameter H\"older estimate needed above for the fixed
architecture $(d,n_1,\ldots,n_\ell,1)$.  Take
$\Theta=[-T,T]^{P_{\bm n}}$.  For
$\bm\theta=(\bm W_j,\bm b_j)_{j=1}^{\ell+1}\in\Theta$, let
$A_j^{\bm\theta}(\bm z)=\bm W_j\bm z+\bm b_j$ and define
\[
\phi_{\bm\theta}
=
A_{\ell+1}^{\bm\theta}\circ\varphi\circ A_\ell^{\bm\theta}
\circ\cdots\circ\varphi\circ A_1^{\bm\theta},
\qquad
h_{\bm\theta}
=
\left.(\pi\circ\phi_{\bm\theta})\right|_{[0,1]^d},
\]
and set
\(
\mathcal H
=
\left\{
h_{\bm\theta}:
\bm\theta\in[-T,T]^{P_{\bm n}}
\right\}.
\)
Take $\bm\theta,\bm\theta'\in\Theta$ with
$\norm{\bm\theta-\bm\theta'}_\infty\le1$.  For
$\bm x\in[0,1]^d$, write
\[
\bm h_0=\bm h_0'=\bm x,\qquad
\bm z_j=\bm W_j\bm h_{j-1}+\bm b_j,\qquad
\bm h_j=\varphi(\bm z_j),
\quad 1\le j\le\ell,
\]
with primed quantities defined analogously.  All vector norms below are
max norms and all matrix norms are $\ell^\infty\!\to\ell^\infty$ norms.
We have
$\norm{\bm W_j}_{\infty\to\infty}\le n_{j-1}T$ and
$\norm{\bm W_j-\bm W_j'}_{\infty\to\infty}
\le n_{j-1}\norm{\bm\theta-\bm\theta'}_\infty$.

We first control the size of the hidden states.  Whenever
$\norm{\bm u}_\infty\le A$ and $A\ge1$,
\begin{equation}\label{eq:holder-growth-base}
\begin{aligned}
1+\norm{\varphi(\bm u)}_\infty
&\le
1+\norm{\varphi(\bm u)-\varphi(\bzero)}_\infty
+\norm{\varphi(\bzero)}_\infty
\le
1+H_\varphi\norm{\bm u}_\infty^\alpha+|\varphi(0)|
\le
K_{\varphi,d}A^\alpha.
\end{aligned}
\end{equation}
To control the effect of a parameter perturbation through the network,
we prove the size and difference estimates together.  We claim that,
for $1\le j\le\ell$,
\begin{align*}
1+\norm{\bm h_j}_\infty,\ 1+\norm{\bm h_j'}_\infty
&\le
K_{\varphi,d}
[K_{\varphi,d}(N_{\bm n}+1)]^{\alpha+\cdots+\alpha^{j-1}}
(K_{\varphi,d}T)^{\alpha+\cdots+\alpha^j},\\
\norm{\bm h_j-\bm h_j'}_\infty
&\le
K_{\varphi,d}
[K_{\varphi,d}(N_{\bm n}+1)]^{\alpha+\cdots+\alpha^{j-1}}
(K_{\varphi,d}T)^{\alpha+\cdots+\alpha^{j-1}}
\norm{\bm\theta-\bm\theta'}_\infty^{\alpha^j},
\end{align*}
where empty sums are zero.

We prove the two estimates simultaneously by induction.  For $j=1$,
$\norm{\bm z_1}_\infty\le(d+1)T\le K_{\varphi,d}T$, and the same holds
for $\bm z_1'$.  Thus \eqref{eq:holder-growth-base} gives the first
estimate.  Also
\[
\norm{\bm z_1-\bm z_1'}_\infty
\le(d+1)\norm{\bm\theta-\bm\theta'}_\infty,
\]
so
\[
\norm{\bm h_1-\bm h_1'}_\infty
\le
H_\varphi(d+1)^\alpha
\norm{\bm\theta-\bm\theta'}_\infty^\alpha
\le
K_{\varphi,d}
\norm{\bm\theta-\bm\theta'}_\infty^\alpha.
\]

Suppose the estimates hold at level $j-1$, where $2\le j\le\ell$.
Using the size estimate at level $j-1$,
\begin{align*}
\norm{\bm z_j}_\infty
&\le
N_{\bm n}T\norm{\bm h_{j-1}}_\infty+T
\le
(N_{\bm n}+1)T
\bigl(1+\norm{\bm h_{j-1}}_\infty\bigr)\\
&\le
[K_{\varphi,d}(N_{\bm n}+1)]^{1+\alpha+\cdots+\alpha^{j-2}}
(K_{\varphi,d}T)^{1+\alpha+\cdots+\alpha^{j-1}}.
\end{align*}
Applying \eqref{eq:holder-growth-base} gives the required size estimate,
and the primed case is identical.  For the difference estimate, write
\[
\bm z_j-\bm z_j'
=
\bm W_j(\bm h_{j-1}-\bm h_{j-1}')
+
(\bm W_j-\bm W_j')\bm h_{j-1}'
+
(\bm b_j-\bm b_j'),
\]
hence
\begin{align*}
\norm{\bm z_j-\bm z_j'}_\infty
&\le
N_{\bm n}T\norm{\bm h_{j-1}-\bm h_{j-1}'}_\infty
+(N_{\bm n}+1)\norm{\bm\theta-\bm\theta'}_\infty
\bigl(1+\norm{\bm h_{j-1}'}_\infty\bigr)\\
&\le
2[K_{\varphi,d}(N_{\bm n}+1)]^{1+\alpha+\cdots+\alpha^{j-2}}
(K_{\varphi,d}T)^{1+\alpha+\cdots+\alpha^{j-2}}
\norm{\bm\theta-\bm\theta'}_\infty^{\alpha^{j-1}}.
\end{align*}
Here $T\ge1$,
$\norm{\bm\theta-\bm\theta'}_\infty
\le
\norm{\bm\theta-\bm\theta'}_\infty^{\alpha^{j-1}}$,
and
$\alpha+\cdots+\alpha^{j-1}
\le1+\alpha+\cdots+\alpha^{j-2}$.  Applying the
$\alpha$-H\"older continuity of $\varphi$ therefore gives
\[
\norm{\bm h_j-\bm h_j'}_\infty
\le
K_{\varphi,d}
[K_{\varphi,d}(N_{\bm n}+1)]^{\alpha+\cdots+\alpha^{j-1}}
(K_{\varphi,d}T)^{\alpha+\cdots+\alpha^{j-1}}
\norm{\bm\theta-\bm\theta'}_\infty^{\alpha^j},
\]
because $2^\alpha H_\varphi\le K_{\varphi,d}$.  This completes the
induction.

It remains to control the affine output layer.  Using the estimates at
level $\ell$,
\begin{align*}
|\phi_{\bm\theta}(\bm x)-\phi_{\bm\theta'}(\bm x)|
&\le
n_\ell T\norm{\bm h_\ell-\bm h_\ell'}_\infty
+\norm{\bm\theta-\bm\theta'}_\infty
\bigl(n_\ell\norm{\bm h_\ell'}_\infty+1\bigr)\\
&\le
2[K_{\varphi,d}(N_{\bm n}+1)]^{1+\alpha+\cdots+\alpha^{\ell-1}}
(K_{\varphi,d}T)^{1+\alpha+\cdots+\alpha^{\ell-1}}
\norm{\bm\theta-\bm\theta'}_\infty^{\alpha^\ell}\\
&\le
\left[2K_{\varphi,d}^2(N_{\bm n}+1)T\right]^\ell
\norm{\bm\theta-\bm\theta'}_\infty^{\alpha^\ell}.
\end{align*}
Since $\pi$ is $1$-Lipschitz, the same bound holds for
$h_{\bm\theta}-h_{\bm\theta'}$.  Thus the condition in
\eqref{eq:general-parameterized-cover} holds with
$G=[2K_{\varphi,d}^2(N_{\bm n}+1)T]^\ell$,
$q=\alpha^\ell$, and $P=P_{\bm n}$.  Hence
\begin{equation}\label{eq:holder-fixed-architecture-cover}
\log\mathcal N(\eps,\mathcal H,M)
\le
P_{\bm n}\log\left[
1+2T
\left\{2K_{\varphi,d}^2(N_{\bm n}+1)T\right\}^{\ell/\alpha^\ell}
\eps^{-1/\alpha^\ell}
\right].
\end{equation}

\par\medskip\noindent
\emph{Union over the allowed hidden-width sequences.}
We now combine the fixed-architecture covers.  For every such sequence,
$P_{\bm n}\le N(d+1)+(\ell-1)N(N+1)+N+1\le2(dN+LN^2)$,
and there are at most
$\sum_{\ell=1}^L N^\ell\le(N+1)^L$ possible sequences.
The union of the corresponding proper covers is again proper.  Since
$N_{\bm n}\le N$ and $\ell/\alpha^\ell\le L/\alpha^L$,
\begin{align*}
&\log\,\mathcal N\!\left(
\eps,\mathcal F_\varphi(N,L,T),M
\right)\\
&\qquad\le
\log\,\Biggl\{
\sum_{\ell=1}^L
\sum_{n_1,\ldots,n_\ell=1}^N
\Bigl(
1+2T
\bigl(2K_{\varphi,d}^2(N_{\bm n}+1)T\bigr)^{\ell/\alpha^\ell}
\eps^{-1/\alpha^\ell}
\Bigr)^{P_{\bm n}}
\Biggr\}\\
&\qquad\le
\log\,\Biggl\{
(N+1)^L
\Bigl(
1+2T
\bigl(2K_{\varphi,d}^2(N+1)T\bigr)^{L/\alpha^L}
\eps^{-1/\alpha^L}
\Bigr)^{2(dN+LN^2)}
\Biggr\}\\[2pt]
&\qquad=
L\log\,(N+1)
+2(dN+LN^2)
\log\,\Bigl(
1+2T
\bigl(2K_{\varphi,d}^2(N+1)T\bigr)^{L/\alpha^L}
\eps^{-1/\alpha^L}
\Bigr)\\[2pt]
&\qquad\le
L\log\,(N+1)
+2(dN+LN^2)
\log\,\Bigl(
4T
\bigl(2K_{\varphi,d}^2(N+1)T\bigr)^{L/\alpha^L}
\eps^{-1/\alpha^L}
\Bigr)\\[2pt]
&\qquad\le
L\log\,(N+1)
+2\alpha^{-L}(dN+LN^2)
\log\,\left(
\frac{4(2K_{\varphi,d}^2)^L(N+1)^LT^{L+1}}{\eps}
\right)\\
&\qquad\le
\alpha^{-L}(dN+LN^2)
\Biggl\{
5\log 2
+2\bigl[1+\log_2\,(2K_{\varphi,d}^2)\bigr]
\log\,\left(
\frac{(N+1)^LT^{L+1}}{\eps}
\right)
\Biggr\}\\
&\qquad\le
2\bigl[1+\log_2\,(2K_{\varphi,d}^2)\bigr]
\alpha^{-L}(dN+LN^2)
\log\,\left(
\frac{e(N+1)^LT^{L+1}}{\eps}
\right)\\
&\qquad=
\underbrace{\left[
4+
4\log_2\,\left(
(d+1)(1+|\varphi(0)|+H_\varphi)
\right)
\right]}_{C_{\varphi,d}}
\alpha^{-L}(dN+LN^2)
\log\,\left(
\frac{e(N+1)^LT^{L+1}}{\eps}
\right).
\end{align*}
This proves the result.
\end{proof}
\endgroup

For the remainder of this appendix, use the assumptions of
Proposition~\ExternalNumber{7}{prop:oracle}.  For $g\in\mathcal G$, write
\begin{equation}\label{eq:risk-processes}
\begin{aligned}
&
r(g)=\int_{[0,1]^d}(g-f)^2\,\mathrm d\mu,
\quad
r_M(g)=\frac1M\sum_{i=1}^M
\bigl(g(\bm X_i)-f(\bm X_i)\bigr)^2,\\&\nu_M(g)=\frac1M\sum_{i=1}^M
\varepsilon_i\bigl(g(\bm X_i)-f(\bm X_i)\bigr).
\end{aligned}
\end{equation}
The first lemma controls the quadratic process generated by the random
design.  It is a proper-empirical-cover version of the classical relative-
deviation estimate for bounded squared loss; compare
Gy\"orfi et al.~\citeyearpar{GyorfiEtAl2002}.  Closely related covering-number oracle
inequalities for least-squares regression appear in
Schmidt-Hieber~\citeyearpar{SchmidtHieber2020,SchmidtHieberVu2024}.

\begin{lemma}[Quadratic deviation]\label{lem:quadratic}
Under the boundedness assumptions of Proposition~\ExternalNumber{7}{prop:oracle},
\[
\mathbb E_{\bm X}\left[
\sup_{g\in\mathcal G}\{r(g)-2r_M(g)\}
\right]_+
\le\frac{46}{M}
\left\{
\log\left[2\mathcal N\!\left(\frac1{38M},\mathcal G,2M\right)\right]+1
\right\}.
\]
\end{lemma}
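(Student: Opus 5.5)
We follow the classical ghost-sample (symmetrization) argument for relative deviations of bounded squared loss, but carry it out with proper empirical covers on $2M$ points. Put $\ell_g=(g-f)^2\in[0,4]$, so $r(g)=\mathbb E\ell_g(\bm X)$ and $r_M(g)$ is its empirical mean. The claim is equivalent to a tail bound
\[
\mathbb P_{\bm X}\Bigl(\sup_{g\in\mathcal G}\{r(g)-2r_M(g)\}>t\Bigr)\le c_1\,\mathcal N\!\left(\tfrac1{38M},\mathcal G,2M\right)e^{-c_2Mt},
\qquad t\ge t_0\asymp\frac1M,
\]
because integrating this tail over $t\ge u$ and optimizing $u\asymp M^{-1}\log[2\mathcal N]$ gives the stated order $M^{-1}\{\log[2\mathcal N]+1\}$. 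The numerical constants $46$ and $1/(38M)$ will come out of tracking the constants below. Separability of $\mathcal G$ in the supremum norm lets us replace the supremum by one over a countable dense subset, so every event involved is measurable.

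\textbf{Step 1 (ghost sample).} Take an independent copy $\bm X_1',\ldots,\bm X_M'$ with empirical mean $r_M'$. Suppose a (data-dependent, chosen measurably) $g^*$ satisfies $r(g^*)-2r_M(g^*)>t$. The variance bound $\operatorname{Var}\ell_g\le4r(g)$ and Chebyshev's inequality show that, conditionally, $r_M'(g^*)\ge r(g^*)-\tfrac14(r(g^*)+t)$ with probability at least $1/2$ once $Mt\gtrsim1$. On that event we get $r_M'(g^*)-2r_M(g^*)\ge\tfrac14(r(g^*)+t)$, up to rearranging constants. This yields
\[
\mathbb P\Bigl(\sup_g\{r-2r_M\}>t\Bigr)\le2\,\mathbb P\Bigl(\sup_g\{r_M'(g)-2r_M(g)-c(r_M'(g)+r_M(g))\}>ct\Bigr).
\]
Here we use the symmetric relative form, in which the sum $r_M+r_M'$ appears on the right.

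\textbf{Step 2 (cover and swap).} Condition on the $2M$ points and take a proper $\varepsilon$-cover, in the maximum norm on these points, of size $\mathcal N(\varepsilon,\mathcal G,2M)$, with $\varepsilon=1/(38M)$. Since $|\ell_g-\ell_{g_j}|\le4|g-g_j|$ pointwise, replacing $g$ by its center changes each empirical mean by at most $4\varepsilon$. This shift is absorbed into $t$ because $t\ge t_0\gg\varepsilon$. Introduce independent Rademacher swaps between $\bm X_i$ and $\bm X_i'$. For a single center $g_j$, the Bernstein/Hoeffding bound for $\sum_i\sigma_i(\ell(\bm X_i')-\ell(\bm X_i))$ has variance proxy $\sum_i(\ell(\bm X_i')-\ell(\bm X_i))^2\le4M(r_M+r_M')$. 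This variance is exactly offset by the relative term $c(r_M+r_M')$: the event that a term of size $\gtrsim s+(r_M+r_M')$ exceeds a deviation with variance $\lesssim(r_M+r_M')/M$ has conditional probability at most $e^{-c'M(s+r_M+r_M')}\le e^{-c'Ms}$. A union bound over the centers then gives the tail bound above.

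\textbf{Step 3 (integrate).} Write $Z=\sup_g\{r(g)-2r_M(g)\}_+$, so that $\mathbb E Z\le u+\int_u^\infty\mathbb P(Z>t)\,dt$. Choose $u=(c_2M)^{-1}\log[c_1\mathcal N]$ together with $u\ge t_0$. The integral is then $\lesssim1/M$, which gives the claim.

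The main obstacle is Step 1 combined with the offset in Step 2: the symmetrization must be done in relative form, so that the variance of each center's swap sum is controlled by the same empirical quantity that appears as the offset. That is what yields the fast $1/M$ rate rather than $1/\sqrt M$. Achieving the specific constants $46$ and $38$ will require careful choice of the factors $c$, but the structure is standard.
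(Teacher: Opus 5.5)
Your outline follows the paper's proof. The shared steps are: a ghost sample; the relative deviation of $r_M'-r_M$ measured against $r_M'+r_M$; a proper net on the $2M$ points, kept fixed under pairwise swaps; a swap bound driven by $(u_i-v_i)^2\le4(u_i+v_i)$; a union bound; and integration of the tail. The differences are bookkeeping only. The paper uses Bernstein's lower tail and a net radius proportional to $t$, so its tail bound holds for every $t>0$ and integration loses only $1/M$. You use Chebyshev, a threshold $t_0$, and a fixed radius $1/(38M)$.

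One line in Step 1 must be corrected. From $2r_M(g^*)<r(g^*)-t$ and $r_M'(g^*)\ge r(g^*)-\tfrac14(r(g^*)+t)$ you get
\[
r_M'(g^*)-r_M(g^*)>\tfrac14\bigl(r(g^*)+t\bigr).
\]
You do not get this bound for $r_M'-2r_M$; that claim fails when $r(g^*)$ is large compared with $t$. Eliminating $r(g^*)$ then gives $r_M'-r_M>\tfrac15(r_M'+r_M+2t)$, which is exactly the symmetric event your Step~2 treats.

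Keep the full offset $2t$ there. Chebyshev then needs $Mt\ge32$, and the fixed net reduces $t$ by $24/(38M)$. Hoeffding gives the exponent $(Mt-24/38)/25$. Since $\log(2\mathcal N)\ge\log2$, one has $32+25.7+25\log(2\mathcal N)\le46\{\log(2\mathcal N)+1\}$, so your route does reproduce the stated constants $46$ and $1/(38M)$.
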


\begin{proof}
Recall $r$ and $r_M$ from \eqref{eq:risk-processes}.  The claim is immediate
if the covering number on the right is infinite, so assume it is finite.
Fix a countable uniformly dense subclass $\mathcal G_0=\{g_1,g_2,\ldots\}$.
Continuity in the supremum norm lets us take all suprema over
$\mathcal G_0$, which also makes them measurable.  For $y>0$, let
\[
E_y=\left\{\sup_{g\in\mathcal G}\bigl(r(g)-2r_M(g)\bigr)>y\right\}.
\]
On $E_y$, choose the first $g_j$ satisfying $r(g_j)-2r_M(g_j)>y$;
this choice is measurable with respect to the original sample.
Put $\ell_g(\bx)=\abs{g(\bx)-f(\bx)}^2$.  Since $f$ and all functions in $\mathcal G$ take values in $[-1,1]$,
\[
0\le\ell_g\le4,
\qquad
\abs{\ell_g(\bx)-\ell_h(\bx)}
\le4\abs{g(\bx)-h(\bx)}.
\]

Fix $b\in(0,1/2)$ and set $a_0=1/2-b$.  If $g$ witnesses $E_y$, then
$r_M(g)<(r(g)-y)/2$ and $r(g)>y$.  Let
$\widetilde{\bm X}_1,\ldots,\widetilde{\bm X}_M$ be an independent copy of the design sample, and set
\[
\widetilde r_M(g)=\frac1M\sum_{i=1}^M
\ell_g(\widetilde{\bm X}_i).
\]
Condition on the original design sample.  For a fixed possible witness, the
variables $\ell_g(\widetilde{\bm X}_i)$ have mean $r(g)$, lie in
$[0,4]$, and have variance at most $4r(g)$.  Bernstein's lower-tail inequality
gives
\begin{align*}
\mathbb P_{\widetilde{\bm X}}\!\left\{
\widetilde r_M(g)
<r(g)-a_0(r(g)+y)
\right\}&\le
\exp\!\left\{
-\frac{Ma_0^2(r(g)+y)^2}
{2\operatorname{Var}(\ell_g(\widetilde{\bm X}_1))+(8/3)a_0(r(g)+y)}
\right\}\\*
&\le
\exp\!\left\{-
\frac{4a_0^2}{8+(16/3)a_0}\,yM
\right\}.
\end{align*}
The last line uses $(r(g)+y)^2\ge4r(g)y$ and $r(g)+y\le2r(g)$.

Let
\[
q_y=\exp\!\left\{-
\frac{4a_0^2}{8+(16/3)a_0}\,yM
\right\}.
\]
If $q_y\le1/2$, the chosen witness satisfies the ghost-sample lower
bound with conditional probability at least $1/2$.  Integrating over the
original sample gives 
\[
\mathbb P_{\bm X}(E_y)\le2\mathbb P_{\bm X,\widetilde{\bm X}}(F_y),
\]
where $F_y$ is the measurable event that some $g\in\mathcal G_0$ satisfies
\[
r_M(g)\le\frac{r(g)-y}{2},
\qquad
\widetilde r_M(g)\ge r(g)-a_0(r(g)+y).
\]

Fix the $2M$ design locations and take a proper empirical $\delta$-net $\mathcal H$ of $\mathcal G$ on those locations. Choose the net once for the unordered collection of locations and keep it fixed under every subsequent pairwise swap; the empirical max norm is invariant under these swaps. The bound below is
pointwise on each finite swap orbit, so no measurable selection of the net
is needed. If $g$ witnesses $F_y$, choose $h\in\mathcal H$ with $\abs{g-h}\le\delta$ at all sample and ghost-sample points.  The loss changes by at most $4\delta$ at each point.  We obtain
\[
r_M(h)\le\frac{r(g)-y}{2}+4\delta,
\qquad
\widetilde r_M(h)\ge r(g)-a_0(r(g)+y)-4\delta.
\]
Set $D_h=\widetilde r_M(h)-r_M(h)$.  The preceding inequalities imply
\begin{align*}
D_h
&\ge b(r(g)+y)-8\delta,\qquad
\widetilde r_M(h)+r_M(h)
\le D_h+r(g)-y+8\delta.
\end{align*}
Let $\vartheta=b/(1+b)$ and choose
$\delta=by/[8(1+b)]$.  The two preceding inequalities then give
\[
D_h\ge b(r(g)+y)-8\delta
=b\bigl(r(g)+8\delta\bigr)
\]
and
$y+\widetilde r_M(h)+r_M(h)\le D_h+r(g)+8\delta$.  Since
$1-\vartheta=1/(1+b)$, the first inequality is equivalent to
$(1-\vartheta)D_h\ge\vartheta(r(g)+8\delta)$.  Combining the two
relations yields
\[
D_h\ge\vartheta\bigl(y+\widetilde r_M(h)+r_M(h)\bigr).
\]

For one fixed $h$, pair $\bm X_i$ with $\widetilde{\bm X}_i$ and randomly exchange the two entries in each pair.  The joint law is unchanged because each pair consists of two independent observations with the same design distribution.  For fixed paired loss values, write $\mathbb E_{\mathrm{sw}}$ for expectation over the independent exchanges.  Denote the two loss values by $u_i,v_i\in[0,4]$, and put $s_i=u_i+v_i$, $d_i=u_i-v_i$.  Under the random exchange, the $i$th summand in
$(1-\vartheta)\widetilde r_M(h)-(1+\vartheta)r_M(h)$ equals
$d_i-\vartheta s_i$ or $-d_i-\vartheta s_i$, each with probability $1/2$.  With $\lambda=\vartheta/2$,
\begin{align*}
&\mathbb E_{\mathrm{sw}}\exp\!\left\{
\lambda\bigl[(1-\vartheta)\ell_h(\widetilde{\bm X}_i)
 -(1+\vartheta)\ell_h(\bm X_i)\bigr]
\right\}
=e^{-\lambda\vartheta s_i}\cosh(\lambda d_i)\\*
&\qquad\le
\exp\!\left(-\lambda\vartheta s_i+\frac{\lambda^2d_i^2}{2}\right)
\le1,
\end{align*}
where $d_i^2\le4s_i$ was used in the last line.  Markov's inequality and a union bound over the net give
\[
\mathbb P_{\bm X,\widetilde{\bm X}}(F_y)
\le\mathcal N\!\left(\frac{by}{8(1+b)},\mathcal G,2M\right)
\exp\!\left[-\frac12\left(\frac{b}{1+b}\right)^2yM\right].
\]
Choose $b=3/11$.  Then
\[
\frac{b}{8(1+b)}=\frac3{112}\ge\frac1{38},
\qquad
\frac12\left(\frac{b}{1+b}\right)^2=\frac9{392}>\frac1{45},
\]
and a direct substitution of $a_0=1/2-b=5/22$ gives
$
\frac{4a_0^2}{8+(16/3)a_0}>\frac1{45}
$.
If $q_y\le1/2$, the preceding symmetrization and union bound therefore imply
\[
\mathbb P_{\bm X}(E_y)
\le2\mathcal N\!\left(\frac y{38},\mathcal G,2M\right)
\exp\!\left(-\frac{yM}{45}\right).
\]
If $q_y>1/2$, then $e^{-yM/45}>1/2$ by the last numerical inequality.  Since every covering number of a nonempty class is at least one, the right-hand side of the same estimate is larger than one, so the bound remains valid by the trivial bound $\mathbb P_{\bm X}(E_y)\le1$.  Thus the tail estimate holds for every $y>0$.

Let
\[
A=2\mathcal N\!\left(\frac1{38M},\mathcal G,2M\right),
\qquad c=\frac M{45}.
\]
By monotonicity of covering numbers, for $y\ge M^{-1}$ the preceding
probability is at most
$
\min\{1,Ae^{-cy}\}
$.
Hence
\begin{align*}
\mathbb E_{\bm X}\left[
\sup_{g\in\mathcal G}\{r(g)-2r_M(g)\}
\right]_+
&\le\frac1M+\int_0^\infty\min\{1,Ae^{-cy}\}\,\mathrm dy\le\frac1M+c^{-1}(\log A+1)\\*
&\le
\frac{46}{M}
\left\{
\log\left[2\mathcal N\!\left(\frac1{38M},\mathcal G,2M\right)\right]+1
\right\}.
\end{align*}
This proves the lemma.
\end{proof}

The preceding lemma handles the random-design fluctuation.  We next treat the
sub-Gaussian multiplier.  Its linear-minus-quadratic form is closely related to
the offset processes used for square-loss localization by
\citet{LiangEtAl2015}; see also the bounded-multiplier exponential-moment
estimate of \citet[Proposition~7]{KanadeEtAl2024}. We prove below the
finite-cover form needed for our sub-Gaussian noise assumption.

\begin{lemma}[Sub-Gaussian multiplier]\label{lem:multiplier}
Under the assumptions of Proposition~\ExternalNumber{7}{prop:oracle},
\[
\mathbb E\left[
\sup_{g\in\mathcal G}
\left\{2|\nu_M(g)|-\frac13r_M(g)\right\}
\right]_+
\le\frac{145(1+\sigma^2)}{24M}
\left\{
\log\left[2\mathcal N\!\left(\frac1{48M},\mathcal G,2M\right)\right]+1
\right\}.
\]
\end{lemma}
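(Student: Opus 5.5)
Condition on the design $\bm X_1,\ldots,\bm X_M$. Then $\nu_M(g)$ is a sub-Gaussian process indexed by the vectors $\bm v_g=(g(\bm X_i)-f(\bm X_i))_{i\le M}$, and $r_M(g)=\norm{\bm v_g}_2^2/M$. The plan is to discretize $\mathcal G$ by a proper empirical $\delta$-net, control the linear-minus-quadratic functional exactly on the finite net by an exponential-moment (offset) argument, and bound the discretization error separately. Suprema are taken over a countable dense subclass $\mathcal G_0$ for measurability, as in Lemma~\ref{lem:quadratic}.

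\emph{Step 1 (net).} Take $\delta=1/(48M)$ and a proper $\delta$-net $\mathcal H$ of $\mathcal G$ on the $M$ design points. Its size is at most $\mathcal N(\delta,\mathcal G,M)\le\mathcal N(\delta,\mathcal G,2M)$. For $g$ with net representative $h$:
\begin{itemize}
\item $|\nu_M(g)-\nu_M(h)|\le\delta\,\frac1M\sum|\varepsilon_i|$;
\item $r_M(h)\le r_M(g)+4\delta$, since $|(g-f)^2-(h-f)^2|\le4|g-h|$.
\end{itemize}
Hence
\[
2|\nu_M(g)|-\tfrac13r_M(g)\le 2|\nu_M(h)|-\tfrac13 r_M(h)+\tfrac{4}{3}\delta+2\delta\,\tfrac1M\sum|\varepsilon_i| .
\]
The expectation of the remainder is at most $\delta(4/3+2\sigma\sqrt{\pi/2})\lesssim(1+\sigma^2)/M$, using $\mathbb E|\varepsilon|\le\sigma\sqrt{\pi/2}$ for sub-Gaussian noise.

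\emph{Step 2 (fixed $h$).} For a fixed $h$ and a sign $s\in\{\pm1\}$, independence of the noise from the design and the sub-Gaussian mgf give, with $\lambda>0$,
\[
\mathbb E_\varepsilon\exp\Bigl\{\lambda M\bigl(2s\nu_M(h)-\tfrac13r_M(h)\bigr)\Bigr\}\le\exp\Bigl\{\sum_i\bigl(2\lambda^2\sigma^2-\tfrac{\lambda}{3}\bigr)v_{h,i}^2\Bigr\}\le1
\]
once $\lambda\le 1/(6\sigma^2)$. When $\sigma=0$ the multiplier term vanishes and the claim is trivial, so assume $\sigma>0$.

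\emph{Step 3 (union and integration).} A union bound over $\mathcal H$ and both signs, plus Markov's inequality, gives the tail bound
\[
\mathbb P\Bigl(\max_{h}\{2|\nu_M(h)|-\tfrac13r_M(h)\}>y\Bigr)\le2|\mathcal H|\,e^{-\lambda My}.
\]
Integrate $\min\{1,2|\mathcal H|e^{-\lambda My}\}$ over $y>0$ to get $(\log(2|\mathcal H|)+1)/(\lambda M)$. Take $\lambda=1/(6\max\{\sigma^2,1/6\})$, so that $\lambda^{-1}\le 1+6\sigma^2$; this factor may need adjusting to reach the exact constant. Add the Step~1 remainder, which is absorbed using $\log(2\mathcal N)+1\ge1$.

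\emph{Main obstacle.} The difficulty is purely arithmetic: matching the constant $145(1+\sigma^2)/24$. This requires tracking the $\lambda$ choice together with the discretization terms $4\delta/3$ and $2\delta\sigma\sqrt{\pi/2}$ at $\delta=1/(48M)$. If the constant does not close, I would rebalance the split by giving slightly less than $1/3$ of $r_M$ to the offset and using the rest to absorb the $4\delta$ loss.
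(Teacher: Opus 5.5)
Your proposal is correct and follows the paper's proof essentially step for step: you condition on the design, bound the offset exponential moment for each function and sign, take a union bound over a proper $1/(48M)$-net on the design points, integrate the tail, and add the discretization remainder $\frac43\delta+2\delta\frac1M\sum_i|\varepsilon_i|$. The constant does close without rebalancing: your choice gives $\lambda^{-1}=\max\{1,6\sigma^2\}\le 1+6\sigma^2$, which leaves slack of at least $\frac{121}{24}$ per unit of $\log[2\mathcal N]+1\ge1$ against $\frac{145}{24}(1+\sigma^2)$, and this slack absorbs the remainder even with your weaker bound $\mathbb E|\varepsilon_i|\le\sigma\sqrt{\pi/2}$ (the paper instead takes $\lambda=[6(1+\sigma^2)]^{-1}$ and $\mathbb E|\varepsilon_i|\le\sigma$, which gives exactly $6+\frac1{24}=\frac{145}{24}$).
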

\begin{proof}
We condition on the design, establish an exponential tail for each
function, extend it to the class using a proper empirical cover, and
integrate the tail.  Recall $r_M$ and $\nu_M$ from
\eqref{eq:risk-processes}.  If the covering number on the right is
infinite, the claim is immediate.  Fix a realization
\(\bm X_1=\bm x_1,\ldots,\bm X_M=\bm x_M\).  Write
$\mathbb E_{\varepsilon}$ and $\mathbb P_{\varepsilon}$ for expectation and
probability over the independent noise variables.  By separability, the suprema
below are measurable.  Fix \(g\in\mathcal G\), put
\(a_i=g(\bm x_i)-f(\bm x_i)\), and set
\(\lambda=[6(1+\sigma^2)]^{-1}\).  Then
\(\nu_M(g)=M^{-1}\sum_{i=1}^M a_i\varepsilon_i\) and
\(r_M(g)=M^{-1}\sum_{i=1}^M a_i^2\).  Hence, for either
\(s\in\{-1,1\}\), independence and the sub-Gaussian bound give
\begin{align*}
&\mathbb E_{\varepsilon}
 \exp\!\left\{\lambda M
 \left(2s\nu_M(g)-\frac13r_M(g)\right)\right\}                         \\*
&=
 \prod_{i=1}^M e^{-\lambda a_i^2/3}
 \mathbb E_{\varepsilon}e^{2\lambda s a_i\varepsilon_i}
 \le
 \prod_{i=1}^M
 \exp\!\left\{\left(2\sigma^2\lambda^2-\frac{\lambda}{3}\right)a_i^2\right\} \\*
&=
 \exp\!\left\{
 \left[
 \frac{\sigma^2}{18(1+\sigma^2)^2}
 -
 \frac{1+\sigma^2}{18(1+\sigma^2)^2}
 \right]
 \sum_{i=1}^M a_i^2
 \right\}
 =
 \exp\!\left\{
 -\frac1{18(1+\sigma^2)^2}\sum_{i=1}^M a_i^2
 \right\}
 \le1.
\end{align*}
Therefore, for every \(y>0\), Markov's inequality yields
\[
\mathbb P_{\varepsilon}\!\left\{
2s\nu_M(g)-\frac13r_M(g)\ge y
\right\}
\le
e^{-\lambda My}
\mathbb E_{\varepsilon}
\exp\!\left\{\lambda M
\left(2s\nu_M(g)-\frac13r_M(g)\right)\right\}
\le e^{-\lambda My}.
\]
Since
\(2|\nu_M(g)|=\max_{s\in\{-1,1\}}2s\nu_M(g)\), a union bound over the two
signs gives
\[
\mathbb P_{\varepsilon}\!\left\{
2|\nu_M(g)|-\frac13r_M(g)\ge y
\right\}
\le2e^{-\lambda My}.
\]

Set \(\delta=(48M)^{-1}\).  Apply the definition of the uniform proper
empirical covering number to the repeated sequence
\[
\bm x_1,\ldots,\bm x_M,\bm x_1,\ldots,\bm x_M.
\]
Since repetition does not change the empirical supremum metric, there exists
a proper net \(\mathcal H\subseteq\mathcal G\) such that, for every
\(g\in\mathcal G\), some \(h\in\mathcal H\) satisfies
\[
\max_{1\le i\le M}|g(\bm x_i)-h(\bm x_i)|\le\delta,
\qquad
\#\mathcal H
\le
\mathcal N(\delta,\mathcal G,2M).
\]
The preceding tail estimate and a union bound over \(\mathcal H\) imply
\[
\mathbb P_{\varepsilon}\!\left\{
\max_{h\in\mathcal H}
\left(2|\nu_M(h)|-\frac13r_M(h)\right)\ge y
\right\}
\le
\min\!\left\{1,\,2\#\mathcal H\,e^{-\lambda My}\right\}.
\]
Using
\(\mathbb E_{\varepsilon}[Z_+]=\int_0^\infty\mathbb P_{\varepsilon}\{Z\ge y\}\,\mathrm dy\), and
splitting the integral at
\(\log(2\#\mathcal H)/(\lambda M)\), we obtain
\begin{align*}
\mathbb E_{\varepsilon}\left[
\max_{h\in\mathcal H}
\left\{2|\nu_M(h)|-\frac13r_M(h)\right\}
\right]_+
&\le
\int_0^{\frac{\log(2\#\mathcal H)}{\lambda M}}1\,\mathrm dy
+
\int_{\frac{\log(2\#\mathcal H)}{\lambda M}}^\infty
2\#\mathcal H\,e^{-\lambda My}\,\mathrm dy\\*
&=
\frac{\log(2\#\mathcal H)}{\lambda M}
+
\frac{2\#\mathcal H}{\lambda M}
e^{-\log(2\#\mathcal H)}\\*
&=
\frac{6(1+\sigma^2)}{M}
\left\{\log(2\#\mathcal H)+1\right\}.
\end{align*}

For any \(g\in\mathcal G\), choose \(h\in\mathcal H\) as above.  Since
\(f,g,h\in[-1,1]\),
\begin{align*}
&|r_M(g)-r_M(h)|
\le
\frac1M\sum_{i=1}^M
|g(\bm x_i)-h(\bm x_i)|
\,|g(\bm x_i)+h(\bm x_i)-2f(\bm x_i)|
\le4\delta,\\
&|\nu_M(g)-\nu_M(h)|
\le
\frac1M\sum_{i=1}^M
|\varepsilon_i|
\,|g(\bm x_i)-h(\bm x_i)|
\le
\delta\frac1M\sum_{i=1}^M|\varepsilon_i|.
\end{align*}
Consequently,
\begin{align*}
2|\nu_M(g)|-\frac13r_M(g)
&\le
2|\nu_M(h)|-\frac13r_M(h)
+2|\nu_M(g)-\nu_M(h)|
+\frac13|r_M(g)-r_M(h)|                                      \\*
&\le
2|\nu_M(h)|-\frac13r_M(h)
+2\delta\frac1M\sum_{i=1}^M|\varepsilon_i|
+\frac43\delta.
\end{align*}
Taking the supremum over \(g\), then the positive part and expectation over the noise, and using
\(\mathbb E_{\varepsilon}|\varepsilon_i|
\le(\mathbb E_{\varepsilon}\varepsilon_i^2)^{1/2}\le\sigma\), gives
\begin{align*}
\mathbb E_{\varepsilon}\left[
\sup_{g\in\mathcal G}
\left\{2|\nu_M(g)|-\frac13r_M(g)\right\}
\right]_+
&\le
\frac{6(1+\sigma^2)}{M}
\left\{\log(2\#\mathcal H)+1\right\}
+\left(\frac43+2\sigma\right)\delta\\*
&\le
\frac{6(1+\sigma^2)}{M}
\left\{
\log\left[
2\mathcal N\!\left(\frac1{48M},\mathcal G,2M\right)
\right]+1
\right\}
+\frac{1+\sigma^2}{24M},
\end{align*}
where the second inequality follows from
\[
\left(\frac43+2\sigma\right)\delta
\le
2(1+\sigma^2)\delta
=
\frac{1+\sigma^2}{24M},
\]
because
\(2(1+\sigma^2)-(4/3+2\sigma)
=2(\sigma-\tfrac12)^2+\tfrac16\ge0\).

Finally,
\(\log[2\mathcal N((48M)^{-1},\mathcal G,2M)]+1\ge1\), and therefore
\begin{align*}
&\mathbb E_{\varepsilon}\left[
\sup_{g\in\mathcal G}
\left\{2|\nu_M(g)|-\frac13r_M(g)\right\}
\right]_+
\le
\frac{145(1+\sigma^2)}{24M}
\left\{
\log\left[
2\mathcal N\!\left(\frac1{48M},\mathcal G,2M\right)
\right]+1
\right\}.
\end{align*}
The bound is independent of the fixed design points, so integrating it with
respect to \((\bm X_1,\ldots,\bm X_M)\) completes the proof.
\end{proof}

Before applying these bounds, we verify that the network class admits a
measurable approximate minimizer.
\par\medskip\noindent
\phantomsection\label{app:measurable-minimizers}%
\emph{Existence of measurable approximate minimizers.}
For fixed $M,N,T$, the class $\mathcal F_\DTA(N,23,T)$ is a finite union
of compact images of parameter cubes in $C([0,1]^d)$, and hence is separable.
Choose a deterministic dense sequence $(g_j)_{j\ge1}$ and put
$I=\inf_{j\ge1}\widehat{\mathcal R}_M(g_j)$.
Continuity of empirical risk makes $I$ equal to the infimum over the full
class.  The first index $J$ satisfying
$\widehat{\mathcal R}_M(g_J)\le I+M^{-1}$ is finite and measurable:
$\{J=j\}$ is the intersection of
$\{\widehat{\mathcal R}_M(g_j)\le I+M^{-1}\}$ and the finitely many events
$\{\widehat{\mathcal R}_M(g_k)>I+M^{-1}\}$, $k<j$.
Thus $\widehat f_M=g_J$ is a measurable approximate minimizer.

\phantomsection\label{app:oracle-proof}%
\begin{proof}[Proof of Proposition~\ExternalNumber{7}{prop:oracle}]
Fix $h\in\mathcal G$ and recall \eqref{eq:risk-processes}.  Expanding
$Y_i=f(\bm X_i)+\varepsilon_i$ gives
\begin{equation}\label{eq:empirical-risk-expansion}
\widehat{\mathcal R}_M(g)
=r_M(g)-2\nu_M(g)+\frac1M\sum_{i=1}^M\varepsilon_i^2
\qquad(g\in\mathcal G).
\end{equation}
The common noise term cancels in the empirical-risk comparison.
Approximate minimality and the triangle inequality therefore yield
\begin{align*}
r_M(\widehat g)
&\le r_M(h)+2\nu_M(\widehat g)-2\nu_M(h)+\xi
\le r_M(h)+2|\nu_M(\widehat g)|+2|\nu_M(h)|+\xi\\
&\le\frac43r_M(h)+\frac13r_M(\widehat g)
+2\left(\sup_{u\in\mathcal G}
 \left\{2|\nu_M(u)|-\frac13r_M(u)\right\}\right)_++\xi.
\end{align*}
The last step bounds the two terms
$2|\nu_M(g)|-r_M(g)/3$, for $g=h,\widehat g$, by the same positive
supremum.  Subtract $r_M(\widehat g)/3$ and multiply by three.
Comparing empirical and population losses then gives
\begin{align*}
r(\widehat g)
&\le2r_M(\widehat g)
+\left(\sup_{u\in\mathcal G}\{r(u)-2r_M(u)\}\right)_+\\*
&\le4r_M(h)+3\xi
+6\left(\sup_{u\in\mathcal G}
 \left\{2|\nu_M(u)|-\frac13r_M(u)\right\}\right)_+
+\left(\sup_{u\in\mathcal G}\{r(u)-2r_M(u)\}\right)_+.
\end{align*}
Since $h$ is deterministic,
$\mathbb E_{\bm X}r_M(h)
=M^{-1}\sum_{i=1}^M\mathbb E_{\bm X}|h(\bm X_i)-f(\bm X_i)|^2=r(h)$.
Lemmas~\ref{lem:multiplier} and~\ref{lem:quadratic}, and monotonicity of the
covering number, yield
\begin{align*}
\mathbb E r(\widehat g)
&\le4r(h)+3\xi
+\frac{145(1+\sigma^2)}{4M}
 \left\{\log\left[2\mathcal N\left(\frac1{48M},\mathcal G,2M\right)\right]+1\right\}\\*
&\quad+\frac{46}{M}
 \left\{\log\left[2\mathcal N\left(\frac1{38M},\mathcal G,2M\right)\right]+1\right\}\\
&\le4r(h)+3\xi
+\frac{46+\frac{145}{4}(1+\sigma^2)}{M}
 \left\{\log\left[2\mathcal N\left(\frac1{48M},\mathcal G,2M\right)\right]+1\right\}\\*
&\le4r(h)+3\xi+\frac{83+37\sigma^2}{M}
 \left\{\log\left[2\mathcal N\left(\frac1{48M},\mathcal G,2M\right)\right]+1\right\}.
\end{align*}
Here $46+145(1+\sigma^2)/4=(329+145\sigma^2)/4\le83+37\sigma^2$.
Taking the infimum over $h\in\mathcal G$ proves the proposition.
\end{proof}

\ifSupplementOnly
\clearpage
\input{main.bbl}
\fi
\fi
\end{document}

%% file: settings.tex
\providecommand{\BuildMode}{0} % 0 combined; 1 main; 2 supplementary material
\newif\ifMainPart
\newif\ifSupplementPart
\newif\ifSupplementOnly
\ifnum\BuildMode=2\relax
  \MainPartfalse\SupplementParttrue\SupplementOnlytrue
\else
  \MainParttrue\SupplementOnlyfalse
  \ifnum\BuildMode=1\relax\SupplementPartfalse\else\SupplementParttrue\fi
\fi
\newif\ifanonymous
\anonymousfalse

\ifdefined\pdftexversion
\else
  \PackageError{arXiv}{Use pdfLaTeX to reproduce the reference PDF}{Select pdfLaTeX in Overleaf Settings / Compiler.}
\fi

\usepackage[OT1]{fontenc}
\usepackage{amsmath,amssymb,amsthm,bm,mathtools}
\usepackage[margin=1in,a4paper]{geometry}
\usepackage[mathcal]{eucal}
\usepackage{graphicx,xcolor,booktabs,array,enumitem}
\usepackage{tabularray}
\usepackage[authoryear,round]{natbib}
\bibpunct{(}{)}{;}{a}{}{,}
\usepackage{url}
\usepackage{microtype}
\usepackage{etoolbox}
\usepackage{tikz}
\usetikzlibrary{arrows.meta,calc,decorations.pathreplacing,positioning,shapes.geometric}
\definecolor{mypurple}{HTML}{743096}
\usepackage[colorlinks=true,citecolor=blue,hypertexnames=false]{hyperref}
\usepackage{bookmark}
\usepackage{caption}
\usepackage{placeins}
\graphicspath{{figures/}}
\numberwithin{equation}{section}
\allowdisplaybreaks[4]
\theoremstyle{plain}
\newtheorem{theorem}{Theorem}
\newtheorem{proposition}{Proposition}
\newtheorem{lemma}{Lemma}
\newtheorem{corollary}{Corollary}
\hypersetup{pdfcreator={LaTeX},pdfauthor={}}
\definecolor{nodeblue}{RGB}{48,92,255}
\definecolor{edgepink}{RGB}{255,102,166}
\definecolor{actgreen}{RGB}{39,145,74}
\definecolor{goodblue}{RGB}{205,232,247}
\definecolor{stripred}{RGB}{245,142,151}
\definecolor{gridred}{RGB}{211,72,81}
\definecolor{softgray}{RGB}{95,100,108}
\definecolor{templategray}{RGB}{183,188,197}
\definecolor{goodgreen}{RGB}{210,240,210}
\definecolor{goodgreenstrong}{RGB}{112,195,128}

\newcommand{\R}{\mathbb R}
\newcommand{\Z}{\mathbb Z}
\newcommand{\Q}{\mathbb Q}
\newcommand{\Torus}{\mathbb T}
\newcommand{\NN}{\mathcal N}
\newcommand{\dd}{\,\mathrm d}
\newcommand{\eps}{\varepsilon}
\newcommand{\norm}[1]{\lVert#1\rVert}
\newcommand{\parnorm}[1]{\lVert #1\rVert_{\mathrm{par}}}
\newcommand{\abs}[1]{\lvert#1\rvert}
\newcommand{\vol}{\operatorname{vol}}
\newcommand{\bx}{\bm x}
\newcommand{\by}{\bm y}
\newcommand{\bu}{\bm u}
\newcommand{\bv}{\bm v}
\newcommand{\bk}{\bm k}
\newcommand{\bw}{\bm w}
\newcommand{\bz}{\bm z}
\newcommand{\bh}{\bm h}
\newcommand{\bp}{\bm p}
\newcommand{\bt}{\bm t}
\newcommand{\bn}{\bm n}

\newcommand{\balpha}{\bm\alpha}
\newcommand{\bzero}{\bm 0}
\newcommand{\one}{\bm 1}
\newcommand{\floor}[1]{\lfloor#1\rfloor}
\newcommand{\ceil}[1]{\lceil#1\rceil}
\providecommand{\DTA}{\ensuremath{\mathtt{DTA}}}

\tikzset{
  >={Latex[length=2.25mm,width=1.45mm]},
  netinput/.style={circle,draw=softgray,line width=.68pt,fill=white,
    minimum size=7.4mm,inner sep=.8pt,font=\scriptsize,align=center},
  prebox/.style={rounded corners=2.2pt,draw=softgray,line width=.62pt,
    fill=softgray!4,minimum height=8.5mm,minimum width=22mm,
    inner xsep=3pt,inner ysep=2pt,font=\scriptsize,align=center},
  preboxwide/.style={rounded corners=2.2pt,draw=softgray,line width=.62pt,
    fill=softgray!4,minimum height=9.5mm,minimum width=29mm,
    inner xsep=3pt,inner ysep=2pt,font=\scriptsize,align=center},
  actbox/.style={rounded corners=4.2pt,draw=nodeblue,line width=.78pt,
    fill=nodeblue!7,minimum height=9mm,minimum width=23mm,
    inner xsep=3pt,inner ysep=2pt,font=\scriptsize,align=center},
  actboxwide/.style={rounded corners=4.2pt,draw=nodeblue,line width=.78pt,
    fill=nodeblue!7,minimum height=9.5mm,minimum width=30mm,
    inner xsep=3pt,inner ysep=2pt,font=\scriptsize,align=center},
  actcompact/.style={rounded corners=3.7pt,draw=nodeblue,line width=.72pt,
    fill=nodeblue!7,minimum height=7.8mm,minimum width=27mm,
    inner xsep=2.5pt,inner ysep=1.7pt,font=\scriptsize,align=center},
  acttiny/.style={rounded corners=3.4pt,draw=nodeblue,line width=.70pt,
    fill=nodeblue!7,minimum height=7.4mm,minimum width=24mm,
    inner xsep=2.1pt,inner ysep=1.4pt,font=\scriptsize,align=center},
  acttail/.style={rounded corners=3.8pt,draw=nodeblue,line width=.74pt,
    fill=nodeblue!7,minimum height=9mm,minimum width=24mm,
    inner xsep=2.5pt,inner ysep=1.8pt,font=\scriptsize,align=center},
  netout/.style={ellipse,draw=softgray,line width=.68pt,fill=softgray!3,
    minimum height=8.5mm,minimum width=22mm,inner xsep=4pt,
    font=\scriptsize,align=center},
  denseconn/.style={draw=templategray,line width=.42pt,opacity=.42,line cap=round},
  activeconn/.style={draw=edgepink,line width=.92pt,line cap=round,
    shorten <=1.3pt,shorten >=1.8pt,-{Latex[length=2.05mm,width=1.35mm]}},
  affinearrow/.style={draw=edgepink,line width=.94pt,line cap=round,
    shorten <=1.3pt,shorten >=1.8pt,-{Latex[length=2.05mm,width=1.35mm]}},
  actarrow/.style={draw=actgreen,line width=.94pt,line cap=round,
    shorten <=1.2pt,shorten >=1.8pt,-{Latex[length=2.05mm,width=1.35mm]}},
  maptextclean/.style={font=\footnotesize,text=softgray,fill=white,
    rounded corners=.6pt,inner xsep=3.4pt,inner ysep=1.8pt},
  sigmatext/.style={font=\footnotesize,text=actgreen!80!black,fill=white,
    rounded corners=.6pt,inner xsep=2.8pt,inner ysep=1.5pt},
  layername/.style={font=\scriptsize,text=softgray,align=center},
  module/.style={trapezium,trapezium left angle=74,trapezium right angle=106,
    draw=nodeblue,line width=.72pt,fill=goodblue,minimum height=13mm,
    minimum width=35mm,inner xsep=4pt,font=\scriptsize,align=center},
  decodermodule/.style={regular polygon,regular polygon sides=6,
    draw=nodeblue,line width=.72pt,fill=goodblue,minimum height=20mm,
    minimum width=34mm,inner sep=2pt,font=\scriptsize,align=center}
}

\newcommand{\PaperDisplaySpacing}{}
\appto{\appendix}{%
  \clubpenalty=10000
  \widowpenalty=10000
  \displaywidowpenalty=10000
  \numberwithin{theorem}{section}%
  \numberwithin{proposition}{section}%
  \numberwithin{lemma}{section}%
  \numberwithin{corollary}{section}%
  \PaperDisplaySpacing
}

\newcommand{\PaperTitle}{Optimal Tradeoffs Between Network Size and Parameter Magnitude in Neural Approximation and Minimax Regression}
\newcommand{\PaperKeywords}{fixed-size neural network, parameter magnitude, approximation theory, minimax nonparametric regression, Transformer approximation  }

\newcommand{\AuthorOne}{Baicheng Li}
\newcommand{\AuthorTwo}{Zuowei Shen}
\newcommand{\AuthorThree}{Haizhao Yang}
\newcommand{\AuthorFour}{Shijun Zhang}
\newcommand{\PaperAuthors}{\AuthorOne, \AuthorTwo, \AuthorThree, and \AuthorFour}

\newcommand{\InstitutionOne}{Department of Mathematics, University of Maryland, College Park, MD 20742-4015, USA}
\newcommand{\InstitutionTwo}{Department of Mathematics, National University of Singapore, Singapore 119076}
\newcommand{\InstitutionThree}{Department of Applied Mathematics, The Hong Kong Polytechnic University, Hong Kong, China}

\hypersetup{
  pdftitle={\PaperTitle},
  pdfkeywords={\PaperKeywords}
}

\ifanonymous
  \hypersetup{pdfauthor={}}
\else
  \hypersetup{pdfauthor={\PaperAuthors}}
\fi

\makeatletter

\newcommand{\MakeMainTitle}{%
  \title{\PaperTitle}%
  \date{}%
  \ifanonymous
    \author{}%
  \else
    \author{%
      \AuthorOne
      \thanks{%
        \InstitutionOne.
        \texttt{baichl@umd.edu};
        \texttt{hzyang@umd.edu}%
      }%
      \quad
      \AuthorTwo
      \thanks{%
        \InstitutionTwo.
        \texttt{matzuows@nus.edu.sg}%
      }%
      \quad
      \AuthorThree
      \footnotemark[1]%
      \quad
      \AuthorFour
      \thanks{%
        \InstitutionThree.
        \texttt{shijun.zhang@polyu.edu.hk}%
      }%
    }%
  \fi

  \ifanonymous
    \begingroup
      \def\@maketitle{%
        \newpage
        \null
        \vskip2em
        \begin{center}
          \let\footnote\thanks
          {\LARGE\@title\par}
        \end{center}
        \par
        \vskip1.5em
      }%
      \maketitle
    \endgroup
  \else
    \maketitle
  \fi
}

\makeatother

\newcommand{\DiagramLabelFont}{\normalsize}
\newcommand{\DiagramPanelFont}{\footnotesize}

\newcommand{\FigureActivationWidth}{.58\linewidth}
\newcommand{\FigureStaircaseWidth}{.58\linewidth}
\newcommand{\FigureEncoderWidth}{.48\linewidth}
\newcommand{\FigureNetworkWidth}{.60\linewidth}
\newcommand{\FigureOptimalNetworkWidth}{.72\linewidth}
\newcommand{\FigureDecoderWidth}{.52\linewidth}

\newcommand{\EncoderMatrixLayout}{%
\setlength{\arraycolsep}{3.5pt}
\renewcommand{\arraystretch}{1.16}%
}

\newcommand{\UnitRadiusLayout}{%
\PaperDisplaySpacing

}

\newcommand{\UnitMatrixLayout}{%
\renewcommand{\arraystretch}{1}%
}

\newcommand{\TransformerMatrixLayout}{%
\setlength{\arraycolsep}{2pt}
\renewcommand{\arraystretch}{0.98}%
}

\newcommand{\TransformerCalculationLayout}{\PaperDisplaySpacing}

\newcommand{\StatisticsProofLayout}{%
\PaperDisplaySpacing

}

\newcommand{\GridLegendGap}{%
\vspace{2mm}%
}

\DeclareRobustCommand{\ExternalNumber}[2]{%
  \ifnum\BuildMode=0\relax\ref{#2}\else#1\fi}
\newcommand{\IntroBrace}[2]{%
  \underbrace{\vphantom{\displaystyle\sup_{f\in\mathcal F}\inf_{g\in\mathcal G}\norm{g-f}_{L^2(\mu)}^2+\frac{\log[e\mathcal N((48M)^{-1},\mathcal G,2M)]}{M}}#1}_{\text{#2}}}

\usepackage{amsmath}
\usepackage{tabularray}
\usepackage{adjustbox}

\definecolor{darkblue}{rgb}{0,0.22,0.66}
\hypersetup{citecolor=darkblue,linkcolor=darkblue,urlcolor=darkblue}

%% file: main.bbl
\begin{thebibliography}{50}
\providecommand{\natexlab}[1]{#1}
\providecommand{\url}[1]{\texttt{#1}}
\expandafter\ifx\csname urlstyle\endcsname\relax
  \providecommand{\doi}[1]{doi: #1}\else
  \providecommand{\doi}{doi: \begingroup \urlstyle{rm}\Url}\fi

\bibitem[Anthony and Bartlett(1999)]{AnthonyBartlett1999}
Anthony, M., and Bartlett, P.~L. (1999), \emph{Neural Network Learning:
  Theoretical Foundations}, Cambridge: Cambridge University Press, DOI:
  \url{https://doi.org/10.1017/CBO9780511624216}.

\bibitem[Arnol'd(1957)]{Arnold1957}
Arnol'd, V.~I. (1957), ``On Functions of Three Variables,'' \emph{Doklady
  Akademii Nauk SSSR}, 114(4), 679--681, available at
  \url{https://www.mathnet.ru/eng/dan22002}.

\bibitem[Barron(1993)]{Barron1993}
Barron, A.~R. (1993), ``Universal Approximation Bounds for Superpositions of a
  Sigmoidal Function,'' \emph{IEEE Transactions on Information Theory}, 39(3),
  930--945, DOI: \url{https://doi.org/10.1109/18.256500}.

\bibitem[Bartlett et~al.(2019)Bartlett, Harvey, Liaw, and
  Mehrabian]{BartlettEtAl2019}
Bartlett, P.~L., Harvey, N., Liaw, C., and Mehrabian, A. (2019), ``Nearly-Tight
  {VC}-Dimension and Pseudodimension Bounds for Piecewise Linear Neural
  Networks,'' \emph{Journal of Machine Learning Research}, 20(63), 1--17,
  available at \url{https://jmlr.org/papers/v20/17-612.html}.

\bibitem[Bauer and Kohler(2019)]{BauerKohler2019}
Bauer, B., and Kohler, M. (2019), ``On Deep Learning as a Remedy for the Curse
  of Dimensionality in Nonparametric Regression,'' \emph{The Annals of
  Statistics}, 47(4), 2261--2285, DOI:
  \url{https://doi.org/10.1214/18-AOS1747}.

\bibitem[Beknazaryan(2022)]{Beknazaryan2022}
Beknazaryan, A. (2022), ``Neural Networks With Superexpressive Activations and
  Integer Weights,'' in \emph{Intelligent Computing: Proceedings of the 2022
  Computing Conference, Volume 2}, ed. K.~Arai, \emph{Lecture Notes in Networks
  and Systems} (Vol. 507), Cham: Springer, pp. 445--451, DOI:
  \url{https://doi.org/10.1007/978-3-031-10464-0_30}.

\bibitem[Cybenko(1989)]{Cybenko1989}
Cybenko, G. (1989), ``Approximation by Superpositions of a Sigmoidal
  Function,'' \emph{Mathematics of Control, Signals, and Systems}, 2(4),
  303--314, DOI: \url{https://doi.org/10.1007/BF02551274}.

\bibitem[Fan et~al.(2026)Fan, Li, Wang, and Wang]{FanLiWangWang2026}
Fan, F.-L., Li, Z.-Y., Wang, C.-Y., and Wang, J.-J. (2026), ``On Explicit
  Super-Expressive Approximation for Neural Networks,'' arXiv:2607.06781,
  available at \url{https://arxiv.org/abs/2607.06781}.

\bibitem[Fan and Gu(2024)]{FanGu2024}
Fan, J., and Gu, Y. (2024), ``Factor Augmented Sparse Throughput Deep {ReLU}
  Neural Networks for High Dimensional Regression,'' \emph{Journal of the
  American Statistical Association}, 119(548), 2680--2694, DOI:
  \url{https://doi.org/10.1080/01621459.2023.2271605}.

\bibitem[Fukshansky and Moshchevitin(2018)]{FukshanskyMoshchevitin2018}
Fukshansky, L., and Moshchevitin, N. (2018), ``On an Effective Variation of
  {Kronecker}'s Approximation Theorem Avoiding Algebraic Sets,''
  \emph{Proceedings of the American Mathematical Society}, 146(10), 4151--4163,
  DOI: \url{https://doi.org/10.1090/proc/14110}.

\bibitem[Gonek and Montgomery(2016)]{GonekMontgomery2016}
Gonek, S.~M., and Montgomery, H.~L. (2016), ``{Kronecker}'s Approximation
  Theorem,'' \emph{Indagationes Mathematicae}, 27(2), 506--523, DOI:
  \url{https://doi.org/10.1016/j.indag.2016.02.002}.

\bibitem[Guliyev and Ismailov(2018)]{GuliyevIsmailov2018}
Guliyev, N.~J., and Ismailov, V.~E. (2018), ``Approximation Capability of Two
  Hidden Layer Feedforward Neural Networks With Fixed Weights,''
  \emph{Neurocomputing}, 316, 262--269, DOI:
  \url{https://doi.org/10.1016/j.neucom.2018.07.075}.

\bibitem[Gy{\"o}rfi et~al.(2002)Gy{\"o}rfi, Kohler, Krzy{\.z}ak, and
  Walk]{GyorfiEtAl2002}
Gy{\"o}rfi, L., Kohler, M., Krzy{\.z}ak, A., and Walk, H. (2002), \emph{A
  Distribution-Free Theory of Nonparametric Regression}, \emph{Springer Series
  in Statistics}, New York: Springer, DOI:
  \url{https://doi.org/10.1007/b97848}.

\bibitem[Hornik(1991)]{Hornik1991}
Hornik, K. (1991), ``Approximation Capabilities of Multilayer Feedforward
  Networks,'' \emph{Neural Networks}, 4(2), 251--257, DOI:
  \url{https://doi.org/10.1016/0893-6080(91)90009-T}.

\bibitem[Jiao et~al.(2023)Jiao, Lai, Lu, Wang, Yang, and
  Yang]{doi:10.1137/21M144431X}
Jiao, Y., Lai, Y., Lu, X., Wang, F., Yang, J.~Z., and Yang, Y. (2023), ``Deep
  Neural Networks with {ReLU-Sine-Exponential} Activations Break Curse of
  Dimensionality in Approximation on {H\"older} Class,'' \emph{SIAM Journal on
  Mathematical Analysis}, 55(4), 3635--3649, DOI:
  \url{https://doi.org/10.1137/21M144431X}.

\bibitem[Jiao et~al.(2026)Jiao, Lai, Wang, and Yan]{JiaoLaiWangYan2026}
Jiao, Y., Lai, Y., Wang, Y., and Yan, B. (2026), ``Transformers Can Overcome
  the Curse of Dimensionality: A Theoretical Study From an Approximation
  Perspective,'' \emph{Journal of Machine Learning Research}, 27(50), 1--34,
  available at \url{https://jmlr.org/papers/v27/25-1214.html}.

\bibitem[Kanade et~al.(2024)Kanade, Rebeschini, and
  Va{\v{s}}kevi{\v{c}}ius]{KanadeEtAl2024}
Kanade, V., Rebeschini, P., and Va{\v{s}}kevi{\v{c}}ius, T. (2024),
  ``Exponential Tail Local {Rademacher} Complexity Risk Bounds Without the
  {Bernstein} Condition,'' \emph{Journal of Machine Learning Research},
  25(388), 1--43, available at \url{https://jmlr.org/papers/v25/23-0063.html}.

\bibitem[Kerkyacharian and Picard(2003)]{KerkyacharianPicard2003}
Kerkyacharian, G., and Picard, D. (2003), ``Replicant Compression Coding in
  {Besov} Spaces,'' \emph{ESAIM: Probability and Statistics}, 7, 239--250, DOI:
  \url{https://doi.org/10.1051/ps:2003011}.

\bibitem[Kohler and Langer(2021)]{KohlerLanger2021}
Kohler, M., and Langer, S. (2021), ``On the Rate of Convergence of Fully
  Connected Deep Neural Network Regression Estimates,'' \emph{The Annals of
  Statistics}, 49(4), 2231--2249, DOI:
  \url{https://doi.org/10.1214/20-AOS2034}.

\bibitem[Kolmogorov(1957)]{Kolmogorov1957}
Kolmogorov, A.~N. (1957), ``On the Representation of Continuous Functions of
  Many Variables by Superposition of Continuous Functions of One Variable and
  Addition,'' \emph{Doklady Akademii Nauk SSSR}, 114(5), 953--956, available at
  \url{https://www.mathnet.ru/eng/dan22050}.

\bibitem[Kolmogorov and Tikhomirov(1961)]{KolmogorovTikhomirov1961}
Kolmogorov, A.~N., and Tikhomirov, V.~M. (1961), ``{$\varepsilon$}-Entropy and
  {$\varepsilon$}-Capacity of Sets in Functional Spaces,'' \emph{American
  Mathematical Society Translations, Series 2}, 17, 277--364, DOI:
  \url{https://doi.org/10.1090/trans2/017/10}.

\bibitem[Leshno et~al.(1993)Leshno, Lin, Pinkus, and Schocken]{LeshnoEtAl1993}
Leshno, M., Lin, V.~Y., Pinkus, A., and Schocken, S. (1993), ``Multilayer
  Feedforward Networks With a Nonpolynomial Activation Function Can Approximate
  Any Function,'' \emph{Neural Networks}, 6(6), 861--867, DOI:
  \url{https://doi.org/10.1016/S0893-6080(05)80131-5}.

\bibitem[Li et~al.(2026)Li, Yang, and Zhang]{LiYangZhang2026Sobolev}
Li, B., Yang, H., and Zhang, S. (2026), ``{Sobolev} Approximation by Fixed-Size
  Neural Networks With Arbitrary Accuracy,'' arXiv:2606.16975, available at
  \url{https://arxiv.org/abs/2606.16975}.

\bibitem[Liang et~al.(2015)Liang, Rakhlin, and Sridharan]{LiangEtAl2015}
Liang, T., Rakhlin, A., and Sridharan, K. (2015), ``Learning With Square Loss:
  Localization Through Offset {Rademacher} Complexity,'' in \emph{Proceedings
  of the 28th Conference on Learning Theory}, \emph{Proceedings of Machine
  Learning Research} (Vol. 40), PMLR, pp. 1260--1285, available at
  \url{https://proceedings.mlr.press/v40/Liang15.html}.

\bibitem[Liu et~al.(2022)Liu, Boukai, and Shang]{LiuBoukaiShang2022}
Liu, R., Boukai, B., and Shang, Z. (2022), ``Optimal Nonparametric Inference
  via Deep Neural Network,'' \emph{Journal of Mathematical Analysis and
  Applications}, 505(2), 125561, DOI:
  \url{https://doi.org/10.1016/j.jmaa.2021.125561}.

\bibitem[Liu et~al.(2026)Liu, Wang, Wu, and Zhang]{LiuWangWuZhang2026}
Liu, Y., Wang, Z., Wu, L., and Zhang, S. (2026), ``Smoothness Adaptivity in
  Constant-Depth Neural Networks: Optimal Rates via Smooth Activations,''
  arXiv:2602.19691v2, available at \url{https://arxiv.org/abs/2602.19691v2}.

\bibitem[Lu et~al.(2021)Lu, Shen, Yang, and Zhang]{LuShenYangZhang2021}
Lu, J., Shen, Z., Yang, H., and Zhang, S. (2021), ``Deep Network Approximation
  for Smooth Functions,'' \emph{SIAM Journal on Mathematical Analysis}, 53(5),
  5465--5506, DOI: \url{https://doi.org/10.1137/20M134695X}.

\bibitem[Maiorov and Pinkus(1999)]{MaiorovPinkus1999}
Maiorov, V., and Pinkus, A. (1999), ``Lower Bounds for Approximation by {MLP}
  Neural Networks,'' \emph{Neurocomputing}, 25(1--3), 81--91, DOI:
  \url{https://doi.org/10.1016/S0925-2312(98)00111-8}.

\bibitem[Maiti et~al.(2024)Maiti, Michelle, and Yang]{MaitiMichelleYang2024}
Maiti, A., Michelle, M., and Yang, H. (2024), ``Optimal Neural Network
  Approximation for High-Dimensional Continuous Functions,''
  arXiv:2409.02363v4, revised 2025, available at
  \url{https://arxiv.org/abs/2409.02363v4}.

\bibitem[Neukirch(1999)]{Neukirch1999}
Neukirch, J. (1999), \emph{Algebraic Number Theory}, \emph{Grundlehren der
  mathematischen Wissenschaften} (Vol. 322), Berlin: Springer, DOI:
  \url{https://doi.org/10.1007/978-3-662-03983-0}.

\bibitem[Ou and B{\"o}lcskei(in press)]{OuBolcskei2024}
Ou, W., and B{\"o}lcskei, H. (in press), ``Covering Numbers for Deep {ReLU}
  Networks With Applications to Function Approximation and Nonparametric
  Regression,'' \emph{Foundations of Computational Mathematics},
  arXiv:2410.06378, available at \url{https://arxiv.org/abs/2410.06378}.

\bibitem[Petersen and Voigtlaender(2018)]{PetersenVoigtlaender2018}
Petersen, P., and Voigtlaender, F. (2018), ``Optimal Approximation of Piecewise
  Smooth Functions Using Deep {ReLU} Neural Networks,'' \emph{Neural Networks},
  108, 296--330, DOI: \url{https://doi.org/10.1016/j.neunet.2018.08.019}.

\bibitem[Schmidt-Hieber(2020)]{SchmidtHieber2020}
Schmidt-Hieber, J. (2020), ``Nonparametric Regression Using Deep Neural
  Networks With {ReLU} Activation Function,'' \emph{The Annals of Statistics},
  48(4), 1875--1897, DOI: \url{https://doi.org/10.1214/19-AOS1875}.

\bibitem[Schmidt-Hieber and Vu(2024)]{SchmidtHieberVu2024}
Schmidt-Hieber, J., and Vu, D. (2024), ``Correction to `{N}onparametric
  Regression Using Deep Neural Networks With {ReLU} Activation Function',''
  \emph{The Annals of Statistics}, 52(1), 413--414, DOI:
  \url{https://doi.org/10.1214/24-AOS2351}.

\bibitem[Shen et~al.(2020)Shen, Yang, and
  Zhang]{shijun:Characterized:by:Numer:Neurons}
Shen, Z., Yang, H., and Zhang, S. (2020), ``Deep Network Approximation
  Characterized by Number of Neurons,'' \emph{Communications in Computational
  Physics}, 28(5), 1768--1811, DOI:
  \url{https://doi.org/10.4208/cicp.OA-2020-0149}.

\bibitem[Shen et~al.(2021)Shen, Yang, and Zhang{{}}]{ShenYangZhang2021FLES}
Shen, Z., Yang, H., and Zhang{{}}, S. (2021), ``Neural Network Approximation:
  Three Hidden Layers Are Enough,'' \emph{Neural Networks}, 141, 160--173, DOI:
  \url{https://doi.org/10.1016/j.neunet.2021.04.011}.

\bibitem[Shen et~al.(2022{\natexlab{a}})Shen, Yang, and
  Zhang]{ZhangShenYang2022}
Shen, Z., Yang, H., and Zhang, S. (2022{\natexlab{a}}), ``Deep Network
  Approximation: Achieving Arbitrary Accuracy With Fixed Number of Neurons,''
  \emph{Journal of Machine Learning Research}, 23(276), 1--60, available at
  \url{https://jmlr.org/papers/v23/21-1404.html}.

\bibitem[Shen et~al.(2022{\natexlab{b}})Shen, Yang, and
  Zhang{}]{ShenYangZhang2022ReLU}
Shen, Z., Yang, H., and Zhang{}, S. (2022{\natexlab{b}}), ``Optimal
  Approximation Rate of {ReLU} Networks in Terms of Width and Depth,''
  \emph{Journal de Math{\'e}matiques Pures et Appliqu{\'e}es}, 157, 101--135,
  DOI: \url{https://doi.org/10.1016/j.matpur.2021.07.009}.

\bibitem[Siegel(2023)]{Siegel2023}
Siegel, J.~W. (2023), ``Optimal Approximation Rates for Deep {ReLU} Neural
  Networks on {Sobolev} and {Besov} Spaces,'' \emph{Journal of Machine Learning
  Research}, 24(357), 1--52, available at
  \url{https://jmlr.org/papers/v24/23-0025.html}.

\bibitem[Stone(1982)]{Stone1982}
Stone, C.~J. (1982), ``Optimal Global Rates of Convergence for Nonparametric
  Regression,'' \emph{The Annals of Statistics}, 10(4), 1040--1053, DOI:
  \url{https://doi.org/10.1214/aos/1176345969}.

\bibitem[Suzuki(2019)]{Suzuki2019}
Suzuki, T. (2019), ``Adaptivity of Deep {ReLU} Network for Learning in {Besov}
  and Mixed Smooth {Besov} Spaces: Optimal Rate and Curse of Dimensionality,''
  in \emph{International Conference on Learning Representations}, available at
  \url{https://openreview.net/forum?id=H1ebTsActm}.

\bibitem[Tsybakov(2009)]{Tsybakov2009}
Tsybakov, A.~B. (2009), \emph{Introduction to Nonparametric Estimation},
  \emph{Springer Series in Statistics}, New York: Springer, DOI:
  \url{https://doi.org/10.1007/b13794}.

\bibitem[Wang et~al.(2025)Wang, Zhang, Zeng, Xie, Guo, Zeng, and
  Fan]{WangEtAl2025PEUAF}
Wang, Q., Zhang, S., Zeng, D., Xie, Z., Guo, H., Zeng, T., and Fan, F.-L.
  (2025), ``Don't Fear Peculiar Activation Functions: {EUAF} and Beyond,''
  \emph{Neural Networks}, 186, 107258, DOI:
  \url{https://doi.org/10.1016/j.neunet.2025.107258}.

\bibitem[Yang et~al.(2023)Yang, Wu, Yang, and Xiang]{YangWuYangXiang2023}
Yang, Y., Wu, Y., Yang, H., and Xiang, Y. (2023), ``Nearly Optimal
  Approximation Rates for Deep Super {ReLU} Networks on {Sobolev} Spaces,''
  arXiv:2310.10766v5, revised 2025, available at
  \url{https://arxiv.org/abs/2310.10766v5}.

\bibitem[Yang(2025)]{YunfeiYang2025Sobolev}
Yang, Y. (2025), ``On the Optimal Approximation of {Sobolev} and {Besov}
  Functions Using Deep {ReLU} Neural Networks,'' \emph{Applied and
  Computational Harmonic Analysis}, 79, 101797, DOI:
  \url{https://doi.org/10.1016/j.acha.2025.101797}.

\bibitem[Yarotsky{{}}(2017)]{Yarotsky2017}
Yarotsky{{}}, D. (2017), ``Error Bounds for Approximations With Deep {ReLU}
  Networks,'' \emph{Neural Networks}, 94, 103--114, DOI:
  \url{https://doi.org/10.1016/j.neunet.2017.07.002}.

\bibitem[Yarotsky{}(2018)]{Yarotsky2018}
Yarotsky{}, D. (2018), ``Optimal Approximation of Continuous Functions by Very
  Deep {ReLU} Networks,'' in \emph{Proceedings of the 31st Conference on
  Learning Theory}, \emph{Proceedings of Machine Learning Research} (Vol. 75),
  PMLR, pp. 639--649, available at
  \url{https://proceedings.mlr.press/v75/yarotsky18a.html}.

\bibitem[Yarotsky(2021)]{Yarotsky2021}
Yarotsky, D. (2021), ``Elementary Superexpressive Activations,'' in
  \emph{Proceedings of the 38th International Conference on Machine Learning},
  \emph{Proceedings of Machine Learning Research} (Vol. 139), PMLR, pp.
  11932--11940, available at
  \url{https://proceedings.mlr.press/v139/yarotsky21a.html}.

\bibitem[Yarotsky and Zhevnerchuk(2020)]{YarotskyZhevnerchuk2020}
Yarotsky, D., and Zhevnerchuk, A. (2020), ``The Phase Diagram of Approximation
  Rates for Deep Neural Networks,'' in \emph{Advances in Neural Information
  Processing Systems} (Vol. 33), Curran Associates, Inc., pp. 13005--13015,
  available at
  \url{https://proceedings.neurips.cc/paper_files/paper/2020/hash/979a3f14bae523dc5101c52120c535e9-Abstract.html}.

\bibitem[Zhang et~al.(2024)Zhang, Lu, and Zhao]{ZhangLuZhao2024}
Zhang, S., Lu, J., and Zhao, H. (2024), ``Deep Network Approximation: Beyond
  {ReLU} to Diverse Activation Functions,'' \emph{Journal of Machine Learning
  Research}, 25(35), 1--39, available at
  \url{https://jmlr.org/papers/v25/23-0912.html}.

\end{thebibliography}
